\newcommand{\cO}{\mathcal{O}}
\newcommand{\tO}{\widetilde{\cO}}
\newcommand{\cM}{\mathcal{M}}
\newcommand{\cS}{\mathcal{S}}
\newcommand{\cA}{\mathcal{A}}
\newcommand{\cB}{\mathcal{B}}
\newcommand{\cX}{\mathcal{X}}
\newcommand{\cP}{\mathcal{P}}
\renewcommand{\P}{\mathbb{P}}
\newcommand{\E}{\mathbb{E}}
\newcommand{\R}{\mathbb{R}}
\renewcommand{\a}{\mathbf{a}}
\newcommand{\<}{\left\langle}
\renewcommand{\>}{\right\rangle}
\newcommand{\reg}{{\rm reg}}
\newcommand{\barreg}{\overline{\rm reg}}
\newcommand{\ones}{{\mathbf{1}}}
\newcommand{\set}[1]{{\left\{ #1 \right\}}}
\newcommand{\abs}[1]{{\left| #1 \right|}}
\newcommand{\paren}[1]{{\left( #1 \right)}}
\newcommand{\brac}[1]{{\left[ #1 \right]}}
\newcommand{\defeq}{\mathrel{\mathop:}=}
\newcommand{\eqdef}{=\mathrel{\mathop:}}
\DeclareMathOperator*{\argmin}{arg\,min}
\DeclareMathOperator*{\argmax}{arg\,max}
\newcommand{\setto}{\leftarrow}
\newcommand{\matrixgamealg}{{\sf MatrixGameAlg}}
\newcommand{\NE}{{\sf MatrixNE}}
\newcommand{\KL}{\textup{KL}}
\newcommand{\eps}{\varepsilon}
\renewcommand{\epsilon}{\varepsilon}
\renewcommand{\hat}{\widehat}
\newcommand{\negap}{{\rm NEGap}}
\newcommand{\NElayerone}{{\rm NEGap-Layer-1}}
\newcommand{\NElayertwo}{{\rm NEGap-Layer-2}}
\newcommand{\ccegap}{{\rm CCEGap}}
\newcommand{\cbeta}{c_\beta}
\newcommand{\Amax}{{A_{\rm max}}}
\newcommand{\up}{\overline}
\newcommand{\low}{\underline}
\newcommand{\grad}{\nabla}
\newcommand{\err}{{\rm ERR}}
\newcommand{\stab}{{\rm STAB}}
\newcommand{\indic}[1]{\mathbf{1}\left\{#1\right\}} 
\newcommand{\norm}[1]{\left\|{#1}\right\|} 
\newcommand{\lone}[1]{\norm{#1}_1} 
\newcommand{\linf}[1]{\norm{#1}_\infty} 
\newtheorem{theorem}{Theorem}
\newtheorem*{theorem*}{Theorem}
\newtheorem{lemma}[theorem]{Lemma}
\newtheorem*{remark*}{Remark}
\newtheorem*{lemma*}{Lemma}
\newtheorem{prop}[theorem]{Proposition}
\newtheorem{definition}[theorem]{Definition}
\renewcommand{\theassumption}{\Alph{assumption}}
\newenvironment{proof-sketch}{\noindent{\bf Proof Sketch}
  \hspace*{1em}}{\qed\bigskip\\}
\newenvironment{proof-idea}{\noindent{\bf Proof Idea}
  \hspace*{1em}}{\qed\bigskip\\}
\newenvironment{proof-of-lemma}[1][{}]{\noindent{\bf Proof of Lemma {#1}}
  \hspace*{1em}}{\qed\bigskip\\}
\newenvironment{proof-of-proposition}[1][{}]{\noindent{\bf
    Proof of Proposition {#1}}
  \hspace*{1em}}{\qed\bigskip\\}
\newenvironment{proof-of-theorem}[1][{}]{\noindent{\bf Proof of Theorem {#1}}
  \hspace*{1em}}{\qed\bigskip\\}
\newenvironment{inner-proof}{\noindent{\bf Proof}\hspace{1em}}{
  $\bigtriangledown$\medskip\\}
\newenvironment{proof-attempt}{\noindent{\bf Proof Attempt}
  \hspace*{1em}}{\qed\bigskip\\}
\newcounter{example}
\newenvironment{example}[1][]{
  \refstepcounter{example}
  \ifthenelse{\isempty{#1}}{%
    \noindent \textbf{Example \theexample:}\hspace*{.05em}
  }{%
    \noindent \textbf{Example \theexample} ({#1})\textbf{:}\hspace*{.05em}
  }
}{%
  $\Diamond$ 
}
\newenvironment{example*}[1][]{
  \ifthenelse{\isempty{#1}}{%
    \noindent \textbf{Example:}\hspace*{.05em}
  }{%
    \noindent \textbf{Example} ({#1})\textbf{:}\hspace*{.05em}
  }
}{%
  $\Diamond$ 
}
\newenvironment{talign}
 {\align}
 {\endalign}
\newenvironment{talign*}
 {\csname align*\endcsname}
 {\endalign}
\colorlet{linkequation}{blue}
\def\shownotes{1}  
\newcommand{\authnote}[2]{{\scriptsize $\ll$\textsf{#1 notes: #2}$\gg$}}
\newcommand{\authnote}[2]{}
\title{Policy Optimization for Markov Games: Unified Framework and Faster Convergence}
\author{%
Runyu Zhang \!\!\thanks{The two authors contributed equally to this work} \thanks{Harvard University. Email: \texttt{\{runyuzhang@fas.harvard.edu, nali@seas.harvard.edu\}}}
\and
Qinghua Liu \!\!\footnotemark[1] \thanks{Princeton University. Email: \texttt{qinghual@princeton.edu}}
\and
Huan Wang\thanks{Salesforce Research. Email: \texttt{\{huan.wang,cxiong,yu.bai\}@salesforce.com}}
\and
Caiming Xiong\footnotemark[4]
\and
Na Li\footnotemark[2]
\and
Yu Bai\footnotemark[4]
}
\begin{document}

\maketitle

\begin{abstract}
  This paper studies policy optimization algorithms for multi-agent reinforcement learning. We begin by proposing an algorithm framework for two-player zero-sum Markov Games in the full-information setting, where each iteration consists of a policy update step at each state using a certain matrix game algorithm, and a value update step with a certain learning rate. This framework unifies many existing and new policy optimization algorithms. We show that the \emph{state-wise average policy} of this algorithm converges to an approximate Nash equilibrium (NE) of the game, as long as the matrix game algorithms achieve low weighted regret at each state, with respect to weights determined by the speed of the value updates. Next, we show that this framework instantiated with the Optimistic Follow-The-Regularized-Leader (OFTRL) algorithm at each state (and smooth value updates) can find an $\mathcal{\widetilde{O}}(T^{-5/6})$ approximate NE in $T$ iterations, and a similar algorithm with slightly modified value update rule achieves a faster $\mathcal{\widetilde{O}}(T^{-1})$ convergence rate. These improve over the current best $\mathcal{\widetilde{O}}(T^{-1/2})$ rate of symmetric policy optimization type algorithms. We also extend this algorithm to multi-player general-sum Markov Games and show an $\mathcal{\widetilde{O}}(T^{-3/4})$ convergence rate to Coarse Correlated Equilibria (CCE). Finally, we provide a numerical example to verify our theory and investigate the importance of smooth value updates, and find that using ``eage'' value updates instead (equivalent to the independent natural policy gradient algorithm) may significantly slow down the convergence, even on a simple game with $H=2$ layers.
\end{abstract}

\section{Introduction}


Policy optimization, i.e. algorithms that learn to make sequential decisions by local search on the agent's policy directly, is a widely used class of algorithms in reinforcement learning~\citep{peters2006policy,schulman2015trust,schulman2017proximal}.
Policy optimization algorithms are particularly advantageous in the multi-agent reinforcement learning (MARL) setting (e.g. compared with value-based counterparts), due to their typically lower representational cost and better scalability in both training and execution. A variety of policy optimization algorithms such as Independent PPO~\citep{de2020independent}, MAPPO~\citep{yu2021surprising}, QMix \citep{rashid2018qmix} have been proposed to solve real-world MARL problems~\citep{bard2020hanabi,mordatch2018emergence, samvelyan2019starcraft}. These algorithms share a same high-level structure with iterative \emph{value updates} (for certain value estimates) and \emph{policy updates} (often independently with each agent) using information from the value estimates and/or true rewards.





While policy optimization for MARL has been studied theoretically in a growing body of work, there are still gaps between algorithms used in practice and provably-efficient algorithms studied in theory---Algorithms in practice generally follow two natural design principles: \emph{symmetric updates} among all agents, and \emph{simultaneous learning} of values and policies~\citep{zhang2021multi,yu2021surprising}. By contrast, policy optimization algorithms studied in theory often diverge from these principles and incorporate some tweaks, such as (i) asymmetric updates, where one agent takes a much smaller learning rate than the others (two time-scale)~\citep{daskalakis2020independent} or waits until the other agents learn an approximate best response~\citep{zhao2021provably}; and (ii) batch-like learning, where policies are optimized to sufficient precision with respect to the current value estimate before the next value update~\citep{cen2021extra}. There is so far a lacking of systematic studies on the performance of the more vanilla policy optimization algorithms following the above two principles, even under the setting where full-information feedback from the game is available.



Towards bridging these gaps, this paper studies policy optimization algorithms for Markov games, with a focus on algorithms with symmetric updates and simultaneous learning of values and policies. Our contributions can be summarized as follows:
\begin{itemize}[leftmargin=1.5pc, topsep=0pt]
    \item We propose an algorithm framework for two-player zero-sum Markov games in the full-information setting (Section~\ref{section:framework}). This framework unifies many existing and new policy optimization algorithms such as Nash V-Learning, Gradient Descent/Ascent, as well as seemingly disparate algorithms such as Nash Q-Learning (Section~\ref{section:framework-examples}). We prove that the \emph{state-wise average policy} outputted by the above algorithm is an approximate Nash Equilibrium (NE), so long as suitable per-state weighted regrets are bounded (Section~\ref{section:framework-general}). This generic result can be instantiated in a modular fashion to derive convergence guarantees for the many examples above.
    \item We instantiate our framework to show that a new algorithm based on Optimistic Follow-The-Regularized-Leader (OFTRL) and smooth value updates finds an $\tO(T^{-5/6})$ approximate NE in $T$ iterations (Section~\ref{section:optimistic}). This improves over the current best rate of $\tO(T^{-1/2})$ achieved by symmetric policy optimization type algorithms. In addition, we also propose a slightly modified OFTRL algorithm that further improves the rate to $\tO(T^{-1})$, which matches with the known best rate for all policy optimization type algorithm.
    \item We additionally extend the above OFTRL algorithm to multi-player general-sum Markov games and show an $\tO(T^{-3/4})$ convergence rate to Coarse Correlated Equilibria (CCE), which is also the first rate faster than $\tO(T^{-1/2})$ for policy optimization in general-sum Markov games (Section~\ref{section:multi-player}).
    \item We perform simulations on a carefully constructed zero-sum Markov game with $H=2$ layers to verify our convergence guarantees. The numerical tests further suggest the importance of \emph{smooth value updates}: the Independent Natural Policy Gradient algorithm (as one instantiation of our algorithm framework with ``eager'' value updates) appears to converge much slower (Section~\ref{section:eager}).
\end{itemize}

\subsection{Related work}
\paragraph{Two-player zero-sum MGs} 
Markov games (MGs)~\citep{littman1994markov} (also known as Stochastic Games~\citep{shapley1953stochastic}) is a widely studied model for multi-agent reinforcement learning. In the most basic setting of two-player zero-sum MGs, algorithms for computing the NE have been extensively studied in both the full-information setting~\citep{littman2001friend,hu2003nash,hansen2013strategy} and the sample-based/online setting~\citep{brafman2002r,wei2017online, jia2019feature,sidford2020solving,zhang2020model,bai2020provable,xie2020learning, bai2020near,liu2021sharp,jin2021power,huang2021towards,yu2021provably,chen2022almost,liu2022learning}. Our algorithm framework incorporates (the full-information version of) several algorithms in this line of work.

\paragraph{Policy optimization for zero-sum MGs} Policy optimization for single-agent Markov Decision Processes has been extensively in a recent line of work, e.g.~\citep{agarwal2021theory,bhandari2019global,liu2019neural,shani2020adaptive,mei2020global,liu2020improved,cen2021,ding2020natural,xiao2022convergence} and the many references therein. For two-player zero-sum MGs, the Nash V-Learning algorithm of~\citet{bai2020near} (originally proposed for the sample-based online setting) can be viewed as an independent policy optimization algorithm, and can be adapted to the full-information setting with $\tO(T^{-1/2})$ convergence rate. \citet{daskalakis2020independent} prove that the independent policy gradient algorithm with an asymmetric two time-scale learning rate can learn the NE (for one player only) with polynomial iteration/sample complexity. \citet{zhao2021provably} show that another asymmetric algorithm that simulates a policy gradient/best response dynamics converges to NE with $\tO(1/T)$ rate. \citet{cen2021extra} use a symmetric optimistic (extragradient) subroutine for matrix games to learn zero-sum MGs in a \emph{layer-wise} fashion (more like a Value Iteration type algorithm), and also derive an $\tO(1/T)$ convergence rate. The closest to our work is \citet{wei2021last}~which proves that Optimistic Gradient Descent/Ascent (OGDA), combined with smooth value updates at all layers simultaneously, converges to an NE with $\tO(T^{-1/2})$ rate for both the average duality gap and the last iterate. The $\tO(T^{-5/6})$ rate of our OFTRL algorithm improves over~\citep{wei2021last} and is the first such faster rate for symmetric, policy optimization type algorithms. The $\tO(1/T)$ rate of our modified OFTRL algorithm matches with rates in \citep{zhao2021provably,cen2021}, while still maintaining symmetric update and simultaneous learning of values and policies.




\paragraph{Multi-player general-sum MGs}
A recent line of work shows that a generalization of the V-learning algorithm to the multi-player general-sum setting can learn Coarse Correlated Equilibria (CCE)~\citep{song2021can,jin2021v,mao2021provably} and Correlated Equilibria (CE)~\citep{song2021can,jin2021v}. The algorithmic designs in these works are specially tailored to the sample-based setting, where the best possible rate is $\tO(T^{-1/2})$.\footnote{Even if we specialize their algorithms to the full-information setting, the attained rates are no better than $\tO(T^{-1/2})$ because the bandit subroutines they deployed in V-learning converge no faster than ${\Omega}(T^{-1/2})$ even in the simplest setting of full-information matrix games.} In contrast, this paper considers the full-information setting and proposes new algorithms achieving the faster $\tO(T^{-3/4})$ for learning CCE in general-sum MGs. Another recent line of work considers learning NE in Markov Potential Games~\citep{zhang2021gradient,leonardos2021global,song2021can,ding2022independent,zhang2022logbarrier}, which can be seen as a cooperative-type subclass of general-sum MGs. 

\paragraph{Optimistic algorithms in normal-form games} Technically, our accelerated rates build on the recent line of work on faster rates for optimistic no-regret algorithms in normal-form games~\citep{syrgkanis2015fast,rakhlin2013optimization,chen2020hedging,daskalakis2021near}. 
Specifically, our $\tilde{\cO}(T^{-5/6})$ rate for two-player zero-sum MGs builds upon a first-order smoothness analysis of~\citep{chen2020hedging}, our improved $\tO(T^{-1})$ rate in the same setting (achieved by the modified OFTRL algorithm) leverages analysis of~\citep{rakhlin2013optimization,syrgkanis2015fast} on bounding the summed regret over the two players, and our $\tilde{\cO}(T^{-3/4})$ rate for multi-player general-sum MGs follows from the RVU-property~\citep[Definition 3]{syrgkanis2015fast}. Our incorporation of these techniques involves non-trivial new components such as \emph{weighted} first-order smoothness bounds and handling changing game rewards.

\vspace{-2.5pt}
\section{Preliminaries}
\label{section:prelim}
\vspace{-2.5pt}

We consider the tabular episodic (finite-horizon) two-player-zero-sum Markov games (MGs), which can be denoted as $\cM(H,\cS, \cA,\cB,\P, r)$, where $H$ is the horizon length; $\cS$ is the state space with $|\cS| = S$; $\cA, \cB$ are the action space of the \emph{max-player} and \emph{min-player} respectively, with $|\cA| = A, |\cB| = B$; $\P = \{\P_h\}_{h=1}^H$ is the transition probabilities, where each $\P_h(s'|s, a,b)$ gives the probability of transition to state $s'$ from state-action $(s,a,b)$; $r = \{r_h\}_{h=1}^H$ are the reward functions, such that $r_h(s,a,b)$ is reward\footnote{This assumes deterministic rewards; our results can be generalized directly to the case of stochastic rewards.} of the max-player and $-r_h(s,a,b)$ is the reward of the min-player at time step $h$ and state-action $(s,a,b)$.
In each episode, the MG starts with a deterministic initial state $s_1$. Then at each time step $1\le h\le H$, both players observes the state $s_h$, the max-player takes an action $a_h\in \cA$, and the min-player takes an action $b_h\in \cB$. Then, both players receive their rewards  $r_h(s_h,a_h,b_h)$ and $-r_h(s_h,a_h,b_h)$, respectively, and the system transits to the next state $s_{t+1}\sim \P_h(\cdot|s_h,a_h,b_h)$. 



\vspace{-6pt}
\paragraph{Policies \& value functions}
A (Markov) policy $\mu$ of the max-player is a collection of policies $\mu = \{\mu_h: \cS \to \Delta_\cA\}_{h=1}^H$, where each $\mu_h(\cdot|s_h)\in\Delta_\cA$ specifies the probability of taking action $a_h$ at $(h, s_h)$. Similarly, a (Markov) policy $\nu$ of the min-player is defined as $\nu=\{\nu_h:\cS\to\Delta_\cB\}$. For any policy $(\mu, \nu)$ (not necessarily Markov), we use $V_h^{\mu,\nu}: \cS\to\R$ and $ Q_h^{\mu,\nu}:\cS\times\cA\times\cB\to \R$ to denote the value function and Q-function at time step $h$, respectively, i.e.
\begin{talign}
    V_h^{\mu,\nu}(s)&:= \E_{\mu,\nu}\brac{\sum_{h=h'}^H r_{h'}(s_{h'}, a_{h'}, b_{h'})~|~s_h = s}, \\
    Q_h^{\mu,\nu}(s,a,b)&:= \E_{\mu,\nu}\brac{\sum_{h=h'}^H r_{h'}(s_{h'}, a_{h'}, b_{h'})~|~s_h = s, a_h = a, b_h = b}.
\end{talign}
For notational simplicity, we use the following abbreviation: $[\P_h V](s,a,b):= \E_{s'\sim\P_h(\cdot|s,a,b)}V(s')$ for any value function $V$. By definition of the value functions and Q-functions, we have the following Bellman equations
\begin{align*}
    Q_h^{\mu,\nu}(s,a,b)&= \paren{r_h + \P_hV_{h+1}^{\mu,\nu}} (s,a,b), \\
    V_h^{\mu,\nu}(s,a,b)&=\E_{a\sim\mu_h(\cdot|s),b\sim\nu_h(\cdot|s)}\brac{Q_h^{\mu,\nu}(s,a,b)} = \<Q_h^{\mu,\nu}(s,\cdot,\cdot), \mu(\cdot|s)\times\nu(\cdot|s)\>.
\end{align*}
The goal for the max-player is to maximize the value function, whereas the goal for the min-player is to minimize the value function. 


\paragraph{Best response \& Nash equilibrium}
For any Markov policy $\mu$ of a max-player, there exists a best response for the min-player, which can be taken as a Markov policy $\nu^\dagger(\mu)$ such that 
$V_h^{\mu,\nu^\dagger(\mu)}(s) = \inf_\nu V_h^{\mu,\nu}(s)$ for all $(s,h)\in \cS\times[H]$. For simplicity we define $V_h^{\mu,\dagger}:= V_h^{\mu,\nu^\dagger(\mu)}$. By symmetry, we can also define $\mu^\dagger(\nu)$ and $V_h^{\dagger,\nu}$. It is known (e.g.~\citep{filar2012competitive}) that there exist Markov policies $(\mu^\star, \nu^\star)$ that perform optimally against best responses. These policies are also equivalent to Nash Equilibria (NEs) of the game, where no player can gain by switching to a different policy unilaterally. It can also be shown that any NE $(\mu^\star,\nu^\star)$ satisfies the following minimax equation
\vspace{-5pt}
\begin{equation*}
  \textstyle
    \sup_\mu\inf_\nu V_h^{\mu,\nu}(s) = V_h^{\mu^\star,\nu^\star}(s) = \inf_\nu\sup_\mu V_h^{\mu,\nu}(s).
\end{equation*}
Thus, while the NE policy $(\mu^\star,\nu^\star)$ may not be unique, all of them share the same value functions, which we denote as $V_h^\star:= V_h^{\mu^\star,\nu^\star}$. The Q-function $Q_h^\star$ can be defined similarly. In this paper, our main goal is to find an approximate NE, which is formally defined below. 
\begin{definition}[$\eps$-approximate Nash Equilibrium]
For any $\eps\ge 0$, a policy $(\mu, \nu)$ is an $\epsilon$-approximate Nash Equilibrium ($\eps$-NE) if $
    \negap(\mu, \nu) \defeq V_1^{\dagger, \nu}(s_1) - V_1^{\mu, \dagger}(s_1) \le \eps.
$
\end{definition}
\vspace{-8pt}
\paragraph{Additional notation}
For any $(h,s)\in[H]\times\cS$ and Q function $Q_h:\cS\times\cA\times\cB\to\R$, we define shorthand $[(\mu_h)^\top Q_h\nu_h](s)\defeq \<Q_h(s,\cdot,\cdot), \mu_h(\cdot|s)\times\nu_h(\cdot|s)\>$ for any policy $(\mu,\nu)$. Similarly, we let $[Q_h\nu_h](s,\cdot)\defeq \E_{b\sim \nu_h(\cdot|s)}[Q_h(s,\cdot,b)]\in\R^A$, and $[Q_h^\top\mu_h](s,\cdot)\defeq \E_{a\sim \mu_h(\cdot|s)}[Q_h(s,a,\cdot)]\in\R^B$. We use $A\vee B\defeq \max\{A, B\}$.

\section{An algorithm framework for zero-sum Markov games}
\label{section:framework}





\vspace{-6pt}
We begin by presenting an algorithm framework that unifies many existing and new algorithms for two-player zero-sum Markov Games, and its performance guarantee that could be specialized to yield concrete convergence results for many specific algorithms.

Our algorithm framework, described in Algorithm~\ref{alg:framework}, consists of two main components: the \emph{policy update} step computing policies $(\mu^t, \nu^t)$, and the \emph{value update} step computing the Q estimate $Q_h^t$'s. 

\begin{algorithm}[t]
\caption{Algorithm framework for two-player zero-sum Markov Games}
\label{alg:framework}
\begin{algorithmic}[1]
\STATE \textbf{Require:} Learning rate $\set{\beta_t}_{t\ge 1}\subset[0,1]$ (with $\beta_1 = 1$); Algorithm $\matrixgamealg$.
\STATE \textbf{Initialize:} $Q_h^0(s,a,b)\setto H-h+1$ for all $(h,s,a,b)\in[H]\times\cS\times\cA\times\cB$.
\FOR{$t=1,\dots,T$}
\FOR{$h=H,\dots,1$}
\STATE \textbf{Policy update:} Update policies for all $s\in\cS$:
\begin{align*}
    (\mu_h^t(\cdot|s), \nu_h^t(\cdot|s)) \setto \matrixgamealg\paren{ \set{Q_h^{i}(s, \cdot, \cdot)}_{i=1}^{t-1}, 
    \set{\mu_h^{i}(\cdot|s)}_{i=1}^{t-1}, 
    \set{\nu_h^{i}(\cdot|s)}_{i=1}^{t-1}
    }.
\end{align*}
\label{line:policy-update}
\vspace{-0.7em}
\STATE \textbf{Value update:} Update Q-value for all $(s,a,b)\in\cS\times\cA\times\cB$:
\begin{align}\label{eq:Q-update}
    Q_h^{t}(s,a,b) \setto (1-\beta_t) Q_h^{t-1}(s,a,b) + \beta_t \paren{r_h + \P_h[ (\mu_{h+1}^t)^\top Q_{h+1}^t\nu_{h+1}^t ]} (s,a,b).
\end{align}
\label{line:value-update}
\vspace{-0.7em}
\ENDFOR
\ENDFOR
\STATE {\bf Output}: State-wise average policy $(\hat{\mu}^T, \hat{\nu}^T)$, with $\beta_T^t$ defined in \eqref{eq:def-beta-k-t}:
\vspace{-5pt}
\begin{equation}
  \label{equation:state-wise-avg-policy}
  \textstyle
    \widehat{\mu}_h^T(\cdot|s) \setto \sum_{t=1}^T\beta_T^t\mu_{h}^{t}(\cdot|s),\quad 
    \widehat{\nu}_h^T(\cdot|s)\setto\sum_{t=1}^T\beta_T^t \nu_h^{t}(\cdot|s)~~~\textrm{for all}~(h,s)\in[H]\times\cS.
\end{equation}
\end{algorithmic}
\end{algorithm}

\vspace{-6pt}
\paragraph{Policy update via matrix game algorithms}
In the policy update step (Line~\ref{line:policy-update}), for each $(h,s)$, the two players update policies $(\mu^t_h(\cdot|s), \nu^t_h(\cdot|s))$ at $(h,s)$ using some matrix game algorithm $\matrixgamealg$ which takes as input all past Q matrices and all past policies of both players. The $\matrixgamealg$ offers a flexible interface that allows many choices such as the matrix NE subroutine over the most recent Q matrix $\NE(Q^{t-1}_h(s,\cdot,\cdot))$, or any independent no-regret algorithm (for both players), such as Follow-The-Regularized-Leader (FTRL)~\eqref{eq:FTRL-V-learning} or projected Gradient Descent-Ascent~\eqref{eq:GDA-example} considered in the examples later.
%
\vspace{-6pt}
\paragraph{Value update with learning rate $\set{\beta_t}$}
For any $(h,s,a,b)$, the value update step (Line~\ref{line:value-update}) updates $Q_h^t(s,a,b)$ by the newest value function $r_h + \P_h[ (\mu_{h+1}^t)^\top Q_{h+1}^t\nu_{h+1}^t ]$ propagated from layer $h+1$, using a sequence of learning rates $\set{\beta_t}_{t\ge 1}$ which we assume to be within $[0,1]$ (with $\beta_1\defeq 1$). 
$\set{\beta_t}$ controls the speed of the value update, with two important special cases: 
\begin{enumerate}[label=(\arabic*), leftmargin=2em, topsep=0em]
\item \emph{Eager} value updates, where we set $\beta_t=1$ so that $Q_h^{t}$ performs \emph{policy evaluation} of the current policy $(\mu^t, \nu^t)$, that is, $Q_h^t = Q_h^{\mu^t,\nu^t}$.
\item \emph{Smooth} (incremental) value updates, where we choose $\beta_t \to 0$ as $t\to \infty$. In this case, the $Q_h^{t}$ moves slower (resembling a \emph{critic} in Actor-Critic like algorithms), and becomes a weighted average of all past updates. A standard choice that is frequently used is from ~\citep{jin2018q} (and many subsequent work),  \begin{align}\label{eq:def-alpha-t}
\textstyle
    \beta_t = \alpha_t := (H+1)/(H+t).
\end{align}
\end{enumerate}
For any $\set{\beta_t}$, the update~\eqref{eq:Q-update} implies that
\begin{equation*}\label{eq:Q-relationship}
    \textstyle
    Q_h^{t}(s,a,b) = \sum_{i=1}^t\beta_t^i \paren{ \brac{r_h + \P_h[\paren{\mu_{h+1}^{i}}^\top Q_{h+1}^{i}\nu_{h+1}^{i}] }(s,a,b) },
\end{equation*}
where $\beta_t^i$'s are a group of weights summing to one ($\sum_{i=1}^t\beta_t^i = 1$) defined as
\begin{equation}\label{eq:def-beta-k-t}
\textstyle
    \beta_t^t = \beta_t;\qquad \beta_t^i =\prod_{j=i+1}^{t}(1-\beta_j)\beta_i, ~~\textup{for~~} i\in[t-1].
\end{equation}
Note that with smooth value updates ($\beta_t<1$), $Q_h^t$ is not necessarily the Q-function of any policy. Upon finishing, the algorithm outputs the \emph{state-wise average policy} $(\hat{\mu}^T, \hat{\nu}^T)$ defined in~\eqref{equation:state-wise-avg-policy}, where each $\hat{\mu}^T_h(\cdot|s)$ is the weighted average of $\mu_h^t(\cdot|s)$ using weights $\set{\beta_T^t}_{t=1}^T$ (and similarly for $\hat{\nu}^T$).

\paragraph{Symmetric \& simultaneous learning, (de)centralization}
We remark that Algorithm~\ref{alg:framework} by definition performs simultaneous learning (of policies and values) at all layers, and also yields symmetric (policy) updates if $\matrixgamealg$ is a symmetric algorithm with respect to $\mu$ and $\nu$. Also, although Algorithm~\ref{alg:framework} appears to be a \emph{centralized} algorithm as it maintains Q values in~\eqref{eq:Q-update}, this does not preclude possibilities that the algorithm can be executed in a \emph{decentralized} fashion. This can happen e.g. when the Q-update~\eqref{eq:Q-update} can be rewritten as an equivalent V-update (cf. Example~\ref{example:Nash-V} \&~\ref{example:GDA-critic}).

\subsection{Theoretical guarantee}
\label{section:framework-general}


We are now ready to state the main theoretical guarantee of Algorithm~\ref{alg:framework}, which states that the $(\widehat{\mu}^T, \widehat{\nu}^T)$ is an approximate NE, as long as the algorithm achieves low \emph{per-state weighted regrets} w.r.t. weights $\{\beta_t^i\}_{i=1}^t$, defined as
\begin{equation}\label{eq:def-regret}
\begin{split}
    & \textstyle \reg_{h,\mu}^t(s) \defeq \max_{\mu^\dagger\in\Delta_\cA} \sum_{i=1}^t \beta_t^i \<\mu^\dagger - \mu_h^i(\cdot|s), \brac{Q_h^i\nu_h^i}(s, \cdot)\>, \\
    & \textstyle \reg_{h,\nu}^t(s) \defeq \max_{\nu^\dagger\in\Delta_\cB} \sum_{i=1}^t \beta_t^i \< \nu_h^i(\cdot|s) - \nu^\dagger, \brac{(Q_h^i)^\top\mu_h^i}(s, \cdot) \>,  \\
    & \textstyle \reg_h^t \defeq \max_{s\in\cS} \max\{\reg_{h,\mu}^t(s), \reg_{h,\nu}^t(s)\}.
\end{split}
\end{equation}





\begin{theorem}[Main guarantee of Algorithm~\ref{alg:framework}]
\label{theorem:master}
Suppose that the per-state regrets can be upper-bounded as $\reg_h^t\le \barreg_h^t$ for all $(h,t)\in[H]\times[T]$, where $\barreg_h^t$ is non-increasing in $t$: $\barreg_h^t\ge \barreg_h^{t+1}$ for all $t\ge 1$. Then, the output policy $(\hat{\mu}^T, \hat{\nu}^T)$ of Algorithm \ref{alg:framework} satisfies
\begin{equation}
\label{eq:NE-gap}
    \negap(\hat{\mu}^T,\hat{\nu}^T)\le C\brac{H\max_{h\in[H]} \barreg_h^T + H^2\cbeta^H\log T \cdot \frac{1}{T} \sum_{t=1}^T \max_{h\in[H]}\barreg_h^t}
\end{equation}
for all $T\ge 2$ and some absolute constant $C>0$, where $\cbeta$ is a constant depending on $\set{\beta_t}_{t\ge 1}$:
\begin{equation}\label{eq:def-c}
    \textstyle
    \cbeta \defeq \sup_{j\ge 1} \sum_{t=j}^{\infty}\beta_t^j \ge 1.
\end{equation}
Specifically, $\cbeta = \paren{1+\frac1H}$ if $\beta_t=\alpha_t = \frac{H+1}{H+t}$, and $\cbeta = 1$ if $\beta_t = 1$.
\end{theorem}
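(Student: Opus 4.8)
The plan is to bound $\negap(\hat\mu^T,\hat\nu^T) = V_1^{\dagger,\hat\nu^T}(s_1) - V_1^{\hat\mu^T,\dagger}(s_1)$ by controlling each of the two terms separately (they are symmetric, so it suffices to bound $V_1^{\dagger,\hat\nu^T}(s_1) - V_1^\star(s_1)$, say). The natural object to track is a "certified upper value" $\overline{V}_h^t(s)$ and "certified lower value" $\underline{V}_h^t(s)$ built from the running weighted averages, together with the scalar error quantities $\overline{V}_h^t(s) - \underline{V}_h^t(s)$. First I would set up the per-$(h,s)$ bookkeeping: define $\overline{V}_h^t(s)$ as the best-response value of the min-player's averaged policy $\sum_i \beta_t^i\nu_h^i(\cdot|s)$ against $Q_h^t$-type quantities, and use the regret definition~\eqref{eq:def-regret} to show that the gap between $\overline{V}_h^t(s)$ and the "true" averaged payoff $\sum_i \beta_t^i [(\mu_h^i)^\top Q_h^i \nu_h^i](s)$ is at most $\barreg_h^t$, and similarly on the lower side. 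This converts the regret hypothesis into a one-step per-state statement.

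Second, I would derive a recursion across layers $h \to h+1$. The key is that the value-update rule~\eqref{eq:Q-update}, unrolled via the weights $\{\beta_t^i\}$ in~\eqref{eq:def-beta-k-t}, expresses $Q_h^t$ as a $\beta_t^i$-weighted average of $r_h + \P_h[(\mu_{h+1}^i)^\top Q_{h+1}^i \nu_{h+1}^i]$. Feeding this into the averaged payoff at layer $h$ and applying the layer-$(h+1)$ gap bound, one gets a recursion of the schematic form: (gap at layer $h$, iterate $t$) $\lesssim \barreg_h^t + \sum_{i=1}^t \beta_t^i \cdot$ (gap at layer $h{+}1$, iterate $i$). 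Here is where the constant $\cbeta = \sup_{j}\sum_{t\ge j}\beta_t^j$ enters: when I later sum these over $t$ or pass to the average over $t$, the double sum $\sum_t \sum_i \beta_t^i (\cdot)_i$ gets reindexed and each $(\cdot)_i$ picks up a factor $\sum_{t\ge i}\beta_t^i \le \cbeta$. Iterating the recursion over all $H$ layers then produces the $\cbeta^H$ factor and, because at each layer I must either take a max or an average over $t$ (contributing a $\log T$ when $\beta_t = \alpha_t$ because $\sum_{t\le T} \beta_T^t / $ something behaves like $1/T$ times a harmonic-type sum), the overall $H^2 \cbeta^H \log T \cdot \frac1T\sum_t \max_h \barreg_h^t$ term. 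The leading $H\max_h \barreg_h^T$ term comes from the "diagonal" contribution at iterate $t = T$ summed over the $H$ layers without any reindexing loss.

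Third, I would connect $\overline{V}_1^T(s_1)$ (resp. $\underline V_1^T(s_1)$) to the actual best-response value $V_1^{\dagger,\hat\nu^T}(s_1)$ (resp. $V_1^{\hat\mu^T,\dagger}(s_1)$): this requires a separate induction showing that the averaged-policy best-response value is sandwiched by $\underline V_1^T - (\text{accumulated error})$ and $\overline V_1^T + (\text{accumulated error})$, using again the unrolling of~\eqref{eq:Q-update} and the fact that $\hat\mu^T, \hat\nu^T$ are exactly the $\beta_T^t$-weighted averages. Combining this sandwich with the layer recursion from the previous step, and using $\overline V_1^T(s_1) - \underline V_1^T(s_1) \le V_1^\star$-consistency (the fact that NE values are well-defined and the two certified values both track $V_1^\star$), gives~\eqref{eq:NE-gap}.

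The main obstacle I anticipate is the \emph{changing-$Q$} issue: unlike a static matrix game, the payoff matrix $Q_h^i$ the players face changes with $i$, so the per-state regret in~\eqref{eq:def-regret} is measured against a moving sequence, and I need the weighted-average structure of $Q_h^t$ (a $\beta_t^i$-average of layer-$(h+1)$ propagations) to interact correctly with the weighted-average structure of the policies. Getting the reindexing bookkeeping exactly right — so that the single factor $\cbeta$ per layer is all one loses, rather than something that blows up — is the delicate combinatorial heart of the argument, and keeping the $\log T$ from appearing with a worse power than $1$ requires the monotonicity hypothesis $\barreg_h^t \ge \barreg_h^{t+1}$ in an essential way. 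Finally I would verify the two stated special cases: $\cbeta = 1$ for $\beta_t \equiv 1$ is immediate since $\beta_t^i = \indic{i=t}$, and $\cbeta = 1 + 1/H$ for $\beta_t = \alpha_t$ follows from the telescoping identity $\sum_{t \ge j} \alpha_t^j = 1 + 1/H$ known from the analysis of~\citep{jin2018q}.
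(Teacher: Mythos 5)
Your plan is correct and follows essentially the same route as the paper's proof: a performance-difference induction over layers reducing $\negap$ to per-state duality gaps against $Q_h^\star$, a layer recursion for the value-estimation error $\|Q_h^t-Q_h^\star\|_\infty$ whose unrolling yields one factor of $\cbeta$ per layer via the reindexing $\sum_{t\ge i}\beta_t^i\le\cbeta$ (with the monotonicity of $\barreg_h^t$ used exactly where you say), and the convolution bound $\sum_{t}\beta_T^t/t\lesssim\cbeta\log T/T$. The only cosmetic difference is that the paper tracks the scalar error $\delta_h^t=\|Q_h^t-Q_h^\star\|_\infty$ directly rather than your upper/lower certified values, but the bookkeeping is the same.
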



Bound~\eqref{eq:NE-gap} is typically dominated by the second term on the right hand side, suggesting that the $\negap$ can be bounded by the average weighted regret $\widetilde{\cO}\paren{\frac{1}{T} \sum_{t=1}^T \max_h\barreg_h^t}$, if $\cbeta^H=O(1)$. Theorem~\ref{theorem:master} serves as a modular tool for analyzing a broad class of algorithms: As long as this average regret is sublinear in $T$ (including---but not limited to---choosing $\matrixgamealg$ as uncoupled no-regret algorithms), the output policy will be an approximate NE.
We emphasize though that this result is not yet end-to-end, as each $\reg_h^t$ is a weighted regret w.r.t. \emph{the particular set of weights} $\set{\beta_t^i}_{i=1}^t$, minimizing which may require careful algorithm designs and/or case-by-case analyses. We provide some concrete examples in Section \ref{section:framework-examples} to demonstrate the usefulness of Theorem~\ref{theorem:master}. 

We remark that the state-wise average policy considered in Theorem~\ref{theorem:master} is an average policy that is also \emph{Markovian} by definition, which is different from existing work which considers either the (Markovian) last iterate~\citep{wei2021last} or non-Markovian average policies (e.g.~\citep{bai2020near}). However, this guarantee relies on full-information feedback (so that per-state regret bounds are available), and it remains an open question how such guarantees could be generalized to sample-based settings.




\paragraph{Proof overview} 
The proof of Theorem~\ref{theorem:master} follows by (1) bounding $\negap(\hat{\mu}^T,\hat{\nu}^T)$ in terms of per-state regrets w.r.t. the \emph{Nash value functions} $Q_h^\star$'s by performance difference arguments (Lemma \ref{lem.duality}); (2) recursively bounding the value estimation error $\delta_h^t:=\linf{Q_h^t-Q_h^\star}$ (Lemma \ref{lemma:recursion-of-value-estimation}) which yields the constant $\cbeta$; and (3) combining the above to translate the regret from $Q_h^\star$'s to $Q_h^t$'s (which we assume to be bounded by $\barreg_h^t$) and obtain the theorem. The full proof can be found in Appendix~\ref{apdx:theorem-master}. 





\subsection{Examples}
\label{section:framework-examples}

We now demonstrate the generality of Algorithm~\ref{alg:framework} and Theorem~\ref{theorem:master} by showing that they subsume many existing algorithms (and yield new algorithms) for two-player-zero-sum Markov games, and provide new guarantees with the particular output policy~\eqref{equation:state-wise-avg-policy}.




\begin{example}[Nash V-Learning \citep{bai2020near}, full-information version]\label{example:Nash-V}
The full algorithm (Algorithm \ref{alg:Nash-V-learning}) can be found in Appendix \ref{appendix:nash-v-learning}. The algorithm is a special case of Algorithm \ref{alg:framework} with $\beta_t=\alpha_t=(H+1)/(H+t)$, and $\matrixgamealg$ chosen as the weighted FTRL algorithm
\begin{equation}\label{eq:FTRL-V-learning}
    \mu_h^t(a|s)\! \propto_a\! \exp\!\paren{\!\frac{\eta}{w_{t\!-\!1}}\!\sum_{i=1}^{t-1}\!w_i\brac{Q_h^{i}\nu_h^i}(s,a)\!},~
    \nu_h^t(b|s)\!\propto_b\! \exp\!\paren{\! - \frac{\eta}{w_{t\!-\!1}}\!\sum_{i=1}^{t-1}\!w_i \brac{\paren{Q_h^{i}}^{\!\top}\!\! \mu_h^i}(s,b)\!},
\end{equation}
where $w_t \defeq {\alpha_t^t}/{\alpha_t^1}$. Combining Theorem~\ref{theorem:master} with the standard regret bound of weighted FTRL, this algorithm achieves $\negap(\widehat{\mu}^T,\widehat{\nu}^T)\le \tO(H^{7/2}/\sqrt{T})$ choosing $\eta\asymp 1/\sqrt{T}$ (Proposition \ref{prop:Nash-V-learning}).

Additionally, although the original Nash V-learning algorithm~\citep{bai2020near} updates the V values (which makes the algorithm implementable in a decentralized fashion) instead of the Q values used in Algorithm~\ref{alg:framework}, these two forms are actually equivalent in the full-information setting (Proposition~\ref{prop:equivalence-NashV-framework}).
\end{example}

Compared with the $\tO(\sqrt{H^5S\max\{A,B\}/T})$ guarantee of (the non-Markovian output policy of) Nash V-Learning in the sample-based online setting~\citep{bai2020near,tian2021online,jin2021v}, our rate achieves better (logarithmic) $S,A,B$ dependence due to our full-information setting, and worse $H$ dependence which happens as our output policy is the (Markovian) state-wise average policies, whose guarantee (Theorem~\ref{theorem:master}) follows from a different analysis. 


\begin{example}[GDA-Critic]\label{example:GDA-critic}
This algorithm is a special case of Algorithm~\ref{alg:framework} with $\beta_t=\alpha_t=(H+1)/(H+t)$, and $\matrixgamealg$ as projected gradient descent/ascent (GDA), i.e., 
\begin{equation}\label{eq:GDA-example}
    \mu_h^t(\cdot|s) \!\setto\! \mathcal{P}_{\!\Delta_{\cA}}\!\paren{\mu_h^{t\!-\!1}(\cdot|s) \!+\! \eta \brac{Q_h^{t\!-\!1}\nu_h^{t\!-\!1}}(s)},~~
  \nu_h^{t}(\cdot|s)\!\setto\! \mathcal{P}_{\!\Delta_{\cB}}\!\paren{\nu_h^{t\!-\!1}(\cdot|s) \!-\! \eta (\brac{Q_h^{t\!-\!1})^{\!\!\top}\! \mu_h^{t\!-\!1}}(s)}.
\end{equation}
Similar as Nash V-Learning, GDA-Critic also admits an equivalent form with V value updates (full description in Algorithm~\ref{alg:gda-critic}). As GDA achieves weighted regret bounds with any monotone weights including $\set{\alpha_t^i}_{i=1}^t$ (Lemma~\ref{lem:gd.regret}), we can invoke Theorem~\ref{theorem:master} to show that this algorithm achieves $\negap(\widehat{\mu}^T,\widehat{\nu}^T)\le \tO(H^{7/2}({A\vee B})^{1/2}/\sqrt{T})$ if we choose $\eta\asymp 1/\sqrt{T}$ (Proposition~\ref{prop:GDA-critic}).

The GDA-critic algorithm is also similar to the OGDA-MG algorithm of~\citet{wei2021last}, except that we use the (non-optimistic) vanilla version of GDA. To our best knowledge, the above algorithm and guarantee are not known. We remark that even ignoring difference between GDA and OGDA, the above guarantee cannot be obtained by direct adaptation of the results of~\citep{wei2021last} which focus on either the average duality gap and/or last-iterate convergence.
\end{example}


Besides the above examples, Algorithm \ref{alg:framework} also incorporates the following algorithms which are typically not categorized as policy optimization algorithms.

\begin{example}[Nash Q-Learning \citep{hu2003nash,bai2020near}, full-information version]\label{example:Nash-Q} This algorithm is a special case of Algorithm \ref{alg:framework} with $\beta_t=\alpha_t=(H+1)/(H+t)$ and $\matrixgamealg$ as the matrix Nash subroutine
\begin{align*}
    (\mu_h^t(\cdot|s), \nu_h^t(\cdot|s)) \setto \NE(Q_h^{t-1}(s, \cdot, \cdot)) \defeq \arg\paren{\min_{\mu\in\Delta_\cA} \max_{\nu\in\Delta_\cB} \mu^\top Q_h^{t-1}(s,\cdot,\cdot)\nu}.
\end{align*}
(Full description in Algorithm~\ref{alg:Nash-Q}.)
Although $\NE(Q_h^{t-1}(s,\cdot,\cdot))$ is not by default a no-regret algorithm, using the fact that $\linf{Q_h^t-Q_h^{t-1}}$ is small (due to the small $\alpha_t$) we can show that it is close to a (hypothetical) ``Be-The-Leader'' style algorithm that computes the matrix NE of the current Q matrix $Q_h^t$ which achieves $\le 0$ regret (Lemma~\ref{lemma:reg-Nash-Q}). Combining this with Theorem~\ref{theorem:master} shows that this algorithm achieves $\negap(\widehat{\mu}^T,\widehat{\nu}^T)\le \tO(H^4/T)$ (Proposition~\ref{prop:Nash-Q}).
\end{example}

\begin{example}[Nash Policy Iteration (Nash-PI)]\label{example:Nash-PI}
This classical algorithm (Algorithm~\ref{alg:Nash-PI}) performs iterative policy evaluation and policy improvement (also similar to Nash Value Iteration~\citep{shapley1953stochastic,bai2020provable,liu2021sharp}): 
\begin{align}
\label{equation:nash-pi}
    (\mu^{t+1}_h(\cdot|s), \nu^{t+1}_h(\cdot|s)) \setto \NE(Q^{\mu^t, \nu^t}_h(s, \cdot, \cdot)).
\end{align}
This is also a special case of Algorithm \ref{alg:framework} with
$\beta_t=1$ and $\matrixgamealg$ set as $\NE$.
It is a standard result that this algorithm converges exactly (achieving zero NE gap) in $H$ steps, and this fact can be obtained using our framework as well (Proposition~\ref{prop:Nash-PI}).
\end{example}




\section{Fast convergence of optimistic FTRL}
\label{section:optimistic}


In this section, we instantiate Algorithm~\ref{alg:framework} by choosing $\matrixgamealg$ as the Optimistic Follow-The-Regularized-Leader (OFTRL) algorithm. OFTRL is also an uncoupled no-regret algorithm that is known to enjoy faster convergence than standard FTRL under additional loss smoothness assumptions~\citep{rakhlin2013optimization,syrgkanis2015fast, chen2020hedging,daskalakis2021near}. We show that, using OFTRL, Algorithm~\ref{alg:framework} enjoys faster convergence than the $\tO(1/\sqrt{T})$ rate of using FTRL or GDA (cf. Example~\ref{example:Nash-V} \&~\ref{example:GDA-critic}). 

Concretely, we use the following weighted OFTRL algorithm at each $(h,s,t)$:
\begin{align}
\label{equation:oftrl-munu}
 \begin{aligned}
    & \textstyle \mu^{t}_h(a | s) \propto_a \exp\paren{ (\eta/w_t) \cdot \brac{ \sum_{i=1}^{t-1} w_i(Q_h^i\nu_h^i)(s, a) + w_{t-1}(Q_h^{t-1}\nu_h^{t-1})(s, a) } }, \\
    & \textstyle \nu^{t}_h(b | s) \propto_b \exp\paren{ -(\eta/w_t) \cdot \brac{ \sum_{i=1}^{t-1} w_i((Q_h^i)^\top\mu_h^i)(s, b) + w_{t-1}((Q_h^{t-1})^\top\mu_h^{t-1})(s, b) } },
\end{aligned}
\end{align}
where $w_t$ is the same weights as defined in Example \ref{example:Nash-V}, and we choose $\beta_t=\alpha_t=(H+1)/(H+t)$.




\begin{theorem}[Fast convergence of OFTRL in zero-sum Markov Games]
\label{theorem:oftrl-main}
Suppose Algorithm~\ref{alg:framework} is instantiated with $\beta_t=\alpha_t=(H+1)/(H+t)$ and $\matrixgamealg$ to be the OFTRL algorithm~\eqref{equation:oftrl-munu} with any $\eta\le 1/H$ (full description in Algorithm~\ref{algorithm:oftrl-mg}). Then the per-state regret can be bounded as follows for some absolute constant $C>0$:
\begin{align}
  \label{equation:oftrl-main-per-state-regret}
    \reg_h^t \le \barreg_h^t \defeq C \brac{ \frac{H^2\log(A\vee B)}{\eta t} + \eta^5H^6 }~~~\textrm{for all}~(h,t)\in[H]\times[T].
\end{align}
Further, choosing $\eta={\rm poly}(H, \log(A\vee B), \log T)\cdot T^{-1/6}$, the output (state-wise average) policy $(\hat{\mu}^T, \hat{\nu}^T)$ achieves approximate NE guarantee
\begin{align}
  \label{equation:oftrl-main-policy-guarantee}
  \textstyle
    \negap(\hat{\mu}^T, \hat{\nu}^T) \le \cO\paren{{\rm poly}(H, \log(A\vee B), \log T)\cdot T^{-5/6}}.
\end{align}
\end{theorem}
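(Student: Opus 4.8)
## Proof proposal for Theorem~\ref{theorem:oftrl-main}

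The plan is to establish the two displayed claims in order. The first claim — the per-state weighted regret bound \eqref{equation:oftrl-main-per-state-regret} — is the technical heart, and the second claim \eqref{equation:oftrl-main-policy-guarantee} follows almost mechanically from it by plugging into Theorem~\ref{theorem:master} and optimizing $\eta$. So I would structure the proof in two pieces.

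\textbf{Step 1: Weighted regret bound for OFTRL with slowly-changing losses.} Fix $(h,s)$. The loss vectors fed to the max-player's OFTRL are $\ell^i \defeq [Q_h^i \nu_h^i](s,\cdot) \in \R^A$, with $\linf{\ell^i}\le H$, and the weights are $\{\beta_t^i\}_{i=1}^t$; equivalently, rescaling, the effective weights in \eqref{equation:oftrl-munu} are $\{w_i\}$ with $w_t = \alpha_t^t/\alpha_t^1$, which are \emph{increasing} in $i$ (this monotonicity is exactly what makes weighted OFTRL analyzable). I would first recall/derive the weighted RVU-type (Regret bounded by Variation in Utilities) inequality for OFTRL: the weighted regret is bounded by something like $\frac{H^2\log A}{\eta_t} + \eta \sum_{i} w_i \linf{\ell^i - \ell^{i-1}}^2 - (\text{stability terms } \norm{\mu^i - \mu^{i-1}}^2)$, where $\eta_t = \eta \cdot w_t / (\text{partial weight sum})$ — this is the weighted analogue of \citep[Thm.~19 or similar]{syrgkanis2015fast}/\citep{chen2020hedging}. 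Converting this into the normalized-weight form $\{\beta_t^i\}$ and using $\sum_{i=j}^\infty \beta_t^j \le \cbeta = 1+1/H$, the leading term becomes $O(H^2\log(A\vee B)/(\eta t))$ (the $1/t$ coming from $\beta_t^t = \alpha_t \asymp H/t$).

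\textbf{Step 2: Controlling the loss variation $\linf{\ell^i - \ell^{i-1}}$.} This is where I expect the main obstacle. We need $\linf{\ell^i - \ell^{i-1}} = \linf{[Q_h^i\nu_h^i](s,\cdot) - [Q_h^{i-1}\nu_h^{i-1}](s,\cdot)}$ to be small. Writing $\ell^i - \ell^{i-1} = [Q_h^i(\nu_h^i - \nu_h^{i-1})] + [(Q_h^i - Q_h^{i-1})\nu_h^{i-1}]$, the first term is controlled by the OFTRL \emph{policy stability} $\lone{\nu_h^i - \nu_h^{i-1}}$ (which appears with a negative sign in the RVU bound, enabling a cancellation), while the second term requires bounding $\linf{Q_h^i - Q_h^{i-1}}$. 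By the value-update recursion \eqref{eq:Q-update}, $Q_h^i - Q_h^{i-1}$ is driven by $\alpha_i \asymp H/i$ times a bounded quantity, \emph{plus} the change in the propagated layer-$(h+1)$ value $[(\mu_{h+1}^i)^\top Q_{h+1}^i \nu_{h+1}^i] - [(\mu_{h+1}^{i-1})^\top Q_{h+1}^{i-1}\nu_{h+1}^{i-1}]$. This forces an \emph{inductive argument over layers $h = H, H-1, \dots, 1$}: the per-layer policy and value increments at layer $h$ depend on those at layer $h+1$, and one must show the bounds do not blow up by more than a constant factor per layer (hence the $H^6$ rather than something exponential in $H$ — the $\cbeta^H = (1+1/H)^H = O(1)$ miracle, though here it shows up inside the regret analysis rather than only in Theorem~\ref{theorem:master}). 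The careful bookkeeping: establish that $\lone{\mu_h^i - \mu_h^{i-1}} = O(\eta H^2 / i)$-ish and $\linf{Q_h^i - Q_h^{i-1}} = O(\text{poly}(H)/i)$ simultaneously by downward induction on $h$, feeding these into the RVU bound so the $\eta \sum w_i \linf{\ell^i-\ell^{i-1}}^2$ term evaluates to $O(\eta^5 H^6)$ after the quintic arithmetic (the fifth power arises because one $\eta^2$ comes from squaring, and there is an extra $\eta^3$-type gain from the smoothness/self-bounding chain, matching the $\tO(T^{-5/6})$ target when balanced against $1/(\eta t)$). The symmetric argument handles the min-player, and taking $\max_s$ and $\max$ over the two players gives \eqref{equation:oftrl-main-per-state-regret}.

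\textbf{Step 3: From per-state regret to the NE guarantee.} With $\barreg_h^t = C[H^2\log(A\vee B)/(\eta t) + \eta^5 H^6]$ in hand — note it is non-increasing in $t$ as required — I apply Theorem~\ref{theorem:master}. Since $\cbeta = 1+1/H$ gives $\cbeta^H \le e$, the bound \eqref{eq:NE-gap} reads $\negap(\hat\mu^T,\hat\nu^T) \le \tO\big(H \cdot \barreg_h^T + H^2 \log T \cdot \frac1T\sum_{t=1}^T \barreg_h^t\big)$. The average $\frac1T\sum_{t\le T}\barreg_h^t$ contributes $O(H^2\log(A\vee B)\log T/(\eta T)) + O(\eta^5 H^6)$ (the harmonic sum $\sum 1/t = O(\log T)$ is absorbed). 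Balancing the two $\eta$-dependent terms, $1/(\eta T) \asymp \eta^5$, yields $\eta \asymp T^{-1/6}$ up to $\text{poly}(H,\log(A\vee B),\log T)$ factors, and the resulting rate is $\text{poly}(H,\log(A\vee B),\log T)\cdot T^{-5/6}$, which is \eqref{equation:oftrl-main-policy-guarantee}. The side conditions $\eta \le 1/H$ and $\beta_1 = 1$ are compatible for $T$ large, and for small $T$ the bound is vacuous, so the statement holds for all $T \ge 2$ after adjusting the absolute constant.
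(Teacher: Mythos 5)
Your overall skeleton (weighted RVU-type bound for OFTRL, control of the loss variation, then Theorem~\ref{theorem:master} with $\cbeta^H=O(1)$ and the balance $1/(\eta T)\asymp \eta^5$ giving $\eta\asymp T^{-1/6}$) matches the paper, and your Steps 1 and 3 are essentially correct. The gap is in Step 2, in two places. First, the claimed pointwise policy stability $\lone{\mu_h^i(\cdot|s)-\mu_h^{i-1}(\cdot|s)}=O(\eta H^2/i)$ is false: the newest loss vector enters the OFTRL update with a non-vanishing effective learning rate (the weights satisfy $w_{i-1}/w_i\to 1$), so the correct pointwise bound is only $O(\eta H)$, uniform in $i$ (this is exactly what Lemma~\ref{lemma:general-sum-stability} proves in the general-sum setting). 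If your $O(\eta H^2/i)$ bound were true, the total policy movement over $T$ rounds would be $O(\eta H^2\log T)$, i.e.\ the iterates could never travel from an arbitrary initialization to the equilibrium for small $\eta$ — and plugging it into the squared-variation term would give a bound far better than $\eta^5H^6$, which is another sign it cannot hold. Using the true pointwise bound $O(\eta H)$ instead yields only $\eta^3H^4$ in place of $\eta^5H^6$, i.e.\ the $T^{-3/4}$ rate of the general-sum analysis, not $T^{-5/6}$. The actual mechanism for $\eta^5H^6$ is different: one bounds the \emph{summed} weighted squared stability $\sum_t w_t\lone{\mu^t-\mu^{t-1}}^2/\eta$ by the \emph{first} power of the loss variations $\sum_t\linf{w_tQ^t\nu^t-w_{t-1}Q^{t-1}\nu^{t-1}}$ via a separate potential argument (Lemma~\ref{lemma:oftrl-stability}, the weighted analogue of Chen--Peng), and then cancels this linear term against the \emph{negative quadratic} term in the same quantity retained from the min-player's RVU bound; the AM-GM step $8\eta^2H^2z - z^2/(16\eta H^2w_{t-1})\le 256\eta^5H^6w_{t-1}$ is precisely where the fifth power comes from. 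Your sketch gestures at "a cancellation" and "an $\eta^3$-type gain" but does not identify this linear-vs-quadratic pairing, which is the crux.

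Second, the cross-layer induction you propose is both unnecessary and mislocated. The bound $\linf{Q_h^t-Q_h^{t-1}}\le\alpha_tH$ is immediate from the update \eqref{eq:Q-update}: $Q_h^t-Q_h^{t-1}=\alpha_t\bigl(\,(r_h+\P_h[(\mu_{h+1}^t)^\top Q_{h+1}^t\nu_{h+1}^t])-Q_h^{t-1}\bigr)$ and both quantities in the parenthesis lie in $[0,H]$, so no coupling of stability estimates across layers is needed. The entire per-state regret analysis is carried out for a single fixed $(h,s)$ in isolation; all cross-layer propagation is delegated to Theorem~\ref{theorem:master} (where the $\cbeta^H$ factor lives). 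To repair your proof you should drop the layer induction and the $O(\eta H^2/i)$ claim, keep the negative stability terms from the RVU bound rather than discarding them after a pointwise estimate, and add the first-order stability lemma plus the AM-GM cancellation described above.
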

To our best knowledge, the $\tO(T^{-5/6})$ rate asserted in Theorem~\ref{theorem:oftrl-main} is the first rate faster than the standard $\tO(1/\sqrt{T})$ for symmetric, policy optimization type algorithms in two-player zero-sum Markov games. The closest existing result to this is of~\citet{wei2021last}, who analyze the OGDA algorithm with smooth value updates and show a $\tO(1/\sqrt{T})$ convergence of both the average $\negap$ and the $\negap$ of the last-iterate. However, these only imply at most a $\tO(1/\sqrt{T})$ rate for the average policies, and not our faster rate\footnote{See also~\citep{golowich2020last} for another example where last-iterates are \emph{provably} slower than averages.}. 
\citet{cen2021extra,zhao2021provably} show $\tO(1/T)$ convergence of policy optimization-\emph{like} algorithms with optimistic subroutines, which are however very different styles of algorithms that either performs \emph{layer-wise learning} similar as Value Iteration (the matrix games at each state are learned to sufficient precision before the backup)~\citep{cen2021extra}, or uses strongly \emph{asymmetric} updates that simulate a policy gradient-best response dynamics~\citep{zhao2021provably}. By contrast, our Algorithm~\ref{algorithm:oftrl-mg} (as well as its modified version in Algorithm~\ref{algorithm:modified-oftrl-q} with $\tO(T^{-1})$ rate) runs symmetric no-regret dynamics for both players, simultaneously at all layers.




 

\paragraph{Proof overview} The proof of Theorem \ref{theorem:oftrl-main} (deferred to Appendix~\ref{appendix:proof-oftrl-main}) builds upon the recent line of work on fast convergence of optimistic algorithms~\citep{rakhlin2013optimization,syrgkanis2015fast,chen2020hedging}, in particular the work of~\citet{chen2020hedging} which shows an $\tO(T^{-5/6})$ convergence rate of OFTRL for two-player normal-form games. Our regret bound~\eqref{equation:oftrl-main-per-state-regret} generalizes this result non-trivially by additionally handling (1) The \emph{weighted} regret, which requires bounding the weighted stability of the OFTRL iterates by a new analysis of the potential functions (Lemma~\ref{lemma:oftrl-stability}), and (2) The errors induced by \emph{changing game matrices}, as $Q_h^t(s,\cdot,\cdot)$ changes over $t$. Plugging~\eqref{equation:oftrl-main-per-state-regret} into Theorem~\ref{theorem:master} yields the policy guarantee~\eqref{equation:oftrl-main-policy-guarantee}.


\paragraph{Modified OFTRL algorithm with $\tO(T^{-1})$ rate}

We further slightly modify Algorithm~\ref{algorithm:oftrl-mg} to design a new OFTRL style algorithm with $\tO(T^{-1})$ convergence rate (Algorithm \ref{algorithm:modified-oftrl-q} and Theorem~\ref{theorem:modified-alg}), which improves over the $\tO(T^{-5/6})$ of Theorem~\ref{theorem:oftrl-main} and matches the known best convergence rate for policy optimization type algorithms in two-player zero-sum Markov games. Algorithm~\ref{algorithm:modified-oftrl-q} still uses OFTRL in its policy update step, and the main difference from Algorithm~\ref{algorithm:oftrl-mg} is in its value update step: Rather than maintaining a single $Q_h^t$, the two players now each maintain their own value estimate $\up Q_h^t$, $\low Q_h^t$ which are still updated in an incremental fashion similar to (though not strictly speaking an instantiation of) the update rule \eqref{eq:Q-update} in our main algorithm framework. Details of the algorithm as well as the proofs are deferred to Appendix \ref{apdx:modified-OFTRL}.

\subsection{Extension to multi-player general-sum Markov games}
\label{section:multi-player}



Our fast convergence result can be extended to the more general setting of \emph{multi-player general-sum} Markov games. Concretely, we consider general-sum Markov games with $m\ge 2$ players, $S$ states, $H$ steps, where the $i$-th player has action space $\cA_i$ with $\Amax\defeq\max_{i\in[m]}|\cA_i|$ and her own reward function. The goal is to find a correlated policy over all players that is an approximate Coarse Correlated Equilibrium (CCE) of the game (see Appendix~\ref{appendix:general-sum-prelim} for the detailed setup).


We show that the OFTRL algorithm works for general-sum Markov games as well, with a fast $\tO(T^{-3/4})$ convergence to CCE. The formal statement and proof is in Theorem~\ref{theorem:oftrl-general-sum-formal} \& Appendix~\ref{appendix:proof-oftrl-general-sum-formal}.
\begin{theorem}[Fast convergence of OFTRL in general-sum Markov Games; Informal version of Theorem~\ref{theorem:oftrl-general-sum-formal}]
\label{theorem:oftrl-general-sum}
For $m$-player general-sum Markov Games, running the OFTRL algorithm (Algorithm~\ref{algorithm:oftrl-general-sum-mg}) for $T$ rounds, the output (correlated) policy $\hat{\pi}$ is an $\eps$-approximate CCE, where
\begin{align*}
  \textstyle
    \eps \le \cO\paren{ {\rm poly}(H, \log\Amax, \log T) \cdot (m-1)^{1/2} \cdot T^{-3/4} }.
\end{align*}
\end{theorem}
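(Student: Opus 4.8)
The plan is to mirror the two-player zero-sum argument: establish a general-sum analogue of the master guarantee (Theorem~\ref{theorem:master}) that bounds the CCE gap by per-player, per-state weighted regrets, then control those regrets using the RVU (Regret-bounded-by-Variation-in-Utilities) property of OFTRL. First I would set up the general-sum framework in Appendix~\ref{appendix:general-sum-prelim}: each player $i$ runs OFTRL at each $(h,s)$ against the vector of ``marginalized'' $Q$-values $[Q_{h,i}^\tau \pi_{-i}^\tau](s,\cdot)$ induced by the other players' current policies, and the shared value estimates $Q_{h,i}^t$ are updated incrementally with the same weights $\beta_t=\alpha_t=(H+1)/(H+t)$. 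The CCE gap decomposes, via a performance-difference argument analogous to Lemma~\ref{lem.duality}, into a weighted sum over $h$ of $\max_i \max_s \reg_{h,i}^t(s)$ measured against the \emph{equilibrium} $Q$-values $Q_{h,i}^\star$, plus value-estimation-error terms $\delta_h^t = \max_i\linf{Q_{h,i}^t - Q_{h,i}^\star}$ that satisfy a recursion of the same shape as Lemma~\ref{lemma:recursion-of-value-estimation}, contributing the same $\cbeta^H = (1+1/H)^H = O(1)$ factor. This yields an inequality of the form $\eps \le C[H\max_h \barreg_h^T + H^2 \log T \cdot \frac1T\sum_{t=1}^T \max_h \barreg_h^t]$, exactly as in~\eqref{eq:NE-gap}, now with $\barreg_h^t$ an upper bound on the per-player weighted regret.

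The core new step is bounding $\barreg_h^t$. Here I would invoke the RVU property~\citep[Definition 3]{syrgkanis2015fast}: for weighted OFTRL with step size $\eta$, the weighted regret of player $i$ at $(h,s)$ is at most $\frac{H^2\log\Amax}{\eta t} + \eta \sum_{\tau}\beta_t^\tau \|g_{h,i}^\tau - g_{h,i}^{\tau-1}\|_\infty^2 - \frac{1}{\eta}\sum_\tau \beta_t^\tau \|x_{h,i}^\tau - x_{h,i}^{\tau-1}\|_1^2$ up to constants, where $g_{h,i}^\tau = [Q_{h,i}^\tau \pi_{-i}^\tau](s,\cdot)$ is the utility vector. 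The utility-variation term must be split: $\|g_{h,i}^\tau - g_{h,i}^{\tau-1}\|_\infty \le \|Q_{h,i}^\tau - Q_{h,i}^{\tau-1}\|_\infty + H\sum_{j\ne i}\|x_{h,j}^\tau - x_{h,j}^{\tau-1}\|_1$ — the first piece is $O(H\beta_\tau) = O(H/\tau)$ from smooth value updates (summing to $O(H^2\log t)$ after weighting, i.e. a lower-order $\tO(\eta H^2/t)$ contribution modulo the $\beta$-weights), and the second piece, when summed over players $i$, telescopes against the negative stability terms $-\frac1\eta \sum_j \beta_t^\tau \|x_{h,j}^\tau - x_{h,j}^{\tau-1}\|_1^2$ provided $\eta$ is small enough that the cross-player factor $(m-1)\eta^2 H^2$ is dominated — this is where the $(m-1)^{1/2}$ and the $T^{-3/4}$ (rather than $T^{-5/6}$) come from, since the RVU cancellation only buys us one order of stability, not the iterated-smoothness bootstrap of~\citet{chen2020hedging} available in the two-player case. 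I expect $\barreg_h^t \lesssim \frac{H^2\log\Amax}{\eta t} + \eta^3 (m-1) H^4$ or similar; optimizing $\eta \asymp ((m-1))^{-1/4} T^{-1/4}$ against Theorem~\ref{theorem:master}'s $\frac1T\sum_t \barreg_h^t = \tO(\frac{H^2\log\Amax}{\eta})$ term yields $\eps = \tO((m-1)^{1/4}\cdot \eta) = \tO((m-1)^{1/2} T^{-3/4})$ after tracking the $(m-1)$ dependence through both terms.

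The main obstacle is the weighted RVU bookkeeping with \emph{changing} utility matrices: the standard RVU analysis assumes a fixed game, so I must absorb the drift $\|Q_{h,i}^\tau - Q_{h,i}^{\tau-1}\|_\infty$ into the variation term and verify it does not destroy the negative-term cancellation — in particular checking that the weights $\beta_t^\tau$ (geometric-tail, not uniform) are compatible with the telescoping across players, which requires a weighted version of the summation lemma behind the RVU property (analogous to Lemma~\ref{lemma:oftrl-stability} in the zero-sum proof). A secondary subtlety is that for CCE (unlike Nash) the per-state regret comparators range over \emph{all} deviations $\phi: \Delta_{\cA_i}\to\Delta_{\cA_i}$ for CCE reduce to constant deviations, so the single-vector OFTRL regret suffices; I would note that the CE case would need swap-regret machinery and is out of scope. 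Modulo these, plugging the regret bound into the general-sum master inequality and optimizing $\eta$ gives the claimed rate.
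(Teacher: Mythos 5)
Your plan for bounding the per-state regrets is broadly in the right spirit, but the first half of your proposal---decomposing the CCE gap via a performance-difference argument against ``equilibrium $Q$-values $Q^\star_{i,h}$'' with a value-estimation error $\delta_h^t = \max_i\linf{Q_{i,h}^t - Q_{i,h}^\star}$---has a genuine gap: in general-sum Markov games there is no canonical $Q^\star$ to compare against (equilibria are neither unique nor value-sharing), and the target object is a \emph{correlated}, generally non-Markov policy, so the zero-sum recipe of Lemma~\ref{lem.duality} plus Lemma~\ref{lemma:recursion-of-value-estimation} does not transfer. The paper avoids this entirely with the certified-policy construction (Algorithm~\ref{algorithm:certified-policy}): the output $\hat\pi_h^t$ is a recursively defined mixture policy, and Lemma~\ref{lem.certified-value} shows the algorithm's internal estimates satisfy $V_{i,h}^t(s) = V_{i,h}^{\hat\pi_h^t}(s)$ \emph{exactly}, so there is no estimation-error term at all. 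The recursion (Lemma~\ref{lem.regret-decompose-general-sum}) is then run directly on the best-response gaps $V_{i,h}^{\dagger,\hat\pi_{-i,h}^t} - V_{i,h}^{\hat\pi_h^t}$, yielding $\ccegap(\hat\pi^T)\le CH\cdot\frac1T\sum_t\max_h\barreg_h^t$ (Lemma~\ref{lemma:master-general-sum}). Without this construction (or an equivalent one), your decomposition does not certify a CCE.

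On the regret bound itself, you propose the full RVU telescoping---summing the negative stability terms across players against the utility-variation terms. The paper does something simpler: it \emph{drops} the negative terms and instead bounds $\lone{\pi_i^t-\pi_i^{t-1}}\le 4\eta H$ a priori via smoothness of exponential weights (Lemmas~\ref{lemma:Hedge-smoothness} and~\ref{lemma:general-sum-stability}), giving $\lone{\pi_{-i}^t-\pi_{-i}^{t-1}}\le 4(m-1)\eta H$ and hence a $(m-1)^2\eta^3H^4$ term in $\barreg_h^t$; the $Q$-drift is handled exactly as you say via $\linf{Q_i^t-Q_i^{t-1}}\le\alpha_t H$. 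Note also that your regret bound carries $(m-1)^1$ rather than $(m-1)^2$, and balancing $\eta^4(m-1)=1/T$ would give $(m-1)^{1/4}T^{-3/4}$, not the $(m-1)^{1/2}T^{-3/4}$ you quote---the paper's $(m-1)^{1/2}$ comes precisely from the quadratic dependence $((m-1)\eta H)^2$. A further subtlety with your cross-player telescoping route: the CCE recursion needs each player's \emph{individual} regret $\max_i\reg_{h,i}^t(s)$, whereas the RVU cancellation naturally controls the \emph{sum} of regrets, and individual weighted regrets need not be nonnegative; the paper's direct stability bound sidesteps this issue as well.
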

A baseline result for this problem would be $\tO(T^{-1/2})$, which may be obtained directly by adapting existing proofs of the V-Learning algorithm~\citep{song2021can,jin2021v} to the full-information setting. Our Theorem~\ref{theorem:oftrl-general-sum} shows that a faster $\tO(T^{-3/4})$ rate is available by using the OFTRL algorithm, which to our best knowledge is the first such result for policy optimization in general-sum Markov games. We also remark that the output policy $\hat{\pi}$ above is not a state-wise average policy as in the zero-sum setting, but rather a mixture policy that is in general non-Markov (cf. Algorithm~\ref{algorithm:certified-policy}), which is similar as (and slightly simpler than) the ``certified policies'' used in existing work~\citep{bai2020near,song2021can,jin2021v}. The proof of Theorem~\ref{theorem:oftrl-general-sum} builds upon the RVU property of OFTRL~\citep{syrgkanis2015fast} and additionally handles changing game rewards, similar as in Theorem~\ref{theorem:oftrl-main}. A proof sketch and comparison with the $\tO(T^{-5/6})$ analysis of the zero-sum case can be found in Appendix~\ref{appendix:general-sum-formal}.





\section{Simulations}
\label{section:eager}


We perform numerical studies on the various policy optimization algorithms. Our goal is two-fold: (1) Verify the convergence guarantees in our theorems and examples; (2) Test some other important special cases of Algorithm~\ref{alg:framework} that may not yet admit a provable guarantee. 

To this end, we consider three algorithms covered by the framework in Algorithm~\ref{alg:framework}:
\begin{enumerate}[leftmargin=2em, topsep=0em, itemsep=0em]
    \item {\bf FTRL} (Nash V-Learning) with smooth value updates $\beta_t=\alpha_t$ (Example~\ref{example:Nash-V} \& Algorithm~\ref{alg:Nash-V-learning}). Here the output policy $(\hat{\mu}^T, \hat{\nu}^T)$ are the state-wise averages with weights $\set{\alpha_T^i}_{i=1}^T$, and achieves $\negap(\hat{\mu}^T, \hat{\nu}^T)\lesssim T^{-1/2}$ if we choose $\eta\asymp T^{-1/2}$ (Proposition~\ref{prop:Nash-V-learning}).
    \item {\bf OFTRL} with smooth value updates $\beta_t=\alpha_t$ (Algorithm~\ref{algorithm:oftrl-mg}). Here the output policy $(\hat{\mu}^T, \hat{\nu}^T)$ are the state-wise averages with weights $\set{\alpha_T^i}_{i=1}^T$, and achieves $\negap(\hat{\mu}^T, \hat{\nu}^T)\lesssim T^{-5/6}$ if we choose $\eta\asymp T^{-1/6}$ (Theorem~\ref{theorem:oftrl-main}). We also consider the more aggressive choice $\eta=1$.
    \item {\bf INPG} (Independent Natural Policy Gradients). This algorithm is an instantiation of Algorithm~\ref{alg:framework} (cf. Appendix~\ref{apdx:exp-theoretical} for formal justifications) with \emph{eager} value updates ($\beta_t=1$), and $\matrixgamealg$ chosen as standard unweighted FTRL (a.k.a. Hedge) for all $(h,s,t)$:
    \begin{equation*}
        \mu_h^t(a|s) \propto_a \mu_h^{t-1}(a|s)\exp\!\paren{\eta\brac{Q_h^{t-1}\nu_h^{t-1}}(s)},~~
        \nu_h^t(b|s)\propto_b \nu_h^{t-1}(b|s)\exp\!\paren{ - \eta  \brac{\paren{Q_h^{t-1}}^{\!\top}\!\! \mu_h^{t-1}}(s)}.
    \end{equation*}
    For this algorithm, we choose two standard learning rates: $\eta=1$, and $\eta=T^{-1/2}$, and use the \emph{vanilla (state-wise) average} as the output policies (since the last-iterate is known to be cyclic):
    \begin{equation*}
    \textstyle
        \hat{\mu}_h^T(\cdot|s) = \frac{1}{T}\sum_{t=1}^T\mu_h^t(\cdot|s),~~\hat{\nu}_h^T(\cdot|s) = \frac{1}{T}\sum_{t=1}^T\nu_h^t(\cdot|s)~~~\textrm{for all}~(h,s)\in[H]\times\cS.
    \end{equation*}
\end{enumerate}
The main motivation for considering INPG is that it is a natural generalization of both the widely-studied NPG algorithm for single-agent RL, and the standard Hedge algorithm for zero-sum matrix games. In both cases the algorithm admits favorable convergence guarantees: NPG converges with rate $\cO(T^{-1})$~\citep{agarwal2021theory,Khodadadian2021,Mei2021,cen2021} (in both last iterate and averaging) using $\eta=O(1)$; Hedge converges with rate $\cO(T^{-1/2})$ in zero-sum matrix games (e.g.~\citep{rakhlin2013optimization}) using $\eta\asymp T^{-1/2}$. However, to our best knowledge, the convergence of INPG for zero-sum Markov games is unclear, and it is commented by \citet[Section 5]{wei2021last} that eager value updates ($\beta_t=1$) could cause the value function of the $(h+1)$th layer to oscillate, which make learning unstable or even biased within the $h$-th layer.



\paragraph{A two-layer numerical example}
We design a simple zero-sum Markov game with two layers and small state/action spaces ($H=2$, $S=4$, $A=2$; see Appendix~\ref{apdx:details} for the detailed description). The main feature of this game is that the reward in the first layer is much lower magnitude than that of the second layer (the scale is roughly $|r_1(s,\cdot,\cdot)|\approx 0.1 |r_2(s,\cdot,\cdot)|$), which may exaggerate the aforementioned unstable effect. We also choose a careful initialization $(\mu^1, \nu^1)$ which is non-uniform (and modify the FTRL / OFTRL algorithms to start at this initialization, cf. Appendix~\ref{apdx:details}) but with all entries bounded in $[0.15, 0.85]$. We test all three algorithms above on this game, with this initialization, $T\in\{10^3, 3\times 10^3, 10^4, \dots, 10^7\}$, and $\eta$ chosen correspondingly as described above.

\vspace{-0.5em}
\paragraph{Results}
Figure~\ref{fig:overall} plots the $\negap$ of the final output policies, one for each \{algorithm, $(T, \eta)$\}. Observe that FTRL converges with rate roughly $T^{-.570}\lesssim T^{-1/2}$, and OFTRL with $\eta=T^{-1/6}$ converges with rate $T^{-.835}\approx T^{-5/6}$, both corroborating our theory. Further, OFTRL with $\eta=1$ appears to converge with rate $T^{-1}$; showing this may be an interesting open theoretical question. 

On the other hand, the INPG algorithm appears to be much slower: The $\eta=1$ version does not seem to converge, whereas the convergence of $\eta=T^{-1/2}$ version is not clear but at least substantially slower than $T^{-1/2}$  ($T^{-.308}$ given by the linear fit) .

To further understand the behavior of INPG, we visualize its \emph{layer-wise} $\negap$'s for $h\in\{1,2\}$ (on our example), defined as the $\negap$ of the $h$-th layer's policies with respect to $Q^\star_h$:
\begin{align*}
\textstyle
    {\textup{NEGap-Layer-}}h(\mu,\nu):= \max_{s} \paren{ \max_{\mu_h^\dagger} \brac{(\mu_h^\dagger)^\top Q^\star_h \nu_h}(s) -  \min_{\nu_h^\dagger} \brac{\mu_h^\top Q^\star_h \nu_h^\dagger}(s) },~h \!= \!1,2.
\end{align*}
Note that \NElayerone~is a lower bound of $\negap(\mu, \nu)$ (cf. Appendix~\ref{apdx:exp-theoretical}) and thus needs to be minimized by any convergent algorithm. By contrast, on our example, \NElayertwo~is concerned with the last layer only, and can be minimized by any algorithm that works on matrix games.


Figure~\ref{fig:layer2} \&~\ref{fig:layer1} plot the layer-wise $\negap$'s of INPG against FTRL, on the single run with $T=10^7$ and $\eta=T^{-1/2}$. As expected, the \NElayertwo~converges nicely for both algorithms with similar rates (Figure~\ref{fig:layer2}) albeit the oscillation of INPG, whereas their behavior on \NElayerone~is drastically different: FTRL still converges, whereas INPG seems to be oscillating around a non-zero bias (Figure~\ref{fig:layer1}). This suggests that INPG may indeed be suffer from a non-vanishing bias in the first layer caused by the second layer's learning dynamics. (See Appendix~\ref{apdx:additional} for additional illustrations.) It would be an interesting open question to investigate the convergence of INPG theoretically.

\begin{figure}[t]
\centering
\begin{minipage}{.32\textwidth}
\centering
    \subcaption{{\footnotesize Overall $\negap$}}\label{fig:overall}
    \vspace{-.7em}
    \includegraphics[width=0.95\textwidth]{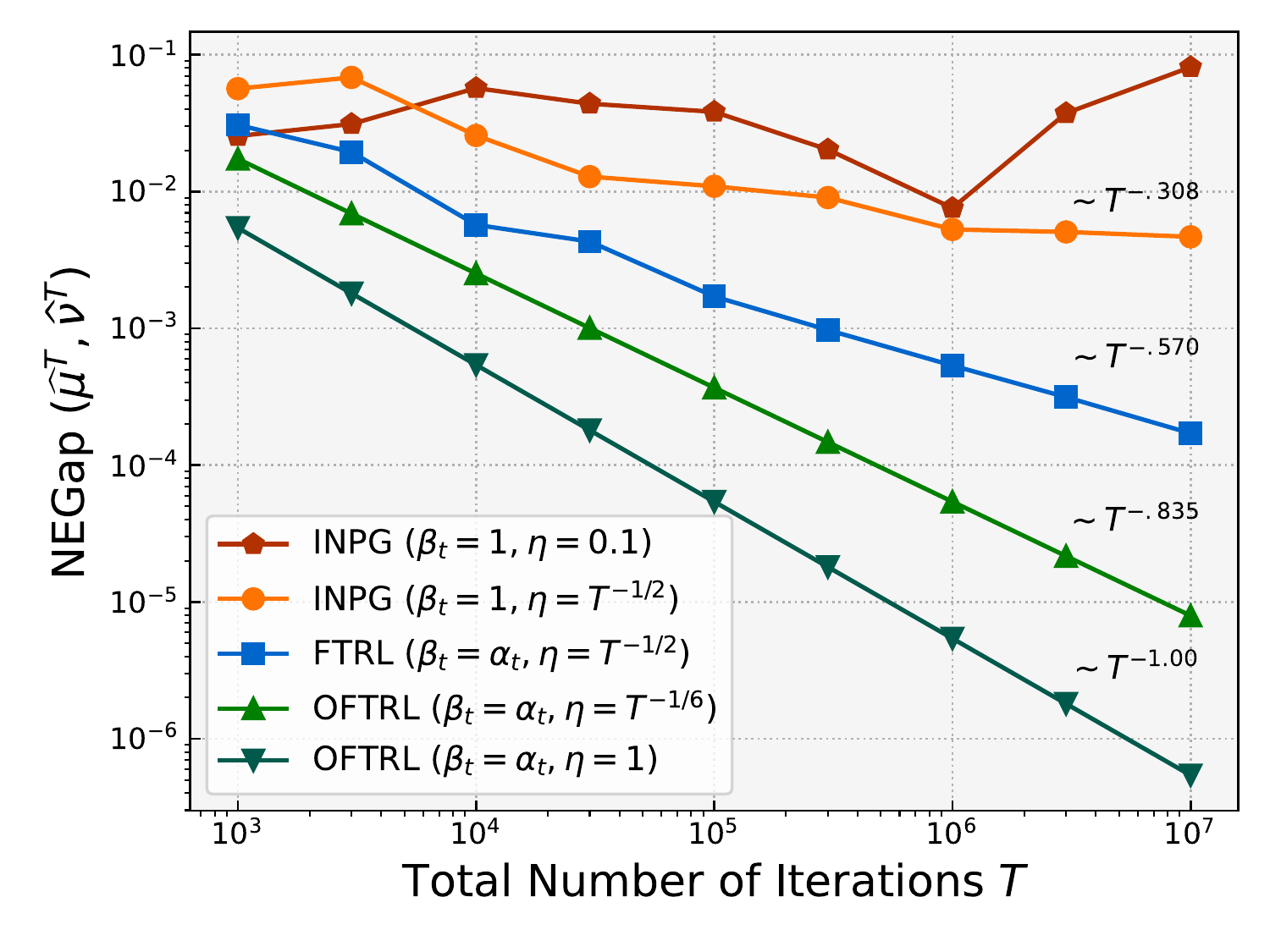}
\end{minipage}
\begin{minipage}{.32\textwidth}
    \centering
    \subcaption{{\footnotesize $\negap$ on layer $h=2$}}\label{fig:layer2}
    \vspace{-.7em}
    \includegraphics[width=.95\textwidth]{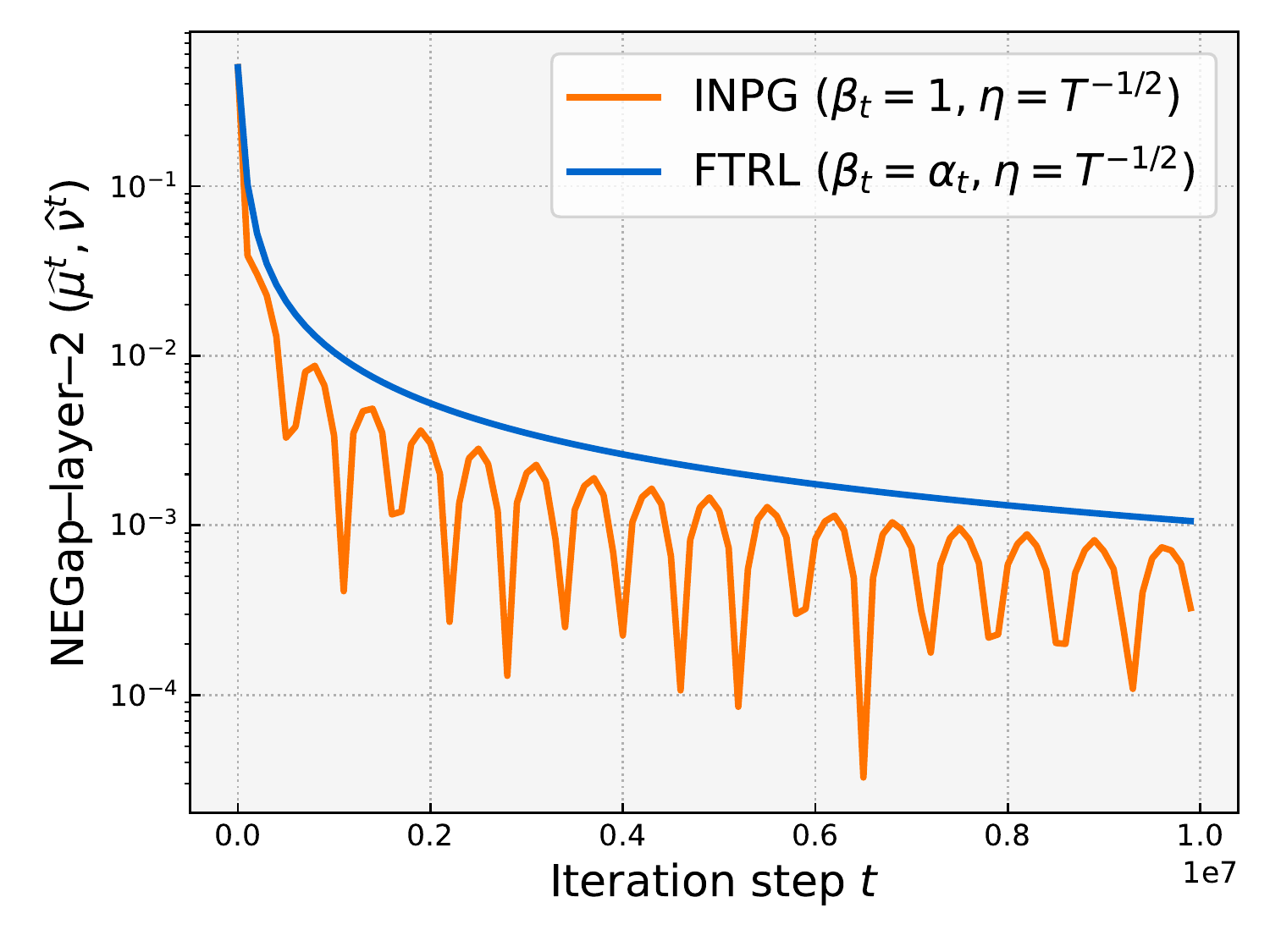}
\end{minipage}
\begin{minipage}{.32\textwidth}
    \centering
    \subcaption{{\footnotesize $\negap$ on layer $h=1$}}\label{fig:layer1}
    \vspace{-.7em}
    \includegraphics[width=.95\textwidth]{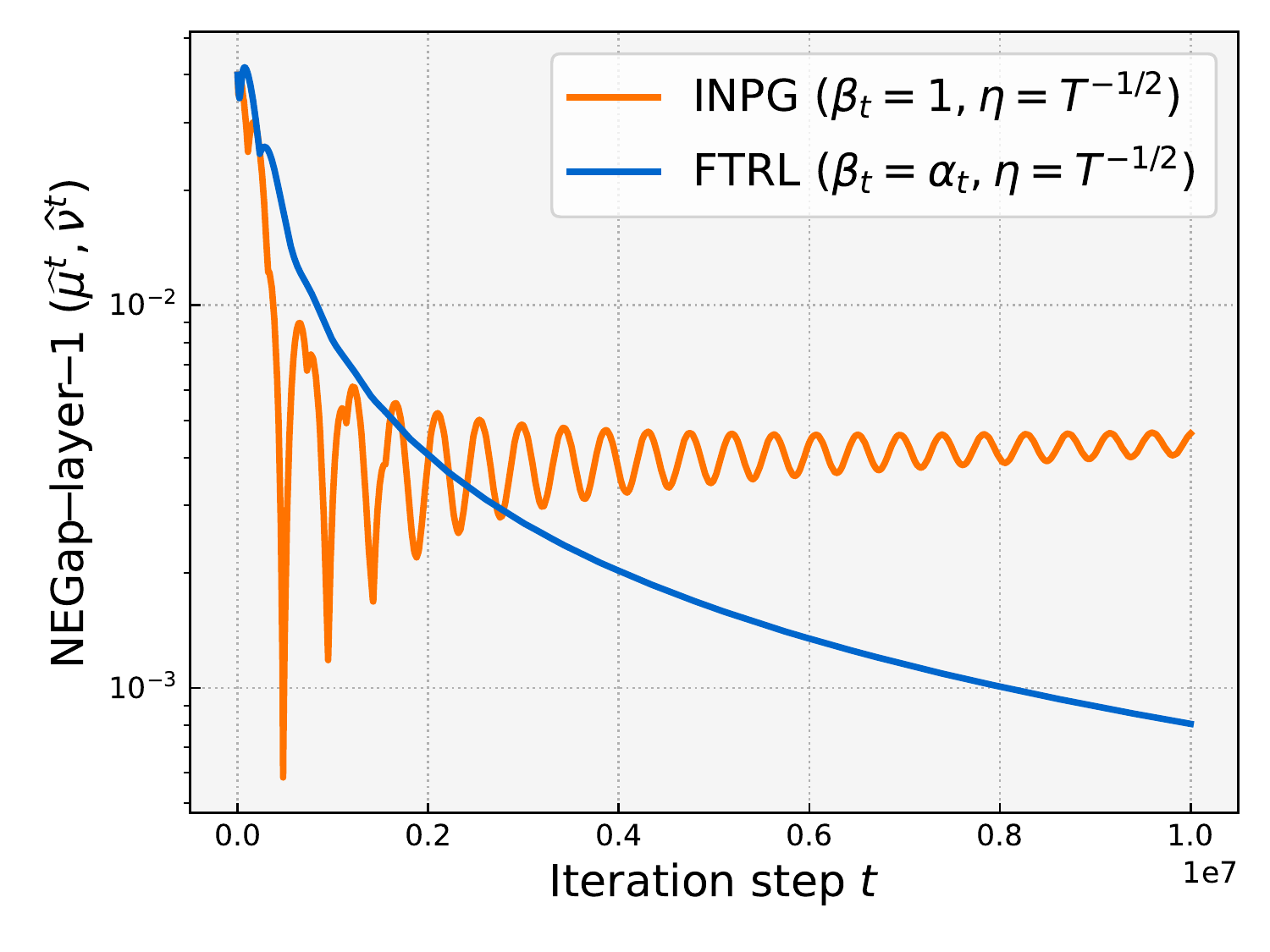}
\end{minipage}
\caption{
\small
    {\bf (a)} $\negap$ of the final output policies ($y$-axis) against total \#~iterations $T$ ($x$-axis) on the two-layer example (cf. Appendix~\ref{apdx:details}) in log-log scale. \textbf{Each dot represents a different run with its own $(T,\eta)$}. The scalings of the form $\sim T^{-\alpha}$ are obtained via best linear fits in the log space. {\bf (b,c)} Layer-wise $\negap$s ($y$-axis, log-scale) against iteration count $t$ ($x$-axis) for \{INPG, FTRL\} on a {\bf single run} with $T=10^7$ and $\eta=T^{-1/2}$.
    }
\label{fig:main}
\vspace{-1em}
\end{figure}

\section{Conclusion}
This paper provides a unified framework for analyzing a large class of policy optimization algorithms for two-player zero-sum Markov games. Using our framework, we prove new fast convergence rates for the OFTRL algorithm with smooth value updates: $\tO(T^{-5/6})$ for learning Nash Equilibria two-player zero-sum Markov games, which can be further accelerated to $\o(T^{-1})$ by slightly modifying the framework; and $\tO(T^{-3/4})$ for learning Coarse Correlated Equilibria in multi-player general-sum Markov games. We further demonstrate the importance of smooth value updates on a simple numerical example. We believe our work opens up many other interesting directions, such as whether improved rates (e.g. $\tO(T^{-1})$) are available for the unmodified OFTRL algorithm, or further investigation of policy optimization algorithms with eager value updates (such as Independent Natural Policy Gradients). Finally, a limitation of this work is its focus on the full-information setting, and it is an important open question how to generalize our analyses to the sample-based setting. 

\section*{Acknowledgment}
The authors would like to thank Chi Jin, Yuanhao Wang, Tiancheng Yu, Shicong Cen, and Song Mei for the valuable discussions. Runyu Zhang is supported by NSF AI institute: 2112085 and ONR YIP: N00014-19-1-2217.

\bibliographystyle{abbrvnat}
\bibliography{ref}

\makeatletter
\def\renewtheorem#1{%
  \expandafter\let\csname#1\endcsname\relax
  \expandafter\let\csname c@#1\endcsname\relax
  \gdef\renewtheorem@envname{#1}
  \renewtheorem@secpar
}
\def\renewtheorem@secpar{\@ifnextchar[{\renewtheorem@numberedlike}{\renewtheorem@nonumberedlike}}
\def\renewtheorem@numberedlike[#1]#2{\newtheorem{\renewtheorem@envname}[#1]{#2}}
\def\renewtheorem@nonumberedlike#1{  
\def\renewtheorem@caption{#1}
\edef\renewtheorem@nowithin{\noexpand\newtheorem{\renewtheorem@envname}{\renewtheorem@caption}}
\renewtheorem@thirdpar
}
\def\renewtheorem@thirdpar{\@ifnextchar[{\renewtheorem@within}{\renewtheorem@nowithin}}
\def\renewtheorem@within[#1]{\renewtheorem@nowithin[#1]}
\makeatother

\renewtheorem{theorem}{Theorem}[section]
\renewtheorem{lemma}{Lemma}[section]
\renewtheorem{remark}{Remark}
\renewtheorem{corollary}{Corollary}[section]
\renewtheorem{observation}{Observation}[section]
\renewtheorem{proposition}{Proposition}[section]
\renewtheorem{definition}{Definition}[section]
\renewtheorem{claim}{Claim}[section]
\renewtheorem{fact}{Fact}[section]
\renewtheorem{assumption}{Assumption}[section]
\renewcommand{\theassumption}{\Alph{assumption}}
\renewtheorem{conjecture}{Conjecture}[section]

\appendix

\section{Technical tools}

\subsection{Properties of $\alpha_t^i$}


Throughout this section, the sequence $\set{\beta_t^i}_{i\in[t]}$ is defined through sequence $\set{\beta_t}_{t\ge 1}$ as in~\eqref{eq:def-beta-k-t}, and $\alpha_t^i$ is its special case with $\beta_t=\alpha_t$, where $\set{\alpha_t}_{t\ge 1}$ is defined in \eqref{eq:def-alpha-t}. We present some basic algebraic properties of $\alpha_t^i$ that will be used in later proofs.
\begin{lemma}\label{lem:delta.monotonic}
Given a sequence $\{\Delta_h^t\}_{h,t}$ defined by
\begin{equation}
    \begin{cases}
    \Delta_h^t =  \sum_{i=1}^{t} \alpha_{t}^i \Delta_{h+1}^{i} + \beta_t,\\
    \Delta_{H+1}^t= 0,~ \text{ for all } t,
    \end{cases}
\end{equation}
where $\{\beta_t\}$ is non-increasing w.r.t. $t$. Then $\Delta_h^{t+1}\le \Delta_h^t$ for all $(t,h)\in\mathbb{N}\times[H+1]$.

\begin{proof}
We prove by doing backward induction on $h$. For the base case of induction, notice that the claim is true for $H+1$. Assume the claim is true for $h+1$. At step $h$, we have 
\begin{align*}
    \Delta_h^{t+1} & =  \sum_{i=1}^{t+1} \alpha_{t+1}^i \Delta_{h+1}^{i} + \beta_{t+1}\\
    & = (1-\alpha_{t+1}) \sum_{i=1}^{t} \alpha_{t}^i \Delta_{h+1}^{i} + \alpha_{t+1}\Delta_{h+1}^{t+1} +\beta_{t+1}\\
    & \le (1-\alpha_{t+1}) \sum_{i=1}^{t} \alpha_{t}^i \Delta_{h+1}^{i} + \alpha_{t+1} \sum_{i=1}^{t} \alpha_{t}^i \Delta_{h+1}^{i} +\beta_{t} = \Delta_h^t,
\end{align*}
where the inequality follows from the inductive hypothesis and $\beta_{t+1}\le \beta_t$.
\end{proof}
\end{lemma}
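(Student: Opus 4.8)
The plan is to prove this by backward induction on the layer index $h$, running from $h = H+1$ down to $h = 1$, establishing at each $h$ that the map $t \mapsto \Delta_h^t$ is non-increasing. The base case $h = H+1$ is immediate, since $\Delta_{H+1}^t \equiv 0$ is constant in $t$. For the inductive step I would fix $h$, assume the full statement ``$\Delta_{h+1}^{t+1} \le \Delta_{h+1}^t$ for all $t$'' (so that, by chaining, $\Delta_{h+1}^{t+1} \le \Delta_{h+1}^i$ for every $i \le t$), and compare $\Delta_h^{t+1}$ with $\Delta_h^t$ directly from the defining recursion.

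The key algebraic ingredient is the one-step relation among the weights: from their definition \eqref{eq:def-beta-k-t} specialized to $\beta_t = \alpha_t$, one has $\alpha_{t+1}^{t+1} = \alpha_{t+1}$ and $\alpha_{t+1}^i = (1-\alpha_{t+1})\,\alpha_t^i$ for $i \in [t]$, and moreover the weights form a probability vector, $\sum_{i=1}^t \alpha_t^i = 1$. Using these I would write
\[
\Delta_h^{t+1} = \sum_{i=1}^{t+1}\alpha_{t+1}^i \Delta_{h+1}^i + \beta_{t+1} = (1-\alpha_{t+1})\sum_{i=1}^t \alpha_t^i \Delta_{h+1}^i + \alpha_{t+1}\Delta_{h+1}^{t+1} + \beta_{t+1},
\]
then bound $\Delta_{h+1}^{t+1} \le \sum_{i=1}^t \alpha_t^i \Delta_{h+1}^i$ (valid since $\Delta_{h+1}^{t+1} \le \Delta_{h+1}^i$ for each $i \le t$ and $\{\alpha_t^i\}_{i\le t}$ is a probability vector), and use $\beta_{t+1} \le \beta_t$ from the monotonicity hypothesis on $\{\beta_t\}$. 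Together these give $\Delta_h^{t+1} \le \sum_{i=1}^t \alpha_t^i \Delta_{h+1}^i + \beta_t = \Delta_h^t$, closing the induction.

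I do not expect a real obstacle: the argument is a short manipulation that only combines $\sum_{i=1}^t \alpha_t^i = 1$ with the inductive monotonicity at layer $h+1$. The two points that need care are (i) invoking the weight recursion $\alpha_{t+1}^i = (1-\alpha_{t+1})\alpha_t^i$ correctly --- this is exactly the identity that reorganizes a $(t+1)$-term $\alpha$-weighted average into $(1-\alpha_{t+1})$ times the $t$-term average plus $\alpha_{t+1}$ times the newest entry, mirroring the incremental value update \eqref{eq:Q-update} --- and (ii) using the chained bound $\Delta_{h+1}^{t+1} \le \Delta_{h+1}^i$ for all $i \le t$ rather than only the one-step bound $\Delta_{h+1}^{t+1}\le \Delta_{h+1}^t$, which is why the inductive hypothesis should be phrased as monotonicity of the entire sequence at layer $h+1$.
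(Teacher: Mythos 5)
Your proposal is correct and follows essentially the same route as the paper's proof: backward induction on $h$, the weight recursion $\alpha_{t+1}^i=(1-\alpha_{t+1})\alpha_t^i$ to split off the newest term, the chained inductive bound $\Delta_{h+1}^{t+1}\le\sum_{i=1}^t\alpha_t^i\Delta_{h+1}^i$, and $\beta_{t+1}\le\beta_t$. No gaps.
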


The following lemma is taken from~\citep{jin2018q}.
\begin{lemma}
\label{lemma:alpha-ti}
The sequence $\alpha_t^i$ satisfies the following:
\begin{enumerate}[wide, label=(\alph*)]
\item $\sum_{t=i}^\infty \alpha_t^i = 1 + 1/H$ for all $i\ge 1$.
\end{enumerate}
\end{lemma}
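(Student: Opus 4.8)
The plan is to prove Lemma~\ref{lemma:alpha-ti}(a), namely that $\sum_{t=i}^\infty \alpha_t^i = 1 + 1/H$ for every $i\ge 1$, by directly manipulating the closed-form expression for $\alpha_t^i$ obtained from the recursive definition~\eqref{eq:def-beta-k-t} specialized to $\beta_t = \alpha_t = (H+1)/(H+t)$. First I would record the explicit product formula: for $i\le t$,
\[
  \alpha_t^i = \alpha_i \prod_{j=i+1}^{t}(1-\alpha_j) = \frac{H+1}{H+i}\prod_{j=i+1}^{t}\frac{j-1}{H+j},
\]
using that $1-\alpha_j = (j-1)/(H+j)$. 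The product telescopes in a useful way: $\prod_{j=i+1}^t (j-1) = (i)(i+1)\cdots(t-1) = (t-1)!/(i-1)!$ and $\prod_{j=i+1}^t (H+j) = (H+i+1)(H+i+2)\cdots(H+t) = (H+t)!/(H+i)!$, so
\[
  \alpha_t^i = (H+1)\cdot \frac{(t-1)!\,(H+i)!}{(i-1)!\,(H+t)!}\cdot\frac{1}{H+i} = (H+1)\cdot\frac{(t-1)!\,(H+i-1)!}{(i-1)!\,(H+t)!}
\]
for $t\ge i$ (and one checks the $t=i$ case gives $\alpha_i^i=\alpha_i$, consistent with $\beta_i^i=\beta_i$).

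Next I would sum over $t\ge i$. Factoring out the $t$-independent pieces,
\[
  \sum_{t=i}^\infty \alpha_t^i = (H+1)\,\frac{(H+i-1)!}{(i-1)!}\sum_{t=i}^\infty \frac{(t-1)!}{(H+t)!}.
\]
The key step is to evaluate $\sum_{t=i}^\infty \frac{(t-1)!}{(H+t)!}$ in closed form. I would do this by recognizing a telescoping identity: since
\[
  \frac{(t-1)!}{(H+t-1)!} - \frac{t!}{(H+t)!} = \frac{(t-1)!}{(H+t)!}\big[(H+t) - t\big] = H\cdot\frac{(t-1)!}{(H+t)!},
\]
we get $\frac{(t-1)!}{(H+t)!} = \frac1H\left(\frac{(t-1)!}{(H+t-1)!} - \frac{t!}{(H+t)!}\right)$, and the sum over $t\ge i$ telescopes to $\frac1H\cdot\frac{(i-1)!}{(H+i-1)!}$ (the tail term $t!/(H+t)!\to 0$ as $t\to\infty$ since $H\ge 1$). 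Substituting back,
\[
  \sum_{t=i}^\infty \alpha_t^i = (H+1)\,\frac{(H+i-1)!}{(i-1)!}\cdot\frac{1}{H}\cdot\frac{(i-1)!}{(H+i-1)!} = \frac{H+1}{H} = 1+\frac1H,
\]
which is exactly the claim, and notably independent of $i$.

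I do not anticipate a genuine obstacle here — this is a routine algebraic identity — but the step requiring the most care is getting the telescoping index bookkeeping right, in particular correctly handling the endpoint at $t=i$ (so that the general product formula is valid including the degenerate case where the product $\prod_{j=i+1}^{i}$ is empty and equals $1$) and verifying the tail $\lim_{t\to\infty} t!/(H+t)! = 0$, which holds because $t!/(H+t)! = 1/[(t+1)(t+2)\cdots(t+H)] \le 1/t^H \to 0$. An alternative route, should one prefer to avoid the explicit factorial manipulation, is to set $S_i \defeq \sum_{t=i}^\infty \alpha_t^i$ and derive a recursion relating $S_i$ and $S_{i+1}$ from $\alpha_t^i = (1-\alpha_{i+1})\alpha_t^{i+1}\cdot(\alpha_i/\alpha_i)$-type relations together with the base contribution $\alpha_i^i = \alpha_i$; this yields a first-order recursion whose unique bounded solution is the constant $1+1/H$. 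Either way the computation is short, so I would present the telescoping-sum argument as the cleanest. \qed
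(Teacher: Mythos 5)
Your proof is correct: the closed-form product formula, the telescoping identity $\frac{(t-1)!}{(H+t-1)!} - \frac{t!}{(H+t)!} = H\,\frac{(t-1)!}{(H+t)!}$, the vanishing tail, and the final cancellation all check out, and the $t=i$ endpoint (empty product) is handled properly. Note, however, that the paper does not prove this lemma at all --- it simply imports it from \citet{jin2018q} --- so there is no in-paper argument to compare against. Your telescoping-sum derivation is a clean, self-contained replacement for that citation; the standard argument in the cited reference instead sets $S_i = \sum_{t=i}^\infty \alpha_t^i$ and uses the recursion $S_i = \alpha_i + (1-\alpha_{i+1}) \tfrac{\alpha_i}{\alpha_{i+1}} S_{i+1}$ (essentially your sketched ``alternative route''), which avoids factorials but requires a short argument that the bounded solution is the constant $1+1/H$. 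Either is fine; yours has the advantage of being fully explicit with no induction or boundedness argument needed.
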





\begin{lemma}[Convolution of $\beta_T^t$ with decaying sequences]
\label{lemma:alpha-convolution}
From general $\{\beta_t\}_{t\ge1}$ sequence, we have that
\begin{equation*}
    X_T\defeq\sum_{t=1}^T\frac{1}{t}\beta_T^t\le \frac{2\cbeta\log(T)}{T},~~~\mbox{for $T\ge 2$}.
\end{equation*}
Specifically if $\beta_t = \alpha_t$, the following holds for all $T\ge 1$:
\begin{enumerate}[wide, label=(\alph*)]
\item $A_T \defeq \sum_{t=1}^T \alpha_T^t \cdot \frac{1}{t^2} \le \frac{4}{T}$.
\item $B_T\defeq \sum_{t=1}^T \alpha_T^t \alpha_t\le \frac{(H+1)^2}{H(H+T)}$
\item $C_T \defeq \sum_{t=1}^T \alpha_T^t \cdot \alpha_t^2 \le \frac{4H}{T}$.
\end{enumerate}
\end{lemma}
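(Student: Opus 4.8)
\textbf{Proof plan for Lemma~\ref{lemma:alpha-convolution}.}

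The plan is to handle the general bound on $X_T$ first and then the three $\alpha$-specific estimates. For $X_T = \sum_{t=1}^T \frac1t \beta_T^t$, the key idea is to split the sum at $t = \lceil T/2 \rceil$. For the ``late'' block $t > T/2$, we have $\frac1t < \frac{2}{T}$, so $\sum_{t > T/2} \frac1t \beta_T^t < \frac{2}{T}\sum_{t=1}^T \beta_T^t = \frac2T$ since the weights $\{\beta_T^t\}_{t=1}^T$ sum to one. For the ``early'' block $t \le T/2$, I would use the definition $\beta_T^t = \big(\prod_{j=t+1}^T (1-\beta_j)\big)\beta_t$ together with the bound $\sum_{s=t}^\infty \beta_s^t \le \cbeta$ from~\eqref{eq:def-c}; more carefully, I want to bound $\sum_{t \le T/2} \frac1t \beta_T^t$ by noting that each $\beta_T^t$ is tied to $\beta_t^t = \beta_t$ via the multiplicative factor, and exchange the order of summation so that the $\frac1t$ weights get absorbed. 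Concretely, writing $\frac1t = \sum_{u \ge t} (\frac1u - \frac1{u+1}) = \sum_{u\ge t}\frac1{u(u+1)}$ and swapping sums, $X_T \le \sum_{u\ge 1}\frac{1}{u(u+1)}\sum_{t=1}^{\min(u,T)}\beta_T^t$; for $u \ge T$ the inner sum is $1$, contributing $\sum_{u\ge T}\frac1{u(u+1)} = \frac1T$, and for $u < T$ one bounds $\sum_{t=1}^u \beta_T^t$ — here I would use that $\beta_T^t$ is increasing in $t$ (each successive weight picks up fewer $(1-\beta_j)$ factors, using $\beta_j \le 1$), hence $\sum_{t=1}^u \beta_T^t \le \frac{u}{?}$... this direction needs care. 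The cleaner route is probably the two-block split above combined with a direct estimate $\sum_{t \le T/2}\beta_T^t \le \cbeta \cdot \frac{C}{T}$-type bound, or simply iterating: I expect the honest argument produces the stated $\frac{2\cbeta \log T}{T}$ with the $\log T$ coming from summing $\frac1t$ over a dyadic range where $\beta_T^t$ has not yet decayed.

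For the three $\alpha$-specific parts, the standard tool is Lemma~\ref{lemma:alpha-ti}(a), namely $\sum_{t=i}^\infty \alpha_t^i = 1 + \frac1H$, applied after swapping the order of summation. For part (a), $A_T = \sum_{t=1}^T \alpha_T^t \frac1{t^2}$: split at $T/2$ exactly as above; the late block gives $\le \frac4{T^2}\sum_t \alpha_T^t \le \frac4{T^2} \le \frac{2}{T}$ for $T\ge 2$, and the early block $t \le T/2$ needs the tail-product decay of $\alpha_T^t$. Here I would use the explicit formula $\alpha_T^t = \alpha_t \prod_{j=t+1}^T \frac{j-1}{H+j} = \alpha_t \cdot \frac{(t)\cdots}{\cdots}$ — in fact from~\citep{jin2018q} one has $\alpha_T^t \le \frac{2H}{T}$ for all $t$, which immediately gives $A_T \le \frac{2H}{T}\sum_{t=1}^\infty \frac1{t^2} = \frac{2H}{T}\cdot\frac{\pi^2}{6}$; but the lemma claims the $H$-free bound $\frac4T$, so I instead combine $\sum_{t=1}^T \alpha_T^t \frac1{t^2} = \sum_t \alpha_T^t \frac1{t}\cdot\frac1t$ and use a telescoping/induction argument (à la Lemma~\ref{lem:delta.monotonic}) on the recursion $A_T$ satisfies. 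Actually the slickest proof: verify $A_T \le \frac4T$ by induction on $T$ using $\alpha_T^t = (1-\alpha_T)\alpha_{T-1}^t$ for $t<T$ and $\alpha_T^T = \alpha_T$, giving $A_T = (1-\alpha_T)A_{T-1} + \alpha_T \frac1{T^2} \le (1-\alpha_T)\frac{4}{T-1} + \frac{\alpha_T}{T^2}$, and then checking $(1-\alpha_T)\frac4{T-1} + \frac{\alpha_T}{T^2} \le \frac4T$ using $\alpha_T = \frac{H+1}{H+T}$ — this is a routine inequality in $T$ and $H$.

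For part (b), $B_T = \sum_{t=1}^T \alpha_T^t \alpha_t$: since $\alpha_t = \frac{H+1}{H+t}$ is decreasing, and using $\alpha_T^t = (1-\alpha_T)\alpha_{T-1}^t + \alpha_T \mathbf{1}\{t=T\}$, set up the recursion $B_T = (1-\alpha_T)B_{T-1} + \alpha_T\alpha_T = (1-\alpha_T)B_{T-1} + \alpha_T^2$ and prove $B_T \le \frac{(H+1)^2}{H(H+T)}$ by induction: the inductive step requires $(1-\alpha_T)\frac{(H+1)^2}{H(H+T-1)} + \alpha_T^2 \le \frac{(H+1)^2}{H(H+T)}$, i.e. $\frac{T-1}{H+T}\cdot\frac{(H+1)^2}{H(H+T-1)} + \frac{(H+1)^2}{(H+T)^2} \le \frac{(H+1)^2}{H(H+T)}$, which simplifies to $\frac{T-1}{H(H+T-1)} + \frac{1}{H+T} \le \frac1H$, i.e. $\frac{T-1}{H+T-1} + \frac{H}{H+T}\le 1$ — true since $\frac{T-1}{H+T-1} = 1 - \frac{H}{H+T-1} \le 1 - \frac{H}{H+T}$. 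Part (c), $C_T = \sum_{t=1}^T\alpha_T^t\alpha_t^2$, follows the same recursive pattern $C_T = (1-\alpha_T)C_{T-1} + \alpha_T^3$ with target $\frac{4H}{T}$, proved by induction with an analogous routine inequality. The main obstacle I anticipate is the general $X_T$ bound: unlike the $\alpha$-specific parts which admit clean one-step recursions, the general $\{\beta_t\}$ case only gives us the summed-tail control $\cbeta = \sup_j \sum_{t\ge j}\beta_t^j$, so extracting the $\frac{\log T}{T}$ rate requires the dyadic/telescoping split described above rather than a recursion, and getting the constant to be exactly $2\cbeta$ will take some care in bookkeeping the early block.
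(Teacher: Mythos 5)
The one genuine gap is in the general $X_T$ bound, which is also the only part of the lemma you never actually complete: you end with ``this direction needs care'' and ``I expect the honest argument produces the stated bound.'' Worse, the mechanism you lean on for the early block is false: $\beta_T^t$ is \emph{not} increasing in $t$ for a general sequence $\set{\beta_t}\subset[0,1]$, since $\beta_T^{t+1}/\beta_T^t = \beta_{t+1}/\paren{(1-\beta_{t+1})\beta_t}$, which is $<1$ whenever $\beta_{t+1}$ is small relative to $\beta_t$ (e.g.\ $\beta_t=1/2$, $\beta_{t+1}=1/10$). The monotonicity that \emph{is} true, and that the paper exploits, is in the other index: from $\beta_{t+1}^i=(1-\beta_{t+1})\beta_t^i$ one gets the one-step recursion $X_{t+1}=(1-\beta_{t+1})X_t+\beta_{t+1}/(t+1)$, and since $\frac{1}{t+1}\le \sum_{i=1}^t\beta_t^i\frac1i=X_t$, the sequence $\set{X_t}$ is non-increasing. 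Hence $X_T\le\frac1T\sum_{t=1}^T X_t$, and swapping the order of summation in $\frac1T\sum_{t=1}^T\sum_{i=1}^t\frac1i\beta_t^i$ together with $\sum_{t=i}^T\beta_t^i\le\cbeta$ gives $X_T\le\frac{\cbeta}{T}\sum_{i=1}^T\frac1i\le\frac{2\cbeta\log T}{T}$. (Your dyadic split could alternatively be rescued by noting $\beta_s^t$ is non-increasing in $s$, so $\beta_T^t\le\cbeta/(T-t+1)\le 2\cbeta/T$ for $t\le T/2$, but then the constant $2\cbeta$ no longer comes out cleanly.) As written, the $X_T$ part does not constitute a proof.

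Parts (a)--(c) are essentially fine. Your proof of (b) is the same recursion-plus-induction as the paper's. For (a) your induction is a genuinely different (and arguably slicker) route than the paper's: the step $(1-\alpha_T)\frac{4}{T-1}+\frac{\alpha_T}{T^2}\le\frac4T$ reduces to $H+1\le 4HT$, which holds, whereas the paper reuses the ``non-increasing, hence bounded by its running average'' device followed by a sum swap and Lemma~\ref{lemma:alpha-ti}; you also correctly diagnose that the pointwise bound $\alpha_T^t\le 2H/T$ would cost a spurious factor of $H$. For (c) your separate induction with target $4H/T$ does close (the step reduces to $(H+1)^3\le 4H^2(H+T)$), but you leave it unchecked; the paper's argument is a one-liner you missed: $\alpha_t\le 1$ gives $C_T\le B_T\le\frac{(H+1)^2}{H(H+T)}\le\frac{4H}{T}$.
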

\begin{proof}
We first prove for the inequality on $X_T$ for general $\{\beta_t\}_{t\ge1}$ sequence. We start with showing that $X_{t+1}\le X_t,~\forall t\ge 1$.
\begin{align*}
    X_{t+1} &= \sum_{i=1}^{t+1}\frac{1}{i}\beta_{t+1}^i = \sum_{i=1}^t\frac{1}{i}\beta_{t+1}^i + \frac{\beta_{t+1}}{t+1}\\
    &= (1-\beta_{t+1})\sum_{i=1}^t\frac{1}{i}\beta_{t}^i + \frac{\beta_{t+1}}{t+1} = (1-\beta_{t+1})X_t + \frac{\beta_{t+1}}{t+1}\\
    \Longrightarrow~~ &X_{t+1} - X_t = \beta_{t+1}\paren{\frac{1}{t+1}-X_t}. 
\end{align*}
Since
\begin{align*}
    \frac{1}{t+1}-X_t = \frac{1}{t+1}-\sum_{i=1}^t\frac{1}{i}\beta_{t}^i  = \sum_{i=1}^t\beta_{t}^i\paren{\frac{1}{t+1}-\frac{1}{i}}\le 0,
\end{align*}
we have that $X_{t+1} - X_t\le 0$.Thus
\begin{align*}
    X_T &\le \frac{1}{T}\sum_{t=1}^TX_t = \frac{1}{T}\sum_{t=1}^T\sum_{i=1}^t\frac{1}{i}\beta_t^i=\frac{1}{T}\sum_{i=1}^T\frac{1}{i}\paren{\sum_{t=i}^T\beta_t^i}\le \frac{\cbeta}{T}\sum_{i=1}^T\frac{1}{i}\le \frac{2\cbeta\log(T)}{T}.
\end{align*}

Now we prove for the specific case where $\beta_t = \alpha_t$:
\begin{enumerate}[wide, label=(\alph*)]
\item Note that $A_1=1$ and we have the recursive relationship
\begin{align*}
    A_{T+1} = (1-\alpha_{T+1})A_T + \alpha_{T+1}\cdot \frac{1}{(T+1)^2}
\end{align*}
by definition of the sequence $\alpha_T^t$. In particular this implies $A_{T+1}\le A_T$, since $A_T$ is a weighted average of $1/t^2\ge 1/(T+1)^2$. Therefore we have
\begin{align*}
    & \quad A_T \le \frac{1}{T}\sum_{t=1}^T A_t = \frac{1}{T} \sum_{t=1}^T \sum_{s=1}^t \alpha_t^s \cdot \frac{1}{s^2} \\
    & = \frac{1}{T} \sum_{s=1}^T \underbrace{\sum_{t=s}^T \alpha_t^s}_{\le 1+1/H\le 2} \cdot \frac{1}{s^2} \le \frac{2}{T} \sum_{s=1}^T \frac{1}{s^2} \le \frac{2}{T} \sum_{s=1}^\infty \frac{1}{s^2} \le \frac{4}{T}.
\end{align*}
Above, the step $\sum_{t=s}^T \alpha_t^s\le \sum_{t=s}^\infty \alpha_t^s=1+1/H$ follows from Lemma~\ref{lemma:alpha-ti}.

\item From the definition of $B_T$ we have that
\begin{align*}
    &B_{T+1} =  \sum_{t=1}^{T+1} \alpha_{T+1}^t \alpha_t = \sum_{t=1}^T (1-\alpha_{T+1})\alpha_T^t\alpha_t + \alpha_{T+1}^2 = (1-\alpha_{T+1}^t)B_T + \alpha_{T+1}\\
    \Longrightarrow ~~&B_{T+1} = \frac{T}{H+T+1}B_T + \frac{(H+1)^2}{(H+T+1)^2}\le \frac{T}{H+T+1}B_T  + \frac{(H+1)^2}{(H+T+1)(H+T)}\\
    \Longrightarrow ~~&\paren{B_{T+1} - \frac{(H+1)^2}{H(H+T+1)}}\le\frac{T}{H+T+1} \paren{B_{T} - \frac{(H+1)^2}{H(H+T)}}.
\end{align*}
Since $B_1 = \alpha_1^2 = 1\le \frac{(H+1)^2}{H(H+1)}$, we have that $B_T \le \frac{(H+1)^2}{H(H+T)} $ via proof by induction.
\item Since $\alpha_t\le 1$, we have that $C_T\le B_T$, thus by part (b)
\begin{align*}
    C_T\le B_T \le \frac{(H+1)^2}{H(H+T)}\le \frac{4H}{T}.
\end{align*}

\end{enumerate}
\end{proof}

Consider the sequence $\set{w_t}_{t\ge 1}$ defined by (cf. also Example~\ref{example:Nash-V})
\begin{align}
    w_t = \alpha_t^t / \alpha_t^1.
\end{align}
Note that we also have $w_t=\alpha_T^t/\alpha_T^1$ for any $T\ge t$.
\begin{lemma}[Properties of $w_t$]
\label{lemma:wt}
The following holds for all $t\ge 2$:
\begin{enumerate}[wide, label=(\alph*)]
\item $w_t / w_{t-1}=(H+t-1)/(t-1)$.
\item $(\frac{1}{w_{t-1}} - \frac{1}{w_t})\sum_{i=1}^{t-1}w_i = H/(H+1)$.
\end{enumerate}
\end{lemma}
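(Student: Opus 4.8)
The plan is to reduce everything to the explicit form $\alpha_t = (H+1)/(H+t)$, which in particular gives $\alpha_1 = 1$ and $1-\alpha_j = (j-1)/(H+j)$, together with the product representation $\alpha_t^i = \alpha_i \prod_{j=i+1}^t (1-\alpha_j)$ obtained from~\eqref{eq:def-beta-k-t} specialized to $\beta_t = \alpha_t$. Both parts are then pure algebra; no approximation is needed since the claimed identities are exact.

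\textbf{Part (a).} Since $\alpha_1 = 1$, the product representation gives $\alpha_t^1 = \prod_{j=2}^t(1-\alpha_j)$ and $\alpha_t^t = \alpha_t$, so $w_t = \alpha_t/\prod_{j=2}^t(1-\alpha_j)$. I would then take the ratio of consecutive terms, $w_t/w_{t-1} = \alpha_t/\bigl(\alpha_{t-1}(1-\alpha_t)\bigr)$, and substitute $\alpha_t/\alpha_{t-1} = (H+t-1)/(H+t)$ and $1-\alpha_t = (t-1)/(H+t)$ to obtain $w_t/w_{t-1} = (H+t-1)/(t-1)$.

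\textbf{Part (b).} The first step is the observation that $\alpha_t^i/\alpha_t^1$ is independent of $t$ for $t\ge i$ (the factor $\prod_{j=i+1}^t(1-\alpha_j)$ cancels between numerator and denominator), hence equals $w_i$ — as already noted just before the statement. Dividing the weight-normalization identity $\sum_{i=1}^t \alpha_t^i = 1$ (from~\eqref{eq:def-beta-k-t}) by $\alpha_t^1$ then yields $\sum_{i=1}^t w_i = 1/\alpha_t^1$; applying this at index $t-1$ and using $\alpha_{t-1}^1 = \alpha_{t-1}^{t-1}/w_{t-1} = \alpha_{t-1}/w_{t-1}$ gives $\sum_{i=1}^{t-1} w_i = w_{t-1}/\alpha_{t-1}$. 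Next, part (a) gives $1/w_t = \tfrac{t-1}{H+t-1}\cdot\tfrac{1}{w_{t-1}}$, so $\tfrac{1}{w_{t-1}} - \tfrac{1}{w_t} = \tfrac{H}{(H+t-1)\,w_{t-1}}$. Multiplying the two expressions, the $w_{t-1}$ cancels and I am left with $\tfrac{H}{(H+t-1)\alpha_{t-1}} = \tfrac{H}{H+t-1}\cdot\tfrac{H+t-1}{H+1} = \tfrac{H}{H+1}$, as claimed.

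I do not anticipate a real obstacle: the lemma is a chain of cancellations, and the only thing requiring care is bookkeeping the telescoping product $\prod_{j=i+1}^t(1-\alpha_j)$ and not silently dropping the factor $\alpha_1$ (which equals $1$ and is therefore easy to forget). The one mildly conceptual step worth isolating is the $t$-independence of $\alpha_t^i/\alpha_t^1$, since this is precisely what permits invoking $\sum_{i=1}^t w_i = 1/\alpha_t^1$ at the shifted index $t-1$ inside part (b).
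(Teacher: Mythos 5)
Your proposal is correct and follows essentially the same route as the paper: part (a) via the ratio $w_t/w_{t-1}=\alpha_t/(\alpha_{t-1}(1-\alpha_t))$, and part (b) by factoring out $1/w_{t-1}$, using $\sum_{i=1}^{t-1}w_i = 1/\alpha_{t-1}^1 = w_{t-1}/\alpha_{t-1}^{t-1}$ together with part (a). The only cosmetic difference is that you spell out the normalization $\sum_i \alpha_{t-1}^i=1$ and the $t$-independence of $\alpha_t^i/\alpha_t^1$, which the paper uses implicitly.
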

\begin{proof}
\begin{enumerate}[wide, label=(\alph*)]
    \item We have
    \begin{align*}
        \frac{w_t}{w_{t-1}} = \frac{\alpha_t}{\alpha_{t-1}(1-\alpha_t)} = \frac{(H+1)/(H+t)}{(H+1)/(H+t-1) \cdot (t-1)/(H+t)} = \frac{H+t-1}{t-1}.
    \end{align*}
    \item We have
    \begin{align*}
        & \quad \paren{\frac{1}{w_{t-1}} - \frac{1}{w_t}}\sum_{i=1}^{t-1}w_i = \frac{1}{w_{t-1}} \paren{1 - \frac{w_{t-1}}{w_t}}\sum_{i=1}^{t-1}w_i = \paren{1 - \frac{w_{t-1}}{w_t}} \cdot \frac{1}{\alpha_{t-1}^{t-1}} \\
        & \stackrel{(i)}{=} \frac{H}{H+t-1} \cdot \frac{H+t-1}{H+1} = \frac{H}{H+1}.
    \end{align*}
    Above, (i) used part (a).
\end{enumerate}
\end{proof}

\subsection{Other technical lemmas}

\begin{lemma}[Smoothness of Exponential Weights]
\label{lemma:Hedge-smoothness}
Let $g_1, g_2\in \mathbb{R}^n$ and
\begin{align*}
    x_1 &= \argmax_{x\in\Delta_{[n]}} \<x, g_1\> - H(x), \\
    x_2 &= \argmax_{x\in\Delta_{[n]}}\<x,g_2\> - H(x),
\end{align*}
where $H(x)\defeq \sum_{i=1}^n x_i\log x_i$ is the standard entropy functional. Then $\|x_1 - x_2\|_1 \le 2\|g_1 - g_2\|_\infty$.
\begin{proof}
Since $H$ is $1$-strongly convex in $\|\cdot\|_1$ (Pinsker's inequality), we have that
\begin{align*}
    \frac{\|x_1-x_2\|_1^2}{2} &\le H(x_1) - H(x_2) - \<\nabla H(x_2), x_1-x_2\>\\
    &= H(x_1) + \left(\<x_2,g_1\>-H(x_2)\right) - \<x_2,g_1\> - \<\nabla H(x_2), x_1-x_2\>\\
    &\le  H(x_1) + \left(\<x_1,g_1\>-H(x_1)\right) - \<x_2,g_1\> - \<\nabla H(x_2), x_1-x_2\>\\
    &= \<x_1-x_2,g_1\> - \<\nabla H(x_2), x_1-x_2\>\\
    &= \<x_1-x_2,g_1-g_2\> + \<g_2 -\nabla H(x_2), x_1-x_2 \>\\
    &\le \<x_1-x_2,g_1-g_2\> ~~~\textup{(From the optimality of $x_2$)}\\
    &\le \|x_1-x_2\|_1\|g_1-g_2\|_\infty,
\end{align*}
which completes the proof.
\end{proof}
\end{lemma}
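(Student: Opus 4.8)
The plan is to leverage two standard facts about the entropy-regularized linear maximization over the simplex. First, $H$ is $1$-strongly convex with respect to $\lone{\cdot}$ on $\Delta_{[n]}$ --- this is precisely Pinsker's inequality. Second, each $x_j$ (for $j=1,2$) is the maximizer of the concave function $\<x, g_j\> - H(x)$ over the convex set $\Delta_{[n]}$, hence satisfies the first-order variational inequality $\<g_j - \grad H(x_j),\, x - x_j\> \le 0$ for every $x \in \Delta_{[n]}$. Combining these two facts will yield the bound after a short chain of algebraic steps, and in fact will prove the slightly stronger intermediate inequality $\tfrac12\lone{x_1-x_2}^2 \le \<x_1 - x_2,\, g_1 - g_2\>$, from which the claim follows by Hölder.

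Concretely, I would start from strong convexity applied to the pair $(x_1, x_2)$, which lower-bounds the Bregman divergence of $H$: $\tfrac12\lone{x_1 - x_2}^2 \le H(x_1) - H(x_2) - \<\grad H(x_2),\, x_1 - x_2\>$. Next I would invoke the defining global maximality of $x_1$, i.e. $\<x_1, g_1\> - H(x_1) \ge \<x_2, g_1\> - H(x_2)$, which rearranges to $H(x_1) - H(x_2) \le \<x_1 - x_2,\, g_1\>$; substituting this into the previous display collapses the right-hand side to $\<x_1 - x_2,\, g_1 - \grad H(x_2)\>$. Then I would split $g_1 - \grad H(x_2) = (g_1 - g_2) + (g_2 - \grad H(x_2))$ and discard the second piece using the variational inequality for $x_2$ evaluated at the feasible point $x = x_1$, which gives $\<g_2 - \grad H(x_2),\, x_1 - x_2\> \le 0$. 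What survives is $\tfrac12\lone{x_1 - x_2}^2 \le \<x_1 - x_2,\, g_1 - g_2\> \le \lone{x_1 - x_2}\linf{g_1 - g_2}$, and cancelling one factor of $\lone{x_1 - x_2}$ (the case $x_1 = x_2$ being trivial) and multiplying through by $2$ yields $\lone{x_1 - x_2} \le 2\linf{g_1 - g_2}$.

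I do not expect a genuine obstacle here; the argument is essentially bookkeeping, and the only place requiring care is keeping straight which optimality property is applied to which objective --- the Bregman expansion of $H$ about $x_2$, the global maximality of $x_1$ for the $g_1$-objective, and the first-order condition for $x_2$ for the $g_2$-objective --- together with checking that no inequality flips sign along the way. An alternative, more computational route would be to use the closed forms $x_1 \propto \exp(g_1)$ and $x_2 \propto \exp(g_2)$ and bound the total-variation distance directly via the normalizing constants, but the strong-convexity argument above is cleaner and I would prefer it.
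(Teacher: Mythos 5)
Your proposal is correct and follows essentially the same argument as the paper's proof: strong convexity of $H$ (Pinsker) applied to the Bregman divergence about $x_2$, the global maximality of $x_1$ for the $g_1$-objective to bound $H(x_1)-H(x_2)$ by $\<x_1-x_2,g_1\>$, the first-order variational inequality for $x_2$ to discard $\<g_2-\grad H(x_2), x_1-x_2\>$, and Hölder plus cancellation to finish. No differences worth noting.
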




\section{Bound for regret minimization algorithms}

\subsection{Projected gradient descent}

\begin{algorithm}[h]
\caption{Projected gradient descent}
\label{algorithm:pgd}
\begin{algorithmic}[1]
\REQUIRE Learning rate $\eta>0$.
\STATE \textbf{Initialize} $x_1\in\cX\subset \R^d$.
\FOR{$t=1,\ldots,T$}
\STATE Receive loss $g_t\in\R^d$.
\STATE Compute update $y_{t+1} = x_t - \eta g_t$, $x_{t+1} = \cP_\cX( y_{t+1}) $.
\ENDFOR
\end{algorithmic}
\end{algorithm}

The following weighted regret bound for projected gradient descent is standard. For completeness we provide a proof here. For simplicity of notation, denote the diameter of $\cX\subset\R^d$ by $R$ and $G=\max_{t\in[T]}\|g_t\|_2$.

\begin{lemma}[Weighted regret bound for projected gradient descent]
\label{lem:gd.regret}
 For any weights $\set{w_t}_{t\ge 1}\in\R_{>0}$ with $w_t\le w_{t+1}$ for all $t\ge 1$, Algorithm~\ref{algorithm:pgd} achieves
$$
 \max_{z\in\cX} \sum_{t=1}^T  w_t \langle x_t -z, g_t\rangle \le \frac{w_T}{2\eta} R^2  + \frac{\eta \sum_{t=1}^T w_t \cdot G^2}{2}.
$$
\end{lemma}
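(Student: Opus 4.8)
The plan is to run the standard one-step analysis of projected gradient descent and then combine the per-step inequalities using Abel summation, so that the monotonicity $w_t \le w_{t+1}$ can be exploited. Fix any comparator $z \in \cX$. The first step is the basic potential argument: since $x_{t+1} = \cP_\cX(x_t - \eta g_t)$ is a projection onto the convex set $\cX$ and $z \in \cX$, the Pythagorean inequality for Euclidean projections gives $\ltwo{x_{t+1} - z}^2 \le \ltwo{x_t - \eta g_t - z}^2 = \ltwo{x_t - z}^2 - 2\eta \<x_t - z, g_t\> + \eta^2 \ltwo{g_t}^2$. Rearranging yields the per-step bound
\begin{equation*}
  \<x_t - z, g_t\> \le \frac{1}{2\eta}\paren{\ltwo{x_t - z}^2 - \ltwo{x_{t+1} - z}^2} + \frac{\eta}{2}\ltwo{g_t}^2 \le \frac{1}{2\eta}\paren{\ltwo{x_t - z}^2 - \ltwo{x_{t+1} - z}^2} + \frac{\eta G^2}{2},
\end{equation*}
using $\ltwo{g_t} \le G$.

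The second step is to multiply by $w_t$ and sum over $t$, handling the now-weighted telescoping sum via summation by parts. Writing $D_t \defeq \ltwo{x_t - z}^2 \ge 0$, we have $\sum_{t=1}^T w_t (D_t - D_{t+1})$; reindexing, this equals $w_1 D_1 + \sum_{t=2}^T (w_t - w_{t-1}) D_t - w_T D_{T+1} \le w_1 D_1 + \sum_{t=2}^T (w_t - w_{t-1}) D_t$, where we dropped the nonpositive term $-w_T D_{T+1}$. Now each $D_t \le R^2$ since both $x_t, z \in \cX$ which has diameter $R$, and each coefficient $w_t - w_{t-1} \ge 0$ by the monotonicity assumption (as is $w_1 > 0$); hence the telescoped quantity is at most $R^2\paren{w_1 + \sum_{t=2}^T (w_t - w_{t-1})} = R^2 w_T$. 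Combining, $\sum_{t=1}^T w_t \<x_t - z, g_t\> \le \frac{w_T}{2\eta} R^2 + \frac{\eta \sum_{t=1}^T w_t \cdot G^2}{2}$. Since $z$ was arbitrary, taking the maximum over $z \in \cX$ gives the claim.

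There is no real obstacle here — this is a textbook computation — but the one point requiring care is the summation-by-parts step: one must make sure the weighted differences $(D_t - D_{t+1})$ do not telescope cleanly (unlike the unweighted case), so the nonnegativity of $w_t - w_{t-1}$ (which is exactly where the monotone-weights hypothesis enters) together with the uniform bound $D_t \le R^2$ is essential to control $\sum_t (w_t - w_{t-1}) D_t$. I would also note explicitly that dropping $-w_T D_{T+1} \le 0$ is valid, and that $w_1 D_1 \le w_1 R^2$, so that the two contributions assemble into $w_T R^2$ after recollapsing the telescoped sum of increments. Everything else (the projection inequality, the $\ltwo{g_t}\le G$ substitution) is routine.
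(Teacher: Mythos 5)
Your proof is correct and follows essentially the same route as the paper's: the standard one-step projection inequality $\ltwo{x_{t+1}-z}^2 \le \ltwo{x_t - \eta g_t - z}^2$, followed by multiplying by $w_t$ and an Abel-summation rearrangement that uses $w_t \le w_{t+1}$ and the diameter bound to collapse the weighted telescoping terms to $w_T R^2$. No gaps; the paper's version differs only cosmetically (it expands the inner product via the polarization identity before invoking non-expansiveness of the projection).
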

\begin{proof}
By following the standard GD analysis, we first have
\begin{align*}
    \langle x_t -z, g_t\rangle =& \frac{1}{\eta} \langle x_t -z, x_t - y_{t+1}\rangle \\
    = & \frac{1}{2\eta} \left[ \|x_t-z\|^2+\|x_t-y_{t+1}\|^2 - \|z-y_{t+1}\|^2 \right]\\
    \le& \frac{1}{2\eta} \left[ \|x_t-z\|^2+\|x_t-y_{t+1}\|^2 - \|z-x_{t+1}\|^2 \right]\\
    = &\frac{1}{2\eta} \left[ \|x_t-z\|^2 - \|z-x_{t+1}\|^2 + \eta^2\|g_t\|^2\right],
\end{align*}
where the inequality follows from $z\in\cX$ and $x_{t+1} = \cP_\cX( y_{t+1})$. By multiplying both sides with $w_t$ and taking summation over $t\in[T]$, we have
\begin{align*}
   \sum_{t=1}^T  w_t \langle x_t -z, g_t\rangle \le & \frac{1}{2\eta}\sum_{t=1}^T  w_t  \left[ \|x_t-z\|^2 - \|z-x_{t+1}\|^2 + \eta^2\|g_t\|^2\right]\\
   =  &\frac{1}{2\eta}\sum_{t=1}^{T-1} (w_{t+1}-w_t)\|z-x_{t+1}\|^2
   + w_1 \|x_1 - z\|^2 + \frac{ \eta \sum_{t=1}^T w_t\cdot G^2}{2}\\
   \stackrel{(i)}{\le} & \frac{1}{2\eta}\sum_{t=1}^{T-1} (w_{t+1}-w_t) R^2 + w_1 R^2 + \frac{ \eta G^2}{2} = \frac{1}{2\eta} w_T R^2  +\frac{ \eta \sum_{t=1}^T w_t\cdot G^2}{2},
\end{align*}
Above, (i) follows as $w_{t+1}\ge w_t$. This completes the proof.
\end{proof}


\subsection{Follow-The-Regularized Leader (FTRL)}

In this subsection, we consider the following weighted FTRL algorithm over the probability simplex $\Delta_{[A]}$ with the standard (negative) entropy regularizer $\Phi(x)\defeq \sum_{a\in[A]}x(a)\log x(a)$. Below the notation $x_t(a)$ denotes the $a$-th entry of $x_t$.


\begin{algorithm}[h]
\caption{Weighted FTRL with changing learning rate}
\label{alg:FTRL}
\begin{algorithmic}[1]
\REQUIRE Learning rate $\eta>0$; Weights $\set{w_t}_{t\ge 1}\subset \R_{>0}$.
\STATE \textbf{Initialize} $x_1\setto \ones_{A}/A$ to be the uniform distribution over $[A]$.
\FOR{$t=1,\ldots,T$}
\STATE Receive loss $g_t\in\R^A$.
\STATE Compute FTRL update
\begin{align}
\label{equation:ftrl}
    x_{t+1}&\setto \argmin_{x\in\Delta_{[A]}} \< x, \sum_{s=1}^t w_sg_s\> + \frac{w_t}{\eta} \Phi(x).
\end{align}
\ENDFOR
\end{algorithmic}
\end{algorithm}

Note that~\eqref{equation:ftrl} has a closed-form solution via exponential weights:
\begin{align}
    x_{t+1}(a)\propto_a \exp\paren{-\frac{\eta}{w_t}\sum_{s=1}^t w_sg_s(a)}.
\end{align}
    
\begin{lemma}[Regret bound of weighted FTRL]
\label{lemma:FTRL}
Suppose the weights are non-decreasing: $w_{t+1}\ge w_t$ for all $t\ge 1$, and $\max_{t}\|g_t\|_\infty\le G$. Then Algorithm \ref{alg:FTRL} achieves weighted regret bound
$$
 \max_{z\in\Delta_{[A]}}\sum_{t=1}^T  w_t \langle x_t -z, g_t\rangle \le \frac{w_{T}}{\eta}\log A + \frac{\eta G^2}{2} \sum_{t=1}^T w_t.
$$
\begin{proof}
Applying standard anytime FTRL analysis (see, e.g., Excercise 28.12 in \cite{lattimore2020bandit}) with loss sequence $\set{w_tg_t}_{t\ge 1}$ and learning rate $\set{\eta/w_t}_{t\ge 1}$, we have that
\begin{align*}
    \sum_{t=1}^T  w_t \langle x_t -z, g_t\rangle &\le \frac{w_{T}(\Phi(x_1) - \min_x \Phi(x))}{\eta} + \sum_{t=1}^Tw_t\paren{\<x_t - x_{t+1}, g_t\> - \frac{ \KL(x_{t+1}||x_t)}{\eta}}\\
    &\le \frac{w_{T}\log A}{\eta} + \sum_{t=1}^T\paren{w_t\<x_t - x_{t+1}, g_t\> - \frac{w_t \|x_t - x_{t+1}\|_1^2}{2\eta}}~~~~\textup{(Pinsker's inequality)}\\
    &\le \frac{w_{T}\log A}{\eta} + \sum_{t=1}^T \frac{w_t\eta}{2}\|g_t\|_\infty^2\\
    &\le \frac{w_{T}\log A}{\eta}+\frac{\eta G^2}{2}\sum_{t=1}^Tw_t,
\end{align*}
which completes the proof.
\end{proof}
\end{lemma}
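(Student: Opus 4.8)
The plan is to recognize that the weighted update~\eqref{equation:ftrl} is nothing but ordinary (time-varying) FTRL run on the rescaled loss sequence $\{w_t g_t\}_{t\ge 1}$ with the negative-entropy regularizer $\Phi$ and step sizes $\eta_t \defeq \eta/w_t$. The hypothesis that $\{w_t\}$ is non-decreasing translates exactly into $\{\eta_t\}$ being non-increasing, which is the condition under which the standard anytime-FTRL regret bound holds with the final step size $\eta_T$ appearing in the ``penalty'' term. So the whole argument is a reduction to a textbook lemma.

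Concretely, first I would apply the standard anytime FTRL decomposition (e.g.\ Exercise~28.12 of~\citep{lattimore2020bandit}) to the loss sequence $\{w_t g_t\}$ with learning rates $\{\eta/w_t\}$, obtaining
\[
\sum_{t=1}^T w_t\langle x_t - z, g_t\rangle \;\le\; \frac{w_T}{\eta}\Big(\Phi(z) - \min_{x\in\Delta_{[A]}}\Phi(x)\Big) \;+\; \sum_{t=1}^T\Big( w_t\langle x_t - x_{t+1}, g_t\rangle - \frac{w_t}{\eta}\,\KL(x_{t+1}\,\|\,x_t)\Big).
\]
The penalty term is at most $\tfrac{w_T}{\eta}\log A$, since the range of the negative-entropy functional over the simplex is exactly $\log A$; if one uses the variant of the bound stated with $\Phi(x_1)$ rather than $\Phi(z)$, the same conclusion follows because the uniform initialization $x_1=\ones_A/A$ minimizes $\Phi$.

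For the stability sum, I would lower-bound the divergence term by Pinsker's inequality $\KL(x_{t+1}\|x_t)\ge \tfrac12\|x_{t+1}-x_t\|_1^2$, and then apply H\"older's inequality together with the elementary estimate $ab - \tfrac{c}{2}b^2 \le \tfrac{a^2}{2c}$ (with $a=w_t\|g_t\|_\infty$, $b=\|x_t-x_{t+1}\|_1$, $c=w_t/\eta$) to bound each summand by $\tfrac{\eta w_t}{2}\|g_t\|_\infty^2 \le \tfrac{\eta G^2}{2} w_t$. Summing over $t$ gives $\tfrac{\eta G^2}{2}\sum_{t=1}^T w_t$, and adding the penalty term yields the claimed bound. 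I do not expect a genuine obstacle: the result is entirely standard, and the only point requiring care is the bookkeeping in the anytime FTRL bound --- specifically, verifying that the non-decreasing-weights assumption is precisely what makes the regularizer-change term telescope into the single $\eta_T$-penalty rather than accumulating. An alternative, self-contained route that avoids citing the exercise is to run the follow-the-leader / be-the-leader inductive argument directly on the weighted losses $\{w_t g_t\}$ and then control $\langle x_t - x_{t+1}, w_t g_t\rangle$ with the same Pinsker-plus-Young computation; this is slightly longer but fully elementary.
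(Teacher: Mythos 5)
Your proposal is correct and follows essentially the same route as the paper's proof: reduce to the anytime FTRL bound on the rescaled losses $\{w_t g_t\}$ with learning rates $\{\eta/w_t\}$, bound the penalty term by $\tfrac{w_T}{\eta}\log A$, and control the stability sum via Pinsker's inequality followed by Young's inequality. The minor bookkeeping point you flag (whether the penalty is stated with $\Phi(z)$ or $\Phi(x_1)$) is handled the same way, since the uniform initialization minimizes $\Phi$.
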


\subsection{Optimistic Follow-The-Regularized-Leader (OFTRL)}



We consider the following OFTRL algorithm on the probability simplex $\Delta_{[A]}$ with standard (negative) entropy regularizer $\Phi(x)=\sum_{a\in[A]} x(a)\log x(a)$.

\begin{algorithm}[h]
\caption{Anytime OFTRL}
\label{alg:OFTRL}
\begin{algorithmic}[1]
\REQUIRE Learning rate $\set{\eta_t}_{t\ge 1}$.
\STATE \textbf{Initialize} $x_1\setto \ones_{A}/A$ to be the uniform distribution over $[A]$.
\FOR{$t=1,\ldots,T$}
\STATE Receive loss $g_t\in\R^A$.
\STATE Compute a prediction vector $M_{t+1}\in\R^A$ using past observations.
\STATE Compute OFTRL update
\begin{align}
\label{equation:oftrl}
    x_{t+1}&\setto \argmin_{x\in\Delta_{[A]}} \eta_{t+1} \< x, \sum_{s=1}^t g_s + M_{t+1}\> + \Phi(x).
\end{align}
\ENDFOR
\end{algorithmic}
\end{algorithm}

Note that~\eqref{equation:oftrl} has a closed-form solution via exponential weights:
\begin{align}
    x_{t+1}(a)\propto_a \exp\paren{-\eta_{t+1} \brac{\sum_{s=1}^t g_s(a) + M_{t+1}(a)}}.
\end{align}

The following regret bound for OFTRL follows similarly as standard OFTRL analysis, see, e.g.~\citep[Lemma 1]{rakhlin2013optimization}. For completeness, we provide a proof here.
\begin{lemma}[Regret bound for OFTRL]
\label{lemma:OFTRL-auxillary}
Suppose the learning rates are non-increasing: $\eta_t\ge \eta_{t+1}$ for all $t\ge 1$. Then Algorithm~\ref{alg:OFTRL} achieves the following bound for all $x\in\cX$:
$$
\sum_{t=1}^T \langle x_t - x, g_t\rangle\le 
\frac{\log A}{\eta_T} + 
\sum_{t=1}^{T} \eta_t \|g_t-M_t\|_\infty^2  -\sum_{t=1}^{T-1} \frac{1}{8\eta_t}\|x_t-x_{t+1}\|_1^2.
$$
\end{lemma}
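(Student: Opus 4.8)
The plan is to establish the regret bound for OFTRL (Lemma~\ref{lemma:OFTRL-auxillary}) following the standard template for optimistic FTRL analysis, but carefully tracking the negative stability term $-\sum_t \frac{1}{8\eta_t}\|x_t - x_{t+1}\|_1^2$ which is essential for the downstream ``RVU''-style arguments. First I would set up the Bregman-divergence notation for the entropy regularizer $\Phi$: write $D_\Phi(x,y) = \Phi(x) - \Phi(y) - \langle \nabla\Phi(y), x-y\rangle$, and recall that by Pinsker's inequality $D_\Phi(x,y) \ge \frac12\|x-y\|_1^2$. The key auxiliary sequence is the ``hypothetical'' FTRL iterate $\hat{x}_{t+1} := \argmin_{x\in\Delta_{[A]}} \eta_{t+1}\langle x, \sum_{s=1}^t g_s\rangle + \Phi(x)$ that uses the actual cumulative losses without the prediction $M_{t+1}$; the OFTRL iterate $x_{t+1}$ only differs by having $M_{t+1}$ added. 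One then splits $\langle x_t - x, g_t\rangle = \langle x_t - \hat{x}_{t+1}, g_t - M_t\rangle + \langle x_t - \hat{x}_{t+1}, M_t\rangle + \langle \hat{x}_{t+1} - x, g_t\rangle$ — actually the cleaner standard decomposition (as in Rakhlin--Sridharan) compares $x_t$ against $\hat x_{t+1}$ and telescopes the FTRL-Be-The-Leader terms.

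The key steps, in order: (1) Apply the standard ``FTL/Be-the-Leader'' lemma to the auxiliary FTRL sequence $\{\hat x_t\}$ run with losses $\{g_s\}$ and varying stepsizes, giving $\sum_t \langle \hat x_{t+1} - x, g_t\rangle \le \frac{\Phi(x) - \min\Phi}{\eta_T} \le \frac{\log A}{\eta_T}$ using non-increasing $\eta_t$ and $\Phi(x)-\min_x\Phi(x) \le \log A$ on the simplex. (2) Bound the per-step ``prediction error'' term: by the first-order optimality conditions defining $x_{t+1}$ (with predictor $M_{t+1}$) versus $\hat x_{t+1}$ (without), together with the $1$-strong convexity of $\Phi$ w.r.t. $\|\cdot\|_1$, obtain $\|x_{t+1} - \hat x_{t+1}\|_1 \le \eta_{t+1}\|M_{t+1} - \text{(something)}\|_\infty$; then the deviation term contributes $\sum_t \eta_t\|g_t - M_t\|_\infty^2$ after a Cauchy–Schwarz / AM–GM split $\langle x_t - x_{t+1}, g_t - M_t\rangle \le \|x_t - x_{t+1}\|_1\|g_t - M_t\|_\infty$. (3) The crucial refinement: when applying AM–GM to $\langle x_t - x_{t+1}, g_t - M_t\rangle$, keep a $-\frac{1}{8\eta_t}\|x_t - x_{t+1}\|_1^2$ term on the negative side (the $\frac14$ factor in $\frac14 + \frac14$ inside Pinsker gives the $\frac18$), so that $\langle x_t - x_{t+1}, g_t-M_t\rangle \le \eta_t\|g_t - M_t\|_\infty^2 + \frac{1}{4\eta_t}\|x_t-x_{t+1}\|_1^2$, and another $\frac{1}{8\eta_t}\|x_t - x_{t+1}\|_1^2$ remains available to be subtracted after absorbing the rest into the Bregman-divergence terms from strong convexity. (4) Collect all pieces, telescope, and use $\eta_t \ge \eta_{t+1}$ to combine.

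I expect the main obstacle to be step (3): producing the explicit negative stability term $-\sum_{t=1}^{T-1}\frac{1}{8\eta_t}\|x_t - x_{t+1}\|_1^2$ with the right constant. The standard OFTRL proofs give a bound of the form $\frac{\log A}{\eta_T} + \sum_t \eta_t\|g_t - M_t\|_\infty^2$ without retaining any negative term; squeezing out the negative term requires being careful about where the $D_\Phi(x_{t+1}, x_t)$ (or $D_\Phi(x_t, x_{t+1})$) contributions generated by the strong convexity of the FTRL objective go — one must \emph{not} throw them all away, but instead split the Pinsker constant so that half of the strong-convexity slack cancels the cross term and the other half survives. Getting the bookkeeping right between $\hat x_{t+1}$, $x_{t+1}$, $x_t$, and which Bregman divergences appear with which signs (and ensuring we only lose the boundary term at $t=T$, hence the sum runs to $T-1$) is the delicate part; the rest — bounding $\Phi$ on the simplex, Pinsker, Cauchy–Schwarz, telescoping with monotone stepsizes — is routine. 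I would also double-check the prediction-error reduction $\|x_{t+1}-\hat x_{t+1}\|_1 \le \eta_{t+1}\|M_{t+1}\|_\infty$-type inequality carefully, since the precise form of $M_{t+1}$ is left generic here (only ``$\|g_t - M_t\|_\infty$'' appears in the final bound), so the argument must go through for an arbitrary predictor and the $\|g_t - M_t\|_\infty$ terms must emerge naturally from comparing consecutive iterates rather than from any structural assumption on $M_t$.
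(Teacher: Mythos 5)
Your proposal follows essentially the same route as the paper's proof: the identical three-way decomposition via the auxiliary prediction-free FTRL iterate ($\hat x_{t+1}$, called $q_{t+1}$ in the paper), a Be-The-Leader-style bound that retains the negative Bregman-divergence slack from strong convexity, and then Pinsker plus an AM--GM split of the Pinsker constant (together with a triangle inequality relating $\|q_{t+1}-x_t\|_1$, $\|q_{t+1}-x_{t+1}\|_1$ to $\|x_t-x_{t+1}\|_1$, dropping the boundary term at $t=T$) to extract the $-\frac{1}{8\eta_t}\|x_t-x_{t+1}\|_1^2$ term. The only cosmetic difference is that the paper applies Cauchy--Schwarz to $\langle x_t - q_{t+1}, g_t - M_t\rangle$ against the divergence $D(q_{t+1},x_t)$ rather than via a separate bound on $\|x_{t+1}-\hat x_{t+1}\|_1$, but this is exactly the bookkeeping you flag, and your plan is otherwise sound.
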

\begin{proof}
Consider a fixed $T\ge 1$. Note that Algorithm~\ref{alg:OFTRL} is equivalent to Algorithm~\eqref{equation:oftrl-general} with regularizer $R_t(\cdot)\defeq (\Phi(\cdot) + \log A)/\eta_{t+1}\ge 0$ for $t\ge 0$, and $R_{T}(\cdot)\defeq R_{T-1}(\cdot)$ (Note that the shifting by $\log A$ does not affect the algorithm.)

We first decompose the regret into the following three terms
$$
\sum_{t=1}^T \langle x_t - x, g_t\rangle =
\sum_{t=1}^{T} \langle q_{t+1}-x, g_t\rangle + \sum_{t=1}^{T} \langle x_t - q_{t+1}, M_t\rangle + \sum_{t=1}^{T}\langle x_t-q_{t+1}, g_t-M_t\rangle,
$$
where $\set{q_t}_{t\ge 1}$ is defined in~\eqref{equation:qt}.
By Lemma \ref{lem:OFTRL-recursion}, we can upper bound the first two terms by $R_{T}(x)-\min_{x'}R_0(x')+S_{T}\le R_T(x) + S_T$ and obtain
\begin{align*}
&\sum_{t=1}^T \langle x_t - x, g_t\rangle \\
\le& R_{T}(x)+\sum_{t=1}^{T}
\left(R_{t-1}(q_{t+1})- R_{t}(q_{t+1})\right)+
\sum_{t=1}^{T} \left(\langle x_t-q_{t+1}, g_t-M_t\rangle - D_{R_{t-1}}(q_{t+1},x_t) -  D_{R_{t-1}}(x_t,q_t)\right)\\
\le &  R_{T}(x)+
\sum_{t=1}^{T} \left(\|x_t-q_{t+1}\|_1 \|g_t-M_t\|_\infty - \frac{1}{2\eta_t}\|q_{t+1}-x_t\|_1^2 -  \frac{1}{2\eta_t}\|x_t-q_t\|_1^2\right),
\end{align*}
where the second inequality uses $R_{t-1} \le R_{t}$ and $R_{t-1}$ is $1/\eta_{t}$ strongly-convex w.r.t. $\|\cdot\|_1$.
Finally, we conclude the proof by applying Cauchy-Schwarz inequality: 
$$
\|x_t-q_{t+1}\|_1 \|g_t-M_t\|_\infty - \frac{1}{4\eta_t}\|q_{t+1}-x_t\|_1^2  \le  \eta_t \|g_t-M_t\|_\infty^2, 
$$
and triangle inequality
$$
- \frac{1}{4\eta_t}\|q_{t+1}-x_t\|_1^2 - \frac{1}{4\eta_{t}}\|q_{t+1}-x_{t+1}\|_1^2
\le -\frac{1}{8\eta_t}\|x_{t+1}-x_t\|_1^2,
$$
and the bound $R_T(x)\le \log A / \eta_{T+1}=\log A/\eta_T$ for any $x\in\Delta_{[A]}$.
\end{proof}

The following lemma bounds the total variation of the iterates in terms of the smoothness of loss vectors and prediction vectors. This can be seen as a generalization of~\citep[Lemma 3.2]{chen2020hedging} to the case with changing learning rate and arbitrary prediction vectors.

\begin{lemma}[Bounding stability by the smoothness of loss]
\label{lemma:oftrl-stability}
Suppose the learning rates are non-increasing: $\eta_t\ge \eta_{t+1}$ for all $t\ge 1$. Then the OFTRL algorithm~\eqref{equation:oftrl} satisfies (understanding $M_1\defeq 0$)
\begin{align}
    & \quad \sum_{t=2}^{T} \frac{1}{2\eta_t} \lone{x_t - x_{t-1}}^2 \le \frac{\log A}{\eta_T} + \max_{x\in\Delta_{[d]}} \sum_{t=1}^{T-1} \<x_t - x, g_t\> + \sum_{t=2}^{T} \linf{M_t - M_{t-1}} + \linf{M_T} \label{equation:l1-bound-1} \\
    & \le \frac{2\log A}{\eta_T} + \sum_{t=1}^{T-1} \eta_t\linf{g_t - M_t}^2 + \sum_{t=2}^{T} \linf{M_t - M_{t-1}} + \linf{M_T}. \label{equation:l1-bound-2}
\end{align}
In particular, choosing the prediction vector $M_t=g_{t-1}$ with $g_0\defeq 0$, and assume  $\linf{g_t - g_{t-1}}\le G_t$ for all $t\ge 1$, we have
\begin{align}
    \label{equation:l1-bound-3}
    \begin{aligned}
    & \quad \sum_{t=2}^{T} \frac{1}{2\eta_t} \lone{x_t - x_{t-1}}^2 \le \frac{2\log A}{\eta_T} + \sum_{t=1}^{T-1} \eta_t\linf{g_t - g_{t-1}}^2 + \sum_{t=2}^{T} \linf{g_{t-1} - g_{t-2}} + \linf{g_{T-1}} \\
    & \le \frac{2\log A}{\eta_T} + \sum_{t=1}^{T-1} (1 + \eta_tG_t) \linf{g_t - g_{t-1}} + \linf{g_{T-1}}.
\end{aligned}
\end{align}
\end{lemma}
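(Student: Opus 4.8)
The plan is to establish~\eqref{equation:l1-bound-1} directly by a potential--telescoping argument, then obtain~\eqref{equation:l1-bound-2} by feeding in the OFTRL regret bound (Lemma~\ref{lemma:OFTRL-auxillary}), and finally derive~\eqref{equation:l1-bound-3} by substituting $M_t = g_{t-1}$ and bookkeeping indices.

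For~\eqref{equation:l1-bound-1}, I would write the OFTRL iterate (for $t\ge 2$) as $x_t = \argmin_{x\in\Delta_{[d]}}\phi_t(x)$, where $\phi_t(x)\defeq \<x, L_t\> + \tfrac{1}{\eta_t}\Phi(x)$ and $L_t\defeq \sum_{s=1}^{t-1} g_s + M_t$; this is just a rearrangement of~\eqref{equation:oftrl}, and $M_1=0$. Since $\Phi$ is $1$-strongly convex in $\lone{\cdot}$ (Pinsker), $\phi_t$ is $\tfrac{1}{\eta_t}$-strongly convex, so strong convexity and optimality of $x_t$ over the simplex give $\phi_t(x_{t-1}) - \phi_t(x_t)\ge \tfrac{1}{2\eta_t}\lone{x_t - x_{t-1}}^2$ for every $t\ge 2$. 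Summing this over $t=2,\dots,T$ and splitting $\phi_t$ into its linear part $\<x, L_t\>$ and entropic part $\tfrac{1}{\eta_t}\Phi(x)$ reduces everything to two telescoping sums. The entropic sum $\sum_{t=2}^T \tfrac{1}{\eta_t}(\Phi(x_{t-1}) - \Phi(x_t))$ is controlled by Abel summation, using that $\{\eta_t\}$ is non-increasing and $\Phi(\cdot)\in[-\log A, 0]$ on the simplex, to be at most $\tfrac{\log A}{\eta_T}$. The linear sum $\sum_{t=2}^T \<x_{t-1}-x_t, L_t\>$ is also handled by Abel summation: using $L_{t+1}-L_t = g_t + M_{t+1}-M_t$ and $L_2 = g_1 + M_2$, it rewrites as $\sum_{t=1}^{T-1}\<x_t - x_T, g_t\> + \sum_{t=1}^{T-1}\<x_t, M_{t+1}-M_t\> - \<x_T, M_T\>$; bounding the first sum by $\max_{x\in\Delta_{[d]}}\sum_{t=1}^{T-1}\<x_t - x, g_t\>$ (since $x_T\in\Delta_{[d]}$) and the remaining two terms by $\sum_{t=2}^T \linf{M_t - M_{t-1}} + \linf{M_T}$ (since each $x_t\in\Delta_{[d]}$) yields~\eqref{equation:l1-bound-1}.

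For~\eqref{equation:l1-bound-2}, I would invoke Lemma~\ref{lemma:OFTRL-auxillary} over the first $T-1$ rounds, discard its non-positive stability term, and use $\eta_{T-1}\ge \eta_T$ to get $\max_{x\in\Delta_{[d]}}\sum_{t=1}^{T-1}\<x_t-x, g_t\> \le \tfrac{\log A}{\eta_T} + \sum_{t=1}^{T-1}\eta_t\linf{g_t - M_t}^2$, then substitute into~\eqref{equation:l1-bound-1}. For~\eqref{equation:l1-bound-3}, setting $M_t = g_{t-1}$ (with $g_0 = 0 = M_1$) turns $\linf{g_t-M_t}$ into $\linf{g_t - g_{t-1}}$, $\sum_{t=2}^T\linf{M_t - M_{t-1}}$ into $\sum_{t=2}^T\linf{g_{t-1}-g_{t-2}}$, and $\linf{M_T}$ into $\linf{g_{T-1}}$, giving the first inequality; the second then follows from $\linf{g_t-g_{t-1}}\le G_t$ (so that $\eta_t\linf{g_t-g_{t-1}}^2 \le \eta_t G_t\linf{g_t-g_{t-1}}$) together with the reindexing $\sum_{t=2}^T\linf{g_{t-1}-g_{t-2}} = \sum_{t=1}^{T-1}\linf{g_t - g_{t-1}}$.

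The step I expect to be most delicate is obtaining~\eqref{equation:l1-bound-1} with the sharp leading constant $\tfrac{\log A}{\eta_T}$ and not a factor of two worse: the crucial move is to charge the movement $\lone{x_t - x_{t-1}}^2$ directly to $\phi_t(x_{t-1}) - \phi_t(x_t)$, i.e.\ to the potential \emph{at time $t$}, rather than to a pairwise comparison of adjacent iterates under $\phi_{t+1}$, so that the $\log A$ budget is spent exactly once. After that, the remainder is routine Abel summation, needing only care about where $\{\eta_t\}$-monotonicity and the simplex constraint are invoked.
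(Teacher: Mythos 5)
Your proposal is correct and follows essentially the same route as the paper: your ``strong convexity plus optimality of $x_t$ gives $\phi_t(x_{t-1})-\phi_t(x_t)\ge \tfrac{1}{2\eta_t}\lone{x_t-x_{t-1}}^2$'' step is exactly the paper's combination of the first-order optimality condition at $x_t$, Pinsker's inequality, and the Bregman-divergence identity for $\Phi$, and the subsequent telescoping of the entropic and linear parts, the substitution of Lemma~\ref{lemma:OFTRL-auxillary} for~\eqref{equation:l1-bound-2}, and the specialization $M_t=g_{t-1}$ for~\eqref{equation:l1-bound-3} all match the paper's proof. No gaps.
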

\begin{proof}
We first prove~\eqref{equation:l1-bound-1}. For any $t\ge 2$, the optimality condition of~\eqref{equation:oftrl} for $x_t$ gives
\begin{align*}
    \<\sum_{s=1}^{t-1} g_s + M_t + \frac{\grad\Phi(x_t)}{\eta_t}, x' - x_t\> \ge 0
\end{align*}
for all $x'\in\Delta_{[A]}$. In particular, this holds for $x'=x_{t-1}$, from which we get
\begin{align*}
    & \quad \frac{1}{2\eta_t}\lone{x_{t-1} - x_t}^2 \stackrel{(i)}{\le} \frac{1}{\eta_t}\KL(x_{t-1}\|x_t) \stackrel{(ii)}{=} \frac{\Phi(x_{t-1}) - \Phi(x_t)}{\eta_t} - \<\frac{\grad\Phi(x_t)}{\eta_t}, x_{t-1} - x_t\> \\
    & \le \frac{\Phi(x_{t-1}) - \Phi(x_t)}{\eta_t} + \<\sum_{s=1}^{t-1} g_s + M_t, x_{t-1} - x_t\>,
\end{align*}
where (i) is by Pinsker's inequality, and (ii) is since the KL divergence is the Bregman divergence of $\Phi$. Summing the above over $t=2,\dots,T$ yields
\begin{align*}
    & \quad \sum_{t=2}^T \frac{1}{2\eta_t}\lone{x_{t-1} - x_t}^2 \\
    & \le \underbrace{-\frac{\Phi(x_T)}{\eta_T}}_{\le \log A/\eta_T} + \sum_{t=2}^{T} \underbrace{\paren{\frac{1}{\eta_t} - \frac{1}{\eta_{t-1}}}}_{\ge 0}\underbrace{\Phi(x_{t-1})}_{\le 0} - \<\sum_{s=1}^{T-1} g_s + M_T, x_T\> + \sum_{t=2}^{T} \<g_{t-1} + M_t - M_{t-1}, x_{t-1} \> \\
    & \le \frac{\log A}{\eta_T} + \sum_{t=1}^{T-1} \<x_t, g_t\> - \sum_{t=1}^{T-1} \<x_T, g_t\> + \<M_T, x_T\> + \sum_{t=2}^{T}\<M_t - M_{t-1}, x_{t-1}\> \\
    & \le \frac{\log A}{\eta_T} + \max_{x\in\Delta_{[d]}} \sum_{t=1}^{T-1} \<x_t-x, g_t\> + \sum_{t=2}^{T}\linf{M_t - M_{t-1}} + \linf{M_T}.
\end{align*}
This proves~\eqref{equation:l1-bound-1}. Then,~\eqref{equation:l1-bound-2} follows by plugging in the regret bound given by Lemma~\ref{lemma:OFTRL-auxillary}:
\begin{align*}
    \max_{x\in\Delta_{[d]}} \sum_{t=1}^{T-1} \<x_t-x, g_t\> \le \frac{\log A}{\eta_{T-1}} + \sum_{t=1}^{T-1}\eta_t \linf{g_t - M_t}^2 \le \frac{\log A}{\eta_{T}} + \sum_{t=1}^{T-1}\eta_t \linf{g_t - M_t}^2.
\end{align*}

Finally,~\eqref{equation:l1-bound-3} is a direct consequence of~\eqref{equation:l1-bound-2} by plugging in $M_t=g_{t-1}$ and $\linf{g_t - g_{t-1}}\le G_t$ for all $t\ge 1$.
\end{proof}

\subsubsection{Auxiliary lemma for OFTRL with general regularizers}

Consider an OFTRL algorithm with loss function $\set{g_t}_{t\ge 0}\subset \R^d$, parameter space $\cX\subset \R^d$, and convex regularizers $R_t:\cX\to \R$ for $t\ge 0$: 
\begin{align}
\label{equation:oftrl-general}
    x_{t+1}&\setto \argmin_{x\in\cX} \< x, \sum_{s=1}^t g_s + M_{t+1}\> + R_t(x).
\end{align}
Define auxiliary sequence
\begin{align}
\label{equation:qt}
    q_{t+1} = \arg\min_{x\in\cX} \left\langle x, \sum_{s=1}^t g_s \right\rangle + R_t(x).
\end{align}
Recall the Bregman divergence associated with any convex regularizer $R:\cX\to\R$ is given by
\begin{align*}
    D_{R}(x, y) \defeq R(x) - R(y) - \<\grad R(y), x-y\> \ge 0.
\end{align*}



\begin{lemma}[Auxiliary lemma for OFTRL with general regularizers]
\label{lem:OFTRL-recursion}
Algorithm~\eqref{equation:oftrl-general} achieves the following for any $T\ge 1$ and $x\in\cX$:
\begin{align*}
\sum_{t=1}^{T} \langle q_{t+1}, g_t\rangle + \sum_{t=1}^{T} \langle x_t - q_{t+1}, M_t\rangle \le \sum_{t=1}^{T}\langle x, g_t\rangle + R_T (x) - \min_{x'\in\cX} R_0(x') + S_{T},
\end{align*}
where
\begin{align*}
S_{T} \defeq 
\sum_{t=1}^{T} \left(R_{t-1}(q_{t+1})- R_{t}(q_{t+1})\right)-
\sum_{t=1}^{T} \left(D_{R_{t-1}}(q_{t+1},x_t) +  D_{R_{t-1}}(x_t,q_t)\right).
\end{align*}
\end{lemma}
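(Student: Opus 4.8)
The plan is a potential-function (\`\`be-the-leader''-style) argument: take the potential to be the optimal regularized-leader objective at each round and lower-bound its per-round increment. Write $L_t\defeq\sum_{s=1}^t g_s$ (with $L_0\defeq 0$) and define $\Phi_t\defeq\min_{x\in\cX}\brac{\<x,L_t\>+R_t(x)}$ for $t\ge 0$, so that $\Phi_t=\<q_{t+1},L_t\>+R_t(q_{t+1})$ and $\Phi_0=\min_{x'\in\cX}R_0(x')$. Everything reduces to the per-round inequality
\[
  \Phi_t-\Phi_{t-1}\ \ge\ \<q_{t+1},g_t\>+\<x_t-q_{t+1},M_t\>+\brac{R_t(q_{t+1})-R_{t-1}(q_{t+1})}+D_{R_{t-1}}(q_{t+1},x_t)+D_{R_{t-1}}(x_t,q_t),
\]
valid for each $t\in[T]$: summing over $t=1,\dots,T$ telescopes the left side to $\Phi_T-\Phi_0$, and the trivial bound $\Phi_T\le\<x,L_T\>+R_T(x)=\sum_{t=1}^T\<x,g_t\>+R_T(x)$ (for any $x\in\cX$), together with a rearrangement, leaves exactly $S_T$ on the right.

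\textbf{The per-round inequality.} To prove it I would expand, using $L_t=L_{t-1}+g_t$,
\[
  \Phi_t-\Phi_{t-1}=\<q_{t+1},g_t\>+\brac{R_t(q_{t+1})-R_{t-1}(q_{t+1})}+\brac{G_{t-1}(q_{t+1})-G_{t-1}(q_t)},
\]
where $G_{t-1}(x)\defeq\<x,L_{t-1}\>+R_{t-1}(x)$; note $D_{G_{t-1}}=D_{R_{t-1}}$ since the two functions differ by a linear term. It then suffices to show $G_{t-1}(q_{t+1})-G_{t-1}(q_t)\ge\<x_t-q_{t+1},M_t\>+D_{R_{t-1}}(q_{t+1},x_t)+D_{R_{t-1}}(x_t,q_t)$. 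This follows from two applications of the exact identity $G_{t-1}(y)=G_{t-1}(z)+\<\grad G_{t-1}(z),y-z\>+D_{R_{t-1}}(y,z)$ combined with first-order optimality: at $z=x_t$, which is $\argmin_{x\in\cX}F_t(x)$ for $F_t(x)\defeq G_{t-1}(x)+\<x,M_t\>$, the variational inequality $\<\grad G_{t-1}(x_t)+M_t,\,q_{t+1}-x_t\>\ge 0$ gives $G_{t-1}(q_{t+1})\ge G_{t-1}(x_t)+\<M_t,x_t-q_{t+1}\>+D_{R_{t-1}}(q_{t+1},x_t)$; and at $z=q_t=\argmin_{x\in\cX}G_{t-1}(x)$, the inequality $\<\grad G_{t-1}(q_t),x_t-q_t\>\ge 0$ gives $G_{t-1}(x_t)\ge G_{t-1}(q_t)+D_{R_{t-1}}(x_t,q_t)$. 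Adding the two delivers the bound; that $x_t$ and $q_t$ are the stated minimizers is immediate from their defining formulas (peeling off the linear term $\<x,L_{t-1}\>$, resp.\ $\<x,L_{t-1}+M_t\>$).

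\textbf{Main obstacle.} The difficulty is not conceptual but is the bookkeeping: tracking precisely which regularizer-difference and Bregman terms survive so they assemble into the stated $S_T$ (in particular, the $R_t(q_{t+1})-R_{t-1}(q_{t+1})$ terms do not telescope and must be carried all the way through into $S_T$), and justifying the variational inequalities --- in the applications of interest ($\cX=\Delta_{[A]}$ with $R_t$ a positive multiple of negative entropy up to an additive constant) the iterates $x_t,q_t,q_{t+1}$ lie in the relative interior of the simplex, so the $R_t$ are differentiable there and first-order optimality applies; in general one assumes the $R_t$ differentiable on a set containing all iterates. The base case $t=1$ needs no separate treatment --- it is the same computation with $L_0=0$ and $q_1=\argmin_{x\in\cX}R_0(x)$.
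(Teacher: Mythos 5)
Your proof is correct and is essentially the paper's argument in different packaging: the paper proves the same bound by induction on $T$, and your per-round potential increment $\Phi_t-\Phi_{t-1}$ is exactly the paper's inductive step, resting on the identical two strong-optimality (Bregman) inequalities at $q_t$ and at $x_t$ and the same bookkeeping of the $R_{t-1}(q_{t+1})-R_t(q_{t+1})$ terms into $S_T$. Telescoping the potential is just the unrolled form of that induction.
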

\begin{proof}
We prove the lemma by induction.
The above relation holds trivially for $T=0$. Assume the relation holds for $\tau= T-1$. For $\tau= T$, we have
\begin{align*}
    &\sum_{t=1}^{T} \langle q_{t+1}, g_t\rangle + \sum_{t=1}^{T} \langle x_t - q_{t+1}, M_t\rangle\\ 
    \le & \min_{x\in\cX} \brac{\sum_{t=1}^{T-1}\langle x, g_t\rangle + R_{T-1} (x)} - \min_{x'\in\cX} R_0(x') + S_{T-1}+  \langle q_{T+1}, g_T\rangle +  \langle x_T - q_{T+1}, M_T\rangle \\
    = & \sum_{t=1}^{T-1}\langle q_T, g_t\rangle + R_{T-1} (q_T) - \min_{x'\in\cX} R_0(x') \\
    & \qquad + S_{T-1}+  \langle q_{T+1}, g_T\rangle +  \langle x_T - q_{T+1}, M_T\rangle \quad \mbox{(definition of $q_T$)}\\
    \le &\sum_{t=1}^{T-1}\langle x_T, g_t\rangle + R_{T-1} (x_T) - D_{R_{T-1}}(x_T,q_T) - \min_{x'\in\cX} R_0(x') \\
    & \qquad + S_{T-1}+  \langle q_{T+1}, g_T\rangle +  \langle x_T - q_{T+1}, M_T\rangle \quad \mbox{(optimality of $q_T$)}\\
    = &\min_{x\in\cX} \brac{ \langle x, \sum_{t=1}^{T-1}g_t + M_T\rangle + R_{T-1} (x)} - D_{R_{T-1}}(x_T,q_T) - \min_{x'\in\cX} R_0(x') \\
    & \qquad + S_{T-1}+  \langle q_{T+1}, g_T-M_T\rangle  \quad \mbox{(definition of $x_T$)} \\
    \le &  \langle q_{T+1}, \sum_{t=1}^{T-1}g_t + M_T\rangle + R_{T-1} (q_{T+1}) - D_{R_{T-1}}(q_{T+1},x_T)  \\ &\qquad - D_{R_{T-1}}(x_T,q_T) - \min_{x'\in\cX} R_0(x') + S_{T-1}+  \langle q_{T+1}, g_T-M_T\rangle  \quad \mbox{(optimality of $x_T$)} \\
    = &\min_{x\in\cX} \langle x, \sum_{t=1}^{T}g_t\rangle + R_{T} (x) + (R_{T-1}(q_{T+1})- R_{T}(q_{T+1})) \\ &\qquad- D_{R_{T-1}}(q_{T+1},x_T) -  D_{R_{T-1}}(x_T,q_T) - \min_{x'\in\cX} R_0(x') + S_{T-1}\quad \mbox{(definition of $q_{T+1}$)}, 
\end{align*}
which completes the induction.
\end{proof}

\section{Proofs for Section~\ref{section:framework-general}} 
\label{apdx:theorem-master}


In this section we prove Theorem~\ref{theorem:master}. The proof relies on the following two lemmas.


\begin{lemma}[Performance difference for Markov policies]
\label{lem.duality}
In two-player zero-sum Markov games, suppose a Markov policy $(\mu,\nu)$ satisfies the following for all $h\in[H+1]$:
\begin{align*}
    &\max_{s} \max_{\mu^\dagger\in\Delta_\cA} \left(\brac{(\mu^\dagger)^\top Q_h^\star \nu_h}(s)-V_{h}^\star(s) \right)\le \epsilon_h, \\
    &\max_{s} \max_{\nu^\dagger\in\Delta_\cB} \left(V_{h}^\star(s) - \brac{\mu_h^\top Q_h^\star \nu^\dagger}(s) \right)\le \epsilon_h.
\end{align*}
Then we have for all $h\in[H]$ that
\begin{align*}
    \max\set{ \|V_h^{\dagger,\nu} - V_h^\star\|_\infty, \|V_h^{\mu,\dagger} - V_h^\star\|_\infty } \le \sum_{h'=h}^H\epsilon_{h'}.
\end{align*}
\end{lemma}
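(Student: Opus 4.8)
The plan is to prove both bounds simultaneously by backward induction on $h$, starting from the base case $h = H+1$ where $V_{H+1}^{\dagger,\nu} = V_{H+1}^{\mu,\dagger} = V_{H+1}^\star = 0$ and the claim is trivial (empty sum). For the inductive step, I would fix $h \in [H]$ and a state $s$, and focus on bounding $V_h^{\mu,\dagger}(s) - V_h^\star(s)$ from above (the other direction, $V_h^{\dagger,\nu}(s) - V_h^\star(s)$, is symmetric, swapping the roles of the players and of the max/min). The key is to relate the value of the pair $(\mu, \nu^\dagger(\mu))$ at layer $h$ to the Nash value $V_h^\star(s)$ via the one-step Bellman expansion, introducing the Nash $Q$-function $Q_h^\star$ as the bridge.

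The main chain of inequalities I would carry out: first, by the Bellman equation for the best-response value, $V_h^{\mu,\dagger}(s) = \min_{\nu^\dagger \in \Delta_\cB} \brac{\mu_h^\top (r_h + \P_h V_{h+1}^{\mu,\dagger}) \nu^\dagger}(s)$. Next I would replace $V_{h+1}^{\mu,\dagger}$ by $V_{h+1}^\star$ at the cost of $\linf{V_{h+1}^{\mu,\dagger} - V_{h+1}^\star}$, which by the inductive hypothesis is at most $\sum_{h'=h+1}^H \epsilon_{h'}$; this turns $r_h + \P_h V_{h+1}^{\mu,\dagger}$ into $Q_h^\star$ up to that additive error. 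Then I would use the assumed per-layer regret-type bound: $\min_{\nu^\dagger} \brac{\mu_h^\top Q_h^\star \nu^\dagger}(s) \le \brac{\mu_h^\top Q_h^\star \nu^\dagger}(s)$ for the particular optimum, and the hypothesis $V_h^\star(s) - \brac{\mu_h^\top Q_h^\star \nu^\dagger}(s) \le \epsilon_h$ gives $\min_{\nu^\dagger}\brac{\mu_h^\top Q_h^\star \nu^\dagger}(s) \le V_h^\star(s) + \epsilon_h$. Wait — I need to be careful about the direction here: actually the hypothesis states $V_h^\star(s) - \brac{\mu_h^\top Q_h^\star \nu^\dagger}(s) \le \epsilon_h$ for all $\nu^\dagger$, which is a \emph{lower} bound on $\brac{\mu_h^\top Q_h^\star \nu^\dagger}(s)$; but what I want is an \emph{upper} bound on $\min_{\nu^\dagger}\brac{\mu_h^\top Q_h^\star \nu^\dagger}(s)$. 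The correct route is instead: $\min_{\nu^\dagger}\brac{\mu_h^\top Q_h^\star \nu^\dagger}(s) \le \max_{\mu'}\min_{\nu^\dagger}\brac{(\mu')^\top Q_h^\star \nu^\dagger}(s) = V_h^\star(s)$ by the minimax theorem, so no $\epsilon_h$ is even needed in this direction; the $\epsilon_h$ term enters in the \emph{other} direction (bounding $V_h^{\dagger,\nu}$). So I would split the two directions and use the appropriate one of the two hypotheses for each, with the $\epsilon_h$ charged once per direction. Combining the Bellman expansion, the $V_{h+1}$ replacement error $\sum_{h'=h+1}^H \epsilon_{h'}$, and the single-layer error $\epsilon_h$, I get $V_h^{\mu,\dagger}(s) - V_h^\star(s) \le \sum_{h'=h}^H \epsilon_{h'}$, and taking the max over $s$ closes the induction.

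The step I expect to be the main obstacle is getting the one-step comparison exactly right: carefully tracking which of the two hypothesis inequalities is used in which direction, ensuring the $\P_h$ operator is monotone so that $\linf{\P_h(V_{h+1}^{\mu,\dagger} - V_{h+1}^\star)} \le \linf{V_{h+1}^{\mu,\dagger} - V_{h+1}^\star}$, and confirming that $r_h + \P_h V_{h+1}^\star = Q_h^\star$ (the Nash Bellman equation) so that the assumed bounds on $Q_h^\star$ apply cleanly. Once the bookkeeping of the two directions is set up, the rest is a routine telescoping of the per-layer errors.
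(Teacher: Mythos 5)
Your plan follows essentially the same route as the paper's proof: backward induction on $h$, a one-step Bellman expansion of the best-response value, replacement of $Q_h^{\mu,\dagger}$ (resp.\ $Q_h^{\dagger,\nu}$) by $Q_h^\star$ at the cost of $\linf{V_{h+1}^{\mu,\dagger}-V_{h+1}^\star}\le \sum_{h'=h+1}^H\epsilon_{h'}$ from the inductive hypothesis (using monotonicity of $\P_h$ and $r_h+\P_h V_{h+1}^\star=Q_h^\star$), and then one invocation of the per-layer assumption to pick up the extra $\epsilon_h$. The final tally $\sum_{h'=h}^H\epsilon_{h'}$ you state is correct, and the two quantities $V^{\mu,\dagger}$ and $V^{\dagger,\nu}$ are indeed handled symmetrically with one hypothesis each.

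The one thing to fix is your mid-proof ``correction,'' which is itself the misstep. Since $V_h^{\mu,\dagger}\le V_h^\star$ pointwise, the inequality $V_h^{\mu,\dagger}(s)-V_h^\star(s)\le\cdots$ that you set out to prove is the vacuous side; the entire content of $\linf{V_h^{\mu,\dagger}-V_h^\star}$ is the bound $V_h^\star(s)-V_h^{\mu,\dagger}(s)\le\sum_{h'=h}^H\epsilon_{h'}$, which requires a \emph{lower} bound on $V_h^{\mu,\dagger}(s)$ and hence a lower bound on $\min_{\nu^\dagger}\brac{\mu_h^\top Q_h^\star\nu^\dagger}(s)$. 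That is exactly what the second hypothesis supplies (namely $\ge V_h^\star(s)-\epsilon_h$), so your original instinct needed no correcting. The assertion that ``no $\epsilon_h$ is even needed in this direction; the $\epsilon_h$ term enters in the other direction (bounding $V_h^{\dagger,\nu}$)'' is wrong as stated: $\epsilon_h$ is needed once for the $V^{\mu,\dagger}$ bound (via the $\nu^\dagger$-hypothesis) and once, separately, for the $V^{\dagger,\nu}$ bound (via the $\mu^\dagger$-hypothesis); the minimax inequality $\min_{\nu^\dagger}\brac{\mu_h^\top Q_h^\star\nu^\dagger}(s)\le V_h^\star(s)$ only disposes of the side that was already trivial. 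With that accounting straightened out, the induction closes exactly as in the paper.
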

\begin{proof}
We prove by backward induction over $h$.
The claim is trivial for $h=H+1$.
Suppose the claim holds for step $h+1$. At step $h$,
\begin{align*}
 \|V_h^{\dagger,\nu} - V_h^\star\|_\infty&= \max_s \abs{ \max_{\mu^\dagger\in\Delta_\cA} \brac{(\mu^\dagger)^\top Q_{h}^{\dagger,\nu}\nu_h}(s) - V_{h}^\star(s) }\\
&\le   \max_s  \abs{ \max_{\mu^\dagger\in\Delta_\cA} \brac{(\mu^\dagger)^\top Q_{h}^\star\nu_h}(s) - V_{h}^\star(s) }+ \|Q_{h}^{\dagger,\nu}-Q_{h}^\star\|_\infty\\
&\le \epsilon_h + \|Q_{h}^{\dagger,\nu}-Q_{h}^\star\|_\infty.
\end{align*}
Notice that
\begin{align*}
    & \quad \|Q_{h}^{\dagger,\nu}-Q_{h}^\star\|_\infty \le \max_{s,a,b}\abs{\paren{r_h+\P_h V_{h+1}^{\dagger,\nu} }(s,a,b) - \paren{r_h+\P_h V_{h+1}^{\star}}(s,a,b) }\\
    & \le \max_{s,a,b}\abs{\P_h\brac{V_{h+1}^{\dagger,\nu} - V_{h+1}^\star}(s,a,b)} \le \|V_{h+1}^{\dagger,\nu} - V_{h+1}^\star\|_\infty \le \sum_{h'=h+1}^H \epsilon_{h'}. \quad \textup{(by inductive hypothesis)}
\end{align*}
Combining the two inequalities we get
\begin{align*}
    \|V_h^{\dagger,\nu} - V_h^\star\|_\infty \le \sum_{h'=h}^H \epsilon_{h'}.
\end{align*}
This proves the claim for $\|V_h^{\dagger,\nu} - V_h^\star\|_\infty$. The same argument also holds for $\|V_h^{\mu,\dagger} - V_h^\star\|_\infty$, which completes the proof.
\end{proof}

Throughout the rest of this section, we define the following shorthand for the value estimation error:
\begin{align*}
    \delta_h^t \defeq \linf{ Q_h^t - Q_h^\star } = \max_{s,a,b} \abs{Q_h^t(s,a,b) - Q_h^\star(s,a,b)},
\end{align*}
where $Q_h^t$ is the estimated value in Algorithm \ref{alg:framework}.

\begin{lemma}[Recursion of value estimation]\label{lemma:recursion-of-value-estimation}
Algorithm \ref{alg:framework} guarantees that for all $(t,h)\in[T]\times[H]$,
\begin{align*}
    \delta_h^{t} \le \sum_{i=1}^t \beta_t^i \delta_{h+1}^{i} + \reg_{h+1}^t.
\end{align*}
Further, suppose that $\reg_h^t\le \barreg_h^t$ for all $(h,t)\in[H]\times[T]$, where $\barreg_h^t$ is non-increasing in $t$: $\barreg_h^t\ge \barreg_h^{t+1}$ for all $t\ge 1$. Then we have
\begin{equation*}
    \delta_h^t\le H\cbeta^{H-1} \cdot \frac{1}{t}\sum_{i=1}^t\max_{h'}\barreg_{h'}^i, 
\end{equation*}
where $\cbeta$ is defined in \eqref{eq:def-c}.
\end{lemma}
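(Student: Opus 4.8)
The plan is to establish the two claims in order. For the first recursion, I would start from the value update rule \eqref{eq:Q-update}, unrolled into the weighted average $Q_h^t(s,a,b) = \sum_{i=1}^t \beta_t^i \brac{r_h + \P_h[(\mu_{h+1}^i)^\top Q_{h+1}^i \nu_{h+1}^i]}(s,a,b)$, and compare against the Bellman equation $Q_h^\star(s,a,b) = (r_h + \P_h V_{h+1}^\star)(s,a,b)$, using $\sum_i \beta_t^i = 1$ to write $Q_h^\star = \sum_i \beta_t^i (r_h + \P_h V_{h+1}^\star)$. Subtracting, the $r_h$ terms cancel and I am left with $\delta_h^t \le \sum_{i=1}^t \beta_t^i \linf{\P_h[(\mu_{h+1}^i)^\top Q_{h+1}^i \nu_{h+1}^i - V_{h+1}^\star]}$, and since $\P_h$ is an averaging operator it suffices to bound $\abs{[(\mu_{h+1}^i)^\top Q_{h+1}^i \nu_{h+1}^i](s) - V_{h+1}^\star(s)}$ for each $s$. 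The standard trick here is to split this into (a) $\abs{[(\mu_{h+1}^i)^\top Q_{h+1}^i \nu_{h+1}^i](s) - [(\mu_{h+1}^i)^\top Q_{h+1}^\star \nu_{h+1}^i](s)} \le \delta_{h+1}^i$ by replacing $Q_{h+1}^i$ with $Q_{h+1}^\star$, and (b) $\abs{[(\mu_{h+1}^i)^\top Q_{h+1}^\star \nu_{h+1}^i](s) - V_{h+1}^\star(s)}$, which is controlled by the per-state regret: since $V_{h+1}^\star(s) = \min_\nu \max_\mu [\mu^\top Q_{h+1}^\star \nu](s)$, the deviation of $[(\mu^i)^\top Q_{h+1}^\star \nu^i](s)$ from the value is sandwiched between $-\reg_{h+1,\nu}^i(s)$-type and $+\reg_{h+1,\mu}^i(s)$-type quantities — but to get a clean bound I would sum over $i$ with weights $\beta_t^i$ first: $\abs{\sum_i \beta_t^i ([(\mu^i)^\top Q_{h+1}^\star \nu^i](s) - V_{h+1}^\star(s))} \le \max\{\reg_{h+1,\mu}^t(s), \reg_{h+1,\nu}^t(s)\} \le \reg_{h+1}^t$, using that averaging the minimax value against $\mu^i$ on one side and $\nu^i$ on the other gives exactly the two weighted regret definitions in \eqref{eq:def-regret}. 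Combining (a) and (b) gives $\delta_h^t \le \sum_{i=1}^t \beta_t^i \delta_{h+1}^i + \reg_{h+1}^t$.

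For the second claim, I would feed the first recursion into the monotonicity machinery of Lemma \ref{lem:delta.monotonic} and the convolution bound of Lemma \ref{lemma:alpha-convolution}. Define $\barreg^t \defeq \max_{h'} \barreg_{h'}^t$, which is non-increasing in $t$ by hypothesis, and set $\Delta_h^t$ via the recursion $\Delta_h^t = \sum_{i=1}^t \alpha_t^i \Delta_{h+1}^i + \barreg^t$ with $\Delta_{H+1}^t = 0$ (here specializing to $\beta_t = \alpha_t$; the general-$\beta$ case would use the analogous recursion with $\beta_t^i$ and $c_\beta$). Since $\reg_{h+1}^t \le \barreg_{h+1}^t \le \barreg^t$, a backward induction on $h$ shows $\delta_h^t \le \Delta_h^t$ for all $t$. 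By Lemma \ref{lem:delta.monotonic}, $\Delta_h^t$ is non-increasing in $t$, so $\Delta_h^t \le \frac{1}{t}\sum_{i=1}^t \Delta_h^i$. Now I would expand $\Delta_h^t$ explicitly: unrolling the recursion across the $H-h$ layers from $h$ down, $\Delta_h^t$ is a sum of terms of the form (iterated $\alpha$-convolution of $\barreg^\cdot$), and the key is that each application of $\sum_{i}\alpha_\cdot^i(\cdot)$ followed by the swap $\sum_{t=i}^T \alpha_t^i \le 1 + 1/H = c_\beta$ (Lemma \ref{lemma:alpha-ti}) costs a factor $c_\beta$; there are at most $H-1$ such nested convolutions inside the averaged sum, giving the $c_\beta^{H-1}$ factor, and the outermost $\frac1t\sum_{i=1}^t$ together with $H$ possible starting layers gives the $H \cdot \frac1t\sum_{i=1}^t \barreg^i$. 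Assembling, $\delta_h^t \le H c_\beta^{H-1} \cdot \frac1t \sum_{i=1}^t \max_{h'}\barreg_{h'}^i$.

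The main obstacle I anticipate is getting the combinatorics of the nested weighted averages exactly right — in particular, carefully tracking how the $H$ layers contribute an overall factor $H$ (not $H^2$) while the nested $\alpha$-convolutions contribute $c_\beta^{H-1}$, and making sure the "average over $t$" step is applied at the right place (i.e., proving $\Delta_h^t$ is itself non-increasing before averaging, which is exactly what Lemma \ref{lem:delta.monotonic} is for). A secondary subtlety is step (b) above: verifying that the weighted sum $\sum_i \beta_t^i$ of the minimax-value deviations is bounded by $\reg_{h+1}^t$ rather than by a sum of per-iterate regrets — this works only because the regret in \eqref{eq:def-regret} is already defined as a $\beta_t^i$-weighted quantity with the $\max$ over the comparator pulled outside the sum, so the one-sided bounds $\sum_i \beta_t^i([(\mu^i)^\top Q^\star \nu^i](s) - V^\star(s)) \le \reg_{h+1,\mu}^t(s)$ and $\ge -\reg_{h+1,\nu}^t(s)$ both hold by weak duality applied termwise. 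I would double-check these sign conventions against the definitions before finalizing.
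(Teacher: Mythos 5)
Your overall architecture is right, and the second half of your plan (auxiliary sequence $\Delta_h^t$, monotonicity via Lemma \ref{lem:delta.monotonic}, averaging $\Delta_h^t\le \frac1t\sum_{i\le t}\Delta_h^i$, then unrolling across layers paying one factor $\cbeta=\sup_j\sum_{t\ge j}\beta_t^j$ per level and summing the geometric series to get $H\cbeta^{H-1}$) is essentially identical to the paper's. The problem is in step (b) of your first part. You claim that
\begin{equation*}
\Bigl|\sum_{i=1}^t \beta_t^i\bigl([(\mu_{h+1}^i)^\top Q_{h+1}^\star \nu_{h+1}^i](s)-V_{h+1}^\star(s)\bigr)\Bigr|
\end{equation*}
is bounded by $\max\{\reg_{h+1,\mu}^t(s),\reg_{h+1,\nu}^t(s)\}$ ``exactly'' by the definitions in \eqref{eq:def-regret}. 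It is not: the regrets in \eqref{eq:def-regret} are weighted regrets with respect to the algorithm's iterates $Q_{h+1}^i$, whereas your quantity is a regret with respect to $Q_{h+1}^\star$. Converting between the two costs another $2\sum_i\beta_t^i\delta_{h+1}^i$ (the comparator term and the realized term each pick up $\lVert Q^i-Q^\star\rVert_\infty$), so on top of the $\sum_i\beta_t^i\delta_{h+1}^i$ you already spent in step (a) your recursion becomes $\delta_h^t\le 3\sum_i\beta_t^i\delta_{h+1}^i+\reg_{h+1}^t$. That constant is fatal: unrolling gives $(3\cbeta)^{H-1}$ in place of $\cbeta^{H-1}$, i.e.\ an extra $3^{H-1}$, and the lemma (and Theorem \ref{theorem:master}, which relies on $\cbeta^H=O(1)$) is no longer proved.

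The repair is to perform the $Q^\star\to Q^i$ substitution only once, inside the comparator optimization, rather than first converting the realized payoffs and then converting the regret back. This is what the paper does: starting from $Q_h^\star = r_h + \max_\mu\min_\nu \P_h[\mu^\top Q_{h+1}^\star\nu]$, plug the weighted mixture $\sum_i\beta_t^i\nu_{h+1}^i$ into the min, replace $Q_{h+1}^\star$ by $Q_{h+1}^i$ \emph{while still inside the single} $\max_\mu$ (cost $\sum_i\beta_t^i\delta_{h+1}^i$, paid once because $\mu$ and $\nu^i$ are both distributions), and only then compare $\max_{\mu^\dagger}\sum_i\beta_t^i\<\mu^\dagger,[Q^i\nu^i]\>$ against $\sum_i\beta_t^i\<\mu^i,[Q^i\nu^i]\>$, which is exactly $\reg_{h+1,\mu}^t(s)$ as defined. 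The symmetric argument handles the other direction, yielding the coefficient-one recursion $\delta_h^t\le\sum_i\beta_t^i\delta_{h+1}^i+\reg_{h+1}^t$. (Your secondary worry about sign conventions is real but harmless: the upper bound is controlled by the $\nu$-player's regret and the lower bound by the $\mu$-player's, opposite to what you wrote, but the final bound only uses the max of the two.)
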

\begin{proof}
Fix $(h,s,a,b)\in[H]\times\cS\times\cA\times\cB$. From the definition of $Q_h^\star$ we have that
\begin{align*}
    Q^\star_h(s,a,b) &= r_h(s,a,b) + \max_{\mu_{h+1}}\min_{ \nu_{h+1}} \P_h\brac{\mu_{h+1}^\top Q_{h+1}^\star \nu_{h+1}}(s,a,b)\\
    &\le r_h(s,a,b) + \max_{\mu_{h+1}} \P_h\brac{\mu_{h+1}^\top Q_{h+1}^\star \paren{\sum_{i=1}^t \beta_t^i \nu_{h+1}^{i}}}(s,a,b)\\
    &= r_h(s,a,b) + \max_{\mu_{h+1}}\sum_{i=1}^t \beta_t^i \P_h\brac{\mu_{h+1}^\top Q_{h+1}^\star\nu_{h+1}^{i}} (s,a,b)\\
    &\le  r_h(s,a,b) + \max_{\mu_{h+1}}\sum_{i=1}^t \beta_t^i \paren{ \P_h\brac{\mu_{h+1}^\top Q_{h+1}^{i}\nu_{h+1}^{i}} (s,a,b) + \|Q_{h+1}^i-Q^\star_{h+1}\|_\infty }\\
    &\le   r_h(s,a,b) + \sum_{i=1}^t \beta_t^i \P_h\brac{(\mu_{h+1}^i)^\top Q_{h+1}^{i}\nu_{h+1}^{i}} (s,a,b) + \sum_{i=1}^t\beta_t^i\delta_{h+1}^i + \reg_{h+1}^t\\
    &= Q_h^{t}(s,a,b) +\sum_{i=1}^t\beta_t^i \delta_{h+1}^{i}+\reg_{h+1}^t.
\end{align*}
Above, the last equality is derived from the update rule~\eqref{eq:Q-relationship}, which implies that
\begin{equation*}
    Q_h^{t}(s,a,b) = \sum_{i=1}^t\beta_t^i \paren{r_h + \P_h\brac{(\mu_{h+1}^i)^\top Q_{h+1}^{i}\nu_{h+1}^{i}}} (s,a,b).
\end{equation*}
Therefore we have
\begin{equation*}
    Q^\star_h(s,a,b) - Q_h^{t}(s,a,b) \le \sum_{i=1}^t\beta_t^i \delta_{h+1}^{i}+\reg_{h+1}^t,~~\forall~s,a,b.
\end{equation*}
Apply similar analysis to the min-player, we get
\begin{equation*}
      Q_h^{t}(s,a,b) -  Q^\star_h(s,a,b)  \le \sum_{i=1}^t\beta_t^i \delta_{h+1}^{i}+\reg_{h+1}^t,~~\forall~s,a,b.
\end{equation*}
Thus we get
\begin{equation*}
    \delta_h^{t} \le \sum_{i=1}^t\beta_t^i \delta_{h+1}^{i}+{\rm reg}_{h+1}^t,
\end{equation*}
which completes the proof of the first inequality in the Lemma. Now consider an auxiliary sequence 
$\{\Delta_h^t \}_{h,t}$ defined by 
\begin{equation}
    \begin{cases}
    \Delta_h^t =  \sum_{i=1}^{t} \beta_{t}^i \Delta_{h+1}^{i} + \barreg_{h+1}^t,\\
    \Delta_{H+1}^t= 0,~ \text{ for all } t.
    \end{cases}
\end{equation}
Where $\barreg_{h}^t$ is the upperbound of $\reg_h^t$ defined in Theorem \ref{theorem:master}. Observe that $\{\Delta_h^t \}_{h,t}$ satisfies the following properties 
\begin{equation}
    \begin{cases}
    \Delta_h^t \ge \delta_h^t\qquad &\text{ (by definition)},\\
    \Delta_h^t \le \Delta_h^{t-1} \quad &\text{ (by Lemma \ref{lem:delta.monotonic})}.
    \end{cases}
\end{equation}
Therefore, to control $\delta_h^t$, it suffices to bound $\Delta_h^t\le \frac{1}{t}\sum_{i=1}^t \Delta_h^i$, which follows from the standard argument in \cite{jin2018q}:
\begin{align*}
     \frac{1}{t}\sum_{i=1}^t \Delta_h^i 
     &=  \frac{1}{t}\sum_{i=1}^t  \sum_{j=1}^{i} \beta_{i}^j \Delta_{h+1}^{j} + \frac{1}{t}\sum_{i=1}^t \barreg_{h+1}^i\\
     & \le\frac{1}{t}\sum_{j=1}^{t}\paren{\sum_{i=j}^t\beta_{i}^j} \Delta_{h+1}^{j} + \frac{1}{t}\sum_{i=1}^t \barreg_{h+1}^i\\
     &\le \cbeta \cdot \frac{1}{t}\sum_{i=1}^{t}\Delta_{h+1}^{i} + \frac{1}{t}\sum_{i=1}^t \barreg_{h+1}^i\\
     &\le \cbeta^2 \cdot \frac{1}{t}\sum_{i=1}^{t}\Delta_{h+2}^{i} + \cbeta\cdot \frac{1}{t}\sum_{i=1}^t \barreg_{h+2}^i +
     \frac{1}{t}\sum_{i=1}^t \barreg_{h+1}^i\\
     &\le \cdots\\
     &\le \paren{\sum_{h'=h+1}^{H}\cbeta^{h'-h}}\cdot\frac{1}{t}\sum_{i=1}^t \max_{1\le h'\le H}\barreg_{h'}^i\\
     &\le H\cbeta^{H-1}\cdot \frac{1}{t}\sum_{i=1}^t \max_{1\le h'\le H}\barreg_{h'}^i.
\end{align*}
Above, the last step used the fact that $c_\beta\ge 1$. This completes the proof of the second inequality in the Lemma.
\end{proof}

We are now ready to prove the main theorem.
\begin{proof}[Proof of Theorem~\ref{theorem:master}]
Fix any $(h,s)\in[H]\times\cS$. We first give a bound for $\max_{\mu^\dagger\in\Delta_\cA,\nu^\dagger\in\Delta_\cB}\brac{(\mu^\dagger)^\top Q_h^\star \hat\nu_h^T -\paren{\hat\mu_h^T}^\top Q_h^\star \nu^\dagger }(s)$, i.e. the per-state duality gap of $(\hat{\mu}^T, \hat{\nu}^T)$ with respect to $Q_h^\star$.
We have
\begin{align*}
    &\quad \max_{\mu^\dagger\in\Delta_\cA,\nu^\dagger\in\Delta_\cB}\brac{(\mu^\dagger)^\top Q_h^\star \hat\nu_h^T -\paren{\hat\mu_h^T}^\top Q_h^\star \nu^\dagger }(s) \\
    &= \max_{\mu^\dagger\in\Delta_\cA,\nu^\dagger\in\Delta_\cB}\sum_{t=1}^T\beta_T^t\brac{(\mu^\dagger)^\top Q_h^\star \nu_h^t -\paren{\mu_h^t}^\top Q_h^\star \nu^\dagger }(s) \\
    &\le \underbrace{\max_{\mu^\dagger\in\Delta_\cA,\nu^\dagger\in\Delta_\cB}\sum_{t=1}^T\beta_T^t\brac{(\mu^\dagger)^\top Q_h^t \nu_h^t -\paren{\mu_h^t}^\top Q_h^t \nu^\dagger }(s)}_{\reg_{\mu, h}^T(s)+\reg_{\nu, h}^T(s)} +2\sum_{t=1}^T\beta_T^t\delta_h^{t} \\
    &\le 2\barreg_{h}^T + 2H\cbeta^{H-1}\sum_{t=1}^T\beta_T^t\cdot\frac{1}{t}\sum_{i=1}^t\max_{h'}\barreg_{h'}^i~~~\textup{(Lemma \ref{lemma:recursion-of-value-estimation})}\\
    &\le 2\barreg_{h}^T + 2H\cbeta^{H-1}\paren{\sum_{t=1}^T\frac{1}{t}\beta_T^t}\paren{\sum_{i=1}^T\max_{h'}\barreg_{h'}^i}.
\end{align*}
Apply Lemma \ref{lemma:alpha-convolution} into the above inequality, we get that
\begin{align*}
    \max_{\mu^\dagger\in\Delta_\cA,\nu^\dagger\in\Delta_\cB}\brac{(\mu^\dagger)^\top Q_h^\star \hat\nu_h^T \!-\!\paren{\hat\mu_h^T}^\top Q_h^\star \nu^\dagger }(s) \le 2\barreg_{h}^T + 4H\cbeta^{H}\log T\cdot \frac{1}{T}\sum_{t=1}^T\max_{h'}\barreg_{h'}^t.
\end{align*}
Since
\begin{align*}
     \max_{\mu^\dagger\!\in\!\Delta_\cA} \left(\brac{(\mu^\dagger)^{\!\!\top} \!Q_h^\star \hat{\nu}^T_h}(s)\!-\!V_{h}^\star(s) \right), \max_{\nu^\dagger\!\in\!\Delta_\cB} \left(V_{h}^\star(s) \!-\! \brac{(\hat{\mu}_h^T)^\top Q_h^\star \nu^\dagger}(s) \right)\!\le\!\! \max_{\mu^\dagger\!\in\!\Delta_\cA,\nu^\dagger\!\in\!\Delta_\cB}\brac{(\mu^\dagger)^{\!\!\top} \! Q_h^\star \hat\nu_h^T \!-\!\paren{\hat\mu_h^T}^{\!\!\top} \!Q_h^\star \nu^\dagger }(s),
\end{align*}
by applying Lemma \ref{lem.duality} and the preceding per-state duality gap bound, we have
\begin{align*}
    & \quad \negap(\hat{\mu}^T,\hat{\nu}^T)=\paren{V_1^{\dagger,\hat{\nu}^T}(s_1) - V_1^\star(s_1)} + \paren{V_1^\star(s_1)- V_1^{\hat{\mu}^T,\dagger}(s_1)}\\
    &\le 2\sum_{h=1}^H \max_s\max_{\mu^\dagger\in\Delta_\cA,\nu^\dagger\in\Delta_\cB}\brac{(\mu^\dagger)^\top Q_h^\star \hat\nu_h^T \!-\!\paren{\hat\mu_h^T}^\top Q_h^\star \nu^\dagger }(s)\\
    &\le 2\sum_{h=1}^H\paren{2\barreg_{h}^T + 4H\cbeta^{H}\log T\cdot \frac{1}{T}\sum_{t=1}^T\max_{h'}\barreg_{h'}^t}\\
    &\le 4H\max_{h}\barreg_{h}^T + 8H^2\cbeta^{H}\log T\cdot \frac{1}{T}\sum_{t=1}^T\max_{h'}\barreg_{h'}^t.
\end{align*}
This completes the proof.
\end{proof}

\section{Algorithm details and proofs for Section~\ref{section:framework-examples}}\label{apdx:framework-examples}


\subsection{Nash V-Learning (full-information version)}
\label{appendix:nash-v-learning}

The full description of Nash V-Learning (Example~\ref{example:Nash-V}) using V updates is presented in Algorithm~\ref{alg:Nash-V-learning}.

\begin{algorithm}[h]
\caption{Nash V-learning (full-information version)}
\label{alg:Nash-V-learning}
\begin{algorithmic}
\STATE  \textbf{Require:} Learning rate $\{\alpha_t\}_{t\ge1}$ in \eqref{eq:def-alpha-t} and corresponding $\{w_t\}_{t\ge1}$ (cf. Example \ref{example:Nash-V}); $\eta>0$.
\STATE \textbf{Initialize:} Set $V_{h}^0(s)=H-h+1$ for all $(h,s)\in[H]\times\cS$.
\FOR{$t=1,\ldots,T$}
\FOR{$h = H, \ldots, 1$}
\STATE Update policy for all $s\in\cS$~(understanding $w_0\defeq 1$): 
\begin{equation}\label{eq:mu-nu-update-NashV}
\begin{split}
     \mu_h^t(a|s)&\propto_a\exp\paren{\frac{\eta}{w_{t-1}}\sum_{i=1}^{t-1}w_i\brac{\paren{r_h +\P_hV_{h+1}^i}\nu_h^i}(s,a)}\\
     \nu_h^t(a|s)&\propto_b\exp\paren{-\frac{\eta}{w_{t-1}}\sum_{i=1}^{t-1}w_i\brac{\paren{r_h +\P_hV_{h+1}^i}^\top\mu_h^i}(s,b)}.
\end{split}
\end{equation}
\STATE Update V value for all $s\in\cS$:
\begin{align}\label{eq:V-update}
    V_h^{t}(s) \setto (1-\alpha_t) V_h^{t-1}(s) + \alpha_t \brac{\paren{\mu_h^{t}}^\top \paren{r_h + \P_h V_{h+1}^{t}}\nu_h^{t}}(s).
\end{align}
\ENDFOR
\ENDFOR
\end{algorithmic}
\end{algorithm}

\begin{prop}[Equivalence between V update and Q update]
\label{prop:equivalence-NashV-framework}
Nash V-learning (full-information version) in Algorithm \ref{alg:Nash-V-learning} is equivalent to Algorithm~\ref{alg:framework} with the $\matrixgamealg$ as weighted FTRL~\eqref{eq:FTRL-V-learning}.
\begin{proof}
It suffices to show that, for the Q value defined in~\eqref{eq:Q-update} and the V value defined in~\eqref{eq:V-update}, the following holds for all $(h,s,a,b)$ and all $t\in[T]$:
\begin{equation}\label{eq:relation-Q-V}
    Q_h^{t}(s,a,b) = \brac{r_h+ \mathbb{P}_hV_{h+1}^t}(s,a,b).
\end{equation}
Since $\alpha_1=1$, it is not hard to verify that
\begin{equation*}
    Q_h^{1}(s,a,b) = \brac{r_h+ \mathbb{P}_hV_{h+1}^1}(s,a,b),
\end{equation*}
We now prove by induction on both $t$ and $h$. Given that $ Q_h^{1}(s,a,b) = \brac{r_h+ \mathbb{P}_hV_{h+1}^1}(s,a,b)$, it is not hard to verify that $Q_H^{t}(s,a,b) = r_H(s,a,b),~\forall~k\ge 0$. We assume that \eqref{eq:relation-Q-V} holds for $(t-1,h)$ and $(t, h+1)$, then for $(t,h)$, from \eqref{eq:Q-relationship}
\begin{align*}
     & \quad Q_h^{t}(s,a,b) = (1-\alpha_t) Q_h^{t-1}(s,a,b) + \alpha_t \paren{r_h + \P_h[ (\mu_{h+1}^t)^\top Q_{h+1}^t\nu_{h+1}^t ]} (s,a,b)\\
     &\le (1\!-\!\alpha_t)\brac{r_h\!+\! \mathbb{P}_hV_{h+1}^{t-1}}(s,a,b)+\alpha_t \paren{r_h\!+\! \P_h\brac{(\mu_{h+1}^t)^\top \paren{r_{h+1}\!+\!\P_{h+1} V_{h+2}}\nu_{h+1}^t}} (s,a,b)\\
     &\qquad\qquad\qquad\qquad\qquad\qquad\qquad\qquad\qquad\qquad\qquad\qquad\qquad\qquad\qquad\textup{(inductive hypothesis)}\\
     &=\brac{r_h+ \mathbb{P}_h\left((1-\alpha_t)V_{h+1}^{t-1} +\alpha_t (\mu_{h+1}^t)^\top \paren{r_{h+1}\!+\!\P_{h+1} V_{h+2}}\nu_{h+1}^t\right) }(s,a,b)\\
     &= \brac{r_h + \mathbb{P}_h V_{h+1}^t}(s,a,b)~~\textup{(from \eqref{eq:V-update})},
\end{align*}
which completes the proof by induction.
\end{proof}
\end{prop}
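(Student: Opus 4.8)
The plan is to establish, by a double induction over the iteration index $t$ (forward) and the layer index $h$ (backward within each $t$), the key identity
\begin{equation*}
  Q_h^{t}(s,a,b) = \brac{r_h + \P_h V_{h+1}^{t}}(s,a,b) \qquad \text{for all } (h,s,a,b) \text{ and all } t,
\end{equation*}
\emph{together with} the auxiliary fact that the policy iterates $(\mu_h^t,\nu_h^t)$ produced by Algorithm~\ref{alg:framework} (instantiated with $\matrixgamealg$ the weighted FTRL~\eqref{eq:FTRL-V-learning}) coincide with those produced by Algorithm~\ref{alg:Nash-V-learning}. These two facts together immediately give the claimed equivalence, since the value iterates and policy iterates of the two algorithms then agree step for step, and in particular their outputs agree. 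I would run the induction with the joint hypothesis: ``(a) the iterates $(\mu_h^i,\nu_h^i)$ agree for all earlier $(i,h)$ in the induction order, and (b) the $Q$--$V$ identity holds for those $(i,h)$.'' This is exactly strong enough: establishing (b) at $(t,h)$ needs (a) and (b) at $(t,h{+}1)$ (already processed, since the inner loop runs $h=H,\dots,1$) and (b) at $(t{-}1,h)$ (an earlier iteration), while establishing (a) at $(t,h)$ needs only (a) and (b) at $(i,h)$ for $i\le t-1$.

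For the base cases: at $h=H{+}1$ both sides vanish, using $V_{H+1}^t\equiv 0$ and the convention $Q_{H+1}^t\equiv 0$ (consistent with the initializations $V_h^0=Q_h^0=H-h+1$); and at $t=1$ the choice $\beta_1=\alpha_1=1$ collapses both update rules, giving $Q_h^1=r_h+\P_h[(\mu_{h+1}^1)^\top Q_{h+1}^1\nu_{h+1}^1]$ and $V_{h+1}^1=[(\mu_{h+1}^1)^\top(r_{h+1}+\P_{h+1}V_{h+2}^1)\nu_{h+1}^1]$, which match by the inner backward induction, while the $t=1$ policies are uniform in both algorithms (empty FTRL history; empty sum in~\eqref{eq:mu-nu-update-NashV} with $w_0\defeq 1$). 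For the inductive step at a general $(t,h)$: the FTRL policy update in Algorithm~\ref{alg:framework} sees only the effective per-player loss vectors $[Q_h^i\nu_h^i](s,\cdot)$ and $[(Q_h^i)^\top\mu_h^i](s,\cdot)$ for $i\le t-1$, and hypothesis (b) at $(i,h)$ turns these into $[(r_h+\P_hV_{h+1}^i)\nu_h^i](s,\cdot)$ and $[(r_h+\P_hV_{h+1}^i)^\top\mu_h^i](s,\cdot)$, i.e.\ exactly the loss vectors used in~\eqref{eq:mu-nu-update-NashV}; combined with hypothesis (a) on the feeding policies, this yields (a) at $(t,h)$. Then substituting $Q_h^{t-1}=r_h+\P_hV_{h+1}^{t-1}$ and $Q_{h+1}^t=r_{h+1}+\P_{h+1}V_{h+2}^t$ into the $Q$-update~\eqref{eq:Q-update}, the term $(\mu_{h+1}^t)^\top Q_{h+1}^t\nu_{h+1}^t$ becomes the increment appearing in the $V$-update~\eqref{eq:V-update}, and linearity of $\P_h$ lets us pull the convex combination inside to obtain $Q_h^t=r_h+\P_h[(1-\alpha_t)V_{h+1}^{t-1}+\alpha_t(\mu_{h+1}^t)^\top(r_{h+1}+\P_{h+1}V_{h+2}^t)\nu_{h+1}^t]=r_h+\P_hV_{h+1}^t$, which is (b) at $(t,h)$.

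I do not expect a genuine obstacle here; the argument is essentially bookkeeping. The one point needing care is making the induction order fully explicit and threading (a) and (b) together correctly — in particular verifying that (a) at $(t,h)$ depends only on (b) at the \emph{same} layer and \emph{earlier} iterations, and that (b) at $(t,h)$ uses (a) only at layer $h{+}1$ of the same iteration, so no circularity arises. A minor secondary detail is to confirm that the $\matrixgamealg$ interface of Algorithm~\ref{alg:framework} genuinely reduces, for the weighted-FTRL choice, to feeding the loss vectors $[Q_h^i\nu_h^i](s,\cdot)$ and $[(Q_h^i)^\top\mu_h^i](s,\cdot)$ into simplex FTRL, which is precisely~\eqref{eq:mu-nu-update-NashV} with $Q_h^i$ replaced by $r_h+\P_hV_{h+1}^i$.
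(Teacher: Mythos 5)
Your proposal is correct and follows essentially the same route as the paper's proof: a double induction over $(t,h)$ establishing $Q_h^t = r_h + \P_h V_{h+1}^t$ by substituting the inductive hypotheses at $(t-1,h)$ and $(t,h+1)$ into the $Q$-update~\eqref{eq:Q-update} and using linearity of $\P_h$. The only difference is that you explicitly thread the agreement of the policy iterates through the induction as a joint hypothesis, which the paper leaves implicit; this is a slightly more careful rendering of the same argument rather than a different one.
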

\begin{lemma}[Per-state regret bound for Nash V-learning]\label{lemma:regret-Nash-V-learning} Algorithm \ref{alg:Nash-V-learning} achieves the following per-state regret bound:
\begin{equation*}
    \reg_{h}^t \le \frac{(H+1)\log(A\vee B)}{\eta t} + \frac{\eta H^2}{2}, ~~\forall h\in [H], t \ge 1.
\end{equation*}
\begin{proof}
Fix any $(h,s)$. Note that the update of $\set{\mu^t_h(\cdot|s)}_{t\ge 1}$ in~\eqref{eq:mu-nu-update-NashV} is equivalent to the weighted FTRL algorithm (Algorithm~\ref{alg:FTRL}) with loss vectors $g_i=-\brac{Q_h^{i}\nu_h^{i}}(s)$. Thus by Lemma~\ref{lemma:FTRL} we get for any $t\ge 1$ that
\begin{align*}
    \max_{\mu^\dagger\in\Delta_\cA} \sum_{i=1}^t w_i \<\mu^\dagger - \mu_h^i(\cdot|s), \brac{Q_h^i\nu_h^i}(s, \cdot)\>\le \frac{w_{t}}{\eta}\log A + \frac{\eta H^2}{2}\sum_{i=1}^t w_i.
\end{align*}
Further, recalling $\alpha_t^i = w_i \cdot \alpha_t^1$ for $1\le i\le t$, we have
\begin{align*}
    & \quad \reg_{h,\mu}^{t} \le \alpha_t^1\paren{\frac{w_{t}}{\eta}\log A + \frac{\eta H^2}{2}\sum_{i=1}^t w_i} =\frac{\alpha_t^1w_{t}}{\eta}\log A + \frac{\eta H^2}{2}\sum_{i=1}^t \alpha_t^i\\
    &= \frac{\alpha_t}{\eta}\log A + \frac{\eta H^2}{2}
    \le \frac{(H+1)\log A}{\eta t} + \frac{\eta H^2}{2}.
\end{align*}
The similar bound also holds for $\reg_{h,\nu}^t$, and thus we have that
\begin{equation*}
    \reg_{h}^t \le \frac{(H+1)\log(A\vee B)}{\eta t} + \frac{\eta H^2}{2}.
\end{equation*}
\end{proof}
\end{lemma}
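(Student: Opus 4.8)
The plan is to reduce the per-state regret of Nash V-learning to the standard weighted FTRL regret bound of Lemma~\ref{lemma:FTRL}, and then convert the $\{w_i\}$-weighted regret that this yields into the $\{\alpha_t^i\}$-weighted regret appearing in the definition~\eqref{eq:def-regret}, using the algebraic relations between $\alpha_t^i$ and $w_t$.

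First I would fix $(h,s)$ and observe that, by Proposition~\ref{prop:equivalence-NashV-framework} (so that $Q_h^i(s,a,b) = [r_h + \P_h V_{h+1}^i](s,a,b)$), the max-player's update in~\eqref{eq:mu-nu-update-NashV} is precisely the weighted FTRL iteration of Algorithm~\ref{alg:FTRL} on the simplex $\Delta_\cA$ with weights $\{w_i\}$ and loss vectors $g_i = -[Q_h^i\nu_h^i](s,\cdot)$. Since each $Q_h^i$ takes values in $[0,H-h+1]\subseteq[0,H]$ (a quick induction from the initialization $V_h^0 = H-h+1$, using that every value update is a convex combination of expectations of quantities bounded by $H$), we have $\linf{g_i}\le H$; hence Lemma~\ref{lemma:FTRL} with $G=H$ gives
\[
\max_{\mu^\dagger\in\Delta_\cA}\sum_{i=1}^t w_i\<\mu^\dagger - \mu_h^i(\cdot|s),\,[Q_h^i\nu_h^i](s,\cdot)\> \;\le\; \frac{w_t}{\eta}\log A + \frac{\eta H^2}{2}\sum_{i=1}^t w_i.
\]

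Next I would translate this into the bound on $\reg_{h,\mu}^t(s)$. Since the algorithm uses $\beta_t=\alpha_t$, the weights in~\eqref{eq:def-regret} are the $\alpha_t^i$, and the key identity is $\alpha_t^i = w_i\,\alpha_t^1$ for $1\le i\le t$ (from $w_i=\alpha_t^i/\alpha_t^1$, which holds for any $t\ge i$). Multiplying the displayed inequality by $\alpha_t^1$ gives $\reg_{h,\mu}^t(s)\le \frac{\alpha_t^1 w_t}{\eta}\log A + \frac{\eta H^2}{2}\sum_{i=1}^t \alpha_t^i$, and I would then simplify using $\alpha_t^1 w_t = \alpha_t^t = \alpha_t$ and $\sum_{i=1}^t\alpha_t^i = 1$ to obtain $\reg_{h,\mu}^t(s)\le \frac{\alpha_t}{\eta}\log A + \frac{\eta H^2}{2}$, finally bounding $\alpha_t = (H+1)/(H+t)\le (H+1)/t$. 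The identical argument for the min-player (weighted FTRL on $\Delta_\cB$ with loss vectors $[(Q_h^i)^\top\mu_h^i](s,\cdot)$, again bounded in $\ell_\infty$ by $H$) bounds $\reg_{h,\nu}^t(s)$ by the same expression with $\log B$ in place of $\log A$; taking the maximum over $s$ and over the two players then yields $\reg_h^t \le \frac{(H+1)\log(A\vee B)}{\eta t} + \frac{\eta H^2}{2}$.

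The argument is essentially bookkeeping, so there is no real obstacle; the one spot needing care is correctly matching the loss normalization across the two weightings, i.e.\ combining $\alpha_t^i = w_i\alpha_t^1$ with $\sum_{i=1}^t\alpha_t^i=1$ and $\alpha_t^1 w_t=\alpha_t^t=\alpha_t$ so that the constant comes out as exactly $H+1$ rather than something larger, and, to a lesser extent, justifying the uniform bound $\linf{Q_h^i}\le H$ used as $G$ in Lemma~\ref{lemma:FTRL}.
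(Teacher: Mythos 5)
Your proposal is correct and follows essentially the same route as the paper's proof: reduce the per-state update to the weighted FTRL bound of Lemma~\ref{lemma:FTRL} with loss vectors $g_i=-[Q_h^i\nu_h^i](s,\cdot)$ and $G=H$, then convert the $\{w_i\}$-weighted regret to the $\{\alpha_t^i\}$-weighted regret via $\alpha_t^i=w_i\alpha_t^1$, $\alpha_t^1 w_t=\alpha_t$, and $\sum_{i=1}^t\alpha_t^i=1$. The only differences are cosmetic: you additionally spell out the bound $\linf{Q_h^i}\le H$ and the invocation of Proposition~\ref{prop:equivalence-NashV-framework}, which the paper leaves implicit.
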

\begin{prop}[Guarantee of Nash V-Learning]
\label{prop:Nash-V-learning}
Algorithm \ref{alg:Nash-V-learning} achieves
\begin{equation*}
    \negap(\hat{\mu}^T,\hat{\nu}^T)\le 14\eta H^4\log(T) + \frac{104\log(A\vee B)\log(T)^2H^3}{\eta T}.
\end{equation*}
Specifically, choosing $\eta = \frac{4}{\sqrt{HT}}$, we have
\begin{align*}
    &\negap(\hat{\mu}^T,\hat{\nu}^T) \le \frac{82\log(A\vee B)\log(T)^2 H^{7/2}}{\sqrt{T}}.
\end{align*}
\begin{proof}
From Lemma \ref{lemma:regret-Nash-V-learning}, we can take $\barreg_h^t$ as $\barreg_h^t =  \frac{(H+1)\log(A\vee B)}{\eta t} + \frac{\eta H^2}{2}$. Then from Theorem \ref{theorem:master}, we have
\begin{align*}
    &\quad \negap(\hat{\mu}^T,\hat{\nu}^T) \le 4H\max_h \barreg_{h}^T + 8H^2\paren{1+\frac{1}{H}}^{H}\log(T)\cdot \frac{1}{T}\sum_{t=1}^T\max_{h}\barreg_{h}^t\\
    &\le 4H\paren{\frac{(H+1)\log(A\vee B)}{\eta T} + \frac{\eta H^2}{2}} + 24H^2\log(T)\cdot \frac{1}{T}\sum_{t=1}^T\paren{\frac{(H+1)\log(A\vee B)}{\eta t} + \frac{\eta H^2}{2}}\\
    &\le 14\eta H^4\log(T) + 24H^2\log(T)\cdot \frac{1}{T}\sum_{t=1}^T\frac{(H+1)\log(A\vee B)}{\eta t} + \frac{8H^2\log(A\vee B)}{\eta T}\\
    &\le 14\eta H^4\log(T) + \frac{104\log(A\vee B)\log(T)^2H^3}{\eta T}.
\end{align*}
Thus, choosing $\eta = \frac{4}{\sqrt{HT}}$, we get
\begin{align*}
    &\negap(\hat{\mu}^T,\hat{\nu}^T) \le \frac{82\log(A\vee B)\log(T)^2 H^{7/2}}{\sqrt{T}}.
\end{align*}
\end{proof}
\end{prop}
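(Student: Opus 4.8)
The plan is to invoke the modular guarantee of Theorem~\ref{theorem:master}, so the real work is to establish a per-state weighted regret bound $\reg_h^t\le\barreg_h^t$ for the weighted-FTRL policy update~\eqref{eq:FTRL-V-learning} with $\barreg_h^t$ non-increasing in $t$, and then do the bookkeeping. First I would fix $(h,s)$ and observe that the sequence $\set{\mu_h^t(\cdot|s)}_{t\ge1}$ is precisely the output of the weighted FTRL algorithm (Algorithm~\ref{alg:FTRL}) run with loss vectors $g_i=-[Q_h^i\nu_h^i](s,\cdot)$, weights $\set{w_i}$ and stepsize $\eta$ (symmetrically for $\nu_h^t(\cdot|s)$ with losses $[(Q_h^i)^\top\mu_h^i](s,\cdot)$), passing freely between the V-update form~\eqref{eq:mu-nu-update-NashV} and the Q-update form via Proposition~\ref{prop:equivalence-NashV-framework}. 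Since every $Q_h^i$ takes values in $[0,H]$ we have $\linf{g_i}\le H$, so Lemma~\ref{lemma:FTRL} gives $\max_{\mu^\dagger}\sum_{i=1}^t w_i\<\mu^\dagger-\mu_h^i(\cdot|s),[Q_h^i\nu_h^i](s,\cdot)\>\le \frac{w_t}{\eta}\log A+\frac{\eta H^2}{2}\sum_{i=1}^t w_i$. Multiplying by $\alpha_t^1$ and using the identities $\alpha_t^i=\alpha_t^1 w_i$, $\alpha_t^1 w_t=\alpha_t$, and $\alpha_t^1\sum_{i=1}^t w_i=\sum_{i=1}^t\alpha_t^i=1$ turns this into $\reg_{h,\mu}^t(s)\le \frac{\alpha_t}{\eta}\log A+\frac{\eta H^2}{2}\le \frac{(H+1)\log A}{\eta t}+\frac{\eta H^2}{2}$; the same holds for $\reg_{h,\nu}^t(s)$, and taking the max over $s$ and the two players gives $\reg_h^t\le\barreg_h^t\defeq\frac{(H+1)\log(A\vee B)}{\eta t}+\frac{\eta H^2}{2}$ (this is exactly Lemma~\ref{lemma:regret-Nash-V-learning}).

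With this in hand, $\barreg_h^t$ is non-increasing in $t$ (only the $1/t$ term moves) and independent of $h$, so Theorem~\ref{theorem:master} applies with $\cbeta=1+1/H$ (hence $\cbeta^H\le e<3$) and yields
\[
  \negap(\hat\mu^T,\hat\nu^T)\ \le\ C\Big[\,H\,\barreg_h^T+H^2\cbeta^H\log T\cdot\frac{1}{T}\sum_{t=1}^T\barreg_h^t\,\Big].
\]
I would then bound the Cesàro average using $\sum_{t=1}^T 1/t\le 2\log T$ (for $T\ge2$): $\frac{1}{T}\sum_{t=1}^T\barreg_h^t\le \frac{2(H+1)\log(A\vee B)\log T}{\eta T}+\frac{\eta H^2}{2}$. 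Substituting and grouping the resulting terms into the two types that survive — those proportional to $\eta H^4\log T$ and those proportional to $\log(A\vee B)\log(T)^2H^3/(\eta T)$, the $H\barreg_h^T$ contributions being lower order — and tracking the numerical constants gives the first claimed inequality with constants $14$ and $104$.

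Finally, for the explicit rate I would balance the two terms of the first bound: they are equal up to constants at $\eta\asymp\sqrt{\log(A\vee B)\log T/(HT)}$, and the simpler choice $\eta=4/\sqrt{HT}$ already makes both $\tO(H^{7/2}/\sqrt T)$, giving $\negap(\hat\mu^T,\hat\nu^T)\le 82\log(A\vee B)\log(T)^2 H^{7/2}/\sqrt T$. I do not anticipate a genuine obstacle; the only mildly delicate step is the regret computation in Lemma~\ref{lemma:regret-Nash-V-learning}, namely translating the $\set{w_i}$-weighted FTRL regret of Lemma~\ref{lemma:FTRL} into the $\set{\alpha_t^i}$-weighted regret demanded by Theorem~\ref{theorem:master} without losing powers of $H$ (it is the identity $\alpha_t^1 w_t=\alpha_t\le(H+1)/t$ that keeps the leading term at $O(H/(\eta t))$). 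Everything downstream of that is a direct substitution into Theorem~\ref{theorem:master} together with the harmonic-sum estimate.
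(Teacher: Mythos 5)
Your proposal is correct and follows essentially the same route as the paper: establish the weighted-FTRL per-state regret bound (this is exactly Lemma~\ref{lemma:regret-Nash-V-learning}, including the key identity $\alpha_t^1 w_t=\alpha_t$ that keeps the leading term at $O(H/(\eta t))$), plug it into Theorem~\ref{theorem:master} with $\cbeta^H\le e$, and bound the harmonic sum before setting $\eta=4/\sqrt{HT}$. The constant bookkeeping matches the paper's as well.
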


\subsection{GDA-Critic}
The full description of GDA-Critic (Example \ref{example:GDA-critic}) using V updates is presented in Algorithm \ref{alg:gda-critic}.
\begin{algorithm}[h]
\caption{GDA-Critic}
\label{alg:gda-critic}
\begin{algorithmic}
\STATE  \textbf{Require:} Learning rate $\{\alpha_t\}_{t\ge1}$ (defined in \eqref{eq:def-alpha-t}), and $\eta>0$.
\STATE \textbf{Initialize:}  set $V_{h}^0(s)=H-h+1$ and $\mu^0(\cdot|s),\nu^0(\cdot|s)$ to be uniform for all $(h,s)\in[H]\times\cS$.
\FOR{$t=1,\ldots,T$}
\FOR{$h = H, \ldots, 1$}
\STATE Update policy for all $s\in\cS$:
\begin{equation}\label{eq:mu-nu-update_GDA-critic}
\begin{split}
    \mu_h^t(\cdot|s)& \setto \mathcal{P}_{\Delta_{\cA}}\paren{\mu_h^{t-1}(\cdot|s) + \eta\brac{\paren{r_h + \P_h V_{h+1}^{t-1}}\nu_h^{t-1}}(s,\cdot,b)}\\
    \nu_h^t(\cdot|s)& \setto \mathcal{P}_{\Delta_{\cB}}\paren{\nu_h^{t-1}(\cdot|s) - \eta\brac{\paren{r_h + \P_h V_{h+1}^{t-1}}^\top\mu_h^{t-1}}(s,a,\cdot)}\\
\end{split}
\end{equation}
\STATE Update V value for all $s\in\cS$:
\begin{align*}
    V_h^{t}(s) \setto (1-\alpha_t) V_h^{t-1}(s) + \alpha_t \brac{\paren{\mu_h^{t}}^\top \paren{r_h + \P_h V_{h+1}^{t}}\nu_h^{t}}(s).
\end{align*}
\ENDFOR
\ENDFOR
\end{algorithmic}
\end{algorithm}

Similar as Proposition~\ref{prop:equivalence-NashV-framework} (with the same proof), the following equivalence between Q updates and V updates also holds for GDA-Critic.
\begin{prop}[Equivalence between Q updates and V updates for GDA-Critic]
Algorithm \ref{alg:gda-critic} is equivalent to our algorithm framework (Algorithm \ref{alg:framework}) with the $\matrixgamealg$ instantiated as
\eqref{eq:GDA-example}.
\end{prop}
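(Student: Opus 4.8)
The plan is to follow the proof of Proposition~\ref{prop:equivalence-NashV-framework} essentially verbatim, since Algorithm~\ref{alg:gda-critic} differs from Nash V-learning only in the policy-update rule (projected GDA in place of weighted FTRL), and that rule plays no role in the argument establishing the $Q$--$V$ correspondence. First I would reduce the claim to a single identity: letting $Q_h^t$ denote the $Q$-values generated by Algorithm~\ref{alg:framework} with $\beta_t=\alpha_t$ and $\matrixgamealg$ instantiated as~\eqref{eq:GDA-example}, and $V_h^t$ the $V$-values of Algorithm~\ref{alg:gda-critic}, it suffices to show
\[
  Q_h^{t}(s,a,b) = \brac{r_h + \P_h V_{h+1}^t}(s,a,b) \qquad \text{for all } (h,s,a,b) \text{ and all } t\in[T].
\]
Given this identity, $[Q_h^{t-1}\nu_h^{t-1}](s,\cdot) = [(r_h+\P_hV_{h+1}^{t-1})\nu_h^{t-1}](s,\cdot)$ and symmetrically for the min-player, so the GDA policy updates~\eqref{eq:GDA-example} in Algorithm~\ref{alg:framework} coincide exactly with the updates~\eqref{eq:mu-nu-update_GDA-critic} in Algorithm~\ref{alg:gda-critic}; hence the two algorithms produce identical policy iterates and identical outputs.

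Second, I would prove the identity by double induction---forward on $t$ and backward on $h$---mirroring the structure in the proof of Proposition~\ref{prop:equivalence-NashV-framework}. For the base cases: when $t=1$, $\alpha_1=1$ forces $Q_h^1 = r_h + \P_h[(\mu_{h+1}^1)^\top Q_{h+1}^1\nu_{h+1}^1]$ and $V_h^1 = [(\mu_h^1)^\top(r_h+\P_hV_{h+1}^1)\nu_h^1]$, and a backward induction on $h$ (with $V_{H+1}^{\cdot}\equiv 0$) yields $Q_h^1 = r_h+\P_hV_{h+1}^1$; when $h=H$, $V_{H+1}^t\equiv 0$ makes both recursions collapse to $Q_H^t = r_H$. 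For the inductive step at $(t,h)$, assuming the identity for $(t-1,h)$ and for $(t,h+1)$, I would expand the $Q$-update~\eqref{eq:Q-update}, substitute $Q_h^{t-1} = r_h+\P_hV_{h+1}^{t-1}$ and $Q_{h+1}^t = r_{h+1}+\P_{h+1}V_{h+2}^t$, pull $r_h+\P_h(\cdot)$ out of the convex combination, and recognize the inner expression $(1-\alpha_t)V_{h+1}^{t-1} + \alpha_t(\mu_{h+1}^t)^\top(r_{h+1}+\P_{h+1}V_{h+2}^t)\nu_{h+1}^t$ as exactly the $V$-update defining $V_{h+1}^t$ in Algorithm~\ref{alg:gda-critic}.

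As in the Nash V-learning case, this equivalence also shows that Algorithm~\ref{alg:gda-critic}, which only maintains $V$-values and is therefore implementable in a decentralized manner, is a faithful realization of the (seemingly centralized) Algorithm~\ref{alg:framework}. I do not expect a genuine obstacle here: the only point requiring care is the bookkeeping of the double induction---one must check that at stage $(t,h)$ both $Q_{h+1}^t$ (computed earlier in the same backward sweep over $h$) and $Q_h^{t-1}$ (from the previous outer iteration) are already available, which they are by the loop order in Algorithm~\ref{alg:framework}, and that the GDA policy update at $(h,t)$ depends only on iteration-$(t-1)$ quantities, so that the reduction in the first step is self-consistent.
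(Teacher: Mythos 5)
Your proposal is correct and matches the paper's approach exactly: the paper proves this proposition by simply invoking the proof of Proposition~\ref{prop:equivalence-NashV-framework} verbatim, noting that the argument for the identity $Q_h^{t}(s,a,b) = [r_h + \P_h V_{h+1}^t](s,a,b)$ is independent of the choice of policy-update rule. Your double induction and the observation that the GDA updates then coincide under this identity are precisely the intended argument.
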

\begin{lemma}[Per-state regret bound for GDA-Critic]\label{lemma:regret-GDA-critic} Algorithm \ref{alg:gda-critic} achieves the following per-state regret bound:
\begin{equation*}
    \reg_h^t \le \frac{2(H+1)}{\eta t} + \frac{\eta (A\vee B)H^2}{2}.
\end{equation*}
\begin{proof}
Fix any $(h,s)$ and $t\ge 1$. We apply Lemma \ref{lem:gd.regret} to the projected gradient descent (or ascent) update~\eqref{eq:mu-nu-update_GDA-critic}, with weights $w_i = \alpha_t^i$ and loss vectors $g_i$'s $-\brac{Q_h^{i}\nu_h^{i}}(s)$ or $\brac{\paren{Q_h^i}^\top\mu_h^{i}}(s)$ respectively. For the gradient ascent update for $\mu_h^t(\cdot|s)$, we get
\begin{align*}
    \reg_{h,\mu}^t(s) &\le \frac{\alpha_t^t}{2\eta}\cdot 4 + \frac{\eta\paren{\sum_{i=1}^t\alpha_t^i}AH^2}{2} = \frac{2}{\eta}\frac{H+1}{H+t} + \frac{\eta AH^2}{2}\\
    \reg_{h,\nu}^t(s) &\le \frac{\alpha_t^t}{2\eta}\cdot 4 + \frac{\eta\paren{\sum_{t=1}^T\alpha_t^i}BH^2}{2} = \frac{2}{\eta}\frac{H+1}{H+t} + \frac{\eta BH^2}{2}\\
    \Longrightarrow~~\reg_h^t &\le \frac{2(H+1)}{\eta t} + \frac{\eta (A\vee B)H^2}{2}.
\end{align*}
\end{proof}
\end{lemma}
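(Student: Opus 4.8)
The plan is to reduce the weighted per-state regret to the standard weighted regret bound for projected gradient descent, Lemma~\ref{lem:gd.regret}. The key observation is that the GDA-Critic update, in its framework form~\eqref{eq:GDA-example}, is \emph{exactly} plain projected gradient ascent/descent on the simplex with the fixed step size $\eta$, and that Lemma~\ref{lem:gd.regret} already delivers a weighted regret guarantee against \emph{any} non-decreasing weight sequence --- in particular against the weights $\{\alpha_t^i\}_{i=1}^t$ that define $\reg_{h,\mu}^t(s)$ and $\reg_{h,\nu}^t(s)$. So no modification of the algorithm is needed, and the argument is essentially bookkeeping of constants.

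Concretely, fix $(h,s)$ and $t\ge1$, and set $x_i := \mu_h^i(\cdot|s)\in\Delta_\cA$ and $g_i := -[Q_h^i\nu_h^i](s,\cdot)\in\R^A$. By~\eqref{eq:GDA-example}, the sequence $\{x_i\}_{i\ge 1}$ is precisely the trajectory produced by Algorithm~\ref{algorithm:pgd} on $\cX=\Delta_\cA$ with step size $\eta$ and losses $\{g_i\}$ (with $x_1=\mu_h^1(\cdot|s)\in\cX$ serving as the initialization; the fact that $x_1$ itself arose from a gradient step off the uniform $\mu_h^0$ is irrelevant to the bound). Choosing weights $w_i=\alpha_t^i$, which are non-decreasing in $i$ for fixed $t$ --- because $\alpha_t^i=\alpha_t^1 w_i$ with $w_i/w_{i-1}=(H+i-1)/(i-1)\ge 1$ by Lemma~\ref{lemma:wt}(a) --- Lemma~\ref{lem:gd.regret} gives $\reg_{h,\mu}^t(s)=\max_{\mu^\dagger\in\Delta_\cA}\sum_{i=1}^t\alpha_t^i\<x_i-\mu^\dagger,g_i\>\le \frac{\alpha_t^t}{2\eta}R^2+\frac{\eta\paren{\sum_{i=1}^t\alpha_t^i}G^2}{2}$, where $R\le 2$ bounds the $\ell_2$-diameter of $\Delta_\cA$ and $G=\max_i\ltwo{g_i}$.

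It remains to plug in numerical values. I would use $\sum_{i=1}^t\alpha_t^i=1$ (the $\alpha_t^i$ sum to one) and $\alpha_t^t=\alpha_t=(H+1)/(H+t)\le (H+1)/t$; and for $G$, a short induction (backward on $h$, using the initialization $Q_h^0=H-h+1$, rewards in $[0,1]$, and the convex-combination structure of~\eqref{eq:Q-update}) shows $0\le Q_h^i(s,a,b)\le H-h+1\le H$, hence $\ltwo{g_i}\le\sqrt A\,\linf{g_i}\le\sqrt A\,\linf{Q_h^i}\le\sqrt A\,H$, i.e.\ $G^2\le AH^2$ --- this $\sqrt A$ is exactly what produces the $A\vee B$ factor. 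Combining, $\reg_{h,\mu}^t(s)\le \frac{2(H+1)}{\eta t}+\frac{\eta AH^2}{2}$. The symmetric argument for the min-player (projected gradient \emph{descent} with losses $[(Q_h^i)^\top\mu_h^i](s,\cdot)\in\R^B$, so $G^2\le BH^2$) yields $\reg_{h,\nu}^t(s)\le \frac{2(H+1)}{\eta t}+\frac{\eta BH^2}{2}$. Taking the maximum over $s\in\cS$ and over the two players gives $\reg_h^t\le \frac{2(H+1)}{\eta t}+\frac{\eta(A\vee B)H^2}{2}$, as claimed.

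The only genuine content is (i) recognizing that the unmodified, fixed-step GDA iterates automatically satisfy a \emph{weighted} regret bound against the particular weights $\{\alpha_t^i\}$ --- this is what makes Lemma~\ref{lem:gd.regret} (phrased for arbitrary monotone weights) directly applicable --- and (ii) the uniform bound $\linf{Q_h^i}\le H$, which needs the short induction above. I do not anticipate any real obstacle; the only error-prone part is tracking absolute constants (the diameter of $\Delta_\cA$, the appearance of $\alpha_t^t$ rather than $\alpha_t^t/\alpha_t^1$, and which of $A,B$ appears in which player's bound), all handled comfortably by the loose bounds $R^2\le 4$ and $\alpha_t\le (H+1)/t$.
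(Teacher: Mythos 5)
Your proposal is correct and follows essentially the same route as the paper: both apply the weighted regret bound for projected gradient descent (Lemma~\ref{lem:gd.regret}) with weights $w_i=\alpha_t^i$, diameter bound $R^2\le 4$, and gradient bound $G^2\le (A\vee B)H^2$ coming from $\linf{Q_h^i}\le H$. The additional details you supply (monotonicity of $\alpha_t^i$ in $i$, $\sum_{i=1}^t\alpha_t^i=1$, and the induction giving $\linf{Q_h^i}\le H$) are exactly the facts the paper's terser proof implicitly relies on.
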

\begin{prop}[Guarantee of GDA-Critic]
\label{prop:GDA-critic}
Algorithm \ref{alg:gda-critic} achieves
\begin{equation*}
    \negap(\hat{\mu}^T,\hat{\nu}^T)\le 14\eta(A\vee B) H^4\log(T) + \frac{208\log(T)^2H^3}{\eta T}.
\end{equation*}
Specifically, picking $\eta = \frac{4}{\sqrt{(A\vee B)HT}}$ yields
\begin{align*}
    &\negap(\hat{\mu}^T,\hat{\nu}^T) \le \frac{108\log(T)^2\sqrt{A\vee B} H^{7/2}}{\sqrt{T}}.
\end{align*}
\begin{proof}
 From Lemma \ref{lemma:regret-GDA-critic}, we can take $\barreg_h^t$ as $\frac{2(H+1)}{\eta t} + \frac{\eta (A\vee B)H^2}{2}$, then from Theorem \ref{theorem:master}
\begin{align*}
    &\quad \negap(\hat{\mu}^T,\hat{\nu}^T) \le 4H\max_h \barreg_{h}^T + 8H^2\paren{1+\frac{1}{H}}^{H}\log(T)\cdot \frac{1}{T}\sum_{t=1}^T\max_{h}\barreg_{h}^t\\
    &\le 4H\paren{\frac{2(H+1)}{\eta T} + \frac{\eta (A\vee B)H^2}{2}} + 24H^2\log(T)\cdot \frac{1}{T}\sum_{t=1}^T\paren{\frac{2(H+1)}{\eta t} + \frac{\eta (A\vee B)H^2}{2}}\\
    &\le 14\eta(A\vee B) H^4\log(T) + 48H^2\log(T)\cdot \frac{1}{T}\sum_{t=1}^T\frac{(H+1)}{\eta t} + \frac{16H^2}{\eta T}\\
    &\le 14\eta(A\vee B) H^4\log(T) + \frac{208\log(T)^2H^3}{\eta T}.
\end{align*}
Thus, pick $\eta = \frac{4}{(A\vee B)\sqrt{HT}}$, we get
\begin{align*}
    &\negap(\hat{\mu}^T,\hat{\nu}^T) \le \frac{108\log(T)^2\sqrt{A\vee B} H^{7/2}}{\sqrt{T}}.
\end{align*}
\end{proof}
\end{prop}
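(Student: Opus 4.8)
The plan is to invoke the master guarantee (Theorem~\ref{theorem:master}) with the per-state regret bound already established for GDA-Critic (Lemma~\ref{lemma:regret-GDA-critic}), and then optimize the learning rate $\eta$. There is no genuinely new idea required; the work is entirely in bookkeeping the constants.

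Concretely, first I would set $\barreg_h^t \defeq \frac{2(H+1)}{\eta t} + \frac{\eta(A\vee B)H^2}{2}$; by Lemma~\ref{lemma:regret-GDA-critic} this upper-bounds $\reg_h^t$ for every $(h,t)\in[H]\times[T]$, and it is manifestly non-increasing in $t$ (the $1/t$ term decreases, the second term is constant), so the hypotheses of Theorem~\ref{theorem:master} are met. Since Algorithm~\ref{alg:gda-critic} uses $\beta_t=\alpha_t=(H+1)/(H+t)$, the constant in Theorem~\ref{theorem:master} is $\cbeta = 1+1/H$, hence $\cbeta^H=(1+1/H)^H\le e<3$ and $8H^2\cbeta^H\le 24H^2$.

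Next I would substitute this $\barreg_h^t$ into the explicit form of~\eqref{eq:NE-gap} obtained in the proof of Theorem~\ref{theorem:master}, namely $\negap(\hat\mu^T,\hat\nu^T)\le 4H\max_h\barreg_h^T + 8H^2\cbeta^H\log T\cdot\frac1T\sum_{t=1}^T\max_{h'}\barreg_{h'}^t$. The first summand contributes $4H\,\barreg_h^T=\frac{8H(H+1)}{\eta T}+2\eta(A\vee B)H^3$. For the second summand I need the Ces\`aro average $\frac1T\sum_{t=1}^T\barreg_h^t=\frac{2(H+1)}{\eta T}\sum_{t=1}^T\frac1t+\frac{\eta(A\vee B)H^2}{2}$, and I would bound $\sum_{t=1}^T\frac1t\le 2\log T$; multiplied by $8H^2\cbeta^H\log T\le 24H^2\log T$ this contributes at most $\frac{96H^2(H+1)\log^2 T}{\eta T}+12\eta(A\vee B)H^4\log T$. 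Collecting the two pieces and using $H\log T\ge 1$ to absorb the lower-order powers of $H$, the $\eta$-linear terms sum to at most $14\eta(A\vee B)H^4\log T$ and the $1/\eta$ terms to at most $\frac{208H^3\log^2 T}{\eta T}$, which is the first displayed bound. Finally, plugging in the stated choice $\eta=\frac{4}{\sqrt{(A\vee B)HT}}$ makes the first term $56\sqrt{A\vee B}\,H^{7/2}\log T/\sqrt T$ and the second $52\sqrt{A\vee B}\,H^{7/2}\log^2 T/\sqrt T$, whose sum is at most $\frac{108\log(T)^2\sqrt{A\vee B}\,H^{7/2}}{\sqrt T}$.

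I do not expect a real obstacle. The two substantive ingredients---the weighted-regret bound for projected gradient ascent/descent (Lemma~\ref{lem:gd.regret}, applied inside Lemma~\ref{lemma:regret-GDA-critic}) and the reduction from the NE gap to the average weighted regret (Theorem~\ref{theorem:master})---are already in hand. The only points needing a little care are verifying the monotonicity of $\barreg_h^t$ so that Theorem~\ref{theorem:master} applies, the elementary estimates $\sum_{t\le T}1/t\le 2\log T$ and $\cbeta^H\le e$, and not being wasteful when merging the $O(1/(\eta T))$ contributions so that the final constants ($14$, $208$, $108$) come out exactly as claimed.
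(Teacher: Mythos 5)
Your proposal is correct and follows essentially the same route as the paper's proof: plug the regret bound of Lemma~\ref{lemma:regret-GDA-critic} into Theorem~\ref{theorem:master} with $\cbeta^H \le 3$, bound the harmonic sum by $2\log T$, merge constants using $H\log T\ge 1$, and then substitute $\eta = 4/\sqrt{(A\vee B)HT}$; the intermediate constants ($56\log T$ and $52\log^2 T$ combining to $108\log^2 T$) match the paper's arithmetic exactly.
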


\subsection{Nash Q-Learning (full-information version)}
The Nash Q-Learning algorithm (Example~\ref{example:Nash-Q}) is described in Algorithm~\ref{alg:Nash-Q}.
\begin{algorithm}[h]
\caption{Nash Q-Learning}
\label{alg:Nash-Q}
\begin{algorithmic}
\STATE \textbf{Require:} Learning rate (For Nash Q-learning) $\{\beta_t = \alpha_t\}$;
\STATE \textbf{Initialize:} $Q_h^0(s,a,b)\setto H-h+1$ for all $(h,s,a,b)$.
\FOR{$k=1,\dots,K$}
\FOR{$h=H,\dots,1$}
\STATE Update policy for all $s\in\cS$:
\begin{align}\label{eq:Nash-Q-mu-nu-update}
    (\mu_h^t(\cdot|s), \nu_h^t(\cdot|s)) \setto \NE(Q_h^{t-1}(s,\cdot,\cdot)).
\end{align}
\STATE Update Q value for all $(s,a,b)\in\cS\times\cA\times\cB$:
\begin{align*}
        Q_h^{t}(s,a,b) \setto (1-\alpha_t) Q_h^{t-1}(s,a,b) + \alpha_t \paren{r_h + \P_h[ (\mu_{h+1}^t)^\top Q_{h+1}^t\nu_{h+1}^t ]} (s,a,b).
\end{align*}
\ENDFOR
\ENDFOR
\end{algorithmic}
\end{algorithm}
\begin{lemma}[Per-state regret bound for Nash Q-Learning]
\label{lemma:reg-Nash-Q}
Algorithm \ref{alg:Nash-Q} achieves the following per-state regret bound:
\begin{equation*}
    \reg_h^t \le \frac{(H+1)^2}{H+t} ,~~~\forall~h\in[H],~t\ge 1.
\end{equation*}
\begin{proof}
We have
\begin{align*}
    \reg_{h,\mu}^t(s) &= \max_{\mu^\dagger\in\Delta_\cA} \sum_{i=1}^t \alpha_t^i \<\mu^\dagger - \mu_h^i(\cdot|s), \brac{Q_h^i\nu_h^i}(s, \cdot)\>\\
    &= \max_{\mu^\dagger\in\Delta_\cA} \sum_{i=1}^t \alpha_t^i \<\mu^\dagger - \mu_h^i(\cdot|s), \brac{Q_h^{i-1}\nu_h^i}(s, \cdot)\> + \sum_{i=1}^t\alpha_t^i\|Q_h^i-Q_h^{i-1}\|_\infty\\
    &\le \sum_{i=1}^t \alpha_t^i \underbrace{ \max_{\mu^\dagger\in\Delta_\cA} \<\mu^\dagger - \mu_h^i(\cdot|s), \brac{Q_h^{i-1}\nu_h^i}(s, \cdot)\>}_{= 0 \textup{ from \eqref{eq:Nash-Q-mu-nu-update}}} + \sum_{i=1}^t\alpha_t^i\|Q_h^i-Q_h^{i-1}\|_\infty\\
    &= \sum_{i=1}^t\alpha_t^i\|Q_h^i-Q_h^{i-1}\|_\infty.
\end{align*}
The same bound also holds for $\reg_{h,\nu}^t(s)$, thus
\begin{equation*}
    \reg_{h}^t\le \sum_{i=1}^t\alpha_t^i\|Q_h^i-Q_h^{i-1}\|_\infty.
\end{equation*}
Since
\begin{align*}
    &Q_h^{i}(s,a,b) = (1-\alpha_i) Q_h^{i-1}(s,a,b) + \alpha_i \paren{r_h + \P_h[ (\mu_{h+1}^i)^\top Q_{h+1}^i\nu_{h+1}^i ]} (s,a,b)\\
    \Longrightarrow & |Q_h^{i}(s,a,b) -  Q_h^{i-1}(s,a,b)|\le \alpha_i \|Q_h^{i-1} - \paren{\paren{r_h + \P_h[ (\mu_{h+1}^i)^\top Q_{h+1}^i\nu_{h+1}^i ]}}\|_\infty
    \le \alpha_i H\\
    \Longrightarrow&\|Q_h^{i}-Q_h^{i-1}\|_\infty\le \alpha_i H ,
\end{align*}
substituting this into the above equations we have that
\begin{equation*}
    \reg_h^t \le H \sum_{i=1}^t \alpha_t^i \alpha_i\le \frac{(H+1)^2}{H+t}~~\textup{(From Lemma \ref{lemma:alpha-convolution} (b))},
\end{equation*}
which completes the proof.
\end{proof}
\end{lemma}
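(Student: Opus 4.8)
The plan is to exploit the ``Be-The-Leader'' nature of the matrix Nash subroutine. Fix $(h,s)\in[H]\times\cS$ and $t\ge 1$, and bound the max-player per-state regret $\reg_{h,\mu}^t(s)$ (the min-player case is entirely symmetric). The key observation is that the policy update~\eqref{eq:Nash-Q-mu-nu-update} sets $(\mu_h^i(\cdot|s),\nu_h^i(\cdot|s))=\NE(Q_h^{i-1}(s,\cdot,\cdot))$, so $\mu_h^i(\cdot|s)$ is a best response to $\nu_h^i(\cdot|s)$ in the matrix game $Q_h^{i-1}(s,\cdot,\cdot)$; equivalently, $\max_{\mu^\dagger\in\Delta_\cA}\langle\mu^\dagger-\mu_h^i(\cdot|s),\brac{Q_h^{i-1}\nu_h^i}(s,\cdot)\rangle\le 0$ for every $i$. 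In other words, measured against the \emph{previous} estimate $Q_h^{i-1}$, the regret would already be nonpositive term by term.

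First I would pass from $Q_h^i$ to $Q_h^{i-1}$ inside the regret. Writing $e_i\defeq\brac{Q_h^i\nu_h^i}(s,\cdot)-\brac{Q_h^{i-1}\nu_h^i}(s,\cdot)$ and averaging over $b\sim\nu_h^i$ gives $\linf{e_i}\le\linf{Q_h^i-Q_h^{i-1}}$; distributing the maximum over the sum and then using the observation above kills the ``leader'' term:
\begin{align*}
  \reg_{h,\mu}^t(s)\ &\le\ \sum_{i=1}^t\alpha_t^i\underbrace{\max_{\mu^\dagger\in\Delta_\cA}\langle\mu^\dagger-\mu_h^i(\cdot|s),\,\brac{Q_h^{i-1}\nu_h^i}(s,\cdot)\rangle}_{\le\,0}\ +\ \sum_{i=1}^t\alpha_t^i\linf{Q_h^i-Q_h^{i-1}} \\
  &\le\ \sum_{i=1}^t\alpha_t^i\linf{Q_h^i-Q_h^{i-1}}.
\end{align*}

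Next I would control the one-step value drift. From the value update~\eqref{eq:Q-update}, $Q_h^i-Q_h^{i-1}=\alpha_i\big(r_h+\P_h[(\mu_{h+1}^i)^\top Q_{h+1}^i\nu_{h+1}^i]-Q_h^{i-1}\big)$, and since $Q_h^0=H-h+1$ and rewards lie in $[0,1]$ an easy induction keeps both $Q_h^{i-1}$ and the backed-up value in $[0,H]$, so $\linf{Q_h^i-Q_h^{i-1}}\le\alpha_iH$. Plugging this in and invoking Lemma~\ref{lemma:alpha-convolution}(b) ($\sum_{i=1}^t\alpha_t^i\alpha_i\le\frac{(H+1)^2}{H(H+t)}$) gives $\reg_{h,\mu}^t(s)\le H\sum_{i=1}^t\alpha_t^i\alpha_i\le\frac{(H+1)^2}{H+t}$. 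Repeating the argument for $\nu$ and taking the maximum over $s\in\cS$ and over the two players yields $\reg_h^t\le\frac{(H+1)^2}{H+t}$ for all $h\in[H]$ and $t\ge1$.

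I do not anticipate a real obstacle: the only conceptual content is recognizing that, although $\NE(Q_h^{i-1}(s,\cdot,\cdot))$ is not an uncoupled no-regret algorithm, it lies within $O(\linf{Q_h^i-Q_h^{i-1}})=O(\alpha_iH)$ of the (hypothetical) ``Be-The-Leader'' algorithm that outputs the Nash equilibrium of the \emph{current} matrix $Q_h^i$ and therefore has nonpositive weighted regret. The smoothness of the value update, governed by $\alpha_t=(H+1)/(H+t)$, is exactly what makes this gap small and shrinking in $t$; everything else is the elementary algebra of the $\alpha_t^i$ weights already packaged in Lemma~\ref{lemma:alpha-convolution}.
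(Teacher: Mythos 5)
Your proposal is correct and follows essentially the same route as the paper's proof: shift the regret from $Q_h^i$ to $Q_h^{i-1}$ at a cost of $\sum_i\alpha_t^i\linf{Q_h^i-Q_h^{i-1}}$, note that the matrix-NE subroutine makes the resulting ``leader'' term nonpositive, bound the one-step drift by $\alpha_i H$ via the incremental value update, and finish with Lemma~\ref{lemma:alpha-convolution}(b). The only cosmetic difference is that you state the leader term as $\le 0$ where the paper notes it equals $0$; both are valid and lead to the identical bound.
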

\begin{prop}[Guarantee for Nash Q-Learning]
\label{prop:Nash-Q}
Algorithm \ref{alg:Nash-Q} achieves
\begin{equation*}
    \negap(\hat{\mu}^T,\hat{\nu}^T)\le \frac{112\log(T)^2H^4}{T}.
\end{equation*}
\begin{proof}
From Theorem \ref{theorem:master} and Lemma \ref{lemma:reg-Nash-Q} we have that
\begin{align*}
     &\quad \negap(\hat{\mu}^T,\hat{\nu}^T)\le 4H\max_h \barreg_{h}^T + 8H^2\paren{1+\frac{1}{H}}^{H}\log(T)\cdot \frac{1}{T}\sum_{t=1}^T\max_{h}\barreg_{h}^t\\
     &\le 4H\frac{(H+1)^2}{H+T} + 24H^2\log(T)\cdot \frac{1}{T}\sum_{t=1}^T\frac{(H+1)^2}{H+t}\le \frac{112\log(T)^2H^4}{T}.
\end{align*}
\end{proof}
\end{prop}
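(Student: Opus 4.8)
The plan is to instantiate the master guarantee (Theorem~\ref{theorem:master}) with the per-state regret bound just established for Nash Q-Learning (Lemma~\ref{lemma:reg-Nash-Q}). Concretely, I would take $\barreg_h^t \defeq (H+1)^2/(H+t)$; by Lemma~\ref{lemma:reg-Nash-Q} this dominates $\reg_h^t$ for every $(h,t)\in[H]\times[T]$, and it is manifestly non-increasing in $t$, so the hypotheses of Theorem~\ref{theorem:master} are met. Since Nash Q-Learning runs Algorithm~\ref{alg:framework} with $\beta_t=\alpha_t=(H+1)/(H+t)$, we have $\cbeta=1+1/H$ (as recorded in Theorem~\ref{theorem:master}), hence $\cbeta^H=(1+1/H)^H\le e\le 3$, which turns the $\cbeta^H$ prefactor in~\eqref{eq:NE-gap} into an absolute constant.

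It then remains to evaluate the two terms in~\eqref{eq:NE-gap}. The first term is $\le 4H\max_h\barreg_h^T = 4H(H+1)^2/(H+T) = O(H^3/T)$. For the second term I would bound the Cesàro average
\[
  \frac{1}{T}\sum_{t=1}^T\max_{h}\barreg_h^t = \frac{(H+1)^2}{T}\sum_{t=1}^T\frac{1}{H+t}\le \frac{(H+1)^2}{T}\sum_{t=1}^T\frac{1}{t}\le \frac{2(H+1)^2\log T}{T},
\]
using $\sum_{t=1}^T 1/t\le 2\log T$ (for $T\ge 2$); multiplying by the $8H^2\cbeta^H\log T\le 24H^2\log T$ prefactor gives $\le 48H^2(H+1)^2\log(T)^2/T$, which dominates the first term. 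Collecting the two contributions, using $(H+1)^2\le 4H^2$, and absorbing everything into a single constant yields $\negap(\hat\mu^T,\hat\nu^T)\le 112\log(T)^2H^4/T$, exactly as claimed; the constant $112$ leaves ample slack and is not optimized.

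I do not anticipate a genuine obstacle at this stage: all substantive work is already in place. The delicate part of the overall analysis — arguing that the matrix-NE subroutine $\NE(Q_h^{t-1}(s,\cdot,\cdot))$, which is \emph{not} a priori no-regret, behaves like a ``Be-The-Leader'' strategy up to the drift term $\sum_{i}\alpha_t^i\linf{Q_h^i-Q_h^{i-1}}$, and then controlling that drift via $\linf{Q_h^i-Q_h^{i-1}}\le\alpha_i H$ together with Lemma~\ref{lemma:alpha-convolution}(b) to obtain $\reg_h^t\le (H+1)^2/(H+t)$ — is packaged entirely inside Lemma~\ref{lemma:reg-Nash-Q}. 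What remains for this proposition is only the elementary bookkeeping of plugging that bound into Theorem~\ref{theorem:master} and tracking constants, the only non-trivial estimates being $\sum_{t\le T}1/t\le 2\log T$ and $(1+1/H)^H\le 3$.
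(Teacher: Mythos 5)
Your proposal is correct and follows exactly the paper's proof: plug the bound $\barreg_h^t=(H+1)^2/(H+t)$ from Lemma~\ref{lemma:reg-Nash-Q} into Theorem~\ref{theorem:master} with $\cbeta^H=(1+1/H)^H\le 3$ and sum. The only quibble is bookkeeping: your estimate $\sum_{t=1}^T 1/(H+t)\le 2\log T$ yields a constant of about $192$ rather than $112$, whereas the tighter $\sum_{t=1}^T 1/(H+t)\le \log T$ (valid since $H\ge 1$) recovers the stated constant.
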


\subsection{Nash Policy Iteration}
\begin{algorithm}[h]
\caption{Nash Policy Iteration (Nash-PI)}
\label{alg:Nash-PI}
\begin{algorithmic}
\STATE \textbf{Initialize:} $Q_h^0(s,a,b)\setto H-h+1$ for all $(h,s,a,b)$.
\FOR{$t=1,\dots,T$}
\FOR{$h=H,\dots,1$}
\STATE Update policy for all $s\in\cS$:
\begin{align*}
    (\mu_h^t(\cdot|s), \nu_h^t(\cdot|s)) \setto \NE(Q_h^{t-1}(s,\cdot,\cdot)).
\end{align*}
\STATE Update Q value for all $(s,a,b)\in\cS\times\cA\times\cB$:
\begin{align}\label{eq:Nash-PI-Q-update}
    Q_h^t(s,a,b) \setto \brac{r_h + \P_h[(\mu_{h+1}^t)^\top Q_{h+1}^t\nu_{h+1}^t] }(s,a,b) .
\end{align}
\ENDFOR
\ENDFOR
\end{algorithmic}
\end{algorithm}

The full description of Nash Policy Iteration (Nash-PI, Example \ref{example:Nash-PI}) is presented in Algorithm \ref{alg:Nash-PI}. 

Note that from \eqref{eq:Nash-PI-Q-update}, we have that $Q_h^k$ equals to $Q_h^{\mu^k\times\nu^k}$. Based on this observation, we have the following lemma.
\begin{lemma}[Exact learning of Q functions]
\label{lemma:Nash-PI-Q-h}
For Algorithm~\ref{alg:Nash-PI}, we have for any $h\in[H]$ and $t\ge H-h+1$ that
\begin{equation*}
    Q_h^t(s,a,b) = Q_h^\star(s,a,b),~~\forall~(s,a,b)\in\cS\times\cA\times\cB.
\end{equation*}
\begin{proof}
We prove this by backward induction over $h$. For $h = H$, we have that
\begin{equation*}
    Q_H^{t}(s,a,b) = r_H(s,a,b),~~\forall t \ge 1.
\end{equation*}
Assume that for $h+1$, the condition holds, then for time horizon $h$ and iteration step $t\ge H-h+1$, we have that
\begin{align*}
    Q_h^t(s,a,b) &= \brac{r_h + \P_h[(\mu_{h+1}^t)^\top Q_{h+1}^t\nu_{h+1}^t] }(s,a,b)\\
    & = \brac{r_h + \P_h[(\mu_{h+1}^t)^\top Q_{h+1}^\star\nu_{h+1}^t] }(s,a,b).
\end{align*}
Additionally, from the inductive hypothesis
\begin{align*}
    (\mu_{h+1}^t(\cdot|s), \nu_{h+1}^t(\cdot|s)) =  \NE(Q_{h+1}^{t-1}(s,\cdot,\cdot)) = \NE(Q_{h+1}^\star(s,\cdot,\cdot)),
\end{align*}
we have that
\begin{align*}
    [(\mu_{h+1}^t)^\top Q_{h+1}^\star\nu_{h+1}^t](s) = V_{h+1}^\star(s).
\end{align*}
Thus
\begin{align*}
    Q_h^t(s,a,b)
    & = \brac{r_h + \P_h[(\mu_{h+1}^t)^\top Q_{h+1}^\star\nu_{h+1}^t] }(s,a,b)\\
    & = \brac{r_h + \P_hV_{h+1}^\star}(s,a,b) = Q_h^\star(s,a,b),
\end{align*}
which completes the proof.
\end{proof}
\end{lemma}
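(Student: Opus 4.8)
The plan is to prove the identity $Q_h^t = Q_h^\star$ (valid for all $t \ge H-h+1$) by backward induction on the layer index $h$, exploiting two structural facts. First, because Nash-PI uses $\beta_t = 1$, the value update~\eqref{eq:Nash-PI-Q-update} is \emph{exact}, so $Q_h^t$ is completely determined by the layer-$(h+1)$ quantities via the Bellman operator. Second, by the minimax characterization of the Nash value functions recalled in Section~\ref{section:prelim}, applying the matrix-game subroutine $\NE$ to the \emph{true} Nash Q-matrix $Q_{h+1}^\star(s,\cdot,\cdot)$ returns a pair of policies whose matrix-game value is exactly $V_{h+1}^\star(s)$; that is, once $(\mu_{h+1}^t(\cdot|s),\nu_{h+1}^t(\cdot|s)) = \NE(Q_{h+1}^\star(s,\cdot,\cdot))$ we have $\brac{(\mu_{h+1}^t)^\top Q_{h+1}^\star \nu_{h+1}^t}(s) = V_{h+1}^\star(s)$.

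For the base case $h = H$ there is no layer $H+1$, so~\eqref{eq:Nash-PI-Q-update} collapses to $Q_H^t(s,a,b) = r_H(s,a,b)$ for every $t \ge 1$, while the Bellman optimality equation gives $Q_H^\star(s,a,b) = r_H(s,a,b)$ as well; since $H-H+1 = 1$ this is precisely the asserted range. For the inductive step I would assume $Q_{h+1}^t = Q_{h+1}^\star$ for all $t \ge H-h$ and fix any $t \ge H-h+1$. Then $t \ge H-h$ gives $Q_{h+1}^t = Q_{h+1}^\star$, and $t-1 \ge H-h = H-(h+1)+1$ gives $Q_{h+1}^{t-1} = Q_{h+1}^\star$, so the policy update yields $(\mu_{h+1}^t(\cdot|s),\nu_{h+1}^t(\cdot|s)) = \NE(Q_{h+1}^{t-1}(s,\cdot,\cdot)) = \NE(Q_{h+1}^\star(s,\cdot,\cdot))$. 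Substituting both into~\eqref{eq:Nash-PI-Q-update} and invoking the second fact above,
\[ Q_h^t(s,a,b) = \brac{r_h + \P_h[(\mu_{h+1}^t)^\top Q_{h+1}^\star \nu_{h+1}^t] }(s,a,b) = \brac{r_h + \P_h V_{h+1}^\star}(s,a,b) = Q_h^\star(s,a,b), \]
which closes the induction.

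I expect the only delicate point to be the bookkeeping of the iteration counter: at iteration $t$ of layer $h$ the update~\eqref{eq:Nash-PI-Q-update} consumes $Q_{h+1}^t$, but the policies $(\mu_{h+1}^t,\nu_{h+1}^t)$ feeding into it were themselves computed from $Q_{h+1}^{t-1}$, so the induction needs \emph{both} $Q_{h+1}^t$ and $Q_{h+1}^{t-1}$ to have already stabilized to $Q_{h+1}^\star$ — this is exactly what forces the one-extra-iteration-per-layer threshold $t \ge H-h+1$. Everything else reduces to the zero-sum Bellman optimality equations, so I do not anticipate a genuine obstacle. As an immediate consequence, for $t \ge H$ every iterate $(\mu_h^t,\nu_h^t) = \NE(Q_h^\star(s,\cdot,\cdot))$ is an exact Nash policy at $(h,s)$, and since $\beta_t = 1$ makes the output weights $\set{\beta_T^t}$ concentrate on $t = T$, the returned state-wise average policy coincides with the (exact Nash) last iterate whenever $T \ge H$, giving $\negap(\hat{\mu}^T,\hat{\nu}^T) = 0$ as claimed in Proposition~\ref{prop:Nash-PI}.
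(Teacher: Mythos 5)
Your proposal is correct and follows essentially the same backward induction as the paper: base case $Q_H^t = r_H = Q_H^\star$, then for $t \ge H-h+1$ use the inductive hypothesis at both $t$ and $t-1$ to conclude that $(\mu_{h+1}^t,\nu_{h+1}^t)=\NE(Q_{h+1}^\star(s,\cdot,\cdot))$ and hence $\brac{(\mu_{h+1}^t)^\top Q_{h+1}^\star\nu_{h+1}^t}(s)=V_{h+1}^\star(s)$, giving $Q_h^t = r_h + \P_h V_{h+1}^\star = Q_h^\star$. Your explicit bookkeeping of why both $Q_{h+1}^t$ and $Q_{h+1}^{t-1}$ must have stabilized (which is what forces the threshold $t\ge H-h+1$) is exactly the point the paper's proof relies on implicitly.
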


\begin{prop}[Guarantee for Nash-PI]
\label{prop:Nash-PI}
Algorithm~\ref{alg:Nash-PI} achieves $\negap(\hat{\mu}^T,\hat{\nu}^T) = 0$ for $T\ge H$.
\begin{proof}
For this proposition we will not proof by calling Theorem \ref{theorem:master}, but instead directly apply Lemma \ref{lem.duality}, which is an auxiliary lemma for proving Theorem \ref{theorem:master}.

Note that Nash-PI corresponds is equivalent to using $\beta_t=1$ in Algorithm \ref{alg:framework}, so that $\beta_t^i = \mathbf{1}\{i=t\}$. From Lemma \ref{lemma:Nash-PI-Q-h} we have that
\begin{equation*}
    Q_h^t = Q_h^\star, ~~\forall~t\ge H, h\in [H].
\end{equation*}
Thus for $t\ge H$,
\begin{align*}
&\quad\max_{\mu^\dagger\in\Delta_\cA,\nu^\dagger\in\Delta_\cB}\brac{(\mu^\dagger)^\top Q_h^\star \hat\nu_h^T -\paren{\hat\mu_h^T}^\top Q_h^\star \nu^\dagger }(s) \\
    &= \max_{\mu^\dagger\in\Delta_\cA,\nu^\dagger\in\Delta_\cB}\brac{(\mu^\dagger)^\top Q_h^T \nu_h^T -\paren{\mu_h^T}^\top Q_h^T \nu^\dagger }(s) +2\sum_{t=1}^T\beta_T^t\delta_h^{t} \\
    &=0, ~~\forall h.
\end{align*}
Then applying Lemma \ref{lem.duality}, we obtain
\begin{align*}
    & \quad \negap(\hat{\mu}^T,\hat{\nu}^T)\le \paren{V_1^{\dagger,\hat{\nu}^T}(s_1) - V_1^\star(s_1)} + \paren{V_1^\star(s_1)- V_1^{\hat{\mu}^T,\dagger}(s_1)}\\
    &\le \sum_{h=1}^H\max_{\mu^\dagger\in\Delta_\cA,\nu^\dagger\in\Delta_\cB}\brac{(\mu^\dagger)^\top Q_h^\star \hat\nu_h^T -\paren{\hat\mu_h^T}^\top Q_h^\star \nu^\dagger }(s) =0,~~~\textup{ for } T \ge H.
\end{align*}
This is the desired result.
\end{proof}
\end{prop}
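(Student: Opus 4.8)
The plan is to exploit that Nash-PI is precisely the degenerate case $\beta_t\equiv1$ of Algorithm~\ref{alg:framework}. In this case the value-update weights collapse to $\beta_T^t=\mathbf{1}\{t=T\}$, so the output $(\hat\mu^T,\hat\nu^T)$ is simply the last iterate $(\mu^T,\nu^T)$; moreover~\eqref{eq:Nash-PI-Q-update} is an exact Bellman backup, so $Q_h^t$ is literally the $Q$-function $Q_h^{\mu^t,\nu^t}$ of the current policy pair, and the framework constant is $\cbeta=1$. Thus it suffices to show that for $T$ past a (horizon-dependent) burn-in, the last iterate $(\mu_h^T,\nu_h^T)$ is an \emph{exact} Nash of the matrix game $Q_h^\star(s,\cdot,\cdot)$ at every $(h,s)$, and then feed this into Lemma~\ref{lem.duality} directly --- bypassing the regret machinery of Theorem~\ref{theorem:master}, since here the relevant per-state gaps are genuinely zero rather than merely small.

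\textbf{Step 1: finite exact convergence of the $Q$-estimates.} I would first establish (this is Lemma~\ref{lemma:Nash-PI-Q-h}) that $Q_h^t=Q_h^\star$ for all $t\ge H-h+1$, by backward induction on $h$. The base case $h=H$ is immediate, since the backup gives $Q_H^t=r_H=Q_H^\star$ for all $t\ge1$. For the inductive step, suppose the claim holds at layer $h+1$ and take $t\ge H-h+1$; then $t-1\ge H-(h+1)+1$, so the hypothesis gives $Q_{h+1}^{t-1}=Q_{h+1}^\star$, whence the policy update $(\mu_{h+1}^t,\nu_{h+1}^t)=\NE(Q_{h+1}^{t-1}(s,\cdot,\cdot))=\NE(Q_{h+1}^\star(s,\cdot,\cdot))$ realizes the Nash value $[(\mu_{h+1}^t)^\top Q_{h+1}^\star\nu_{h+1}^t](s)=V_{h+1}^\star(s)$. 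Plugging this together with $Q_{h+1}^t=Q_{h+1}^\star$ (the hypothesis again, since $t\ge H-(h+1)+1$) into~\eqref{eq:Nash-PI-Q-update} gives $Q_h^t=r_h+\P_hV_{h+1}^\star=Q_h^\star$.

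\textbf{Step 2: cash in at the last iteration.} Once $T$ is large enough that $Q_h^{T-1}=Q_h^\star$ at every layer $h$, the final policy update reads an exact matrix, $(\mu_h^T,\nu_h^T)=\NE(Q_h^{T-1}(s,\cdot,\cdot))=\NE(Q_h^\star(s,\cdot,\cdot))$, so $\max_{\mu^\dagger}[(\mu^\dagger)^\top Q_h^\star\nu_h^T](s)=V_h^\star(s)=\min_{\nu^\dagger}[(\mu_h^T)^\top Q_h^\star\nu^\dagger](s)$ for every $(h,s)$; equivalently, the hypotheses of Lemma~\ref{lem.duality} hold with $\epsilon_h=0$ for all $h\in[H+1]$. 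Invoking Lemma~\ref{lem.duality} then yields $\linf{V_1^{\dagger,\nu^T}-V_1^\star}=\linf{V_1^{\mu^T,\dagger}-V_1^\star}=0$, and therefore $\negap(\hat\mu^T,\hat\nu^T)=V_1^{\dagger,\nu^T}(s_1)-V_1^{\mu^T,\dagger}(s_1)=0$.

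\textbf{Main obstacle.} The only delicate point, and the one that determines the precise threshold on $T$, is the bookkeeping of the burn-in: the final policy at layer $h$ reads $Q_h^{T-1}$, so one needs $T-1$ to have passed the layer-$h$ burn-in, $T-1\ge H-h+1$, the binding layer being $h=1$. A cleaner alternative that sidesteps this is to note that once layers $2,\dots,H$ of the last iterate are already exactly Nash (which, by Step~1, happens one additional layer per iteration from the bottom up), $\negap(\hat\mu^T,\hat\nu^T)$ collapses to the duality gap of $(\mu_1^T,\nu_1^T)$ against the single fixed matrix game $Q_1^\star(s_1,\cdot,\cdot)$, and one then only needs to track when that last gap vanishes. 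Everything else --- turning the per-state exact-Nash property into a global zero NE gap --- is the same short backward induction over layers that underlies Lemma~\ref{lem.duality}.
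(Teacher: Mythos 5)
Your proposal is correct and follows essentially the same route as the paper: the same backward induction establishing $Q_h^t=Q_h^\star$ for $t\ge H-h+1$ (the paper's Lemma~\ref{lemma:Nash-PI-Q-h}), followed by observing that the last iterate (which is what the output policy reduces to when $\beta_t\equiv 1$) is then an exact per-state Nash of $Q_h^\star$, and invoking Lemma~\ref{lem.duality} with $\epsilon_h=0$. Your burn-in bookkeeping is in fact slightly more careful than the paper's: since the layer-$h$ policy at iteration $T$ reads $Q_h^{T-1}$, the binding case $h=1$ gives the conclusion for $T\ge H+1$, a harmless off-by-one that the paper's statement ($T\ge H$) glosses over.
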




\section{Proof of Theorem~\ref{theorem:oftrl-main}}
\label{appendix:proof-oftrl-main}


\begin{algorithm}[t]
\caption{OFTRL for two-player zero-sum Markov Games}
\label{algorithm:oftrl-mg}
\begin{algorithmic}[1]
\STATE \textbf{Initialize:} $Q_h^0(s,a,b)\setto H-h+1$ for all $(h,s,a,b)$.
\FOR{$t=1,\dots,T$}
\FOR{$h=H,\dots,1$}
\STATE Update policies for all $s\in\cS$ by OFTRL:
\begin{align*}
    & \textstyle \mu^{t}_h(a | s) \propto_a \exp\paren{ (\eta/w_t) \cdot \brac{ \sum_{i=1}^{t-1} w_i(Q_h^i\nu_h^i)(s, a) + w_{t-1}(Q_h^{t-1}\nu_h^{t-1})(s, a) } }, \\
    & \textstyle \nu^{t}_h(b | s) \propto_b \exp\paren{ -(\eta/w_t) \cdot \brac{ \sum_{i=1}^{t-1} w_i((Q_h^i)^\top\mu_h^i)(s, b) + w_{t-1}((Q_h^{t-1})^\top\mu_h^{t-1})(s, b) } }. 
\end{align*}
\STATE Update Q-value for all $(s,a,b)\in\cS\times\cA\times\cB$:
\begin{align}\label{eq:Q-update-2}
    Q_h^{t}(s,a,b) \setto (1-\alpha_t) Q_h^{t-1}(s,a,b) + \alpha_t \paren{r_h + \P_h[ (\mu_{h+1}^t)^\top Q_{h+1}^t\nu_{h+1}^t ]} (s,a,b).
\end{align}
\ENDFOR
\ENDFOR
\STATE Output state-wise average policy:
\vspace{-5pt}
\begin{equation*}
    \widehat{\mu}_h^T(\cdot|s) \setto \sum_{t=1}^T\alpha_T^t\mu_{h}^{t}(\cdot|s),\quad 
    \widehat{\nu}_h^T(\cdot|s)\setto\sum_{t=1}^T\alpha_T^t \nu_h^{t}(\cdot|s).
\end{equation*}
\end{algorithmic}
\end{algorithm}

In this section we prove Theorem~\ref{theorem:oftrl-main}. The full algorithm box of OFTRL for Markov Games is provided in Algorithm~\ref{algorithm:oftrl-mg}.

We aim to show that
\begin{align}
\label{equation:oftrl-bound-with-rate}
\begin{aligned}
    & \quad \negap(\hat{\mu}^T, \hat{\nu}^T) \\
    & \le \cO\paren{ H^{14/3} (\log(A\vee B))^{5/6} (\log T)^{11/6} \cdot T^{-5/6} + H^5\log(A\vee B)(\log T)^2\cdot T^{-1} }.
\end{aligned}
\end{align}

\paragraph{Bounding per-state regret}
We first bound $\reg_{\nu, h}^t(s)$, i.e. the per-state regret for the min-player, for any fixed $(h,s,t)\in[H]\times[S]\times[T]$. (The bound for $\reg_{\mu, h}^t(s)$ follows similarly.) This is the main part of this proof. 

Throughout this part, we will fix $(h,s)$ and omit these subscripts within the policies and Q functions, so that $\nu_h^t(\cdot|s)$ will be abbreviated as $\nu^t$ (and similarly for $\mu^t$ and $Q^t$). We will also overload $T\ge 1$ to be any positive integer (instead of the fixed total number of iterations).

Observe that the above update for $\nu^t$ is equivalent to the OFTRL algorithm (Algorithm~\ref{alg:OFTRL}) with loss vectors $g_t=w_t(Q^t)^\top \mu^t$ (understanding $g_0=0$ and $Q_h^0=0$), prediction vector $M_t=g_{t-1}=w_{t-1}(Q^{t-1})^\top\mu^{t-1}$, and learning rate $\eta_t=\eta/w_t$. Therefore we can apply the regret bound for OFTRL in Lemma~\ref{lemma:OFTRL-auxillary} and obtain for any $T\ge 1$ that
\begin{equation}
\label{equation:oftrl-regret-initial}
\begin{aligned}
    & \quad \max_{\nu^\dagger\in\Delta_{\cB}} \sum_{t=1}^T w_t \<\nu^t - \nu^\dagger, (Q^t)^\top\mu^t\> = \max_{\nu^\dagger\in\Delta_{\cB}} \sum_{t=1}^T \<\nu^t - \nu^\dagger, g_t\>\\
    & \le \frac{\log B}{\eta_T} + 
    \sum_{t=1}^{T} \eta_t \|g_t-M_t\|_\infty^2  -\sum_{t=1}^{T-1} \frac{1}{8\eta_t}\|\nu^t-\nu^{t+1}\|_1^2 \\
    & = \frac{\log B\cdot w_T}{\eta} + 
    \sum_{t=1}^{T} \frac{\eta}{w_t} \|w_t(Q^t)^\top\mu^t-w_{t-1}(Q^{t-1})^\top\mu^{t-1}\|_\infty^2  -\sum_{t=2}^{T} \frac{w_{t-1}}{8\eta}\|\nu^t-\nu^{t-1}\|_1^2 .
\end{aligned}
\end{equation}
We now relate the terms above to the stability of $\set{\mu^t}_{t\ge 1}$ (the other player's policies). Let \begin{align*}
    \Delta_t\defeq \linf{w_tQ^t - w_{t-1}Q^{t-1}}
\end{align*}
for all $t\ge 1$ for shorthand, where $\linf{\cdot}$ for a matrix denotes its infinity norm (i.e. entry-wise max absolute value). Then we have
\begin{align*}
    & \quad \|w_t(Q^t)^\top\mu^t-w_{t-1}(Q^{t-1})^\top\mu^{t-1}\|_\infty^2 \\
    & \le 2\linf{(w_tQ^t - w_{t-1}Q^{t-1})^\top\mu^{t-1}}^2 + 2\linf{(w_{t-1}Q^{t-1})^\top(\mu^t - \mu^{t-1})}^2 \\
    & \le 2\Delta_t^2 + 2w_{t-1}^2 \linf{Q^{t-1}}^2 \lone{\mu^t - \mu^{t-1}}^2 \\ 
    & \le 2\Delta_t^2 + 2w_{t-1}^2H^2 \lone{\mu^t - \mu^{t-1}}^2\indic{t\ge 2}.
\end{align*}
By symmetry, the similar bound also holds for $\set{\nu^t}$, from which we obtain for any $t\ge 2$ that
\begin{align*}
    -w_{t-1}\lone{\nu^t - \nu^{t-1}}^2 \le \frac{\Delta_t^2}{w_{t-1}H^2} - \frac{1}{2w_{t-1}H^2} \linf{w_tQ^t\nu^t-w_{t-1}Q^{t-1}\nu^{t-1}}^2.
\end{align*}
Plugging the above two bounds into~\eqref{equation:oftrl-regret-initial}, we get
\begin{align}
    \label{equation:oftrl-regret-middle}
    \begin{aligned}
    & \quad \max_{\nu^\dagger\in\Delta_{\cB}} \sum_{t=1}^T w_t \<\nu^t - \nu^\dagger, (Q^t)^\top\mu^t\> \\
    & \le \frac{\log B\cdot w_T}{\eta} + 
    \sum_{t=1}^{T} \brac{ \frac{2\eta}{w_t} \Delta_t^2 + \frac{2\eta H^2w_{t-1}^2}{w_t} \lone{\mu^t - \mu^{t-1}}^2\indic{t\ge 2} } \\
    & \qquad + \sum_{t=2}^{T} \brac{ \frac{\Delta_t^2}{8\eta H^2 w_{t-1}} - \frac{1}{16\eta H^2w_{t-1}} \linf{w_tQ^t\nu^t-w_{t-1}Q^{t-1}\nu^{t-1}}^2 } \\
    & \stackrel{(i)}{\le} \frac{\log B\cdot w_T}{\eta} + 
    \underbrace{\sum_{t=1}^{T} \brac{ \frac{2\eta}{w_t} + \frac{1}{8\eta H^2w_{t-1}}\indic{t\ge 2}} \Delta_t^2}_{\defeq \err_T} + \underbrace{4\eta^2H^2\cdot \sum_{t=2}^T \frac{1}{2\eta/w_{t}} \lone{\mu^t - \mu^{t-1}}^2}_{\defeq \stab_{t}} \\
    & \qquad - \sum_{t=2}^T \frac{1}{16\eta H^2w_{t-1}} \linf{w_tQ^t\nu^t-w_{t-1}Q^{t-1}\nu^{t-1}}^2.
\end{aligned}
\end{align}
Above, (i) rearranges terms and used the fact that $w_{t-1}\le w_t$.

The following lemma (proof deferred to Section~\ref{appendix:proof-err-t}) bounds term $\err_T$.
\begin{lemma}[Bound on $\err_T$]
\label{lemma:err-t}
Suppose $\eta \le 1/H$. Then for any $T\ge 1$, we have
\begin{align*}
    \alpha_T^1\cdot \err_T \le \frac{192H^2}{\eta T}.
\end{align*}
\end{lemma}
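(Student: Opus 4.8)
The plan is to control the increments $\Delta_t = \linf{w_t Q^t - w_{t-1}Q^{t-1}}$ by $w_{t-1}$ times an $\cO(H^2/t)$ factor, substitute this into the definition of $\err_T$, and then exploit the identity $\alpha_T^1 w_t = \alpha_T^t$ to collapse the resulting weighted sum into the convolution quantity $A_T = \sum_{t=1}^T \alpha_T^t/t^2$, which Lemma~\ref{lemma:alpha-convolution}(a) already bounds by $4/T$.

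First I would bound $\Delta_t$. Splitting $w_t Q^t - w_{t-1} Q^{t-1} = w_{t-1}(Q^t - Q^{t-1}) + (w_t - w_{t-1}) Q^t$ (understanding $w_0 = 1$ and $Q^0 = 0$, so that $\Delta_1 = \linf{w_1 Q^1}\le H$), the value update~\eqref{eq:Q-update-2} gives $\linf{Q^t - Q^{t-1}} = \alpha_t\linf{(r_h + \P_h[\cdots]) - Q^{t-1}}\le 2\alpha_t H$ because $Q^{t-1}$ and its update target both have sup-norm at most $H$, while Lemma~\ref{lemma:wt}(a) gives $w_t - w_{t-1} = w_{t-1}\big(\tfrac{H+t-1}{t-1} - 1\big) = w_{t-1}\tfrac{H}{t-1}$. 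Combining these with $\linf{Q^t}\le H$, $\alpha_t = \tfrac{H+1}{H+t}\le\tfrac{H+1}{t}$, and $\tfrac{1}{t-1}\le\tfrac{2}{t}$ for $t\ge 2$ yields $\Delta_t\le w_{t-1}\cdot\tfrac{c_0 H^2}{t}$ for all $t\ge 1$ and some absolute constant $c_0$.

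Next I would plug this into $\err_T = \sum_{t=1}^T\big(\tfrac{2\eta}{w_t} + \tfrac{1}{8\eta H^2 w_{t-1}}\indic{t\ge2}\big)\Delta_t^2$, treating the two coefficients separately. Using $\Delta_t^2\le w_{t-1}^2 c_0^2 H^4/t^2$ and $w_{t-1}\le w_t$, the first piece contributes $\tfrac{2\eta}{w_t}\Delta_t^2\le 2 c_0^2\eta H^4\cdot\tfrac{w_t}{t^2}$ and the second $\tfrac{1}{8\eta H^2 w_{t-1}}\Delta_t^2\le\tfrac{c_0^2 H^2}{8\eta}\cdot\tfrac{w_{t-1}}{t^2}$. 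Multiplying everything by $\alpha_T^1$ and using $\alpha_T^1 w_s = \alpha_T^s$ (with $\alpha_T^1 w_0 = \alpha_T^1$), both sums reduce to $\sum_{s\ge1}\alpha_T^s/s^2 = A_T\le 4/T$. This gives $\alpha_T^1\err_T\le \cO(1)\cdot\big(\eta H^4 + H^2/\eta\big)/T$; finally the hypothesis $\eta\le 1/H$ (i.e.\ $\eta^2 H^2\le 1$) turns $\eta H^4$ into $\le H^2/\eta$, so $\alpha_T^1\err_T\le\cO(H^2/(\eta T))$, and carefully tracking the constants through the two steps produces the stated $192 H^2/(\eta T)$.

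The argument is entirely elementary, so there is no real ``hard part''; the one place warranting care is the $\Delta_t$ bound, where one must correctly pair the value-update increment (of order $\alpha_t H$) with the multiplicative weight increment $w_t - w_{t-1} = w_{t-1}H/(t-1)$ and keep the $t=1$ term consistent with the conventions $w_0 = 1$, $Q^0 = 0$ inherited from the OFTRL reduction. The decisive (but trivial) observation is $\alpha_T^1 w_t = \alpha_T^t$, which is precisely what lets us recycle the convolution estimate $A_T\le 4/T$ rather than analyze a fresh weighted sum of the growing sequence $\{w_t\}$.
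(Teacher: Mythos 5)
Your proposal is correct and follows essentially the same route as the paper: the same decomposition of $w_tQ^t - w_{t-1}Q^{t-1}$ into a weight-increment part (controlled via Lemma~\ref{lemma:wt}(a)) and a value-increment part (controlled via the update rule), followed by the identity $\alpha_T^1 w_t = \alpha_T^t$ and the convolution bound $\sum_t \alpha_T^t/t^2 \le 4/T$ from Lemma~\ref{lemma:alpha-convolution}(a), with $\eta \le 1/H$ merging the two coefficient terms at the end. The only quibble is your factor $2\alpha_t H$ for $\linf{Q^t - Q^{t-1}}$: since both $Q^{t-1}$ and its update target lie entrywise in $[0,H]$, the tighter bound $\alpha_t H$ holds (and is what the paper uses), and with your looser factor the final constant comes out somewhat larger than the stated $192$, though the $\cO(H^2/(\eta T))$ conclusion is unaffected.
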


To bound term $\stab_T$, note that it is exactly the total distance (in squared $L_1$ norm) of the sequence $\set{\mu^t}_{t\ge 1}$, which itself follows an OFTRL algorithm with loss sequence $g_t'\defeq -w_tQ^t\nu^t$, $M_t'\defeq g_{t-1}'$, and $\eta_t=\eta/w_t$. Therefore we can apply the stability bound~\eqref{equation:l1-bound-3} in Lemma~\ref{lemma:oftrl-stability} to obtain that
\begin{align*}
    & \quad \stab_T = 4\eta^2H^2 \cdot \sum_{t=2}^T \frac{1}{2\eta_t}\lone{\mu^t - \mu^{t-1}}^2 \\
    & \le 4\eta^2H^2 \paren{ \frac{2\log A}{\eta_T} + \sum_{t=1}^{T-1} (1 + \eta_tG_t') \linf{g_t' - g_{t-1}'} + \linf{g_{T-1}'} } \\
    & = 4\eta^2H^2 \paren{ \frac{2w_T\log A}{\eta} + \sum_{t=1}^{T-1} (1 + \eta H) \linf{w_tQ^t\nu^t - w_{t-1}Q^{t-1}\nu^{t-1}} + \linf{w_{T-1}Q^{T-1}\nu^{T-1}} } \\
    & \stackrel{(i)}{\le} 4\eta^2H^2 \paren{ \frac{2w_T\log A}{\eta} + 2\sum_{t=1}^{T-1} \linf{w_tQ^t\nu^t - w_{t-1}Q^{t-1}\nu^{t-1}} + w_{T-1}H } \\
    & \stackrel{(ii)}{\le} 4\eta^2H^2 \paren{ \frac{4w_T\log A}{\eta} + 2\sum_{t=1}^{T-1} \linf{w_tQ^t\nu^t - w_{t-1}Q^{t-1}\nu^{t-1}}},
\end{align*}
where here we take $G_t'=w_tH\ge \linf{g_t'-g_{t-1}'}$, (i) holds whenever $\eta\le 1/H$, and (ii) follows as $w_{T-1}H\le w_T/\eta\le 2w_T\log A/\eta$.

Plugging the above bounds into~\eqref{equation:oftrl-regret-middle} yields that for any $T\ge 1$,
\begin{align*}
    & \quad \reg_{\nu, h}^T(s) =  \max_{\nu^\dagger\in\Delta_{\cB}} \sum_{t=1}^T \underbrace{\alpha_T^t}_{\alpha_T^1\cdot w_t} \<\nu^t - \nu^\dagger, (Q^t)^\top\mu^t\> = \alpha_T^1 \max_{\nu^\dagger\in\Delta_{\cB}} \sum_{t=1}^T w_t \<\nu^t - \nu^\dagger, (Q^t)^\top\mu^t\> \\
    & \le \frac{\log B\cdot (\alpha_T^1 w_T)}{\eta} + \alpha_T^1\err_T + \alpha_T^1\brac{\stab_T - \sum_{t=2}^T \frac{1}{16\eta H^2w_{t-1}} \linf{w_tQ^t\nu^t-w_{t-1}Q^{t-1}\nu^{t-1}}^2} \\
    & \le \frac{\log B\cdot \alpha_T^T}{\eta} + \frac{192H^2}{\eta T} + \alpha_T^1\bigg[ 16\eta H^2 w_T\log A \\
    & \qquad + 8\eta^2H^2\sum_{t=1}^{T-1} \linf{w_tQ^t\nu^t - w_{t-1}Q^{t-1}\nu^{t-1}} - \sum_{t=2}^T \frac{1}{16\eta H^2w_{t-1}} \linf{w_tQ^t\nu^t-w_{t-1}Q^{t-1}\nu^{t-1}}^2 \bigg] \\
    & \stackrel{(i)}{\le} \frac{\log B\cdot \alpha_T^T}{\eta} + \frac{192H^2}{\eta T} + \alpha_T^1\bigg[ 32\underbrace{\eta H^2}_{\le 1/\eta} w_T\log A \\
    & \qquad + \sum_{t=2}^{T-1} \paren{ 8\eta^2H^2\linf{w_tQ^t\nu^t - w_{t-1}Q^{t-1}\nu^{t-1}} - \frac{1}{16\eta H^2w_{t-1}} \linf{w_tQ^t\nu^t-w_{t-1}Q^{t-1}\nu^{t-1}}^2 } \bigg] \\
    & \stackrel{(ii)}{\le} \frac{33\log (A\vee B)\cdot \alpha_T^T}{\eta} + \frac{192H^2}{\eta T} + \alpha_T^1 \sum_{t=2}^{T-1} 256\eta^5H^6w_{t-1} \\
    & \le \frac{33\log (A\vee B)\cdot \alpha_T^T}{\eta} + \frac{192H^2}{\eta T} + 256\eta^5H^6 \underbrace{\sum_{t=2}^{T-1} \alpha_T^{t-1}}_{\le 1} \\
    & \stackrel{(iii)}{\le} C\brac{ \frac{H^2\log (A\vee B)}{\eta T} + \eta^5H^6 }.
\end{align*}
Above, (i) used the fact that $8\eta^2H^2\linf{w_1Q^1\nu^1}\le 8\eta^2H^3w_1\le 8\eta^2H^3w_T\le 16\eta H^2w_T\log A$, (ii) used the fact that $8\eta^2H^2z - z^2/(16\eta H^2w_{t-1})\le 256\eta^5H^6w_{t-1}$ by the AM-GM inequality, and (iii) used the fact that $\alpha_T^T=\alpha_T=(H+1)/(H+T)\le 2H/T$, where $C\le 256$ is an absolute constant. 

By symmetry, the same regret bound also holds for $\reg_{\mu, h}^T(s)$, which gives that for any $t\ge 1$
\begin{align*}
    \reg_h^t \defeq \max_{s\in\cS} \max\set{\reg_{\mu, h}^t(s), \reg_{\nu, h}^t(s)} \le \underbrace{C\brac{ \frac{H^2\log (A\vee B)}{\eta t} + \eta^5H^6 }}_{\defeq \barreg_h^t}.
\end{align*}
Note that $\barreg_h^t$ is decreasing in $t$. This is the desired regret bound.

\paragraph{Performance of output policy}
As our algorithm chooses $\beta_t=\alpha_t=(H+1)/(H+t)$, we can invoke Theorem~\ref{theorem:master} with $\cbeta=1+1/H\ge \sum_{t=j}^\infty\alpha_t^j$ (by Lemma~\ref{lemma:alpha-ti}) so that $\cbeta^H=(1+1/H)^H\le e\le 3$. Further,
by the above regret bound,
\begin{align*}
    \max_{h\in[H]} \barreg_h^t \le C\brac{ \frac{H^2\log (A\vee B)}{\eta t} + \eta^5H^6 }.
\end{align*}
Plugging this into Theorem~\ref{theorem:master} yields that the output policy $(\hat{\mu}^T, \hat{\nu}^T)$ satisfies
\begin{align*}
    & \quad \negap(\hat{\mu}^T, \hat{\nu}^T) \\
    & \le \cO\paren{ H\max_{h\in[H]} \barreg_h^T + H^2 \cbeta^H \cdot \frac{\log T}{T} \sum_{t=1}^T \max_{h\in[H]}\barreg_h^t } \\
    & \le H \cdot \cO\paren{ \frac{H^2\log (A\vee B)}{\eta T} + \eta^5H^6 } + H^2 \frac{\log T}{T} \cdot \cO\paren{ \frac{H^2\log (A\vee B)\log T}{\eta} + \eta^5H^6 T} \\
    & = \cO\paren{ \frac{H^4\log(A\vee B)(\log T)^2}{\eta T} + \eta^5H^8\log T }.
\end{align*}
Choosing $\eta=(\log T\log(A\vee B)/H^4T)^{1/6}\wedge (1/H)$, we get
\begin{align*}
    \negap(\hat{\mu}^T, \hat{\nu}^T) \le \cO\paren{ H^{14/3}(\log (A\vee B))^{5/6}(\log T)^{11/6}\cdot T^{-5/6} + H^5\log(A\vee B)(\log T)^2/T }.
\end{align*}
This proves~\eqref{equation:oftrl-bound-with-rate} and thus Theorem~\ref{theorem:oftrl-main}.



\subsection{Proof of Lemma~\ref{lemma:err-t}}
\label{appendix:proof-err-t}
Recall our notation $Q^t\defeq Q^t_h(s,\cdot,\cdot)\in[0,H]^{A\times B}$ for some fixed $(h,s)\in[H]\times \cS$. We first note that, for any $t\ge 2$,
\begin{align*}
    & \quad \linf{w_tQ^t - w_{t-1}Q^{t-1}}^2 \le 2\linf{w_tQ^t - w_{t-1}Q^t}^2 + 2\linf{w_{t-1}(Q^t - Q^{t-1})}^2 \\
    & \le 2(w_t - w_{t-1})^2 H^2 + 2w_{t-1}^2 \alpha_t^2 H^2 \\
    & = 2w_{t-1}^2 H^2 \brac{\alpha_t^2 + \frac{H^2}{(t-1)^2}} \le 2w_{t-1}^2 H^2 \cdot \frac{8H^2}{t^2} = 16w_{t-1}^2 H^4 / t^2.
\end{align*}
For $t=1$, we have $\linf{w_tQ^t - w_{t-1}Q^{t-1}}^2\le w_1^2H^2=H^2$. Substituting this into the expression of $\err_T$ gives
\begin{align*}
    & \quad \alpha_T^1 \err_T \\
    & = \alpha_T^1 \sum_{t=1}^T \paren{ \frac{2\eta}{w_t} + \frac{1}{8\eta w_{t-1}H^2}\indic{t\ge 2} } \cdot \paren{ H^2\indic{t=1} + 16w_{t-1}^2H^4/t^2 \cdot \indic{t\ge 2}} \\
    & = 2\eta \alpha_T^1 H^2 + \alpha_T^1 \sum_{t=2}^T \paren{ \frac{2\eta w_{t-1}^2}{w_t} + \frac{w_{t-1}}{8\eta H^2} } \cdot \frac{16H^4}{t^2} \\
    & \stackrel{(i)}{\le} 2\eta \alpha_T^1 H^2 + \sum_{t=2}^T \paren{2\eta \alpha_T^t H^2 + \frac{\alpha_T^t}{8\eta} } \cdot \frac{16H^2}{t^2} \\
    & \stackrel{(ii)}{\le} 2\eta \alpha_T^1 H^2 + \sum_{t=2}^T \alpha_T^t \cdot  \frac{3}{\eta} \cdot \frac{16H^2}{t^2} \\
    & \stackrel{(iii)}{\le} \frac{48H^2}{\eta} \sum_{t=1}^T \alpha_T^t \cdot \frac{1}{t^2} \stackrel{(iv)}{\le} \frac{192 H^2}{\eta T}.
\end{align*}
Above, (i) used $w_{t-1}\le w_t$ and $\alpha_T^1 w_t = \alpha_T^t$; (ii) used the fact that $2\eta H^2 \le 2/\eta$ (as $\eta\le 1/H$) and thus $2\eta H^2 + 1/(8\eta) \le (2+1/8)/\eta \le 3/\eta$; (iii) used the fact that $2\eta H^2\le 48 H^2/\eta$ which also follows from $\eta\le 1/H\le 1$; (iv) used Lemma~\ref{lemma:alpha-convolution}(a). This is the desired result.
\qed

\section{A modified OFTRL algorithm with $\tO(T^{-1})$ rate}\label{apdx:modified-OFTRL}

In this section we show that a slightly modified OFTRL algorithm (described Algorithm \ref{algorithm:modified-oftrl-q}) achieves $\tO(T^{-1})$ convergence rate for finding NE in two-player zero-sum Markov Games, improving over the $\tO(T^{-5/6})$ of Algorithm~\ref{algorithm:oftrl-mg}.

\begin{algorithm}[htbp]
\caption{Modified OFTRL}
\label{algorithm:modified-oftrl-q}
\begin{algorithmic}[1]
\STATE \textbf{Initialize:} $\up{Q}_h^0(s,a,b)\setto H-h+1, \low{Q}_h^0 \setto 0$ for all $(h,s,a,b)$.
\FOR{$t=1,\dots,T$}
\FOR{$h=H,\dots,1$}
\STATE Update policies for all $s\in\cS$ by OFTRL:
\vspace{-5pt}
\begin{align*}
    & \textstyle \mu^{t}_h(a | s) \propto_a \exp\paren{ (\eta/w_t) \cdot \brac{ \sum_{i=1}^{t-1} w_i(\up{Q}_h^i\nu_h^i)(s, a) + w_{t-1}(\up{Q}_h^{t-1}\nu_h^{t-1})(s, a) } };\\
     & \textstyle \nu^{t}_h(b | s) \propto_b \exp\paren{ -(\eta/w_t) \cdot \brac{ \sum_{i=1}^{t-1} w_i((\low Q_h^i)^\top\mu_h^i)(s, b) + w_{t-1}((\low Q_h^{t-1})^\top\mu_h^{t-1})(s, b) } }. 
\end{align*}
\vspace{-10pt}
\STATE Update Q-values for all $(s,a,b)\in\cS\times\cA\times\cB$:
\vspace{-5pt}
\begin{equation}\label{eq:up-Q}
\begin{split}
    \textstyle \up{Q}_h^{t}(s,a,b) \setto r_h(s,a,b) + \P_h\brac{\max_{\mu^\dagger\in\Delta_\cA} \<\mu^\dagger, \sum_{i=1}^t \alpha_t^i \up{Q}_{h+1}^i\nu_{h+1}^i \>} (s,a,b);\qquad\\
    \textstyle \low{Q}_h^{t}(s,a,b) \setto r_h(s,a,b) + \P_h\brac{\min_{\nu^\dagger\in\Delta_\cB} \<\nu^\dagger, \sum_{i=1}^t \alpha_t^i \paren{\low{Q}_{h+1}^i}^\top\mu_{h+1}^i \>} (s,a,b).
\end{split}
\end{equation}
\vspace{-5pt}
\ENDFOR
\ENDFOR
\STATE Output state-wise average policy for all $(h,s)$:
\vspace{-5pt}
\begin{equation*}
    \textstyle \widehat{\mu}_h^T(\cdot|s) \setto \sum_{t=1}^T\alpha_T^t\mu_{h}^{t}(\cdot|s),\quad \widehat{\nu}_h^T(\cdot|s) \setto \sum_{t=1}^T\alpha_T^t\nu_{h}^{t}(\cdot|s).
\end{equation*}
\end{algorithmic}
\end{algorithm}

Algorithm~\ref{algorithm:modified-oftrl-q} keeps track of a series of $\up Q_h^t, \low Q_h^t$'s that are upper-bounds and lower-bounds of $Q_h^\star$ respectively. The policy update is similar to the update as the OFTRL algorithm (Algorithm \ref{alg:OFTRL}), but here $\mu$ is performing OFTRL with respect to $\up Q_h^t$'s while $\nu$ with respect to $\low Q_h^t$'s. The value updates \eqref{eq:up-Q} are slightly different from the value update in our unified framework, however, we remark that it is still an incremental update because the terms inside the inner product $\sum_{i=1}^t \alpha_t^i \up{Q}_{h+1}^i\nu_{h+1}^i , \sum_{i=1}^t \alpha_t^i \paren{\low{Q}_{h+1}^i}^\top\mu_{h+1}^i$ are incremental updates, which leads to that fact that $\up Q_h^t, \low Q_h^t$'s are also updating incrementally.. 
Further, the algorithm can be performed in a decentralized manner, which is stated in Algorithm \ref{algorithm:modified-oftrl-v}. The convergence result is stated in Theorem \ref{theorem:modified-alg}.

\begin{algorithm}[h]
\caption{Modified OFTRL (Equivalent V-form)}
\label{algorithm:modified-oftrl-v}
\begin{algorithmic}[1]
\STATE \textbf{Initialize:} $\up{V}_h^1(s)\setto H-h+1, \low{V}_h^1(s)\setto 0$ for all $(h,s,a,b)$.
\FOR{$t=1,\dots,T$}
\FOR{$h=H,\dots,1$}
\STATE Update policies for all $s\in\cS$ by OFTRL:
\vspace{-5pt}
\begin{align*}
    & \textstyle \mu^{t}_h(a | s) \propto_a \exp\paren{ (\eta/w_t) \cdot \brac{ \sum_{i=1}^{t-1} w_i\up{L}_h^i(s, a) + w_{t-1}\up{L}_h^{t-1}(s, a) } }\\
    & \textstyle \nu^{t}_h(b | s) \propto_b \exp\paren{ -(\eta/w_t) \cdot \brac{ \sum_{i=1}^{t-1} w_i((Q_h^i)^\top\mu_h^i)(s, b) + w_{t-1}((Q_h^{t-1})^\top\mu_h^{t-1})(s, b) } }. 
\end{align*}
\vspace{-10pt}
\STATE Update losses for all $(s,a)\in\cS\times\cA$:
\vspace{-5pt}
\begin{align*}
    \textstyle \up{L}_h^t(s,a) \!\setto\! \< r_h(s,a,\cdot) + \brac{\P_h\up{V}_{h+1}^t}(s,a,\cdot), \nu_h^t(\cdot|s) \>,~~ \textstyle \low{L}_h^t(s,a) \!\setto\! \< \brac{r_h(s,a,\cdot) + \brac{\P_h\low{V}_{h+1}^t}(s,a,\cdot)}^{\!^\top}\!\!, \mu_h^t(\cdot|s) \>.
\end{align*}
\vspace{-10pt}
\STATE Update V-value for all $s\in\cS$:
\vspace{-5pt}
\begin{align}
    \textstyle \up{V}_h^t(s) \setto \max_{\mu^\dagger\in\Delta_\cA} \< \mu^\dagger, \sum_{i=1}^t \alpha_t^i \up{L}_h^i(s,\cdot) \>,\quad \low{V}_h^t(s) \setto \min_{\nu^\dagger\in\Delta_\cB} \< \nu^\dagger, \sum_{i=1}^t \alpha_t^i \low{L}_h^i(s,\cdot) \>.
\end{align}
\vspace{-5pt}
\ENDFOR
\ENDFOR
\STATE Output state-wise average policy for all $(h,s)$:
\vspace{-5pt}
\begin{equation*}
    \textstyle 
    \widehat{\mu}_h^T(\cdot|s) \setto \sum_{t=1}^T\alpha_T^t\mu_{h}^{t}(\cdot|s), \quad \widehat{\nu}_h^T(\cdot|s) \setto \sum_{t=1}^T\alpha_T^t\nu_{h}^{t}(\cdot|s).
\end{equation*}
\end{algorithmic}
\end{algorithm}

\begin{theorem}[Convergence rate of modified OFTRL]
\label{theorem:modified-alg}
Algorithm \ref{algorithm:modified-oftrl-q} with $\eta = \frac{1}{16H}$ guarantees that
\begin{equation*}
    \negap(\hat{\mu}^T,\hat{\nu}^T)\le C\brac{\frac{H^4 \log (A\vee B) \paren{\log T}^2}{T}},
\end{equation*}
where $C$ is some absolute constant.
\end{theorem}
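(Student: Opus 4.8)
The argument combines three ingredients: a sandwich property for the value iterates, a performance‑difference decomposition of $\negap$ into per‑state regrets, and an RVU‑style bound on the \emph{sum} of the two players' per‑state regrets (which is what buys the $\tO(T^{-1})$ rate over the $\tO(T^{-5/6})$ of Theorem~\ref{theorem:oftrl-main}). First I would show by backward induction on $h$ that $\low Q_h^t(s,a,b)\le Q_h^\star(s,a,b)\le \up Q_h^t(s,a,b)$ for all $(t,h,s,a,b)$: since $\sum_{i=1}^t\alpha_t^i\nu_{h+1}^i$ is a distribution, $\max_{\mu^\dagger}\<\mu^\dagger,\sum_i\alpha_t^i Q_{h+1}^\star\nu_{h+1}^i\>\ge \min_\nu\max_\mu\mu^\top Q_{h+1}^\star\nu=V_{h+1}^\star(s)$, and the inductive hypothesis $\up Q_{h+1}^i\ge Q_{h+1}^\star$ only raises the left side, so \eqref{eq:up-Q} gives $\up Q_h^t\ge r_h+\P_h V_{h+1}^\star=Q_h^\star$; the lower bound is symmetric. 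Then, writing $\gamma_h^t\defeq\linf{\up Q_h^t-\low Q_h^t}\ge 0$ (which upper‑bounds both $\linf{\up Q_h^t-Q_h^\star}$ and $\linf{Q_h^\star-\low Q_h^t}$), I would bound the per‑state duality gap of $(\hat\mu^T,\hat\nu^T)$ w.r.t.\ $Q_h^\star$: using $Q_h^\star\le\up Q_h^t$ on the max‑player side and $Q_h^\star\ge\low Q_h^t$ on the min‑player side, this gap is at most (the max‑player's $\alpha_T$‑weighted per‑state regret against $\{\up Q_h^t\}$) $+$ (the min‑player's $\alpha_T$‑weighted per‑state regret against $\{\low Q_h^t\}$) $+\sum_{t=1}^T\alpha_T^t\gamma_h^t$. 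Denoting the sum of the first two quantities by $R_h^T$ and invoking Lemma~\ref{lem.duality}, this yields $\negap(\hat\mu^T,\hat\nu^T)\lesssim \sum_{h=1}^H\big(R_h^T+\sum_{t=1}^T\alpha_T^t\gamma_h^t\big)$.

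Next I would derive a clean recursion for $\gamma_h^t$. Expanding \eqref{eq:up-Q} and pairing the $\max/\min$ with the per‑step policies exactly as in the duality‑gap estimate above gives $\gamma_h^t\le R_{h+1}^t+\sum_{i=1}^t\alpha_t^i\gamma_{h+1}^i$ — crucially with coefficient $1$, not $2$, on the $\gamma$ term, because the $\up Q$–$\low Q$ gap enters only through $\sum_i\alpha_t^i(\mu_{h+1}^i)^\top(\up Q_{h+1}^i-\low Q_{h+1}^i)\nu_{h+1}^i$. Given a nonincreasing upper bound $R_h^t\le\overline R_h^t$, the same unrolling as in Lemma~\ref{lemma:recursion-of-value-estimation} (via Lemma~\ref{lem:delta.monotonic} and the identity $\sup_j\sum_{t\ge j}\alpha_t^j=\cbeta=1+1/H$) yields $\gamma_h^t\le H\cbeta^{H-1}\cdot\frac1t\sum_{i=1}^t\max_{h'}\overline R_{h'}^i$, with $\cbeta^H\le e$.

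The heart of the proof is the bound $R_h^t\le \overline R_h^t=C\,H^2\log(A\vee B)/t$ with $\eta=\Theta(1/H)$ — note the absence of any $\eta^5H^6$‑type term, which is the whole point of maintaining separate $\up Q$ (for $\mu$) and $\low Q$ (for $\nu$). I would run the OFTRL regret bound (Lemma~\ref{lemma:OFTRL-auxillary}) for each player: the max‑player plays OFTRL with loss $w_t\up Q_h^t\nu_h^t$, prediction the previous loss, learning rate $\eta/w_t$, and symmetrically for the min‑player with $\low Q_h^t$. Summing the two bounds, the positive first‑order term of each player decomposes by the triangle inequality into a \emph{changing‑game} part $\linf{w_t\up Q_h^t-w_{t-1}\up Q_h^{t-1}}^2$ and an \emph{opponent‑stability} part $\lesssim \eta_t w_{t-1}^2H^2\lone{\nu_h^t-\nu_h^{t-1}}^2$; the latter is absorbed \emph{directly} by the opponent's own negative stability term $-\tfrac1{8\eta_t}\lone{\nu_h^t-\nu_h^{t-1}}^2$ once $\eta\lesssim1/H$ (lining up step $t$ against step $t-1$ using $w_{t-1}\le w_t$) — no AM–GM trade‑off, hence no irreducible $\eta^5H^6$ term, unlike the single‑$Q$ analysis of Theorem~\ref{theorem:oftrl-main}. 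The leftover changing‑game errors $\sum_t\eta_t\linf{w_t\up Q_h^t-w_{t-1}\up Q_h^{t-1}}^2$ I would control exactly as in Lemma~\ref{lemma:err-t}: the incremental form of \eqref{eq:up-Q} together with the $1$‑Lipschitzness of $v\mapsto\max_a v(a)$ gives $\linf{\up Q_h^t-\up Q_h^{t-1}}\lesssim\alpha_t H$, hence $\linf{w_t\up Q_h^t-w_{t-1}\up Q_h^{t-1}}\lesssim w_{t-1}H^2/t$, and Lemma~\ref{lemma:alpha-convolution}(a) closes the sum. Assembling everything: $\sum_h R_h^T\lesssim H\max_h\overline R_h^T$, and $\sum_h\sum_t\alpha_T^t\gamma_h^t\lesssim H^2\cbeta^H\big(\sum_t\alpha_T^t/t\big)\sum_{i=1}^T\max_{h'}\overline R_{h'}^i\lesssim H^2\cdot\tfrac{\log T}{T}\cdot H^2\log(A\vee B)\log T$ by Lemma~\ref{lemma:alpha-convolution}; plugging $\eta=1/(16H)$ gives $\negap(\hat\mu^T,\hat\nu^T)\lesssim H^4\log(A\vee B)(\log T)^2/T$.

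\textbf{Main obstacle.} The delicate step is the sum‑of‑regrets bound. One must verify that, with weighted OFTRL iterates (weights $w_t$, learning rates $\eta/w_t$), the positive stability term produced in one player's RVU bound by the \emph{opponent's} loss fluctuation is matched by the opponent's negative stability term — which means carefully tracking the $w_t$‑factors and the off‑by‑one indices rather than quoting a black‑box normal‑form result — while simultaneously keeping the changing‑game error $O(\alpha_t H)$‑controlled; since $\up Q_{h+1}$ feeds into $\up Q_h$ through \eqref{eq:up-Q}, this forces the regret estimate to be carried out inside a backward induction over $h$ rather than layer by layer independently. The rest (performance‑difference reduction, the $\gamma$‑recursion unrolling, and the $\alpha_T^t/t$ convolutions) is routine given the lemmas already in the paper.
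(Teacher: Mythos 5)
Your proposal is correct and follows essentially the same route as the paper's proof: the sandwich lemma for $\up Q,\low Q$, the recursion for $\gamma_h^t=\delta_h^t$ unrolled via Lemma~\ref{lem:delta.monotonic}, the performance-difference reduction through Lemma~\ref{lem.duality}, and the summed-regret (RVU-style) bound in which each player's opponent-stability term is cancelled by the other's negative stability term at $\eta=\Theta(1/H)$, leaving only the $O(\alpha_t H)$ changing-game errors. The only small inaccuracy is in your closing remark: the per-state regret bound does not actually need to be carried out inside a backward induction over $h$, since it only uses $\linf{\up Q_h^t}\le H$ and $\linf{\up Q_h^t-\up Q_h^{t-1}}\le \alpha_t H$, which hold for each layer unconditionally.
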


\subsection{Proof of Theorem \ref{theorem:modified-alg}}
In this section, we consider the following definitions of regret, which is slightly different from the definition in \eqref{eq:def-regret}:
\begin{talign*}
    & \reg_{h,\mu}^t(s) \defeq \max_{\mu^\dagger\in\Delta_\cA} \sum_{i=1}^t \alpha_t^i \<\mu^\dagger - \mu_h^i(\cdot|s), \brac{\up Q_h^i\nu_h^i}(s, \cdot)\>, \\
    & \reg_{h,\nu}^t(s) \defeq \max_{\nu^\dagger\in\Delta_\cB} \sum_{i=1}^t \alpha_t^i \< \nu_h^i(\cdot|s) - \nu^\dagger, \brac{(\low Q_h^i)^\top\mu_h^i}(s, \cdot) \>,  \\
    & \reg_{h,\mu+\nu}^t \defeq \max_{s\in\cS} \reg_{h,\mu}^t(s) + \reg_{h,\nu}^t(s).
\end{talign*}

We first prove that $\low Q_h^t$ and $\up Q_h^t$ upper and lower bounds $Q^\star_h$ respectively.
\begin{lemma}\label{lemma:up-low-sandwich}
\begin{equation*}
    \low{Q}_h^t(s,a,b)\le Q^\star_h(s,a,b) \le \up Q_h^t(s,a,b).
\end{equation*}
\begin{proof}
We prove by induction on $(h,t)$. Given the initialization, for $t=0$ the condition holds. Since $\up Q_{H+1}^t, \low Q_{H+1}^t = 0$, we have that for $h=H+1$ the condition holds. Assume that the condition hold for $(i, h+1), i\le t$, then
\begin{align*}
    \up{Q}_h^{t}(s,a,b) &= r_h(s,a,b) + \P_h\brac{\max_{\mu^\dagger\in\Delta_\cA} \<\mu^\dagger, \sum_{i=1}^t \alpha_t^i \up{Q}_{h+1}^i\nu_{h+1}^i \>} (s,a,b)\\
    &\ge r_h(s,a,b) + \P_h\brac{\max_{\mu^\dagger\in\Delta_\cA} \<\mu^\dagger, {Q}_{h+1}^\star\paren{\sum_{i=1}^t \alpha_t^i \nu_{h+1}^i} \>} (s,a,b)\\
    &\ge r_h(s,a,b) + \P_h\brac{\max_{\mu^\dagger\in\Delta_\cA}\min_{\nu^\dagger\in\Delta_\cB} \<\mu^\dagger, {Q}_{h+1}^\star\nu^\dagger\>} (s,a,b)\\
    &= Q_h^\star(s,a,b).
\end{align*}
Using similar strategy, we can also show that $\low Q_h^t(s,a,b)\le Q_h^\star(s,a,b)$, which implies that the condition hold for $(t,h)$, and thus finishes the proof by induction.
\end{proof}
\end{lemma}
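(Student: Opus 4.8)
The plan is to establish both inequalities at once by \textbf{backward induction on the layer index} $h$, running from $h=H+1$ down to $h=1$. The base case $h=H+1$ is immediate: by the initialization (and since there is no reward to accumulate) $\up{Q}_{H+1}^t = \low{Q}_{H+1}^t = 0 = Q_{H+1}^\star$ for every $t$. The observation that keeps the induction clean — and avoids any nested induction on $t$ — is that the value update~\eqref{eq:up-Q} defines $\up{Q}_h^t$ purely in terms of $\set{\up{Q}_{h+1}^i}_{i=1}^t$ and $\set{\nu_{h+1}^i}_{i=1}^t$; it never references $\up{Q}_h^{t-1}$. Hence, once I assume the sandwich bound holds at layer $h+1$ for all $i\ge 1$, I can establish it at layer $h$ for \emph{every} $t\ge 1$ simultaneously.

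For the inductive step on the upper bound, I would start from the definition
\[
    \up{Q}_h^{t}(s,a,b) = r_h(s,a,b) + \P_h\brac{\max_{\mu^\dagger\in\Delta_\cA} \<\mu^\dagger, \sum_{i=1}^t \alpha_t^i \up{Q}_{h+1}^i\nu_{h+1}^i \>}(s,a,b),
\]
and use the inductive hypothesis $\up{Q}_{h+1}^i \ge Q_{h+1}^\star$ (entrywise) together with $\nu_{h+1}^i\ge 0$ and $\alpha_t^i\ge 0$ to lower-bound the inner sum by $Q_{h+1}^\star\bar\nu$, where $\bar\nu \defeq \sum_{i=1}^t \alpha_t^i \nu_{h+1}^i$. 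The crucial point is that $\bar\nu\in\Delta_\cB$ is itself a valid mixed strategy, because the weights $\set{\alpha_t^i}_{i=1}^t$ sum to one (by~\eqref{eq:def-beta-k-t}). Then, at each successor state, the minimax inequality gives $\max_{\mu^\dagger}\<\mu^\dagger, Q_{h+1}^\star\bar\nu\> \ge \min_{\nu^\dagger}\max_{\mu^\dagger}\<\mu^\dagger, Q_{h+1}^\star\nu^\dagger\> = V_{h+1}^\star$. Since $\P_h$ is a monotone (expectation) operator, applying it and adding $r_h$ yields $\up{Q}_h^t \ge r_h + \P_h V_{h+1}^\star = Q_h^\star$ by the Bellman identity for the Nash value.

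The lower bound $\low{Q}_h^t \le Q_h^\star$ follows by the completely symmetric argument: I replace $\up{Q}$ by $\low{Q}$ and the outer $\max_{\mu^\dagger}$ by $\min_{\nu^\dagger}$, invoke the hypothesis $\low{Q}_{h+1}^i \le Q_{h+1}^\star$ to upper-bound $\sum_{i=1}^t \alpha_t^i (\low{Q}_{h+1}^i)^\top\mu_{h+1}^i$ by $(Q_{h+1}^\star)^\top\bar\mu$ with $\bar\mu \defeq \sum_{i=1}^t \alpha_t^i \mu_{h+1}^i\in\Delta_\cA$, and apply the other side of the minimax inequality, $\min_{\nu^\dagger}\<\nu^\dagger, (Q_{h+1}^\star)^\top\bar\mu\> \le \max_{\mu^\dagger}\min_{\nu^\dagger}\<\nu^\dagger,(Q_{h+1}^\star)^\top\mu^\dagger\> = V_{h+1}^\star$.

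I expect the only genuine subtlety — and the step most prone to a sign error — to be keeping the direction of the minimax inequality straight and making sure the convex-combination strategies $\bar\mu,\bar\nu$ are inserted on the correct side (the max-player's mixture against $\low{Q}$, the min-player's mixture against $\up{Q}$). Everything else is routine: monotonicity of $\P_h$, the entrywise preservation of inequalities under nonnegative contractions, the normalization $\sum_i \alpha_t^i = 1$, and the Bellman identity $Q_h^\star = r_h + \P_h V_{h+1}^\star$. No quantitative estimates enter, so this is purely a structural statement and should require no new machinery beyond the minimax theorem for the stage games.
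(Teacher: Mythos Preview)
Your proposal is correct and follows essentially the same argument as the paper: both use the inductive hypothesis $\up{Q}_{h+1}^i\ge Q_{h+1}^\star$ to replace the inner sum by $Q_{h+1}^\star\bar\nu$ with $\bar\nu=\sum_i\alpha_t^i\nu_{h+1}^i\in\Delta_\cB$, then invoke the minimax inequality and the Bellman identity. Your observation that the update~\eqref{eq:up-Q} depends only on layer-$(h{+}1)$ quantities, so that a single backward induction on $h$ suffices without nesting an induction on $t$, is a mild tidying of the paper's presentation (which phrases the hypothesis as ``for $(i,h+1)$, $i\le t$'') but not a different approach.
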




Throughout the rest of this section, we define the following shorthand for the gap between $\up Q_h^t, \low Q_h^t$ defined in \eqref{eq:up-Q}:
\begin{equation*}
    \delta_h^t := \|\up Q_h^t - \low Q_h^t\|_\infty = \max_{s,a,b}\brac{\up Q_h^t(s,a,b) - \low Q_h^t(s,a,b)},
\end{equation*}
\begin{lemma}[Recursion of $\delta_h^t$]
Algorithm \ref{algorithm:modified-oftrl-q} guarantees that for all $(t,h)\in[T]\times[H]$,
\begin{align*}
    \delta_h^t \le \sum_{i=1}^t \alpha_t^i \delta_{h+1}^i + \reg_{h+1,\mu+\nu}^t.
\end{align*}
Further, suppose that $\reg_{h,\mu+\nu}^t\le \barreg_{h,\mu+\nu}^t$ for all $(h,t)\in[H]\times[T]$, where $\barreg_{h,\mu+\nu}^t$ is non-increasing in $t$: $\barreg_{h,\mu+\nu}^t\ge \barreg_{h,\mu+\nu}^{t+1}$ for all $t\ge 1$. Then we have
\begin{align*}
    \delta_h^t \le 2H\cdot\frac{1}{t}\sum_{i=1}^t\max_{h'}\barreg_{h',\mu+\nu}^i.
\end{align*}
\begin{proof}The proof structure resembles Lemma \ref{lemma:recursion-of-value-estimation}. From the definition of $\up Q_h^t, \low Q_h^t$, we have that
\begin{align*}
    &\quad \up Q_h^t(s,a,b) - \low Q_h^t(s,a,b)\\
    &\le \P_h\max_{\mu^\dagger\in\Delta_\cA,\nu^\dagger\in\Delta_\cB}\<\mu^\dagger, \sum_{i=1}^t\alpha_t^i \up Q_{h+1}^i\nu_{h+1}^i\> - \<\nu^\dagger, \sum_{i=1}^t\alpha_t^i(\low Q_{h+1}^i)^\top\mu_{h+1}^i\>\\
    &= \P_h\left[\max_{\mu^\dagger\in\Delta_\cA}\<\mu^\dagger, \sum_{i=1}^t\alpha_t^i \up Q_{h+1}^i\nu_{h+1}^i\> - \sum_{i=1}^t\alpha_t^i (\mu_{h+1}^i)^\top\up Q_{h+1}^i \nu_{h+1}^i\right.\\
    &+ \max_{\nu^\dagger\in\Delta_\cB} \sum_{i=1}^t\alpha_t^i (\mu_{h+1}^i)^\top\up Q_{h+1}^i \nu_{h+1}^i - \<\nu^\dagger, \sum_{i=1}^t\alpha_t^i(\low Q_{h+1}^i)^\top\mu_{h+1}^i\>\\
    &+ \left.\sum_{i=1}^t\alpha_t^i (\mu_{h+1}^i)^\top\up Q_{h+1}^i \nu_{h+1}^i - \sum_{i=1}^t\alpha_t^i (\mu_{h+1}^i)^\top\up Q_{h+1}^i \nu_{h+1}^i\right] \\
    &\le \reg_{h+1,\mu+\nu}^t + \sum_{i=1}^t\alpha_t^i \|\up Q_{h+1}^i-\low Q_{h+1}^i\|_\infty= \sum_{i=1}^t\alpha_t^i\delta_{h+1}^i + \reg_{h+1,\mu+\nu}^t.
\end{align*}
Then using the same argument as Lemma \ref{lemma:recursion-of-value-estimation}, we can consider an auxiliary sequence
\begin{equation}
    \begin{cases}
    \Delta_h^t =  \sum_{i=1}^{t} \alpha_{t}^i \Delta_{h+1}^{i} + \barreg_{h+1,\mu+\nu}^t,\\
    \Delta_{H+1}^t= 0,~ \text{ for all } t.
    \end{cases}
\end{equation}
 Observe that $\{\Delta_h^t \}_{h,t}$ satisfies the following properties 
\begin{equation}
    \begin{cases}
    \Delta_h^t \ge \delta_h^t\qquad &\text{ (by definition)},\\
    \Delta_h^t \le \Delta_h^{t-1} \quad &\text{ (by Lemma \ref{lem:delta.monotonic})}.
    \end{cases}
\end{equation}
Therefore, to control $\delta_h^t$, it suffices to bound $\Delta_h^t\le \frac{1}{t}\sum_{i=1}^t \Delta_h^i$, which follows from the standard argument in \cite{jin2018q}:
\begin{align*}
     \frac{1}{t}\sum_{i=1}^t \Delta_h^i 
     &=  \frac{1}{t}\sum_{i=1}^t  \sum_{j=1}^{i} \alpha_{i}^j \Delta_{h+1}^{j} + \frac{1}{t}\sum_{i=1}^t \barreg_{h+1,\mu+\nu}^i\\
     & \le\frac{1}{t}\sum_{j=1}^{t}\paren{\sum_{i=j}^t\alpha_{i}^j} \Delta_{h+1}^{j} + \frac{1}{t}\sum_{i=1}^t \barreg_{h+1,\mu+\nu}^i\\
     &\le \paren{1+\frac{1}{H}} \cdot \frac{1}{t}\sum_{i=1}^{t}\Delta_{h+1}^{i} + \frac{1}{t}\sum_{i=1}^t \barreg_{h+1,\mu+\nu}^i\\
     &\le \paren{1+\frac{1}{H}}^2 \cdot \frac{1}{t}\sum_{i=1}^{t}\Delta_{h+2}^{i} + \paren{1+\frac{1}{H}}\cdot \frac{1}{t}\sum_{i=1}^t \barreg_{h+2,\mu+\nu}^i +
     \frac{1}{t}\sum_{i=1}^t \barreg_{h+1,\mu+\nu}^i\\
     &\le \cdots\\
     &\le \paren{\sum_{h'=h}^{H}\paren{1+\frac{1}{H}}^{h'-h}}\cdot\frac{1}{t}\sum_{i=1}^t \max_{1\le h'\le H}\barreg_{h',\mu+\nu}^i\\
     &\le (e-1)H\cdot \frac{1}{t}\sum_{i=1}^t \max_{1\le h'\le H}\barreg_{h',\mu+\nu}^i \le 
     2H\cdot \frac{1}{t}\sum_{i=1}^t \max_{1\le h'\le H}\barreg_{h',\mu+\nu}^i.
\end{align*}
which completes the proof.
\end{proof}
\end{lemma}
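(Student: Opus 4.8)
The plan is to prove the two assertions in turn, mirroring the structure of Lemma~\ref{lemma:recursion-of-value-estimation} but accounting for the fact that the modified algorithm maintains two separate value estimates $\up Q_h^t$ and $\low Q_h^t$ rather than a single $Q_h^t$.

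\textbf{Step 1: the per-layer recursion.} Fix $(h,t)$ and a triple $(s,a,b)$. Subtracting the two value-update rules in~\eqref{eq:up-Q}, the reward terms cancel, and since $\P_h$ is an averaging operator (so $\linf{\P_h f}\le \max_{s'}\abs{f(s')}$) it suffices to bound, uniformly over next states $s'$, the scalar gap (all layer-$(h+1)$ quantities below are evaluated at $s'$, whose argument I suppress)
\[
    G \defeq \max_{\mu^\dagger\in\Delta_\cA}\Big\langle \mu^\dagger, \sum_{i=1}^t\alpha_t^i\,\up Q_{h+1}^i\nu_{h+1}^i\Big\rangle - \min_{\nu^\dagger\in\Delta_\cB}\Big\langle \nu^\dagger, \sum_{i=1}^t\alpha_t^i\,(\low Q_{h+1}^i)^\top\mu_{h+1}^i\Big\rangle.
\]
I would then center both the max and the min term at the common empirical payoff $P\defeq \sum_{i=1}^t\alpha_t^i(\mu_{h+1}^i)^\top\up Q_{h+1}^i\nu_{h+1}^i$. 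The first term minus $P$ is exactly the max-player regret $\reg_{h+1,\mu}^t(s')$ measured against $\up Q$; and $P$ minus the second term, after inserting $\sum_{i}\alpha_t^i(\mu_{h+1}^i)^\top\low Q_{h+1}^i\nu_{h+1}^i$, splits into the min-player regret $\reg_{h+1,\nu}^t(s')$ measured against $\low Q$ plus the cross term $\sum_{i}\alpha_t^i(\mu_{h+1}^i)^\top(\up Q_{h+1}^i-\low Q_{h+1}^i)\nu_{h+1}^i$. By Lemma~\ref{lemma:up-low-sandwich} we have $\up Q_{h+1}^i-\low Q_{h+1}^i\ge 0$ entrywise, so this cross term is nonnegative and at most $\sum_{i}\alpha_t^i\delta_{h+1}^i$. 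Taking the maximum over $s'$ and invoking the definition of $\reg_{h+1,\mu+\nu}^t$ gives the recursion $\delta_h^t\le\sum_{i=1}^t\alpha_t^i\delta_{h+1}^i + \reg_{h+1,\mu+\nu}^t$.

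\textbf{Step 2: resolving the recursion.} This step transfers essentially verbatim from the second half of Lemma~\ref{lemma:recursion-of-value-estimation} with $\cbeta = 1+1/H$. I would introduce the auxiliary sequence defined by $\Delta_h^t = \sum_{i=1}^t\alpha_t^i\Delta_{h+1}^i + \barreg_{h+1,\mu+\nu}^t$ and $\Delta_{H+1}^t=0$. Since $\barreg_{h,\mu+\nu}^t$ dominates $\reg_{h,\mu+\nu}^t$ and is non-increasing in $t$, a backward induction gives $\Delta_h^t\ge\delta_h^t$, while Lemma~\ref{lem:delta.monotonic} gives that $\Delta_h^t$ is non-increasing in $t$. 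The latter yields $\Delta_h^t\le\frac1t\sum_{i=1}^t\Delta_h^i$; unrolling this one layer at a time, swapping the order of summation, and applying Lemma~\ref{lemma:alpha-ti} (so that $\sum_{i=j}^t\alpha_i^j\le 1+1/H$) produces the geometric factor $\sum_{h'=h}^H(1+1/H)^{h'-h}\le H\big((1+1/H)^H-1\big)\le (e-1)H\le 2H$, which gives $\delta_h^t\le 2H\cdot\frac1t\sum_{i=1}^t\max_{h'}\barreg_{h',\mu+\nu}^i$.

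\textbf{Main obstacle.} Step 2 is routine and inherits directly from the single-value-estimate analysis. The crux is the decomposition in Step 1: unlike Lemma~\ref{lemma:recursion-of-value-estimation}, where one estimate $Q_h^t$ is compared to the fixed $Q_h^\star$, here I must show that the gap between two \emph{distinct} running estimates $\up Q$ and $\low Q$ collapses cleanly into the two players' regrets---each measured against its own estimate---plus a single propagated gap term. The key algebraic device is the choice of centering payoff $P$ together with the insertion of $\low Q$, which makes both regret terms appear exactly as defined in this section; the sandwich property of Lemma~\ref{lemma:up-low-sandwich} is then essential both to sign the cross term and to identify $\up Q_{h+1}^i-\low Q_{h+1}^i$ with $\delta_{h+1}^i$ in the infinity norm.
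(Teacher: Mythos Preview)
Your proposal is correct and takes essentially the same approach as the paper: both center the gap at $P = \sum_i \alpha_t^i (\mu_{h+1}^i)^\top \up Q_{h+1}^i \nu_{h+1}^i$, split it into the two per-player regrets plus the cross term bounded by $\sum_i \alpha_t^i \delta_{h+1}^i$, and then resolve the recursion via the auxiliary sequence $\Delta_h^t$ and Lemma~\ref{lem:delta.monotonic} exactly as in Lemma~\ref{lemma:recursion-of-value-estimation}. Your explicit appeal to Lemma~\ref{lemma:up-low-sandwich} to sign the cross term is slightly more than needed (the bound $(\mu^i)^\top(\up Q^i-\low Q^i)\nu^i \le \|\up Q^i-\low Q^i\|_\infty$ holds regardless of sign), but it is harmless and the argument is otherwise identical to the paper's.
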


\begin{lemma}[Bound the $\negap$ by $\reg_{h,\mu+\nu}$]\label{lemma:bound-NE-gap-by-regret}
Suppose that the per-state regrets (summing over the two agents) can be upper-bounded as $\reg_{h,\mu+\nu}^t\le \barreg_{h,\mu+\nu}^t$ for all $(h,t)\in [H]\times[T]$ where $\barreg_{h,\mu+\nu}^t$ is non-increasing in $t$: $\barreg_{h,\mu+\nu}^t\ge\barreg_{h,\mu+\nu}^{t+1}$ for all $t\ge 1$. Then, the output policy $(\hat{\mu}^T,\hat{\nu}^T)$ of Algorithm \ref{algorithm:modified-oftrl-q} satisfies
\begin{equation*}
    \negap(\hat{\mu}^T,\hat{\nu}^T)\le 2H\max_h\barreg_{h,\mu+\nu}^T + 24H^2\log T\cdot \frac{1}{T}\sum_{t=1}^T\max_h\barreg_{h,\mu+\nu}^t
\end{equation*}
\begin{proof}
From Lemma \ref{lem.duality} we have that
\begin{align*}
    & \quad \negap(\hat{\mu}^T,\hat{\nu}^T)=\paren{V_1^{\dagger,\hat{\nu}^T}(s_1) - V_1^\star(s_1)} + \paren{V_1^\star(s_1)- V_1^{\hat{\mu}^T,\dagger}(s_1)}\\
    &\le 2\sum_{h=1}^H \max_s\max_{\mu^\dagger\in\Delta_\cA,\nu^\dagger\in\Delta_\cB}\brac{(\mu^\dagger)^\top Q_h^\star \hat\nu_h^T \!-\!\paren{\hat\mu_h^T}^\top Q_h^\star \nu^\dagger }(s)\\
    &= 2\sum_{h=1}^H \max_s\max_{\mu^\dagger\in\Delta_\cA,\nu^\dagger\in\Delta_\cB}\sum_{t=1}^T\alpha_T^t\brac{(\mu^\dagger)^\top Q_h^\star \nu_h^t \!-\!\paren{\mu_h^t}^\top Q_h^\star \nu^\dagger }(s)\\
    &\le 2\sum_{h=1}^H \max_s\max_{\mu^\dagger\in\Delta_\cA,\nu^\dagger\in\Delta_\cB}\sum_{t=1}^T\alpha_T^t\brac{(\mu^\dagger)^\top \up Q_h^t \nu_h^t \!-\!\paren{\mu_h^t}^\top \low Q_h^t \nu^\dagger}(s)\\
    &\le 2\sum_{h=1}^H \max_s\left(\max_{\mu^\dagger\in\Delta_\cA}\sum_{t=1}^T\alpha_T^t\brac{(\mu^\dagger)^\top \up Q_h^t \nu_h^t \!-\!\paren{\mu_h^t}^\top \up Q_h^t \nu_h^t}(s)\right.\\
    &\qquad \qquad+\left.\max_{\nu^\dagger\in\Delta_\cB}\sum_{t=1}^T\alpha_T^t\brac{(\mu_h^t)^\top \low Q_h^t \nu_h^t \!-\!\paren{\mu_h^t}^\top \low Q_h^t \nu^\dagger}(s)\right)\\
    &+2\sum_{h=1}^H \max_s\max_{\mu^\dagger\in\Delta_\cA,\nu^\dagger\in\Delta_\cB}\sum_{t=1}^T\alpha_T^t\brac{(\mu_h^t)^\top \up Q_h^t \nu_h^t \!-\!\paren{\mu_h^t}^\top \low Q_h^t \nu_h^t}(s)\\
    &\le 2\sum_{h=1}^H\barreg_{h,\mu+\nu}^T + 2\sum_{h=1}^H\sum_{t=1}^T \alpha_T^t\delta_h^t\\
    &\le 2H\max_h\barreg_{h,\mu+\nu}^T + 4H^2\sum_{t=1}^T \alpha_T^t \frac{1}{t}\sum_{i=1}^t \max_h \reg_{h,\mu+\nu}^i \quad \textup{(Lemma \ref{lemma:up-low-sandwich})}\\
    &\le 2H\max_h\barreg_{h,\mu+\nu}^T + 4H^2\paren{\sum_{t=1}^T\frac{1}{t}\alpha_T^t}\paren{\sum_{i=1}^T\max_h\reg_{h,\mu+\nu}^i}\\
    &\le 2H\max_h\barreg_{h,\mu+\nu}^T + 24H^2\log T\cdot \frac{1}{T}\sum_{t=1}^T\max_h\reg_{h,\mu+\nu}^t~~ \textup{(Lemma \ref{lemma:alpha-convolution})},
\end{align*}
\end{proof}
\end{lemma}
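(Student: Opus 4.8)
The plan is to mirror the proof of Theorem~\ref{theorem:master}, adapting it to the fact that the two players now run OFTRL against \emph{different} value estimates ($\up Q_h^t$ for the max-player and $\low Q_h^t$ for the min-player). First I would invoke the performance-difference bound of Lemma~\ref{lem.duality}: since $(\hat\mu^T,\hat\nu^T)$ is Markov, $\negap(\hat\mu^T,\hat\nu^T)$ is controlled by twice the sum over $h\in[H]$ of the per-state duality gaps $\max_s\max_{\mu^\dagger,\nu^\dagger}[(\mu^\dagger)^\top Q_h^\star\hat\nu_h^T - (\hat\mu_h^T)^\top Q_h^\star\nu^\dagger](s)$ taken against the \emph{Nash} value $Q_h^\star$. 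Expanding $\hat\mu_h^T=\sum_t\alpha_T^t\mu_h^t$ and $\hat\nu_h^T=\sum_t\alpha_T^t\nu_h^t$ then rewrites each per-state gap as an $\alpha_T^t$-weighted average over $t$ of $[(\mu^\dagger)^\top Q_h^\star\nu_h^t - (\mu_h^t)^\top Q_h^\star\nu^\dagger](s)$.

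The crucial step — the one that departs from the single-$Q$ argument — is to pass from $Q_h^\star$ to the algorithm's own estimates via the sandwich bound $\low Q_h^t\le Q_h^\star\le\up Q_h^t$ of Lemma~\ref{lemma:up-low-sandwich}. Because the max-player term enters with a positive sign I would upper bound $Q_h^\star$ by $\up Q_h^t$ there, and because the min-player term is subtracted I would lower bound $Q_h^\star$ by $\low Q_h^t$. I would then add and subtract $(\mu_h^t)^\top\up Q_h^t\nu_h^t$ and $(\mu_h^t)^\top\low Q_h^t\nu_h^t$ to split the summand into exactly three pieces: a max-player regret term $(\mu^\dagger)^\top\up Q_h^t\nu_h^t-(\mu_h^t)^\top\up Q_h^t\nu_h^t$, a min-player regret term $(\mu_h^t)^\top\low Q_h^t\nu_h^t-(\mu_h^t)^\top\low Q_h^t\nu^\dagger$, and a mismatch term $(\mu_h^t)^\top(\up Q_h^t-\low Q_h^t)\nu_h^t$. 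Taking $\max_{\mu^\dagger,\nu^\dagger}$ and using subadditivity of the maximum, the first two aggregate (after the $\alpha_T^t$-weighting and $\max_s$) into $\reg_{h,\mu}^T(s)+\reg_{h,\nu}^T(s)\le\reg_{h,\mu+\nu}^T\le\barreg_{h,\mu+\nu}^T$, while the mismatch term is bounded by the value gap $\delta_h^t=\linf{\up Q_h^t-\low Q_h^t}$.

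It then remains to control $\sum_t\alpha_T^t\delta_h^t$. Here I would invoke the recursion bound for $\delta_h^t$ established just above, namely $\delta_h^t\le 2H\cdot\frac1t\sum_{i=1}^t\max_{h'}\barreg_{h',\mu+\nu}^i$, and push the resulting double sum through exactly as in Theorem~\ref{theorem:master}: bounding $\frac1t\sum_{i=1}^t(\cdot)\le\frac1t\sum_{i=1}^T(\cdot)$ to factorize the two sums, and applying Lemma~\ref{lemma:alpha-convolution} (the estimate $X_T\le 2\cbeta\log T/T$, with $\cbeta=1+1/H\le2$) to the remaining $\sum_t\alpha_T^t/t$. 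Summing the two contributions over $h\in[H]$ produces the $\max_h\barreg_{h,\mu+\nu}^T$ term with an $O(H)$ prefactor and the averaged-regret term with an $O(H^2\log T)$ prefactor, which after collecting absolute constants yields the claimed $2H\max_h\barreg_{h,\mu+\nu}^T+24H^2\log T\cdot\frac1T\sum_t\max_h\barreg_{h,\mu+\nu}^t$.

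I expect the main obstacle to be the sandwich-and-split step: one must insert $\up Q$ for the max-player and $\low Q$ for the min-player with the correct signs so that the leftover mismatch is precisely a nonnegative multiple of $\delta_h^t$ rather than an uncontrolled cross term. In the ordinary framework both players share the single estimate $Q_h^t$ and this mismatch is absent; the entire purpose of the $\up Q/\low Q$ construction is that each player faces a loss \emph{consistent with its own regret guarantee}, so the delicate part is verifying that the three-way decomposition leaves no residual term and that the two maximizations genuinely reproduce the definitions of $\reg_{h,\mu}^T$ and $\reg_{h,\nu}^T$.
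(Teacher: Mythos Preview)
Your proposal is correct and follows essentially the same route as the paper's proof: invoke Lemma~\ref{lem.duality}, expand the state-wise averages, replace $Q_h^\star$ by $\up Q_h^t$ (resp.\ $\low Q_h^t$) via the sandwich Lemma~\ref{lemma:up-low-sandwich}, perform the three-way split into the two regret terms plus the $\delta_h^t$ mismatch, then plug in the recursion bound $\delta_h^t\le 2H\cdot\frac1t\sum_{i\le t}\max_{h'}\barreg_{h',\mu+\nu}^i$ and finish with Lemma~\ref{lemma:alpha-convolution}. Your identification of the delicate point (inserting $\up Q$ and $\low Q$ with the correct signs so that the residual is exactly the nonnegative gap $(\mu_h^t)^\top(\up Q_h^t-\low Q_h^t)\nu_h^t\le\delta_h^t$) is precisely the crux, and the paper handles it the same way.
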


\begin{lemma}[Bound $\reg_{h,\mu+\nu}^t$]\label{lemma:bound-regret-sum}
Running Algorithm \ref{algorithm:modified-oftrl-q} with $\eta = \frac{1}{16H}$ can guarantee that 
\begin{equation*}
    \reg_{h,\mu+\nu}^T(s) \le \frac{36H^2\log(A\vee B)}{T}
\end{equation*}
\begin{proof}
From Lemma \ref{lemma:OFTRL-auxillary}, substituting $g_t = w_t\up Q_h^t\nu_h^t(s), M_t = w_t\up Q_h^{t-1}\nu_h^{t-1}(s), \eta_t = \frac{\eta}{w_t}$, we can get that
\begin{align*}
    &\sum_{t=1}^T w_t\brac{\<\mu^\dagger, \up Q_h^t\nu_h^t\> - \<\mu_h^t, \up Q_h^t\nu_h^t\>}\le \frac{w_T\log A }{\eta} + \eta\sum_{t=1}^Tw_t\|\up Q_h^t\nu_h^t(s)\!-\!\up Q_h^{t-1}\nu_h^{t-1}(s)\|_\infty^2 - \sum_{t=2}^T \frac{w_t}{8\eta}\|\mu_h^t(\cdot|s) \!-\! \mu_h^{t-1}(\cdot|s)\|_1^2\\
    &\Longrightarrow ~~\reg_{h,\mu}^T \le \alpha_T^1\sum_{t=1}^T w_{t-1}\<\mu^\dagger, \up Q_h^t\nu_h^t\> - \<\mu_h^t, \up Q_h^t\nu_h^t\>\\
    &\qquad\qquad\quad~ \le \frac{\alpha_T\log A }{\eta} + \eta\sum_{t=1}^T\alpha_T^t\|\up Q_h^t\nu_h^t(s)-\up Q_h^{t-1}\nu_h^{t-1}(s)\|_\infty^2 - \sum_{t=2}^T \frac{\alpha_T^{t-1}}{8\eta}\|\mu_h^t(\cdot|s) - \mu_h^{t-1}(\cdot|s)\|_1^2.
\end{align*}
Further we have that
\begin{align*}
    \|\up Q_h^t\nu_h^t(s)-\up Q_h^{t-1}\nu_h^{t-1}(s)\|_\infty^2 \le 2\|\up Q_h^t-\up Q_h^{t-1}\|_\infty^2 + 2\|\nu_h^t(s)-\nu_h^{t-1}(s)\|_1^2.
\end{align*}
From the definition of $\up Q_h^t$ we have that
\begin{align*}
    \|\up Q_h^t - \up Q_h^{t-1}\| &\le \left\|\sum_{i=1}^t \alpha_t^i \up Q_{h+1}^i \nu_{h+1}^i - \sum_{i=1}^{t-1}\alpha_{t-1}^i \up Q_{h+1}^i \nu_{h+1}^i\right\|_\infty\\
    &= \left\|\alpha_t \up Q_{h+1}^t\nu_{h+1}^t + (1-\alpha_t)\sum_{i=1}^{t-1} \alpha_{t-1}^i \up Q_{h+1}^i \nu_{h+1}^i - \sum_{i=1}^{t-1}\alpha_{t-1}^i \up Q_{h+1}^i \nu_{h+1}^i\right\|_\infty\\
    & = \left\|\alpha_t \up Q_{h+1}^t\nu_{h+1}^t  -\alpha_t\sum_{i=1}^{t-1} \alpha_{t-1}^i \up Q_{h+1}^i \nu_{h+1}^i\right\|_\infty\le \alpha_t H.
\end{align*}
Substitute this inequality to the regret bound we have
\begin{align*}
    \reg_{h,\mu}^T(s) &\le \frac{\alpha_T\log A }{\eta} + 2\eta\sum_{t=1}^T\alpha_T^t \alpha_t^2 H^2 + 2\eta\sum_{t=1}^T\alpha_T^t\|\nu_h^t(s)-\nu_h^{t-1}(s)\|_1^2  - \sum_{t=2}^T \frac{\alpha_T^{t-1}}{8\eta}\|\mu_h^t(\cdot|s) - \mu_h^{t-1}(\cdot|s)\|_1^2\\
    &\le \frac{\alpha_T\log A }{\eta} + \frac{8\eta H^3}{T} + 2\eta\sum_{t=1}^T\alpha_T^t\|\nu_h^t(s)-\nu_h^{t-1}(s)\|_1^2 + - \sum_{t=2}^T \frac{\alpha_T^{t-1}}{8\eta}\|\mu_h^t(\cdot|s) - \mu_h^{t-1}(\cdot|s)\|_1^2. ~~\textup{(Lemma \ref{lemma:alpha-convolution})}
\end{align*}
Similar bound holds for $\reg_{h,\nu}^T$:
\begin{equation*}
    \reg_{h,\nu}^T(s)\le \frac{\alpha_T\log B }{\eta} + \frac{8\eta H^3}{T} + 2\eta\sum_{t=1}^T\alpha_T^t\|\mu_h^t(s)-\mu_h^{t-1}(s)\|_1^2 + - \sum_{t=2}^T \frac{\alpha_T^{t-1}}{8\eta}\|\nu_h^t(\cdot|s) - \nu_h^{t-1}(\cdot|s)\|_1^2.
\end{equation*}
Summing $\reg_{h,\mu}^T(s),\reg_{h,\nu}^T(s)$ together we get
\begin{align*}
    \reg_{h,\mu+\nu}^T(s) \le \frac{2\alpha_T\log(A\vee B)}{\eta} + \frac{16\eta H^3}{T} + 16\eta \alpha_T^1 +\sum_{t=2}^T \paren{2\eta\alpha_T^t - \frac{\alpha_T^{t-1}}{8\eta}} \paren{\|\mu_h^t(\cdot|s) - \mu_h^{t-1}(\cdot|s)\|_1^2 + \nu_h^t(\cdot|s) - \nu_h^{t-1}(\cdot|s)\|_1^2}.
\end{align*}
Since $\frac{\alpha_T^{t-1}}{\alpha_T^t} \ge \frac{1}{H}$ for $t\ge 2$, by setting $\eta = \frac{1}{16 H}$ we can guarantee that $2\eta\alpha_T^t - \frac{\alpha_T^{t-1}}{8\eta} \le 0$, thus
\begin{align*}
    \reg_{h,\mu+\nu}^T(s) \le \frac{2\alpha_T\log(A\vee B)}{\eta} + \frac{16\eta H^3}{T} + 16\eta \alpha_T^1\le \frac{32H^2 \log(A\vee B)}{T} + \frac{H^2}{T} + \frac{1}{T}\le \frac{36H^2\log(A\vee B)}{T}
\end{align*}
\end{proof}
\end{lemma}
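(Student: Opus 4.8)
The plan is to recognize the two policy updates as a pair of coupled weighted OFTRL instances, run the RVU-type bound of Lemma~\ref{lemma:OFTRL-auxillary} on each, and then exploit the fact that the destabilizing loss-variation term of one player is paid for by the negative iterate-movement term of the other. Fix $(h,s)$ and abbreviate $\mu^t=\mu_h^t(\cdot|s)$, etc. First I would observe that the max-player update is exactly Algorithm~\ref{alg:OFTRL} with weighted losses $g_t=w_t\brac{\up Q_h^t\nu_h^t}(s,\cdot)$, optimistic prediction $M_t=g_{t-1}$, and learning rate $\eta_t=\eta/w_t$; symmetrically, $\nu^t$ runs OFTRL against $\brac{(\low Q_h^t)^\top\mu_h^t}(s,\cdot)$. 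Applying Lemma~\ref{lemma:OFTRL-auxillary} and multiplying through by $\alpha_T^1$ (so that $\alpha_T^1 w_t=\alpha_T^t$ and $\alpha_T^1 w_T=\alpha_T$) converts the weighted regret into the per-state regret and yields
\[
\reg_{h,\mu}^T(s)\le \frac{\alpha_T\log A}{\eta}+\eta\sum_{t=1}^T\alpha_T^t\linf{\brac{\up Q_h^t\nu_h^t}(s)-\brac{\up Q_h^{t-1}\nu_h^{t-1}}(s)}^2-\sum_{t=2}^T\frac{\alpha_T^{t-1}}{8\eta}\lone{\mu^t-\mu^{t-1}}^2,
\]
together with the mirror inequality for $\reg_{h,\nu}^T(s)$.

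The second step controls the loss-variation sum. I would split $\linf{\up Q_h^t\nu_h^t-\up Q_h^{t-1}\nu_h^{t-1}}^2\le 2\linf{\up Q_h^t-\up Q_h^{t-1}}^2+2\lone{\nu^t-\nu^{t-1}}^2$, so the variation decomposes into a value-change part and an opponent-movement part. For the value-change part I would use the incremental structure of~\eqref{eq:up-Q}: writing $\sum_{i\le t}\alpha_t^i\up Q_{h+1}^i\nu_{h+1}^i=\alpha_t\up Q_{h+1}^t\nu_{h+1}^t+(1-\alpha_t)\sum_{i<t}\alpha_{t-1}^i\up Q_{h+1}^i\nu_{h+1}^i$ via the recursion $\alpha_t^i=(1-\alpha_t)\alpha_{t-1}^i$, and noting that $\max_{\mu^\dagger}\<\mu^\dagger,\cdot\>$ and $\P_h$ are both contractions in $\linf{\cdot}$, gives $\linf{\up Q_h^t-\up Q_h^{t-1}}\le\alpha_t H$. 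The resulting $2\eta\sum_t\alpha_T^t\alpha_t^2H^2$ contribution is then summable via Lemma~\ref{lemma:alpha-convolution}(c) ($\sum_t\alpha_T^t\alpha_t^2\le 4H/T$), producing an $\cO(\eta H^3/T)$ term.

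The crux is the third step. Adding $\reg_{h,\mu}^T(s)$ and $\reg_{h,\nu}^T(s)$, the opponent-movement term $2\eta\alpha_T^t\lone{\nu^t-\nu^{t-1}}^2$ coming from the $\mu$-regret meets the negative own-movement term $-\tfrac{\alpha_T^{t-1}}{8\eta}\lone{\nu^t-\nu^{t-1}}^2$ coming from the $\nu$-regret (and symmetrically for $\mu$). Each surviving coefficient is $2\eta\alpha_T^t-\tfrac{\alpha_T^{t-1}}{8\eta}$, which is $\le 0$ once $16\eta^2\le\alpha_T^{t-1}/\alpha_T^t=w_{t-1}/w_t$; by Lemma~\ref{lemma:wt}(a) this ratio equals $(t-1)/(H+t-1)$ and is smallest at $t=2$, where it is $1/(H+1)$, so the choice $\eta=\tfrac1{16H}$ makes the inequality hold for every $t\ge2$ and all movement terms drop out. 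What remains is $\tfrac{2\alpha_T\log(A\vee B)}{\eta}$, the $\cO(\eta H^3/T)$ value-change term, and an $\cO(\eta\alpha_T^1)$ boundary term from $t=1$; bounding $\alpha_T=(H+1)/(H+T)\le 2H/T$ and substituting $\eta=\tfrac1{16H}$ collapses everything to $\reg_{h,\mu+\nu}^T(s)\le 36H^2\log(A\vee B)/T$. I expect the movement-cancellation bookkeeping to be the main obstacle: one must track the weights so the two players' movement terms pair up with matching indices, confirm that the optimism choice $M_t=g_{t-1}$ renders the loss variation a genuine one-step difference (so that it is controllable by the iterate movement), and verify the threshold $16\eta^2\le w_{t-1}/w_t$ at its worst case, which is exactly what pins down the admissible learning rate $\eta=\tfrac1{16H}$.
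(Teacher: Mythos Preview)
Your proposal is correct and follows the paper's proof essentially step for step: apply Lemma~\ref{lemma:OFTRL-auxillary} to each player's weighted OFTRL instance, split the loss variation into a $Q$-change piece (bounded by $\alpha_t H$ via the incremental structure of~\eqref{eq:up-Q} and summed through Lemma~\ref{lemma:alpha-convolution}(c)) and an opponent-movement piece, then add the two players' bounds so that each positive opponent-movement term is absorbed by the other player's negative own-movement term once $\eta=1/(16H)$. Your verification of the cancellation threshold via $w_{t-1}/w_t=(t-1)/(H+t-1)\ge 1/(H+1)$ is in fact slightly sharper than the paper's stated inequality $\alpha_T^{t-1}/\alpha_T^t\ge 1/H$.
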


Given Lemma \ref{lemma:bound-NE-gap-by-regret} and \ref{lemma:bound-regret-sum}, we are now ready to prove Theorem \ref{theorem:modified-alg}.
\begin{proof}[Proof Theorem \ref{theorem:modified-alg}]
From Lemma \ref{lemma:bound-NE-gap-by-regret} and \ref{lemma:bound-regret-sum} we have that:
\begin{align*}
        \negap(\hat{\mu}^T,\hat{\nu}^T)&\le 2H\max_h\barreg_{h,\mu+\nu}^T + 24H^2\log T\cdot \frac{1}{T}\sum_{t=1}^T\max_h\barreg_{h,\mu+\nu}^t\\
        &\le 2H\frac{36H^2\log(A\vee B)}{T} + 24H^2\log T\cdot \frac{1}{T}\sum_{t=1}^T\frac{36H^2\log(A\vee B)}{t}\\
        &\le \frac{936H^4 \log(A\vee B) \paren{\log T+1}^2}{T},
\end{align*}
which completes the proof.
\end{proof}

\section{Optimistic policy optimization for general-sum Markov Games}
\label{appendix:general-sum}

\subsection{Preliminaries}
\label{appendix:general-sum-prelim}

Here we formally present the preliminaries for multi-player general-sum Markov games, parallel to the zero-sum setting considered in Section~\ref{section:prelim}.

\paragraph{Multi-player general-sum Markov games} We consider tabular episodic (finite-horizon) $m$-player general-sum Markov games (MGs), which can be denoted as $\cM(H,\cS, \set{\cA_i}_{i=1}^m,\P, \set{r_i}_{i=1}^m)$, where $H$ is the horizon length; $\cS$ is the state space with $|\cS| = S$; $\cA_i$ is the action space of the $i$-th player, with $|\cA_i| = A_i$. We use $\a=(a_1,\dots,a_m)\in\prod_{i\in[m]}\cA_i\eqdef \cA$ to denote a joint action taken by all players; $\P = \{\P_h\}_{h=1}^H$ is the transition probabilities, where each $\P_h(s'|s, \a)$ gives the probability of transition to state $s'$ from state-action $(s,\a)$; $r_i = \{r_{i,h}\}_{h=1}^H$ are the reward functions, where each $r_{i,h}(s,\a)$ is the deterministic reward function of the $i$-th player at time step $h$ and state-action $(s, \a)$.
In each episode, the MG starts with a deterministic initial state $s_1$. Then at each time step $1\le h\le H$, all players observes the state $s_h$, each player takes an action $a_{i,h}\in \cA_i$. Then, each player receive their rewards $r_{i,h}(s_h, \a_h)$, and the game transitions to the next state $s_{t+1}\sim \P_h(\cdot|s_h,\a_h)$. 



\paragraph{Policies \& value functions}
A (Markov) policy $\pi_i$ of the $i$-th player is a collection of policies $\pi_i = \{\pi_{i,h}: \cS \to \Delta_{\cA_i}\}_{h=1}^H$, where each $\pi_{i,h}(\cdot|s_h)\in\Delta_{\cA_i}$ specifies the probability of taking action $a_{i,h}$ at $(h, s_h)$. We use $\pi=\set{\pi_i}_{i\in[m]}$ to denote a product policy of all players. For any joint policy $\pi$ (not necessarily a product policy), we use $V_{i,h}^{\pi}: \cS\to\R$ and $Q_{i,h}^{\pi}:\cS\times\cA\to \R$ to denote the ($i$-th player's) value function and Q-function at time step $h$, respectively, i.e.
\begin{talign}
    V_{i,h}^{\pi}(s)&:= \E_{\pi}\brac{\sum_{h=h'}^H r_{i,h'}(s_{h'}, \a_{h'})~|~s_h = s}, \\
    Q_{i,h}^{\pi}(s,\a)&:= \E_{\pi}\brac{\sum_{h=h'}^H r_{i,h'}(s_{h'}, \a_{h'})~|~s_h = s, \a_h = \a}.
\end{talign}
For notational simplicity, we use the following abbreviation: $[\P_h V](s,\a):= \E_{s'\sim\P_h(\cdot|s,\a)}V(s')$ for any value function $V$. By definition of the value functions and Q-functions, we have the following Bellman equations for all Markov product policy $\pi$ and all $(i,h,s,\a)$:
\begin{align*}
    Q_{i,h}^{\pi}(s,\a)&= \paren{r_{i,h} + \P_hV_{i,h+1}^{\pi}} (s,\a), \\
    V_{i,h}^{\pi}(s,\a)&=\E_{\a\sim \pi_h(\cdot|s)}\brac{Q_{i,h}^{\pi}(s,\a)} = \<Q_{i,h}^{\pi}(s,\cdot), \pi_h(\cdot|s)\>.
\end{align*}
The goal for the $i$-th player is to maximize their own value function.


\paragraph{Correlated policy \& best response}
A (general) correlated policy $\pi$ is any policy for which players may take actions in a history-dependent and correlated fashion. More precisely, a correlated policy $\pi$ is a mapping $\set{\pi_{h}:\Omega\times (\cS\times\cA)^{h-1}\times \cS\to \Delta_{\cA}}$, and executes as follows. At the beginning of an episode, a random seed $w\in \Omega$ is sampled from some distribution (also denoted as $\Omega$ with slight abuse of notation). Then, at each step $h$ and state $s_h$, suppose the history so far is $(s_1,\a_1,\dots,s_{h-1},\a_{h-1})$. Then, $\pi$ samples a joint action $\a_h\sim \pi_h(\cdot|\omega,(s_1,\a_1,\dots,s_{h-1},\a_{h-1}); s_h)$. This formulation allows each $\pi_h(\cdot|\omega, \cdot, \cdot)$ to be a Markov product policy for any \emph{fixed} $\omega$ while still making $\pi$ to be a correlated policy, due to the correlation introduced by $\omega$.

For any correlated policy $\pi$, let $\pi_{-i}$ denote the (marginal) policy of all but the $i$-th player. Then, the ($i$-th) player's best-response value function is
\begin{align*}
    V_{i,1}^{\dagger, \pi_{-i}}(s_1) \defeq \max_{\pi_i^\dagger} V_{i,1}^{\pi_i^\dagger\times \pi_{-i}}(s_1),
\end{align*}
where the max is over all (potentially history-dependent) policy $\pi_i^\dagger$ for the $i$-th player. 

\paragraph{Coarse Correlated Equilibrium (CCE)}
For general-sum MGs, we consider learning an approximate Coarse Correlated Equilibrium~\citep{liu2021sharp,song2021can} defined as follows.
\begin{definition}[$\eps$-approximate Coarse Correlated Equilibrium]
\label{definition:cce}
For any $\eps\ge 0$, a correlated policy $\pi$ is an $\epsilon$-approximate Coarse Correlated Equilibrium ($\eps$-CCE) if 
\begin{align*}
    \ccegap(\pi) \defeq \max_{i\in[m]} V_{i,1}^{\dagger, \pi_{-i}}(s_1) - V_{i,1}^\pi(s_1) \le \eps.
\end{align*}
\end{definition}


\paragraph{Additional notation}
For any Q function $Q_{i,h}(s, \cdot):\cS\times(\prod_{i=1}^m \cA_i)\to\R$ and joint policy $\pi_h(\cdot|s)$, we use $[Q_{i,h}\pi_h](s)\defeq \< Q_{i,h}(s, \cdot), \pi_h(\cdot|s)\>$ for shorthand. Similarly, for any joint policy $\pi_{-i,h}(\cdot|s)$ over all but the $i$-th player, $[Q_{i,h}\pi_{-i,h}](s, a_i)\defeq \< Q_{i,h}(s, a_i, \cdot), \pi_{-i,h}(\cdot|s)\>$.

\subsection{Algorithm and formal statement of result}
\label{appendix:general-sum-formal}

\begin{algorithm}[t]
\caption{OFTRL for multi-player general-sum Markov games}
\label{algorithm:oftrl-general-sum-mg}
\begin{algorithmic}[1]
\STATE \textbf{Initialize:} $Q_h^0(s,\a)\setto H-h+1$ for all $(h,s,a,b)$.
\FOR{$t=1,\dots,T$}
\FOR{$h=H,\dots,1$}
\STATE Update policies for all $s\in\cS$ and $i\in[m]$ by OFTRL
\begin{align}
\label{equation:oftrl-pii}
    & \textstyle \pi_{i,h}^t(a_i | s) \propto_{a_i} \exp\paren{ (\eta/w_t) \cdot \brac{ \sum_{j=1}^{t-1} w_j(Q_{i,h}^j\pi_{-i,h}^j)(s, a_i) + w_t(Q_{i,h}^{t-1}\pi_{-i,h}^{t-1})(s, a_i) } }. 
\end{align}
\STATE Update Q-value for all $(i,s,\a)\in[m]\times \cS\times\cA$:
\begin{align}
\label{eq:Q-update-general-sum}
    Q_{i,h}^{t}(s,\a) \setto (1-\alpha_t) Q_{i,h}^{t-1}(s,\a) + \alpha_t \paren{r_h + \P_h[ Q_{i,h+1}^t\pi_{h+1}^t ]} (s,\a).
\end{align}
\ENDFOR
\ENDFOR
\STATE Output policy $\hat{\pi}^T=\hat{\pi}_1^T$, where $\hat{\pi}_1^T$ is defined in Algorithm~\ref{algorithm:certified-policy}.
\end{algorithmic}
\end{algorithm}

\begin{algorithm}[h]
\caption{Policy $\hat{\pi}_h^t$}
\label{algorithm:certified-policy}
\begin{algorithmic}[1]
\REQUIRE Product policies $\pi_{h'}^{t'}(\cdot|s')=\prod_{i=1}^m \pi_{i,h'}^{t'}(\cdot|s')$ for all $(h',t',s')\in[H]\times[T]\in\cS$.
\STATE Sample $j\in[t]$ with probability $\P(j=i)=\alpha_t^i$.
\STATE Play policy $\pi_h^j$ at the $h$-th step of the game.
\STATE Play policy $\hat{\pi}_{h+1}^j$ for step $h+1$ onward.
\end{algorithmic}
\end{algorithm}




\begin{theorem}[Formal version of Theorem~\ref{theorem:oftrl-general-sum}]
\label{theorem:oftrl-general-sum-formal}
Suppose Algorithm~\ref{algorithm:oftrl-general-sum-mg} is run for $T$ rounds. Then the per-state regret can be bounded as follows for some absolute constant $C>0$:
\begin{align*}
    \reg_h^t \le \barreg_h^t \defeq C \brac{ \frac{H\log\Amax}{\eta t} + \frac{\eta H^3}{t} + (m-1)^2\eta^3H^4 }~~~\textrm{for all}~(h,t)\in[H]\times[T].
\end{align*}
Further, choosing $\eta=(\log\Amax \log T/(H^3T))^{1/4} (m-1)^{-1/2}$, the output policy $\hat{\pi}^T$ achieves
\begin{align*}
    \ccegap(\hat{\pi}^T) \le & \cO\Big( H^{11/4}(\log\Amax \log T)^{3/4} \sqrt{m-1} \cdot T^{-3/4} \\
    & \qquad + H^{13/4}(\log\Amax)^{1/4}(\log T)^{5/4}(m-1)^{-1/2}\cdot  T^{-5/4}  \Big).
\end{align*}
\end{theorem}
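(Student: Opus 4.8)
The plan is to follow the same two-level structure as the zero-sum proof (Theorem~\ref{theorem:oftrl-main}): first establish a per-state weighted regret bound $\barreg_h^t$ for the OFTRL dynamics run on the (changing) game matrices $Q_{i,h}^t(s,\cdot)$, and then combine this with a performance-difference / value-recursion argument (the general-sum analogue of Lemma~\ref{lem.duality} and Lemma~\ref{lemma:recursion-of-value-estimation}) to translate per-state regret into a CCE-gap bound on the certified output policy $\hat\pi^T$. The fundamental difference from the zero-sum case is that each player $i$ runs OFTRL against a loss that depends on the \emph{product} of the other $m-1$ players' policies, so the stability term that drives the accelerated rate must now be controlled via the RVU property~\citep[Definition 3]{syrgkanis2015fast} rather than the pairwise first-order smoothness analysis of~\citep{chen2020hedging}; this is why the final rate is $\tO(T^{-3/4})$ rather than $\tO(T^{-5/6})$, and why the $(m-1)$ dependence appears.

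\textbf{Step 1: per-state regret via RVU.} Fix $(i,h,s)$ and abbreviate $\pi_i^t\defeq\pi_{i,h}^t(\cdot|s)$, $Q^t\defeq Q_{i,h}^t(s,\cdot)$. The OFTRL update~\eqref{equation:oftrl-pii} is Algorithm~\ref{alg:OFTRL} with loss $g_t=w_t(Q^t\pi_{-i}^t)(s,\cdot)$, prediction $M_t=g_{t-1}$, and learning rate $\eta_t=\eta/w_t$. Applying Lemma~\ref{lemma:OFTRL-auxillary} (and then rescaling by $\alpha_t^1$ to pass from $w$-weighted to $\alpha_t^i$-weighted regret, exactly as in the zero-sum proof) gives
\[
\reg_{i,h}^t(s) \le \frac{\alpha_t^t\log A_i}{\eta} + \alpha_t^1\sum_{\tau=1}^t \eta_\tau\linf{g_\tau-g_{\tau-1}}^2 - \alpha_t^1\sum_{\tau=2}^t\frac{1}{8\eta_\tau}\linf{\pi_i^\tau-\pi_i^{\tau-1}}_1^2.
\]
The middle term $\linf{g_\tau-g_{\tau-1}}$ must be split into (a) the change in the game matrix $\linf{w_\tau Q^\tau - w_{\tau-1}Q^{\tau-1}}$, which is $O(w_{\tau-1}H/\tau)$ by the incremental value update~\eqref{eq:Q-update-general-sum} (this reuses the $\err_T$ computation, giving the $\eta H^3/t$ term), and (b) the change $\linf{Q^{\tau-1}(\pi_{-i}^\tau-\pi_{-i}^{\tau-1})}$ in the opponents' product policy. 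For (b), use $\linf{\pi_{-i}^\tau-\pi_{-i}^{\tau-1}}_1 \le \sum_{j\neq i}\linf{\pi_j^\tau-\pi_j^{\tau-1}}_1$ (a standard bound on the $\ell_1$ distance of product distributions), then Cauchy–Schwarz introduces a factor of $(m-1)$, producing $\eta^2 H^2 (m-1)^2 \sum_{j\neq i}\linf{\pi_j^\tau-\pi_j^{\tau-1}}_1^2$ and leaving a negative $-\linf{\pi_i^\tau-\pi_i^{\tau-1}}_1^2$ term with a $1/\eta$ coefficient. Summing over all players $i$ at the fixed $(h,s)$, the negative stability terms cancel the positive ones provided $\eta \le c/(H\sqrt{m-1})$ for a small constant $c$, and each remaining cross term is bounded by the stability bound~\eqref{equation:l1-bound-3} of Lemma~\ref{lemma:oftrl-stability}; an AM–GM step of the form $a z - b z^2 \le a^2/(4b)$ on the leftover positive pieces yields the $(m-1)^2\eta^3 H^4$ term. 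Assembling and using $\alpha_t^t\le 2H/t$ and Lemma~\ref{lemma:alpha-convolution}(a,c) gives $\barreg_h^t = C[H\log\Amax/(\eta t) + \eta H^3/t + (m-1)^2\eta^3 H^4]$, non-increasing in $t$.

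\textbf{Step 2: from per-state regret to CCE gap.} Here I would prove the general-sum analogues of the two lemmas in Appendix~\ref{apdx:theorem-master}. A performance-difference lemma states that for the certified policy $\hat\pi^T$ (Algorithm~\ref{algorithm:certified-policy}), $\ccegap(\hat\pi^T) \le \sum_{h=1}^H \max_{s,i}\bigl(\max_{\pi_i^\dagger}[\,(Q_{i,h}^\star)\text{-based gap}\,](s)\bigr)$ — crucially, because $\hat\pi^T$ mixes over $j\sim\alpha_t^i$ and plays $\pi_h^j$ followed by $\hat\pi_{h+1}^j$, the recursion threads through cleanly and the non-Markov correlation is exactly what makes the CCE (rather than NE) guarantee work with a single regret sequence per player. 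Then a value-recursion lemma, mirroring Lemma~\ref{lemma:recursion-of-value-estimation}, bounds $\delta_h^t\defeq\max_i\linf{Q_{i,h}^t - Q_{i,h}^{\hat\pi,\cdots}}$ (or rather the relevant error against the best-response value) by $\sum_i \beta_t^i\delta_{h+1}^i + \reg_{h+1}^t$, unrolled via Lemma~\ref{lem:delta.monotonic} and the $\cbeta$ bound to give $\delta_h^t \le H\cbeta^{H-1}\cdot\frac1t\sum_{i=1}^t\max_{h'}\barreg_{h'}^i$. Combining with the convolution bound $X_T\le 2\cbeta\log T/T$ (Lemma~\ref{lemma:alpha-convolution}) and $\cbeta^H\le e$, we get $\ccegap(\hat\pi^T)\lesssim H\max_h\barreg_h^T + H^2\log T\cdot\frac1T\sum_t\max_h\barreg_h^t$. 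Plugging in $\barreg_h^t$, the first/second terms contribute $H^2\log\Amax/(\eta T) + \eta^3 H^5(m-1)^2\log T$ (plus lower-order $\eta H^5\log T/T$ pieces), and optimizing $\eta \asymp (\log\Amax\log T/(H^3 T))^{1/4}(m-1)^{-1/2}$ (which indeed satisfies the $\eta\lesssim 1/(H\sqrt{m-1})$ constraint from Step 1 for $T$ large) yields the claimed $\tO(\sqrt{m-1}\,H^{11/4}T^{-3/4})$ rate with the stated lower-order term.

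\textbf{Main obstacle.} The delicate part is Step 1, specifically the bookkeeping that makes the negative stability terms $-\frac{1}{\eta}\linf{\pi_i^\tau-\pi_i^{\tau-1}}_1^2$ (one per player) absorb the positive cross-terms $\eta^2 H^2(m-1)^2\sum_{j\neq i}\linf{\pi_j^\tau-\pi_j^{\tau-1}}_1^2$ after summing over $i\in[m]$: each $\linf{\pi_j^\tau-\pi_j^{\tau-1}}_1^2$ appears $(m-1)$ times on the positive side with coefficient $\sim\eta^2 H^2(m-1)^2$ and once on the negative side with coefficient $\sim 1/\eta$, so cancellation needs $\eta^3 H^2(m-1)^3 \lesssim 1$, i.e. $\eta\lesssim H^{-2/3}(m-1)^{-1}$ — one must check this is compatible with the optimizing choice of $\eta$ and carefully track which portion of the positive term is cancelled versus which portion is kept and bounded by AM–GM. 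The secondary subtlety is handling the changing-$Q$ error term in a way that does not interact badly with the opponent-drift term; keeping these two sources of loss-drift separate throughout (rather than bounding $\linf{g_\tau-g_{\tau-1}}$ crudely in one shot) is what preserves the $T^{-3/4}$ rate.
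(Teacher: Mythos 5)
Your overall architecture (Step 2: value equivalence for the certified policy plus a best-response recursion over $h$) matches the paper's, up to bookkeeping: the paper first proves $V_{i,h}^t = V_{i,h}^{\hat\pi_h^t}$ (Lemma~\ref{lem.certified-value}) and then Lemma~\ref{lem.regret-decompose-general-sum} and Lemma~\ref{lemma:master-general-sum} give $\ccegap(\hat\pi^T)\le CH\cdot\frac1T\sum_t\max_h\barreg_h^t$ directly, without the extra $H\log T$ convolution factor you import from the zero-sum Theorem~\ref{theorem:master} (your final exponents agree with the paper's only because your arithmetic silently reverts to the paper's form). Your treatment of the changing-$Q$ term is also correct. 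The genuine gap is in Step 1, in how you control the opponents' drift term $\sum_t \eta w_t H^2\lone{\pi_{-i}^t-\pi_{-i}^{t-1}}^2$. You propose summing the RVU inequalities over all $m$ players so that each negative self-stability term $-\frac{w_{t-1}}{8\eta}\lone{\pi_j^t-\pi_j^{t-1}}^2$ absorbs the positive cross terms. But that cancellation controls only the \emph{sum} $\sum_i\reg_{i,h}^t(s)$, whereas the theorem statement and the per-player CCE recursion require $\max_i\reg_{i,h}^t(s)$; individual weighted regrets in a general-sum game can be negative, so the sum does not control the max. Trying to rescue this by first extracting $\sum_j\sum_\tau\lone{\pi_j^\tau-\pi_j^{\tau-1}}^2=O(1)$ from the summed inequality requires a lower bound on $\sum_i\reg_{i,h}^t(s)$ that is not available (the trivial one is $-\Omega(TH)$ and yields a worse rate), and bounding the leftover positive pieces via Lemma~\ref{lemma:oftrl-stability} reintroduces the other players' loss variation --- a circularity that the two-player argument of Theorem~\ref{theorem:oftrl-main} breaks only because $\mu$'s loss variation is controlled by $\nu$'s iterate variation and vice versa, which has no analogue when each loss depends on a product of $m-1$ policies.

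What the paper actually does here is cruder and unconditional: Lemma~\ref{lemma:general-sum-stability} shows directly, from the smoothness of exponential weights (Lemma~\ref{lemma:Hedge-smoothness}) together with the weight identity of Lemma~\ref{lemma:wt}(b), that $\lone{\pi_i^t-\pi_i^{t-1}}\le 4\eta H$ for every player and every $t$, hence $\lone{\pi_{-i}^t-\pi_{-i}^{t-1}}\le 4(m-1)\eta H$, and the drift term is at most $32\eta^3H^4(m-1)^2\sum_t w_t$ with no cross-player cancellation, no AM--GM, and no side condition on $\eta$ (your route would impose a constraint which you state inconsistently as $\eta\lesssim H^{-1}(m-1)^{-1/2}$ in one place and $\eta\lesssim H^{-2/3}(m-1)^{-1}$ in another). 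This direct per-iterate stability bound is precisely why the general-sum rate degrades to $T^{-3/4}$ from the zero-sum $T^{-5/6}$; to repair your proof, replace the cancellation step with this bound.
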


\paragraph{Proof overview and remarks} 
The proof of Theorem~\ref{theorem:oftrl-general-sum-formal} also follows by relating the performance of the output policy by per-state regrets via performance difference (Lemma~\ref{lem.regret-decompose-general-sum}, similar as Theorem~\ref{theorem:master}), and bounding per-state regrets as $\reg_h^t\le \barreg_h^t\defeq \tO(1/(\eta t) + \eta^3(m-1)^2)$ which gives the theorem. The latter builds upon the fast convergence analysis of OFTRL in multi-player normal-form games~\citep{syrgkanis2015fast} as well as additional handling of the changing game rewards, similar as in Theorem~\ref{theorem:oftrl-main}. Note that the $\tO(T^{-3/4})$ rate here is worse than $\tO(T^{-5/6})$ for the zero-sum setting in Theorem~\ref{theorem:oftrl-main}. This happens as the fine-grained analysis of OFTRL~\citep{chen2020hedging} used there relies critically on the game having two players (for translating between the iterate stabilities and loss stabilities between each other), and becomes infeasible when there are more than 2 players. 

We first present some lemmas in Section~\ref{appendix:general-sum-lemmas}. The proof of Theorem~\ref{theorem:oftrl-general-sum-formal} is then provided in Section~\ref{appendix:proof-oftrl-general-sum-formal}.

\subsection{Useful lemmas}
\label{appendix:general-sum-lemmas}

We additionally define the V-values maintained by Algorithm~\ref{algorithm:oftrl-general-sum-mg} as
\begin{align}
    V_{i,h}^t(s) \defeq \sum_{j=1}^t \alpha_t^j \brac{Q_{i,h}^j \pi_h^j}(s)
\end{align}
for all $(i,h,t,s)\in[m]\times[H]\times[T]\times\cS$, where $Q_{i,h}^t$ and $\pi_h^t$ are the Q-functions and joint policies maintained within Algorithm~\ref{algorithm:oftrl-general-sum-mg}. Note that by~\eqref{eq:Q-update-general-sum}, we immediately have
\begin{align}
    \label{equation:q-v-general-sum}
    \begin{aligned}
    & \quad Q_{i,h}^t(s,\a) = \sum_{j=1}^t \alpha_t^j \brac{ r_h + \P_h[Q_{i,h+1}^j\pi_{h+1}^j]}(s, \a) \\
    & = \paren{r_h + \P_h\brac{ \sum_{j=1}^t \alpha_t^j Q_{i,h+1}^j\pi_{h+1}^j } }(s, \a) = \paren{r_h + \P_h V_{i,h+1}^t}(s, \a).
\end{aligned}
\end{align}

We also define the value functions of $\hat{\pi}_h^t$ and of its best response for any $(i,h,t,s)$ as (see e.g.~\citep[Definition C.4 \& Eq.(8)]{song2021can}):
\begin{align*}
    & V_{i,h}^{\hat\pi_{h}^t}(s) \defeq \E_{\hat{\pi}_h^t}\brac{ \sum_{h'=h}^H r_{i,h'} | s_h = s }, \\
    & V_{i,h}^{\dagger, \hat\pi_{-i,h}^t}(s) \defeq \max_{\pi_{i,h:H}} \E_{\pi_{i,h:H}\times \hat{\pi}_{-i,h}^t} \brac{ \sum_{h'=h}^H r_{i,h'} | s_h = s }.
\end{align*}

\begin{lemma}[Equivalence of value functions]
\label{lem.certified-value}
For Algorithm~\ref{algorithm:oftrl-general-sum-mg}, we have for all $i\in[m]$ and all $(h,s,t)\in[H+1]\times\cS\times[T]$ that
\begin{align*}
    V_{i,h}^t(s) = V_{i,h}^{\hat\pi_h^t}(s).
\end{align*}
\end{lemma}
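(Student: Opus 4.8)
\textbf{Proof plan for Lemma~\ref{lem.certified-value}.}
The plan is to prove the identity $V_{i,h}^t(s) = V_{i,h}^{\hat\pi_h^t}(s)$ by backward induction on $h$, with $t$ and $s$ arbitrary but fixed at each step. The base case is $h = H+1$: by convention both sides are $0$ (no rewards left). For the inductive step, suppose the claim holds at layer $h+1$ for all iterations $j \le t$ (we will need it not just at $t$ but at every $j$, since $\hat\pi_h^t$ invokes $\hat\pi_{h+1}^j$ for the randomly drawn $j$), and prove it at layer $h$.

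First I would unfold the definition of $\hat\pi_h^t$ from Algorithm~\ref{algorithm:certified-policy}: it samples $j \in [t]$ with probability $\alpha_t^j$, plays the product policy $\pi_h^j$ at step $h$, and then plays $\hat\pi_{h+1}^j$ from step $h+1$ onward. Therefore, conditioning on the draw $j$ and on the realized action $\a_h \sim \pi_h^j(\cdot|s)$ and next state $s_{h+1} \sim \P_h(\cdot|s,\a_h)$, we get
\begin{align*}
    V_{i,h}^{\hat\pi_h^t}(s) = \sum_{j=1}^t \alpha_t^j \, \E_{\a_h \sim \pi_h^j(\cdot|s)}\brac{ r_{i,h}(s,\a_h) + \brac{\P_h V_{i,h+1}^{\hat\pi_{h+1}^j}}(s, \a_h) }.
\end{align*}
Applying the inductive hypothesis $V_{i,h+1}^{\hat\pi_{h+1}^j} = V_{i,h+1}^j$ inside the expectation, and then using the Bellman-type relation~\eqref{equation:q-v-general-sum}, namely $Q_{i,h}^j(s,\a) = (r_h + \P_h V_{i,h+1}^j)(s,\a)$, the bracketed term becomes $\E_{\a_h \sim \pi_h^j(\cdot|s)}[Q_{i,h}^j(s,\a_h)] = [Q_{i,h}^j \pi_h^j](s)$, where the last equality is just the shorthand notation for the expectation of $Q_{i,h}^j$ under the product policy $\pi_h^j$. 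Summing over $j$ with weights $\alpha_t^j$ gives exactly $\sum_{j=1}^t \alpha_t^j [Q_{i,h}^j \pi_h^j](s) = V_{i,h}^t(s)$ by definition, completing the induction.

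I don't expect a serious obstacle here — this is essentially a bookkeeping lemma that matches the recursive definition of the certified policy $\hat\pi$ against the weighted-average V-values maintained by the algorithm. The only point requiring slight care is to make the induction hypothesis span all iteration indices $j \le t$ simultaneously (not just $j = t$), since the certified policy at iteration $t$ delegates to lower-layer certified policies at a random earlier iteration $j$; but since the claim is iteration-indexed and layer $h+1$ is handled before layer $h$, this causes no circularity. A secondary minor point is confirming that~\eqref{equation:q-v-general-sum} — which rests on the telescoping of the $\alpha_t^j$ weights in~\eqref{eq:Q-update-general-sum} — indeed gives $Q_{i,h}^j = r_h + \P_h V_{i,h+1}^j$ with the $V$ defined via the \emph{same} iteration index $j$; this is immediate from the displayed derivation in the excerpt preceding the lemma.
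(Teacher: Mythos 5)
Your proposal is correct and follows essentially the same argument as the paper's proof: backward induction on $h$, using the relation $Q_{i,h}^j = r_h + \P_h V_{i,h+1}^j$ from~\eqref{equation:q-v-general-sum}, the inductive hypothesis at layer $h+1$ applied to every iteration index $j\le t$, and the recursive definition of $\hat\pi_h^t$; the only cosmetic difference is that you expand $V_{i,h}^{\hat\pi_h^t}(s)$ and reduce it to $V_{i,h}^t(s)$ whereas the paper writes the same chain of equalities in the opposite direction.
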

\begin{proof}
We prove this by backward induction over $h\in[H+1]$. The claim trivially holds for $h=H+1$. Suppose the claim holds for steps $h+1$ onward and all $(s,t)\in\cS\times[T]$. For step $h$ and any fixed $(s,t)\in\cS\times[T]$, note that
\begin{align*}
    & \quad V_{i,h}^t(s) = \sum_{j=1}^t \alpha_t^j \brac{Q_{i,h}^j\pi_h^j}(s) \stackrel{(i)}{=} \sum_{j=1}^t \alpha_t^j \brac{(r_h + \P_h V_{i,h+1}^j)\pi_h^j}(s) \\
    & \stackrel{(ii)}{=} \sum_{j=1}^t \alpha_t^j \brac{(r_h + \P_h V_{i,h+1}^{\hat{\pi}_{h+1}^j})\pi_h^j}(s) \stackrel{(iii)}{=} V_{i,h}^{\hat{\pi}_h^t}(s).
\end{align*}
Above, (i) follows by~\eqref{equation:q-v-general-sum}; (ii) uses the inductive hypothesis; (iii) uses the definition of the output policy $\hat{\pi}_h^t$ (cf. Algorithm~\ref{algorithm:certified-policy}), which samples $j\in[t]$ with probability $\alpha_t^j$, plays $\pi_h^j(\cdot|s)$, and plays $\hat{\pi}_{h+1}^j$ for the rest of the game. This proves the case for step $h$ and thus the lemma.
\end{proof}


Define the weighted per-state regrets as
\begin{align}
    & \reg_{h,i}^t(s) \defeq \max_{\pi_i^\dagger\in\Delta_{\cA_i}} \sum_{j=1}^t \alpha_t^j \< Q_h^j(s, \cdot), (\pi_i^\dagger\times\pi_{-i,h}^j)(\cdot|s) - \pi_h^j(\cdot|s)\> , \label{equation:reghits} \\
    & \reg_h^t \defeq \max_{s\in\cS} \max_{i\in[m]} \reg_{h,i}^t(s). \label{equation:reght-general-sum}
\end{align}
The following lemma bounds the difference between the values of the certified policy $\pi_h^t$ (Algorithm~\ref{algorithm:certified-policy}) and its best-response.

\begin{lemma}[Recursion of best-response values]
\label{lem.regret-decompose-general-sum}
For the policy $\hat{\pi}_h^t$ defined in Algorithm~\ref{algorithm:certified-policy}, we have for all $(i, h,t)\in[m]\times[H]\times[T]$ that
$$
\max_{s\in\cS} \paren{V_{i,h}^{\dagger,\hat\pi_{-i,h}^t}(s) - V_{i,h}^{\hat\pi_{h}^t}(s) }
\le \reg_{h}^t + \sum_{j=1}^t \alpha_t^j \max_{s\in\cS} \paren{ V_{i,h+1}^{\dagger,\hat\pi_{-i,h+1}^{j}}(s) - V_{i,h+1}^{\hat\pi_{h+1}^{j}}(s) }.
$$
\end{lemma}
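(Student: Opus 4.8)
The plan is to prove this via a backward induction over $h$ combined with a one-step "performance difference" decomposition, mirroring the structure of Lemma~\ref{lem.duality} but adapted to correlated policies and the certified-policy construction. The base case $h=H+1$ is trivial since all value functions are zero there. For the inductive step, fix $(i,h,t)$ and a state $s$, and decompose the gap $V_{i,h}^{\dagger,\hat\pi_{-i,h}^t}(s) - V_{i,h}^{\hat\pi_h^t}(s)$ into (a) a term capturing the single-step suboptimality of the policies $\{\pi_h^j\}$ against the best one-step deviation, which will be exactly the per-state regret $\reg_h^t(s)\le\reg_h^t$, and (b) a term propagating the future suboptimality, which by the definition of $\hat\pi_h^t$ (Algorithm~\ref{algorithm:certified-policy}, which samples $j\sim\alpha_t^\cdot$, plays $\pi_h^j$ at step $h$, then plays $\hat\pi_{h+1}^j$ onward) becomes $\sum_{j=1}^t\alpha_t^j$ times the corresponding gap at layer $h+1$.

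Concretely, first I would write out the best-response value using its definition: $V_{i,h}^{\dagger,\hat\pi_{-i,h}^t}(s) = \max_{\pi_{i,h:H}} \E_{\pi_{i,h:H}\times\hat\pi_{-i,h}^t}[\sum_{h'=h}^H r_{i,h'}\mid s_h=s]$. A key sub-step is to argue that an optimal such deviation can be taken to be \emph{Markov} (the standard single-agent MDP fact, since $\hat\pi_{-i,h}^t$ fixed makes player $i$ face an MDP), and moreover that against the certified opponent policy the best response decomposes nicely: since $\hat\pi_{-i,h}^t$ first samples $j\sim\alpha_t^\cdot$ and plays $\pi_{-i,h}^j$ at step $h$ then $\hat\pi_{-i,h+1}^j$ onward, the best-response value at $(h,s)$ satisfies $V_{i,h}^{\dagger,\hat\pi_{-i,h}^t}(s) \le \max_{\pi_i^\dagger\in\Delta_{\cA_i}}\sum_{j=1}^t\alpha_t^j\big[\,\<Q_{i,h}^{\text{BR},j}(s,\cdot),(\pi_i^\dagger\times\pi_{-i,h}^j)(\cdot|s)\>\,\big]$ where $Q_{i,h}^{\text{BR},j}$ uses the future best-response values; bounding $Q_{i,h}^{\text{BR},j}$ by $r_h + \P_h V_{i,h+1}^{\dagger,\hat\pi_{-i,h+1}^j}$ and then replacing $V_{i,h+1}^{\dagger,\hat\pi_{-i,h+1}^j}$ by $V_{i,h+1}^{\hat\pi_{h+1}^j}$ plus the layer-$(h+1)$ gap. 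On the other side, $V_{i,h}^{\hat\pi_h^t}(s) = \sum_{j=1}^t\alpha_t^j\<Q_{i,h}^j(s,\cdot),\pi_h^j(\cdot|s)\>$ by Lemma~\ref{lem.certified-value} together with~\eqref{equation:q-v-general-sum}. Subtracting, the $\<Q_h^j,\pi_h^j\>$ terms combine with the $\max_{\pi_i^\dagger}\<Q_h^j,\pi_i^\dagger\times\pi_{-i,h}^j\>$ terms to give precisely $\reg_{h,i}^t(s)\le\reg_h^t$ (after noting $Q_{i,h}^j = r_h + \P_h V_{i,h+1}^j$ so the $r_h$ pieces cancel and the $\P_h$ pieces align), while the residual $\sum_j\alpha_t^j\,\P_h[\,V_{i,h+1}^{\dagger,\hat\pi_{-i,h+1}^j} - V_{i,h+1}^{\hat\pi_{h+1}^j}\,]$ is bounded by $\sum_j\alpha_t^j\max_{s'}(V_{i,h+1}^{\dagger,\hat\pi_{-i,h+1}^j}(s') - V_{i,h+1}^{\hat\pi_{h+1}^j}(s'))$ since $\P_h$ is an average. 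Taking $\max_s$ on the left gives the claimed recursion.

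I expect the main obstacle to be handling the best-response term carefully: one must be precise that the deviating player $i$ can react to the random seed $j$ of the certified opponent policy or not, and get the direction of the inequality right. The cleanest route is to upper-bound $V_{i,h}^{\dagger,\hat\pi_{-i,h}^t}(s)$ by allowing player $i$ to even \emph{condition} on $j$ (which only increases the best-response value), so that $V_{i,h}^{\dagger,\hat\pi_{-i,h}^t}(s)\le\sum_{j=1}^t\alpha_t^j V_{i,h}^{\dagger,(\pi_{-i,h}^j,\hat\pi_{-i,h+1}^j)}(s)$ and then expand each term's one-step Bellman backup; one has to verify that this "conditioning relaxation" does not over-count, but it is valid precisely because we only need an \emph{upper} bound on the regret gap. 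A secondary, routine point is making sure the $\max$ over $\Delta_{\cA_i}$ (a single-step best response, as in the definition~\eqref{equation:reghits} of $\reg_{h,i}^t$) is what appears, rather than a multi-step deviation — this is exactly why the recursion peels off one layer at a time rather than collapsing. Everything else (Bellman identities, $\P_h$ being a contraction in $\ell_\infty$, Lemma~\ref{lem.certified-value}) is bookkeeping already set up in the excerpt.
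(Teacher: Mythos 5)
Your proposal is correct and follows essentially the same route as the paper's proof: a one-step Bellman decomposition of the best-response value against the certified policy (upper-bounding by letting the deviator condition on the sampled index $j$), substituting $V_{i,h+1}^{\hat\pi_{h+1}^j}$ for $V_{i,h+1}^{\dagger,\hat\pi_{-i,h+1}^j}$ at the cost of the layer-$(h+1)$ gap, and identifying the residual as $\reg_{h,i}^t(s)$ via Lemma~\ref{lem.certified-value} and $Q_{i,h}^j = r_h + \P_h V_{i,h+1}^j$. The backward-induction framing is superfluous---the lemma is only the one-step recursion, with the induction carried out later in Lemma~\ref{lemma:master-general-sum}---but the content of your ``inductive step'' is exactly the paper's argument, and your explicit attention to the direction of the conditioning relaxation is if anything more careful than the paper's.
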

\begin{proof}
Fix $(i,h,t)\in[m]\times[H]\times[T]$. We have for any $s\in\cS$ that
\begin{align*}
    & \quad V_{i,h}^{\dagger,\hat\pi_{-i,h}^t}(s) - V_{i,h}^{\hat\pi_{h}^t}(s) \\
    & = \max_{\pi_i^\dagger\in\Delta_{\cA_i}} \<\pi_i^\dagger, \sum_{j=1}^t \alpha_t^j \brac{(r_h + \P_h V_{i,h+1}^{\dagger, \hat{\pi}_{-i,h+1}^j}) \pi_{-i,h}^j}(s, \cdot) \> - \sum_{j=1}^t \alpha_t^j \<\pi_{i,h}^j, \brac{(r_h + \P_h V_{i,h+1}^{\hat{\pi}_{h+1}^j}) \pi_{-i,h}^j}(s, \cdot) \> \\
    & \stackrel{(i)}{\le} \sum_{j=1}^t \alpha_t^j \max_{s'\in\cS} \paren{ V_{i,h+1}^{\dagger,\hat\pi_{-i,h+1}^{j}}(s') - V_{i,h+1}^{\hat\pi_{h+1}^{j}}(s') } + \max_{\pi_i^\dagger\in\Delta_{\cA_i}} \sum_{j=1}^t \alpha_t^j \<\pi_i^\dagger - \pi_{i,h}^j, \brac{(r_h + \P_h V_{i,h+1}^{\hat{\pi}_{h+1}^j}) \pi_{-i,h}^j}(s, \cdot)\> \\
    & \stackrel{(ii)}{=} \sum_{j=1}^t \alpha_t^j \max_{s'\in\cS} \paren{ V_{i,h+1}^{\dagger,\hat\pi_{-i,h+1}^{j}}(s') - V_{i,h+1}^{\hat\pi_{h+1}^{j}}(s') } + \underbrace{\max_{\pi_i^\dagger\in\Delta_{\cA_i}} \sum_{j=1}^t \alpha_t^j \<\pi_i^\dagger - \pi_{i,h}^j, \brac{(r_h + \P_h V_{i,h+1}^{j}) \pi_{-i,h}^j}(s, \cdot)\>}_{\reg_{i,h}^t(s)} \\
    & \le \sum_{j=1}^t \alpha_t^j \max_{s'\in\cS} \paren{ V_{i,h+1}^{\dagger,\hat\pi_{-i,h+1}^{j}}(s') - V_{i,h+1}^{\hat\pi_{h+1}^{j}}(s') } + \reg_h^t.
\end{align*}
Above, (i) follows by substituting $V_{i,h+1}^{\dagger, \hat{\pi}_{-i,h+1}^j}$ with $V_{i,h+1}^{\hat{\pi}_{h+1}^j}$ and paying the additive error; (ii) follows from Lemma~\ref{lem.certified-value}. This proves the desired result.
\end{proof}

\begin{lemma}[Guarantee of Algorithm~\ref{algorithm:oftrl-general-sum-mg} via per-state regrets]
\label{lemma:master-general-sum}
Suppose that the per-state regrets~\eqref{equation:reght-general-sum} can be upper-bounded as $\reg_h^t\le \barreg_h^t$ for all $(h,t)\in[H]\times[T]$, where $\barreg_h^t$ is non-increasing in $t$: $\barreg_h^t\ge \barreg_h^{t+1}$ for all $t\ge 1$. Then running Algorithm \ref{algorithm:oftrl-general-sum-mg} will guarantee that
\begin{equation}\label{eq:NE-gap-general-sum}
    \ccegap(\hat{\pi}^T)\le CH \cdot \frac{1}{T} \sum_{t=1}^T \max_{h\in[H]}\barreg_h^t.
\end{equation}
for all $T\ge 2$, where $C>0$ is an absolute constant.
\end{lemma}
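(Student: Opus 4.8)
The plan is to mirror the proof of Theorem~\ref{theorem:master}, but with the value-estimation recursion there replaced by the best-response-value recursion of Lemma~\ref{lem.regret-decompose-general-sum}. Define, for each $(h,t)\in[H+1]\times[T]$,
$$g_h^t \defeq \max_{i\in[m]}\max_{s\in\cS}\paren{ V_{i,h}^{\dagger,\hat\pi_{-i,h}^t}(s) - V_{i,h}^{\hat\pi_h^t}(s) },$$
so that $g_{H+1}^t = 0$ and, by the definitions of $\ccegap$ and of the output policy $\hat\pi^T=\hat\pi_1^T$, one has $\ccegap(\hat\pi^T)\le g_1^T$. Since the per-state regret $\reg_h^t$ in Lemma~\ref{lem.regret-decompose-general-sum} is already a maximum over $i$ (hence does not depend on $i$), taking $\max_i$ on both sides of that lemma and using $\reg_h^t\le\barreg_h^t$ yields the clean recursion $g_h^t \le \barreg_h^t + \sum_{j=1}^t \alpha_t^j\, g_{h+1}^j$ for all $(h,t)$, where we used $\max_i\sum_j \alpha_t^j g_{i,h+1}^j\le \sum_j\alpha_t^j\max_i g_{i,h+1}^j=\sum_j\alpha_t^j g_{h+1}^j$.

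Next I would introduce the deterministic auxiliary sequence $\set{\Delta_h^t}$ defined by $\Delta_{H+1}^t\defeq 0$ and $\Delta_h^t\defeq \barreg_h^t + \sum_{j=1}^t\alpha_t^j\Delta_{h+1}^j$. A backward induction on $h$ (base case $h=H+1$ trivial; the inductive step simply plugs $g_{h+1}^j\le\Delta_{h+1}^j$ into the recursion above) shows $g_h^t\le\Delta_h^t$ for all $(h,t)$. Moreover, invoking Lemma~\ref{lem:delta.monotonic} at each layer $h$ with the non-increasing additive terms $\barreg_h^t$ shows that $\Delta_h^t$ is non-increasing in $t$; in particular $\Delta_1^T\le \frac1T\sum_{t=1}^T\Delta_1^t$ for every $T\ge 1$.

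It then remains to unroll the averaged quantities over the $H$ layers. Writing $\bar S_h\defeq\frac1T\sum_{t=1}^T\Delta_h^t$ and $\bar r_h\defeq\frac1T\sum_{t=1}^T\barreg_h^t$, swapping the order of summation and using $\sum_{t=j}^T\alpha_t^j\le\sum_{t=j}^\infty\alpha_t^j=1+\frac1H$ (Lemma~\ref{lemma:alpha-ti}) gives $\bar S_h\le \bar r_h + (1+\tfrac1H)\,\bar S_{h+1}$ for each $h$; since $\bar S_{H+1}=0$, iterating yields
$$\Delta_1^T\le \bar S_1 \le \sum_{h=1}^H\paren{1+\tfrac1H}^{h-1}\bar r_h \le \paren{\sum_{h=1}^H\paren{1+\tfrac1H}^{h-1}}\cdot\frac1T\sum_{t=1}^T\max_{h\in[H]}\barreg_h^t.$$
Because $\sum_{h=1}^H(1+\tfrac1H)^{h-1} = H\paren{(1+\tfrac1H)^H-1}\le (e-1)H < 2H$, combining with $\ccegap(\hat\pi^T)\le g_1^T\le\Delta_1^T$ gives~\eqref{eq:NE-gap-general-sum} with $C=2$ (valid for all $T\ge1$, in particular $T\ge2$).

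The argument is largely mechanical once Lemma~\ref{lem.regret-decompose-general-sum} is available; the one quantitative point that must be handled with care is the final constant. A crude bound $\sum_{t=j}^T\alpha_t^j\le 2$ would accumulate a factor $2^H$ across the $H$ layers, which is exponential in $H$. The reduction to a linear factor $CH$ relies on the \emph{sharp} identity $\sum_{t=j}^\infty\alpha_t^j = 1+\frac1H$, so that the geometric sum $\sum_{h=1}^H(1+\tfrac1H)^{h-1}$ telescopes to $H\big((1+\tfrac1H)^H-1\big)=\Theta(H)$ rather than blowing up. I expect this control of the per-layer multiplicative accumulation to be the only spot where the proof could go wrong if executed carelessly.
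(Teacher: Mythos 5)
Your proposal is correct and follows essentially the same route as the paper: the paper also takes the max over $i$ and $s$ of the best-response gap, applies Lemma~\ref{lem.regret-decompose-general-sum} to get the recursion $\delta_h^t \le \reg_h^t + \sum_{j=1}^t \alpha_t^j \delta_{h+1}^j$, and then imitates the auxiliary-sequence/monotonicity/averaging argument of Lemma~\ref{lemma:recursion-of-value-estimation} with $\cbeta = 1+1/H$ to avoid the exponential accumulation you flag. Your bookkeeping of the geometric sum $\sum_{h=1}^H(1+1/H)^{h-1}\le (e-1)H$ even yields a marginally better constant ($C=2$ versus the paper's $C\le 3$).
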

\begin{proof}
For any $(h,t)\in[H+1]\times[T]$, define
\begin{align*}
    \delta_h^t \defeq \max_{i\in[m]} \max_{s\in\cS} \paren{ V_{i,h+1}^{\dagger,\hat\pi_{-i,h+1}^{j}}(s) - V_{i,h+1}^{\hat\pi_{h+1}^{j}}(s) }.
\end{align*}
Then Lemma~\ref{lem.regret-decompose-general-sum} implies the recursive relationship
\begin{align*}
    \delta_h^t \le \reg_h^t + \sum_{j=1}^t \alpha_t^j \delta_{h+1}^j.
\end{align*}
(With $\delta_{H+1}^t\equiv 0$ for all $t\in[T]$.) 
Therefore we can imitate the proof of Lemma~\ref{lemma:recursion-of-value-estimation} and obtain that, for any $\barreg_h^t$ such that $\reg_h^t\le \barreg_h^t$ and $\barreg_h^t\ge \barreg_h^{t+1}$,
\begin{align*}
    \delta_h^t \le H\cbeta^{H-1}\cdot \frac{1}{t}\sum_{j=1}^t \max_{h'\in[H]}\barreg_{h'}^j,
\end{align*}
where $\cbeta=1+1/H= \sup_{j\ge 1} \sum_{t=j}^\infty \alpha_t^j$ by Lemma~\ref{lemma:alpha-ti}(a).

Further, by definition of the output policy $\hat{\pi}^T=\hat{\pi}_1^T$ (cf. Algorithm~\ref{algorithm:certified-policy}), we have
\begin{align*}
    & \quad \ccegap(\hat{\pi}^T) = V_{i,1}^{\dagger, \hat{\pi}_{-i,1}^T}(s_1) - V_{i,1}^{\hat{\pi}_1^T}(s_1) \le \delta_1^T \le H\cbeta^{H-1}\cdot \frac{1}{T}\sum_{t=1}^T \max_{h\in[H]}\barreg_{h}^t \\
    & \le CH \cdot \frac{1}{T}\sum_{t=1}^T \max_{h\in[H]}\barreg_{h}^t,
\end{align*}
where $C\le 3$ as $\cbeta^{H-1}\le (1+1/H)^H\le e\le 3$. This is the desired result.
\end{proof}

\subsection{Proof of Theorem~\ref{theorem:oftrl-general-sum-formal}}
\label{appendix:proof-oftrl-general-sum-formal}

\paragraph{Bounding per-state regret}
We first bound $\reg_{i, h}^t(s)$ (cf. definition in~\eqref{equation:reghits}), i.e. the per-state regret for the $i$-th player, for any fixed $(i,h,s,t)\in[m]\times[H]\times[S]\times[T]$. This is the main part of this proof. 

Throughout this part, we will fix $(i,h,s,t)$, and omit the subscript $(h,s)$ within the policies and Q functions, so that $\pi_{i,h}^t(\cdot|s)$ will be abbreviated as $\pi_i^t$, and $Q_{i,h}^t(s, \cdot)$ will be abbreviated as $Q_i^t$. We will also reload $T\ge 1$ to be any positive integer (instead of the fixed total number of iterations).

We first observe that the update~\eqref{equation:oftrl-pii} for $\pi_{i,h}^t(\cdot|s)$ is exactly equivalent to the OFTRL algorithm (Algorithm~\ref{alg:OFTRL}) with loss vectors $g_t=w_t(Q_i^t)^\top \pi_{-i}^t$ (understanding $g_0=0$ and $Q_i^0=0$), prediction vector $M_t=w_{t}(Q_i^{t-1})^\top\pi_{-i}^{t-1}$, and learning rate $\eta_t=\eta/w_t$. Therefore we can apply the regret bound for OFTRL in Lemma~\ref{lemma:OFTRL-auxillary} and obtain for any $T\ge 1$ that
\begin{equation}
\label{equation:oftrl-regret-initial-general-sum}
\begin{aligned}
    & \quad \max_{\pi_i^\dagger\in\Delta_{\cA_i}} \sum_{t=1}^T w_t \<\pi_i^t - \pi_i^\dagger, Q_i^t\pi_{-i}^t\> = \max_{\pi_i^\dagger\in\Delta_{\cA_i}} \sum_{t=1}^T \<\pi_i^t - \pi_i^\dagger, g_t\>\\
    & \le \frac{\log A_i}{\eta_T} + 
    \sum_{t=1}^{T} \eta_t \|g_t-M_t\|_\infty^2  -\sum_{t=1}^{T-1} \frac{1}{8\eta_t}\|\pi_i^t-\pi_i^{t+1}\|_1^2 \\
    & \le \frac{\log A_i\cdot w_T}{\eta} + 
    \sum_{t=1}^{T} \eta w_t\|Q_i^t\pi_{-i}^t-Q_i^{t-1}\pi_{-i}^{t-1}\|_\infty^2 \\
    & \le \frac{\log \Amax \cdot w_T}{\eta} + 
    \underbrace{\sum_{t=1}^{T} 2\eta w_t \linf{Q_i^t-Q_i^{t-1}}^2}_{\rm I} + \underbrace{\sum_{t=2}^T 2\eta w_t H^2\lone{\pi_{-i}^t - \pi_{-i}^{t-1}}^2}_{\rm II}.
\end{aligned}
\end{equation}
Above, the last inequality uses the fact that $\linf{(Q_i^t-Q_i^{t-1})\pi_{-i}^t}\le \linf{Q_i^t-Q_i^{t-1}}$ for $t\ge 1$, and $\linf{Q_i^{t-1}(\pi_{-i}^t - \pi_{-i}^{t-1})}\le H\lone{\pi_{-i}^t - \pi_{-i}^{t-1}}$ for $t\ge 2$.

For term ${\rm I}$, noticing that $\linf{Q_i^t - Q_i^{t-1}}\le \alpha_t H$ by~\eqref{eq:Q-update-general-sum}, we have
\begin{align*}
    {\rm I} \le \sum_{t=1}^T 2\eta w_t \alpha_t^2 H^2 = 2\eta H^2 \sum_{t=1}^T w_t \alpha_t^2.
\end{align*}

Bounding term ${\rm II}$ requires the following lemma on the stability of the iterates. The proof can be found in Section~\ref{appendix:proof-general-sum-stability}.
\begin{lemma}[Stability of iterates]
\label{lemma:general-sum-stability}
We have for any $i\in[m]$ and any $t\ge 2$ that (recall the subscripts $(h,s)$ are omitted below):
\begin{align}
\label{equation:stability-general-sum}
    \lone{\pi_i^t - \pi_{i}^{t-1}} \le 4\eta H.
\end{align}
Consequently,
\begin{align}
\label{equation:stability-total-general-sum}
    \lone{\pi_{-i}^t - \pi_{-i}^{t-1}} \le 4(m-1)\eta H.
\end{align}
\end{lemma}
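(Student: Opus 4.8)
\textbf{Proof plan for Lemma~\ref{lemma:general-sum-stability}.}
The key observation is that each player's policy follows the anytime OFTRL recursion~\eqref{equation:oftrl} with entropy regularizer, loss vectors $g_t = w_t (Q_i^t)^\top \pi_{-i}^t$, prediction vector $M_t = w_t (Q_i^{t-1})^\top \pi_{-i}^{t-1} = g_{t-1}\cdot(w_t/w_{t-1})$, and learning rate $\eta_t = \eta/w_t$. I would bound $\lone{\pi_i^t - \pi_i^{t-1}}$ directly via the smoothness of exponential weights (Lemma~\ref{lemma:Hedge-smoothness}): since $\pi_i^t$ and $\pi_i^{t-1}$ are both exponential-weights solutions, their $L_1$ distance is at most twice the $L_\infty$ distance of the corresponding cumulative (scaled) loss vectors appearing inside the exponent. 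Concretely, writing the exponent argument for $\pi_i^t$ as $(\eta/w_t)[\sum_{j=1}^{t-1} w_j (Q_i^j\pi_{-i}^j) + w_{t-1}(Q_i^{t-1}\pi_{-i}^{t-1})]$ and for $\pi_i^{t-1}$ as $(\eta/w_{t-1})[\sum_{j=1}^{t-2} w_j (Q_i^j\pi_{-i}^j) + w_{t-2}(Q_i^{t-2}\pi_{-i}^{t-2})]$, I would bound the $L_\infty$ norm of the difference of these two vectors.

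The main step is to control that difference. I would first telescope: the difference of the two exponents equals $(\eta/w_t)\sum_{j=1}^{t-1}w_j v_j + (\eta/w_t)w_{t-1}v_{t-1} - (\eta/w_{t-1})\sum_{j=1}^{t-2}w_j v_j - (\eta/w_{t-1})w_{t-2}v_{t-2}$ where $v_j \defeq (Q_i^j\pi_{-i}^j)(s,\cdot)\in[0,H]^{A_i}$. Grouping the common sum $\sum_{j=1}^{t-2}w_j v_j$ with coefficient $(\eta/w_t - \eta/w_{t-1})$, and handling the remaining boundary terms $w_{t-1}v_{t-1}, w_{t-1}v_{t-2}, w_{t-2}v_{t-2}$ separately, I would use: (i) the identity $(1/w_{t-1}-1/w_t)\sum_{j=1}^{t-1}w_j = H/(H+1)\le 1$ from Lemma~\ref{lemma:wt}(b) (and the analogous identity at index $t-1$), together with $\linf{v_j}\le H$, to bound the common-sum contribution by $\cO(\eta H)$; and (ii) $w_{t-1}/w_t, w_{t-2}/w_{t-1}\le 1$ and $\linf{v_j}\le H$ to bound the boundary terms by $\cO(\eta H)$ as well. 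This yields $\linf{\text{(difference of exponents)}}\le 2\eta H$ for $t\ge 2$, hence $\lone{\pi_i^t - \pi_i^{t-1}}\le 4\eta H$ by Lemma~\ref{lemma:Hedge-smoothness}. (The precise constant will follow from keeping careful track of the at-most-$O(1)$ factors; I expect the clean bound $4\eta H$ to hold, possibly after verifying the boundary terms contribute $\le \eta H$ total.) The consequence~\eqref{equation:stability-total-general-sum} then follows immediately: since $\pi_{-i}^t = \prod_{k\ne i}\pi_k^t$ is a product distribution, a standard hybrid/triangle argument gives $\lone{\pi_{-i}^t - \pi_{-i}^{t-1}}\le \sum_{k\ne i}\lone{\pi_k^t - \pi_k^{t-1}}\le 4(m-1)\eta H$.

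The main obstacle is the bookkeeping in step (ii): the OFTRL update has a \emph{changing} learning rate $\eta/w_t$ as well as the optimistic prediction term, so the difference of exponents does not telescope as cleanly as in the fixed-rate unweighted case. The crucial algebraic facts that make it work are the two identities in Lemma~\ref{lemma:wt} (especially part (b), which exactly cancels the weighted sum against the learning-rate difference) and the monotonicity $w_{t-1}\le w_t$; assembling these correctly — and checking that no term blows up with $t$ — is where care is needed. Everything else is routine.
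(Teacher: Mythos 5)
Your proposal is correct and follows essentially the same route as the paper's proof: reduce $\lone{\pi_i^t-\pi_i^{t-1}}$ to the $L_\infty$ difference of the two cumulative-loss exponents via the smoothness of exponential weights (Lemma~\ref{lemma:Hedge-smoothness}), cancel the common weighted sum against the learning-rate difference using Lemma~\ref{lemma:wt}(b), bound the leftover boundary/prediction terms by $\cO(\eta H)$ using $\linf{Q_i^j\pi_{-i}^j}\le H$, and pass to $\pi_{-i}$ by the product-distribution triangle inequality. The only cosmetic difference is bookkeeping: the paper extends the common sum to index $t-1$ so that the residual is the single vector $2Q_i^{t-1}\pi_{-i}^{t-1}-Q_i^{t-2}\pi_{-i}^{t-2}$, whereas you keep the sum at $t-2$ and track three boundary terms, which changes only the constant accounting.
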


Using Lemma~\ref{lemma:general-sum-stability}, we have
\begin{align*}
    {\rm II} \le \sum_{t=2}^T 2\eta w_t H^2 \cdot 16(m-1)^2 \eta^2 H^2 = 32\eta^3 H^4 (m-1)^2 \sum_{t=2}^T w_t.
\end{align*}

Plugging the preceding bounds into~\eqref{equation:oftrl-regret-initial-general-sum} and using $\alpha_T^1\cdot w_t=\alpha_T^t$ yields that
\begin{align*}
    & \quad \reg_{i,h}^T(s) = \max_{\pi_i^\dagger\in\Delta_{\cA_i}} \sum_{t=1}^T \underbrace{\alpha_T^t}_{\alpha_T^1\cdot w_t} \<\pi_i^t - \pi_i^\dagger, Q_i^t\pi_{-i}^t\> = \alpha_T^1 \max_{\pi_i^\dagger\in\Delta_{\cA_i}} \sum_{t=1}^T w_t \<\pi_i^t - \pi_i^\dagger, Q_i^t\pi_{-i}^t\> \\
    & \le \frac{\log\Amax \cdot \alpha_T^T}{\eta} + 2\eta H^2 \sum_{t=1}^T \alpha_T^t \alpha_t^2 + 32\eta^3 H^4 (m-1)^2 \underbrace{\sum_{t=2}^T \alpha_T^t}_{\le 1} \\
    & \stackrel{(i)}{\le} \frac{2H\log\Amax}{\eta T} + \frac{8\eta H^3}{T} + 32\eta^3 H^4 (m-1)^2 \\
    & \le C\brac{ \frac{H\log\Amax}{\eta T} + \frac{\eta H^3}{T} + \eta^3 H^4 (m-1)^2 } \eqdef \barreg_h^T.
\end{align*}
Above, (i) used the fact that $\alpha_T^T=\alpha_T=(H+1)/(H+T)\le 2H/T$, and $\sum_{t=1}^T\alpha_T^t\alpha_t^2 \le 4H/T$ by Lemma~\ref{lemma:alpha-convolution}(c), and $C\le 32$ is an absolute constant. This proves the per-state regret bounds claimed in Theorem~\ref{theorem:oftrl-general-sum-formal}.

\paragraph{Overall policy guarantee}
Plugging the above per-state regret bounds into Lemma~\ref{lemma:master-general-sum} yields that, the output policy $\hat{\pi}^T$ of Algorithm~\ref{algorithm:oftrl-general-sum-mg} achieves
\begin{align*}
    & \quad \ccegap(\hat{\pi}^T) \le CH \cdot \frac{1}{T} \sum_{t=1}^T \max_{h\in[H]} \barreg_h^T \\
    & \le \cO\paren{ \frac{H^2\log\Amax \log T}{\eta T} + \frac{\eta H^4\log T}{T} + \eta^3 H^5 (m-1)^2 }.
\end{align*}
Choosing $\eta=(\log\Amax \log T/(H^3T))^{1/4} (m-1)^{-1/2}$, the above can be upper bounded as
\begin{align*}
    \cO\paren{ H^{11/4}(\log\Amax \log T)^{3/4} \sqrt{m-1} \cdot T^{-3/4} + H^{13/4}(\log\Amax)^{1/4}(\log T)^{5/4}(m-1)^{-1/2}\cdot  T^{-5/4}  },
\end{align*}
which is the desired result.
\qed


\subsection{Proof of Lemma~\ref{lemma:general-sum-stability}}
\label{appendix:proof-general-sum-stability}

We first prove~\eqref{equation:stability-general-sum}. By the OFTRL update~\eqref{equation:oftrl-pii} and the smoothness of exponential weights (Lemma~\ref{lemma:Hedge-smoothness}), we have for any $t\ge 2$ that
\begin{align*}
    \lone{\pi_i^t - \pi_i^{t-1}} \le 2\linf{G^t - G^{t-1}},
\end{align*}
where $G^t, G^{t-1}$ are the (weighted) total losses in~\eqref{equation:oftrl-pii}:
\begin{align*}
    & G^t \defeq \frac{\eta}{w_t} \brac{\sum_{j=1}^{t-1} w_j Q_i^j \pi_{-i}^j + w_t Q_i^{t-1} \pi_{-i}^{t-1}}, \\
    & G^{t-1} \defeq \frac{\eta}{w_{t-1}} \brac{\sum_{j=1}^{t-2} w_j Q_i^j \pi_{-i}^j + w_{t-1} Q_i^{t-2} \pi_{-i}^{t-2}}. \\
\end{align*}
Therefore we have
\begin{align*}
    & \quad \lone{\pi_i^t - \pi_i^{t-1}} \le 2\linf{G^t - G^{t-1}} \\
    & \le 2\eta \linf{ \paren{\frac{1}{w_t} - \frac{1}{w_{t-1}}} \sum_{j=1}^{t-1} w_j Q_i^j \pi_{-i}^j } + 2\eta \linf{2Q_i^{t-1}\pi_{-i}^{t-1} - Q_i^{t-2}\pi_{-i}^{t-2}} \\
    & \stackrel{(i)}{\le} 2\eta H \cdot \underbrace{\paren{\frac{1}{w_{t-1}} - \frac{1}{w_{t}}} \sum_{j=1}^{t-1} w_j}_{\stackrel{(ii)}{=} H/(H+1)\le 1} + 2\eta H \le 4\eta H.
\end{align*}
Above, (i) uses the fact that $Q_i^j\pi_{-i}^j\in[0, H]$ entry-wise for all $j\ge 1$, and (ii) uses Lemma~\ref{lemma:wt}(b). This proves~\eqref{equation:stability-general-sum}.

The above directly implies~\eqref{equation:stability-total-general-sum} by the following bound on the TV distance (or $L_1$ norm) of product distributions~\citep{syrgkanis2015fast}:
\begin{align*}
    \lone{\pi_{-i}^t - \pi_{-i}^{t-1}} \le \sum_{j\neq i} \lone{\pi_j^t - \pi_j^{t-1}}.
\end{align*}
This completes the proof.
\qed
\section{Experimental details and additional studies}
\label{apdx:eager}

\subsection{Experimental details for Section~\ref{section:eager}}
\label{apdx:details}

\paragraph{Details about the game} The simulations in Section \ref{section:eager} is performed on the following two-player-zero-sum Markov game with $H=2$. The state space at $h=1$ only consists of a single state $\cS_1 =\{s_0\}$. The state space at $h=2$ consists of four different states $\cS_2 = \{s_{11},s_{12},s_{21},s_{22}\}$. The action spaces are the same for every state, namely $\cA = \{a_1,a_2\}, \cB =\{b_1, b_2\}$, i.e. each player has two actions. The transition from $\cS_1\times\cA\times\cB\to\cS_2$ is deterministic, which takes the following form:
\begin{align*}
    s_0\times a_i\times b_j \to s_{ij},\quad 1\le i,j\le 2.
\end{align*}
The instantaneous reward $r_h$ depends only on the action (and not the state), i.e., $r_h:\cA\times\cB\to [0,1]$, which takes values as (scaled) identity matrices:
\begin{align}
\label{equation:reward-matrix}
r_1(\cdot, \cdot) = \begin{bmatrix}
0.1 & 0 \\
0 & 0.1
\end{bmatrix},
~~~
r_2(\cdot, \cdot) = \begin{bmatrix}
1 & 0 \\
0 & 1
\end{bmatrix}.
\end{align}
Direct calculation yields that the Nash values and policies for this game is given by
\begin{align}
    & V_2^\star(s) = 0.5~~{\rm for}~s\in\set{s_{11}, s_{12}, s_{21}, s_{22}},
    \nonumber \\
    & Q_1^\star(s_0, \cdot, \cdot) = \begin{bmatrix}
    0.6 & 0.5 \\
    0.5 & 0.6
    \end{bmatrix},
    ~~~\mu_1^\star(\cdot|s_0) = \nu_1^\star(\cdot|s_0) = \begin{bmatrix}
    0.5 \\ 0.5 
    \end{bmatrix},
    ~~~V_1^\star(s_0) = 0.55. \label{equation:example-ne-layer-1}
\end{align}

\paragraph{Initialization} 
All algorithms in Figure~\ref{fig:main} use the following initialization $(\mu^0,\nu^0)$:
\begin{align*}
    \textup{At } h=1:~~&\mu^0_1(a_1|s_0) = 0.3,~~~~~~\mu^0_1(a_2|s_0) = 0.7,~~~~~~~\nu^0_1(b_1|s_0) = 0.7, ~~~~~~~\nu^0_1(b_2|s_0) = 0.3;\\
    \textup{At } h=2:~~&\mu^0_2(a_1|s_{11}) = 0.248,~\mu^0_2(a_2|s_{11}) = 0.752,~\nu^0_2(b_1|s_{11}) = 0.248, ~\nu^0_2(b_2|s_{11}) = 0.752;\\
    &\mu^0_2(a_1|s_{12}) = 0.500,~\mu^0_2(a_2|s_{12}) = 0.500,~\nu^0_2(b_1|s_{12}) = 0.168, ~\nu^0_2(b_2|s_{12}) = 0.832;\\
    &\mu^0_2(a_1|s_{21}) = 0.500,~\mu^0_2(a_2|s_{21}) = 0.500,~\nu^0_2(b_1|s_{21}) = 0.168, ~\nu^0_2(b_2|s_{21}) = 0.832;\\
    &\mu^0_2(a_1|s_{22}) = 0.752,~\mu^0_2(a_2|s_{22}) = 0.248,~\nu^0_2(b_1|s_{22}) = 0.248, ~\nu^0_2(b_2|s_{22}) = 0.752.
\end{align*}

Standard FTRL (Algorithm \ref{alg:FTRL}) and OFTRL (Algorithm \ref{alg:OFTRL}) by default uses the uniform distribution as the initialization, as it minimizes the (neg)entropy $\Phi(\cdot)$. To make them initialized at $\mu^0$, we change the regularizers for $\mu_h(\cdot|s)$ to be $\KL(\cdot \| \mu_h^0(\cdot|s))$ for the max-player. (And similarly $\KL(\cdot \| \nu_h^0(\cdot|s))$ for the min-player.) Note that our actual initialization above satisfies the property that all its values are bounded within the interval $[0.15,0.85]$. In particular, $\KL(\mu' \| \mu^0_h(\cdot|s))$ for any other $\mu'\in\Delta_{\cA}$ is bounded by $O(\log(1/0.15))=O(1)$, and thus all the convergence theorems will still hold with this modified regularizer, with at most a larger (multiplicative) constant than with the $\Phi(\cdot)$ regularizer.



\paragraph{Remark on runtime}
Running all our experiments takes approximately 6.46 hours CPU running time (Intel(R) Core(TM) i5-8250U CPU).
\begin{figure}[t]
\centering
\begin{minipage}{.4\textwidth}
    \centering
    \subcaption{{\footnotesize Value functions at $h=2$}}\label{fig:appendix1}
    \vspace{-.7em}
    \includegraphics[width=.99\textwidth]{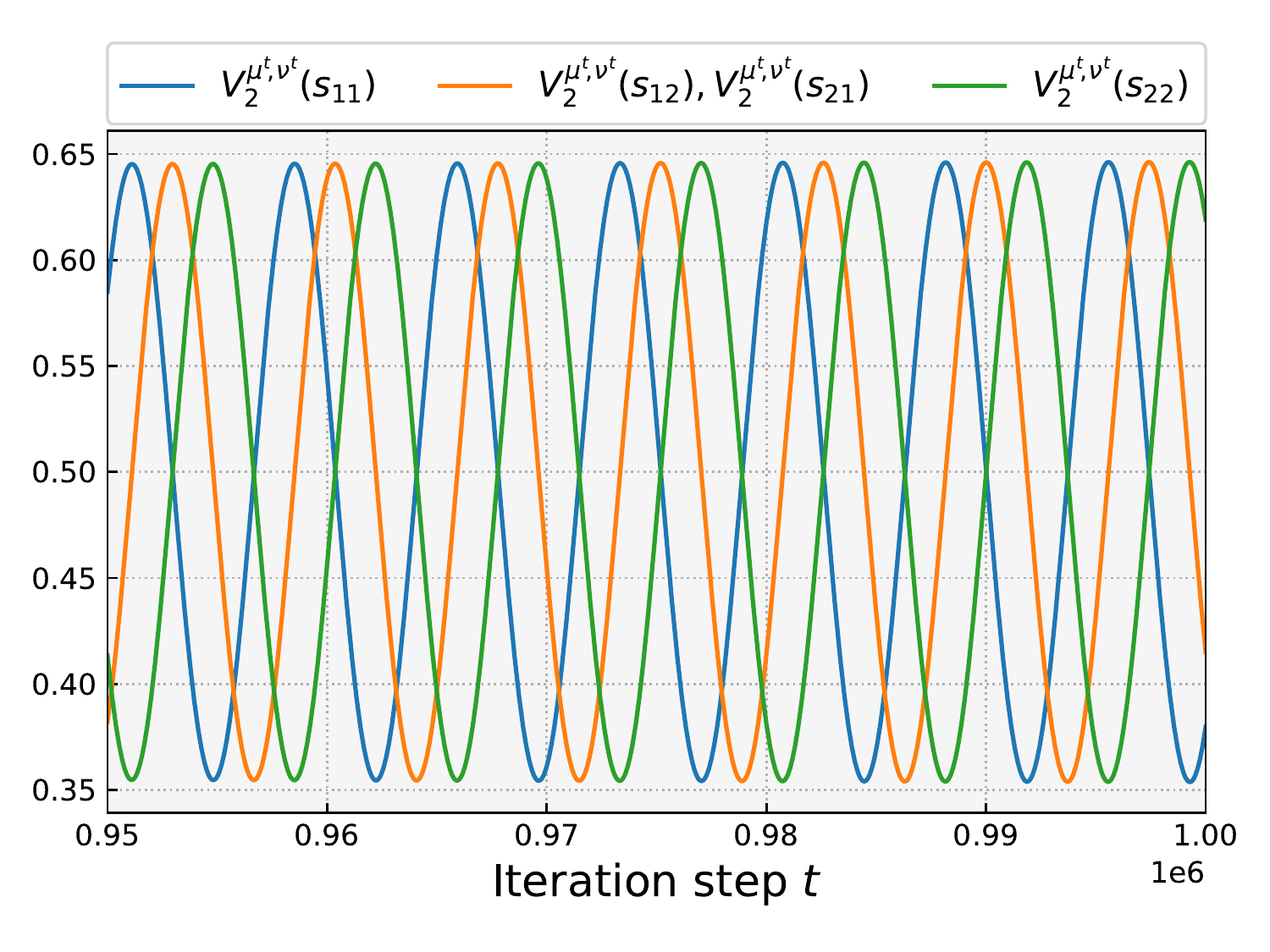}
\end{minipage}
\begin{minipage}{.4\textwidth}
    \centering
    \subcaption{{\footnotesize Policy at $h=1$ ($\mu_1^t(a_1|s_0)$)}}\label{fig:appendix2}
    \vspace{-.7em}
    \includegraphics[width=.99\textwidth]{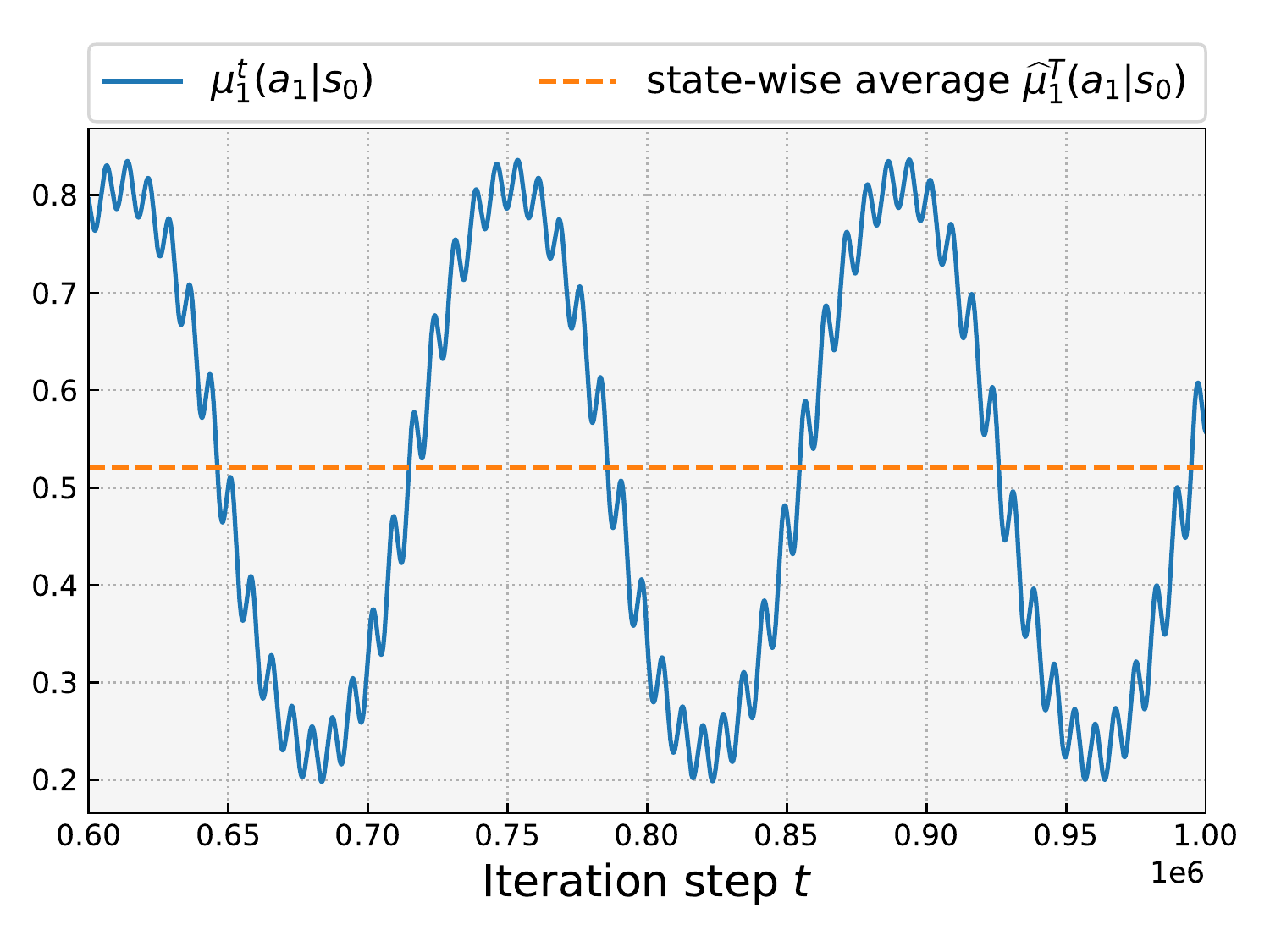}
\end{minipage}
\vspace{-.5em}
\caption{
\small Visualizations of the optimization trajectory of the INPG algorithm {\bf along a single run} with $T=10^6$ and $\eta=1/\sqrt{T}$. {\bf (a)} Value functions in the second layer $V^{\mu^t, \nu^t}_2(s)$ for all four states $s\in\set{s_{11}, s_{12}, s_{21}, s_{22}}$ over the last $5\times 10^4$ steps. {\bf (b)} Policy in the first layer, i.e. $\mu_1^t(a_1|s_0)$ over the last $4\times 10^5$ steps. The horizontal line plots the value of the final averaged policy $\hat{\mu}_1^T(a_1|s_0)$ (where the averaging is over the {\bf entire run} $t\in[T]$).
}
\label{fig:additional}
\end{figure}






\subsection{Additional visualizations for the INPG algorithm}
\label{apdx:additional}

Figure~\ref{fig:main} shows that the INPG algorithm (with $\eta=1/\sqrt{T}$) appears to converge much slower than $O(T^{-1/2})$ (which is the rate for FTRL with $\eta=1/\sqrt{T}$). Here we present some further understandings of this phenomenon by visualizing the optimization trajectories of the INPG algorithm.

Figure~\ref{fig:appendix1} shows the evolution of the value functions at $h=2$ over iteration step $t$, for the last $5\times10^4$ steps. For all four states, the policy optimization is equivalent to Hedge on the matrix game with identity reward matrix~\ref{equation:reward-matrix}, and thus exhibits an expected cyclic behavior and leads to the sinusoidal-like curves shown in Figure~\ref{fig:appendix1}. However, due to the choice of our specific initialization $(\mu^0, \nu^0)$, the four curves behave like the same periodic curves with different \emph{``phases''}. 




Figure~\ref{fig:appendix2} shows the evolution of the policy at $h=1$ (specifically, $\mu_1^t(a_1|s_0)$ which is the probability of the max-player taking action $a_1$) over $t$, for the last $4\times 10^5$ steps. (The result for the min-player is similar.) The curve also behaves periodically, and appears to be a superposition of two waves, one \emph{main waive} with larger magnitude and period, and another \emph{oscillation} with smaller magnitude and period. Qualitatively, the main wave is caused by the intrinsic cyclic behavior of learning with respect to the (fixed) reward at the $h=1$, while the oscillation is caused by the changing reward that is backed-up from $h=2$. 
Further, as the reward in the second layer has much higher magnitude than the first layer in this game, the oscillation has a non-negligible magnitude. 

The horizontal line in Figure \ref{fig:appendix2} plots the final output policy $\hat{\mu}_1^T(a_1|s_0)\approx 0.52$, which we recall is the average of $\mu_1^t(a_1|s_0)$ over the entire run $t\in[T]$ (cf. Section~\ref{section:eager}). Note that the unique Nash equilibrium satisfies $\mu^\star_1(a_1|s_0) = 0.5$~\eqref{equation:example-ne-layer-1}, and the error $\hat{\mu}_1^T(a_1|s_0)-\mu_1^\star(a_1|s_0)\approx 0.02$. We suspect that this may be an intrinsic bias caused by the aforementioned correlation between the two layers' learning processes (in particular, the different ``phases'' of the second-layer's learning over the four states), and may also be the cause of the slow convergence for INPG shown in Figure~\ref{fig:main}.


 

\subsection{Additional theoretical justifications}
\label{apdx:exp-theoretical}
\paragraph{INPG as an instantiation of Algorithm~\ref{alg:framework}}
Here we show why the instantiation of Algorithm~\ref{alg:framework} with $\beta_t = 1$ and
 \begin{equation*}
        \mu_h^t(a|s) \propto_a \mu_h^{t-1}(a|s)\exp\!\paren{\eta\brac{Q_h^{t-1}\nu_h^{t-1}}(s)},~~
        \nu_h^t(b|s)\propto_b \nu_h^{t-1}(b|s)\exp\!\paren{ - \eta  \brac{\paren{Q_h^{t-1}}^{\!\top}\!\! \mu_h^{t-1}}(s)}.
\end{equation*}
considered in Section \ref{section:eager} is equivalent to the Independent Natural Policy Gradient (INPG) algorithm. Indeed, choosing $\beta_t=1$ in Algorithm~\ref{alg:framework} ensures that $Q_h^t = Q_h^{\mu^t,\nu^t}$ (the true value function of $(\mu^t, \nu^t)$). Therefore, the above update is equivalent to 
 \begin{equation*}
        \mu_h^t(a|s)\! \propto_a\! \mu_h^{t-1}(a|s)\exp\!\paren{\eta\brac{Q_h^{\!\mu^{t\!-\!1}\!\!\!,\nu^{t\!-\!1}}\!\!\!\nu_h^{t\!-\!1}}(s)\!},~~
        \nu_h^t(b|s)\!\propto_b\! \nu_h^{t-1}(b|s)\exp\!\paren{\!\! - \eta  \brac{\paren{Q_h^{\!\mu^{t\!-\!1}\!\!\!,\nu^{t\!-\!1}}}^{\!\top}\!\! \mu_h^{t\!-\!1}}(s)\!}.
\end{equation*}
This is exactly an independent two-player version of the Natural Policy Gradient algorithm (e.g.~\cite{agarwal2021theory}), where each player plays an NPG algorithm as if they are facing their own Markov Decision Process, with the opponent fixed.

\paragraph{\NElayerone~lower bounds $\negap$} 
Here we show \NElayerone$(\mu,\nu)\le \negap(\mu,\nu)$ for any $(\mu, \nu)$.
From the definition of $V_h^\star$ we have that
\begin{align*}
    &V_h^\star(s) = \inf_{\nu} V_h^{\dagger,\nu}(s) = \sup_{\mu} V_h^{\mu,\dagger},\\
    \Longrightarrow~~ &V_h^{\mu,\dagger}\le V_h^\star(s) \le V_h^{\dagger,\nu}(s),~~\forall {\mu,\nu}.
\end{align*}
Thus
\begin{align*}
    &Q_h^\star(s,a,b) = \brac{r_h + \P_hV_{h+1}^\star}(s,a,b) \le  \brac{r_h + \P_hV_{h+1}^{\dagger,\nu}}(s,a,b) = Q_h^{\dagger,\nu}(s,a,b),\\
    &Q_h^\star(s,a,b) = \brac{r_h + \P_hV_{h+1}^\star}(s,a,b) \ge  \brac{r_h + \P_hV_{h+1}^{\mu,\dagger}}(s,a,b) = Q_h^{\mu, \dagger}(s,a,b)\\
    \Longrightarrow&\quad  Q_h^{\mu, \dagger}(s,a,b)\le Q_h^\star(s,a,b)\le Q_h^{\dagger,\nu}(s,a,b).
\end{align*}
Thus for our example
\begin{align*}
    \textup{\NElayerone}(\mu,\nu) &= \max_{\mu_1^\dagger} \brac{(\mu_1^\dagger)^\top Q^\star_1 \nu_1}(s_0) -  \min_{\nu_1^\dagger} \brac{\mu_1^\top Q^\star_1 \nu_1^\dagger}(s_0)\\
    &\le  \max_{\mu_1^\dagger} \brac{(\mu_1^\dagger)^\top Q^{\dagger,\nu}_1 \nu_1}(s_0) -  \min_{\nu_1^\dagger} \brac{\mu_1^\top Q^{\mu,\dagger}_1 \nu_1^\dagger}(s_0)\\
    &= V_1^{\dagger,\nu}(s_0) - V_1^{\mu,\dagger}(s_0) = \negap(\mu,\nu).
\end{align*}



\end{document}